\declaretheorem{theorem}
\newcommand{\poly}{{\text{poly}}}
\newcommand{\eat}[1]{}
\newcommand{\R}{{\mathbb R}}
\newcommand{\inner}[2]{\left\langle #1, #2 \right\rangle}
\newcommand{\n}[1]{\left\| #1 \right\|}
\newcommand{\fn}[1]{\left\| #1 \right\|_F}
\newcommand{\tdo}{\widetilde{O}}
\newcommand{\tdomega}{\widetilde{\Omega}}
\newcommand{\tdtheta}{\widetilde{\Theta}}
\newcommand{\pr}[1]{\left( #1 \right)}
\newcommand{\br}[1]{\left[ #1 \right]}
\newcommand{\cur}[1]{\left\{ #1 \right\}}
\newcommand{\absr}[1]{\left| #1 \right|}
\newcommand{\bT}{b^{(T)}}
\newcommand{\WT}{W^{(T)}}
\newcommand{\WO}{W^{(0)}}
\newcommand{\fT}{f^{(T)}}
\newcommand{\bt}{b^{(t)}}
\newcommand{\Wt}{W^{(t)}}
\newcommand{\ft}{f^{(t)}}
\newcommand{\ut}{u^{(t)}}
\newcommand{\VT}{V^{(T)}}
\newcommand{\VO}{V^{(0)}}
\newcommand{\Va}{V^{[\alpha]}}
\newcommand{\ba}{b^{[\alpha]}}
\newcommand{\Wa}{W^{[\alpha]}}
\newcommand{\fa}{f^{[\alpha]}}
\newcommand{\faone}{f^{[\alpha_1]}}
\newcommand{\rmax}{R_{\max}}
\newcommand{\rmin}{R_{\min}}
\newcommand{\deltamin}{\Delta_{\min}}
\newcommand{\deltamax}{\Delta_{\max}}
\newcommand{\wmin}{W_{\min}}
\newcommand{\wmax}{W_{\max}}
\newcommand{\vmax}{V_{\max}}
\newcommand{\ga}{g^{[\alpha]}}
\newcommand{\ha}{h^{[\alpha]}}
\newcommand{\set}{\mathcal{S}}
\newcommand{\wi}{W_{i,:}}
\newcommand{\dotwi}{\dot{W}_{i,:}}
\newcommand{\wj}{W_{j,:}}
\newcommand{\dotwj}{\dot{W}_{j,:}}
\par\vspace{4mm}}
\newenvironment{proofof}[1]{\smallskip\noindent{\bf Proof of #1.}}%
        {\hspace*{\fill}$\Box$\par}
\newtheorem{lemma}{Lemma}
\newtheorem{assumption}{Assumption}
\newtheorem{claim}{Claim}
\newtheorem*{claim*}{Claim}
\newtheorem{prop}{Proposition}
\title{Plateau in Monotonic Linear Interpolation --- A "Biased" View of Loss Landscape for Deep Networks}
\author{Xiang Wang$^{\dag}$, Annie N. Wang$^\S$, Mo Zhou$^{\dag}$, Rong Ge$^{\dag}$\\
Department of Computer Science\\
Duke University, USA\\
\texttt{$^{\dag}$\{xwang,mozhou,rongge\}@cs.duke.edu,$^\S$annie.wang029@duke.edu}
}
\begin{document}

\maketitle

\begin{abstract}
  Monotonic linear interpolation (MLI) --- on the line connecting a random initialization with the minimizer it converges to, the loss and accuracy are monotonic --- is a phenomenon that is commonly observed in the training of neural networks. Such a  phenomenon may seem to suggest that optimization of neural networks is easy. In this paper, we show that the MLI property is not necessarily related to the hardness of optimization problems, and empirical observations on MLI for deep neural networks depend heavily on the biases. In particular, we show that interpolating both weights and biases linearly leads to very different influences on the final output, and when different classes have different last-layer biases on a deep network, there will be a long plateau in both the loss and accuracy interpolation (which existing theory of MLI cannot explain). We also show how the last-layer biases for different classes can be different even on a perfectly balanced dataset using a simple model. Empirically we demonstrate that similar intuitions hold on practical networks and realistic datasets.
\end{abstract}

\section{Introduction}

Deep neural networks can often be optimized using simple gradient-based methods, despite the objectives being highly nonconvex. Intuitively, this suggests that the loss landscape must have nice properties that allow efficient optimization. To understand the properties of loss landscape, \cite{goodfellow2014qualitatively} studied the linear interpolation between a random initialization and the local minimum found after training. They observed that the loss interpolation curve is monotonic and approximately convex (see the MNIST curve in Figure~\ref{fig:contrast}) and concluded that these tasks are easy to optimize. However, other recent empirical observations, such as \cite{frankle2020revisiting} observed that for deep neural networks on more complicated datasets, both the loss and the error curves have a long plateau along the interpolation path, i.e., the loss and error remain high until close to the
optimum (see the CIFAR-10 curve in Figure~\ref{fig:contrast}). Does the long plateau along the linear interpolation suggest these tasks are harder to optimize? Not necessarily, since the hardness of optimization problems does not need to be related to the shape of interpolation curves (see examples in Appendix~\ref{sec:easy_hard_example}). 

In this paper we give the first theory that explains the plateau in both loss and error interpolations. We attribute the plateau to simple reasons as the bias terms, the network initialization scale and the network depth, which may not necessarily be related to the difficulty of optimization.
%For neural networks, one has to understand the reason behind the plateau in order to know whether it relates to the difficulty of optimization. 

Note that there are many different theories for the optimization of overparametrized neural networks, in particular the neural tangent kernel (NTK) analysis~\citep{jacot2018neural,du2018gradient,allen2019convergence,arora2019fine} and mean-field analysis~\citep{chizat2018global,mei2018mean}. However they don't explain 
%Although many theories have been proposed for the optimization of neural networks, none of them can explain 
the plateau in both loss and error interpolations. %In the so-called neural tangent kernel (NTK) regime, 
For NTK regime, the network output is nearly linear in the parameters and the loss interpolation curve is monotonically decreasing and convex --- no plateau in the loss interpolation. Mean-field regime often uses a smaller initialization on a homogeneous neural network (as considered in \cite{chizat2018global,mei2018mean}). In this case, the interpolated network output is basically a scaled version of the network output at the minimum and has same label predictions --- no plateau in the error interpolation curve. %So almost all the interpolation points have zero classification error, which then contradicts the long plateau in the error interpolation curve. 

\begin{figure}[t]
        \begin{subfigure}[b]{0.5\textwidth}
                \centering
                \includegraphics[width=0.8\linewidth]{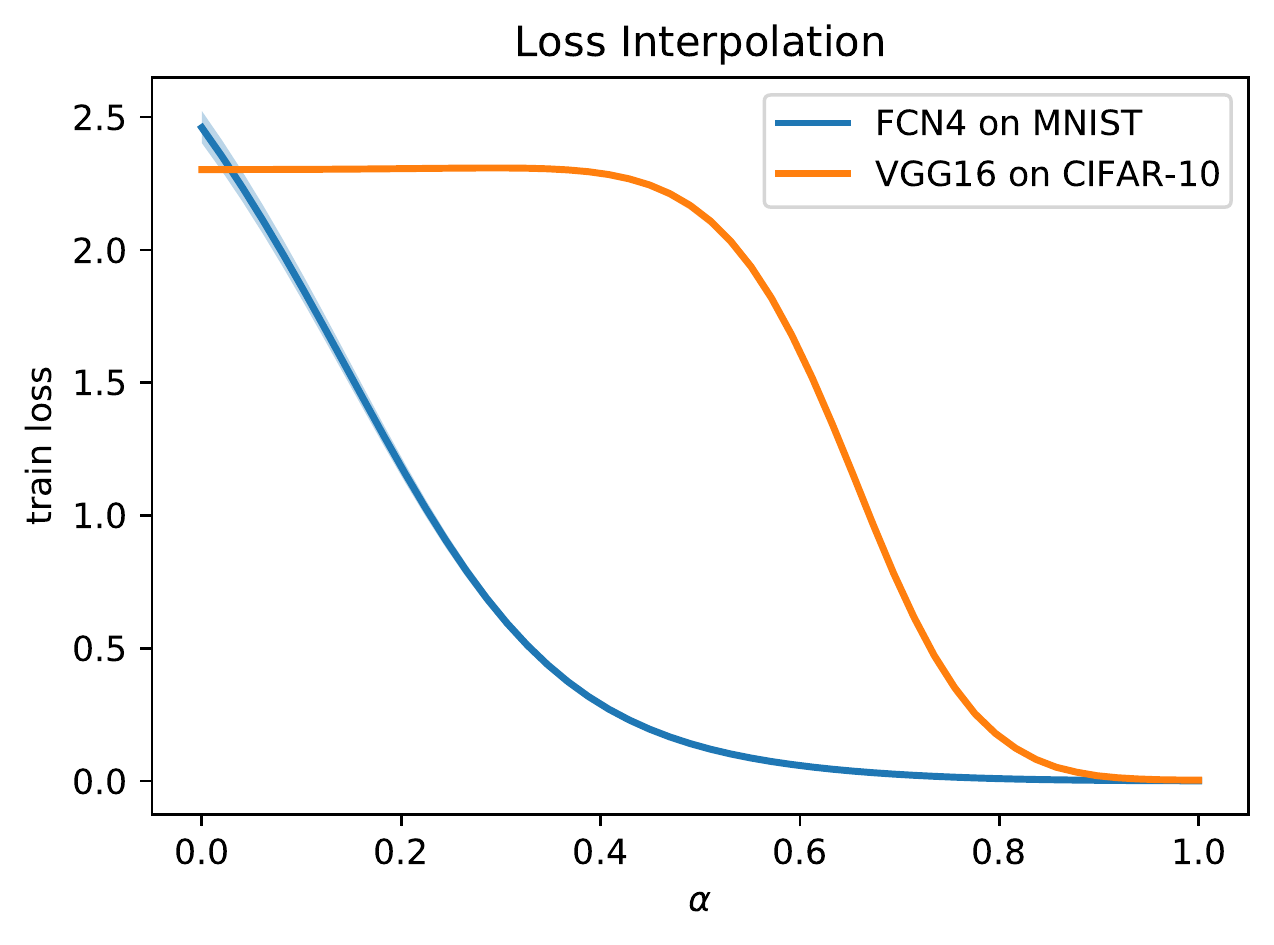}
        \end{subfigure}%
        \begin{subfigure}[b]{0.5\textwidth}
                \centering
                \includegraphics[width=0.8\linewidth]{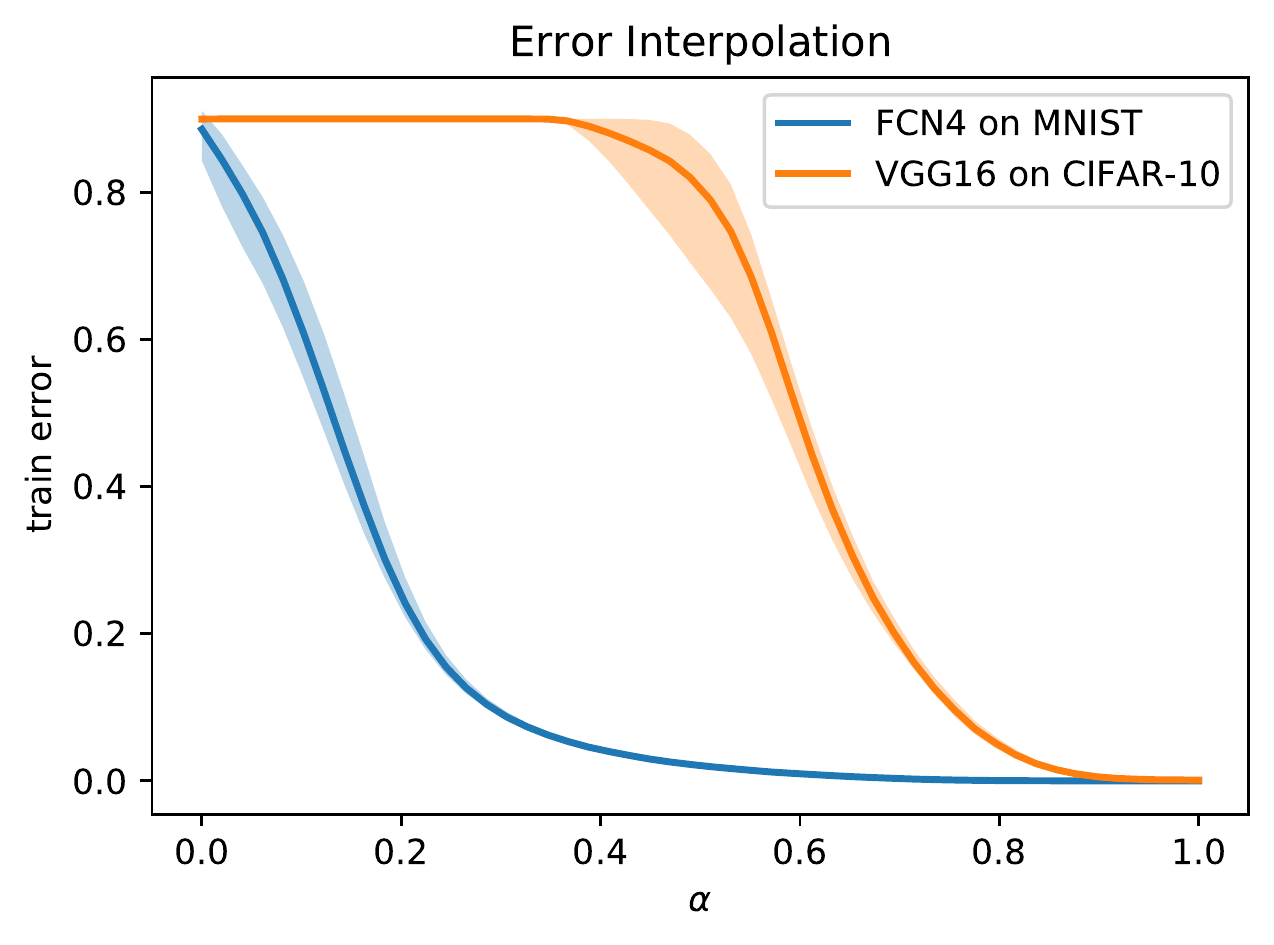}
        \end{subfigure}%
        \caption{Loss interpolation curve and error interpolation curve for a four-layer fully-connected network (FCN4) on MNIST and for VGG16 on CIFAR-10.}\label{fig:contrast}
        \vspace{-0.1cm}
\end{figure}

\subsection{Our results}

Our theoretical results consist of two parts. In the first part (see Section~\ref{sec:interpolation}), we give a plausible explanation for the plateau in both the loss and error curves. 

\begin{claim}[informal] If a deep network has a relatively small initialization, and its last-layer biases are significantly different for different classes, then both the loss and error curves will have a plateau. The length of the plateau is longer for a deeper network.
\end{claim}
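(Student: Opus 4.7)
The plan is to track the network output along the linear interpolation path and identify two regimes: a \emph{bias-dominated} regime in which the output is essentially a scaled copy of the trained last-layer bias, and a \emph{weight-dominated} regime in which the learned features take over. The claim reduces to showing that the first regime is wide and produces a plateau, and that its width grows with depth.

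First I would parametrize the path by $\alpha\in[0,1]$, writing $W_k^{[\alpha]}=(1-\alpha)W_k^{(0)}+\alpha W_k^{(T)}$ and $b_k^{[\alpha]}=(1-\alpha)b_k^{(0)}+\alpha b_k^{(T)}$ for each layer $k=1,\ldots,L$. The small-initialization hypothesis $\|W_k^{(0)}\|,\|b_k^{(0)}\|\ll\|W_k^{(T)}\|,\|b_k^{(T)}\|$ lets me approximate $W_k^{[\alpha]}\approx\alpha W_k^{(T)}$ and $b_k^{[\alpha]}\approx\alpha b_k^{(T)}$ with a controllable error for every $\alpha$ bounded away from $0$.

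Next I would unroll the forward pass and split
\[
f^{[\alpha]}(x)\;=\;\alpha\, b_L^{(T)}\;+\;g^{[\alpha]}(x),
\]
where $g^{[\alpha]}(x)$ collects every summand in the expansion of the forward pass that traverses at least one interpolated weight matrix after an earlier-layer nonlinearity or bias. Using positive homogeneity of ReLU and a layer-by-layer induction, I expect to prove $\|g^{[\alpha]}(x)\|=O(\alpha^L\cdot M)$ uniformly over bounded inputs, where $M$ depends only on $\prod_k\|W_k^{(T)}\|$, the hidden-layer biases, and the data scale. The bookkeeping reason is that every nonzero term in $g^{[\alpha]}$ picks up an $\alpha$ factor from each of the $\geq L$ interpolated parameter blocks it visits (since the shortest path to the output that is not the pure last-layer bias still traverses $L$ weight matrices when seeded at a small initialization, or $k$ matrices plus a hidden-layer bias which itself carries an $\alpha$).

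Comparing the two pieces, $\alpha b_L^{(T)}$ dominates whenever $\alpha^{L-1}\ll\|b_L^{(T)}\|/M$, i.e.\ for $\alpha\lesssim(\|b_L^{(T)}\|/M)^{1/(L-1)}$. In this regime $f^{[\alpha]}(x)\approx\alpha b_L^{(T)}$ is independent of $x$, so the argmax is a single class on the whole dataset, giving a constant error equal to $1-p_{\max}$ where $p_{\max}$ is the fraction of examples in the majority-bias class; simultaneously the softmax of $\alpha b_L^{(T)}$ is a data-independent distribution, so the cross-entropy collapses to a fixed function of $\alpha$ that stays near $-\sum_y p_y\log\mathrm{softmax}(\alpha b_L^{(T)})_y$, producing a loss plateau as well. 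The plateau length, $\asymp(\|b_L^{(T)}\|/M)^{1/(L-1)}$, is an increasing function of $L$, matching the depth-dependence part of the claim.

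The main obstacle will be making the inductive bound on $g^{[\alpha]}$ tight for ReLU networks along the whole interpolation path: one must control intermediate activations uniformly in $\alpha$, account for the hidden-layer biases which contribute their own $\alpha^k$ terms and must be folded into $M$ in a scale-consistent way, and verify that the resulting threshold is non-vacuous for realistic norms, so that the predicted plateau has positive length separated from zero rather than a vacuous $\alpha\to 0$ bound.
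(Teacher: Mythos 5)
Your proposal matches the paper's formalization (Theorem~\ref{thm:fcn_plateau} and Lemmas~\ref{lem:fcn_error_plateau}--\ref{lem:fcn_loss_plateau}) almost exactly: decompose the output as $\alpha\, b^{(T)} + h^{[\alpha]}(x)$, bound the weight signal by $\|h^{[\alpha]}(x)\|\leq (\delta+\alpha V_{\max})^r\approx (\alpha V_{\max})^r$, conclude that the bias term dominates so that every input is classified as the class with the largest bias (giving constant error $1-1/k$ on a balanced dataset), and separately argue that the data-independent softmax on a balanced dataset can never beat $\log k$ (the paper does this via HM--GM). The plateau's right endpoint $\alpha_2 \sim (\Delta_{\min}/V_{\max}^r)^{1/(r-1)}$ is a $(r-1)$-th root and hence grows with depth $r$, which is the same mechanism you identify.

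One item you flag as a ``bookkeeping'' issue is in fact a structural restriction of the paper's theorem, and the way you plan to handle it would fail. You write $\|g^{[\alpha]}(x)\|=O(\alpha^L M)$ and intend to absorb hidden-layer biases into the constant $M$. But under the standard linear interpolation, a term seeded by the bias at layer $h$ and propagated through layers $h+1,\ldots,L$ picks up only $\alpha\cdot\alpha^{L-h}=\alpha^{L-h+1}$ factors; the worst case $h=L-1$ gives $\alpha^2$, not $\alpha^L$. This is not a constant you can fold into $M$---it changes the exponent, the bias-dominated threshold becomes $\alpha\lesssim(\Delta/M)^{1}$ rather than $(\Delta/M)^{1/(L-1)}$, and the depth-dependence part of the claim disappears. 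The paper avoids this by proving Theorem~\ref{thm:fcn_plateau} for the network of Eq.~\eqref{eq:network1}, which has a bias only on the output layer, and it acknowledges the exponent mismatch explicitly in Section~\ref{sec:experiments} through the ``homogeneous interpolation'' $b_h^{[\alpha]}=\alpha^h b_h^{(T)}$, which artificially restores the $\alpha^r$ scaling for all bias-seeded terms. So if you intend your argument to cover the statement as the paper proves it, you should restrict to last-layer biases from the start; if you want all-layer biases, you must either use the homogeneous interpolation or accept a much weaker $\alpha^2$ threshold with no depth gain.

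A smaller point: your loss plateau sketch claims the cross-entropy ``collapses to a fixed function of $\alpha$ that stays near'' a target value. Note that this function is not flat---its derivative at $\alpha=0$ is zero but its second derivative is the softmax variance of the biases, which is positive, so the pure-bias loss actually \emph{increases} slightly before the weight signal kicks in. This is why the paper's bound is asymmetric, $\log k - 2e\epsilon \le \tfrac{1}{N}L \le \log k + \alpha\Delta_{\max} + 2e\epsilon$: the lower bound is the real content (loss cannot drop), while the upper bound has an explicit $\alpha\Delta_{\max}$ slack. Your plan would be cleaner if you separate the two one-sided bounds rather than claiming approximate constancy.
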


We formalize this claim in Theorem~\ref{thm:fcn_plateau}. For intuition, consider an $r$-layer neural network that only has bias on the last layer, and consider Xavier initialization~\citep{glorot2010understanding} which typically gives small output and zero bias. If we consider the $\alpha$-interpolation point (with coefficient $\alpha$ for the  minimum and $(1-\alpha)$ for the initialization), then the weight ``signal'' from the minimum scales as $\alpha^r$ (as it is the product of $r$ layers) while the bias scales as $\alpha$. As illustrated in Figure~\ref{fig:network_signal_v_bias} (right), when $r$ is large and there is a difference in biases, the bias will dominate, which creates a plateau in error. For the loss, one can also show that the weight signal is near 0 for small $\alpha$, so the network output is dominated by the biases and the loss cannot beat the random guessing at initialization. Note that this explanation for the plateau does not have any implication on the hardness of optimization.

However, why would the last-layer biases be different for different classes, especially in cases when the biases are initialized as zeros and all classes are balanced? In the second part (see Section~\ref{sec:optimization}), we focus on a simple model that we call $r$-homogeneous-weight network. This is a two-layer network whose $i$-th output is $\inner{W_{i,:}}{x}^r + b_i$, where $x\in \R^d$ is the network input, $W_{i,:}\in \R^d$ is the weight vector and $b_i\in \R$ is the bias (see Figure~\ref{fig:network_signal_v_bias} (left)). Our simple model simulates a depth-$r$ ReLU/linear network with bias on the output layer, in the sense that the signal is $r$-homogeneous while the bias is $1$-homogeneous in the parameters. Under this model we can show that:

\begin{claim}[informal] For the $r$-homogeneous-weight network on a simple balanced dataset, the class that is learned last has the largest bias.
\end{claim}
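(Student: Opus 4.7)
The plan is to analyze gradient flow on the $r$-homogeneous-weight network with softmax cross-entropy loss on a concrete simple balanced dataset, say two points $x_1=e_1$, $x_2=e_2$ with labels $1,2$ respectively, initialized with small weights of scale $\epsilon$ and zero biases. The crucial structural fact is a time-scale separation between weight and bias dynamics: the gradient of the loss with respect to $b_i$ on an example $(x,y)$ is $p_i(x)-\indic{y=i}$, which is $\Theta(1)$, while the gradient with respect to $\wi$ carries an extra factor of $r\inner{\wi}{x}^{r-1}$, which is $O(\epsilon^{r-1})$ at initialization. Hence biases can move by $\Theta(1)$ long before the weights grow to $\Theta(1)$.

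I would first establish symmetry breaking. A generic random initialization yields a small discrepancy between the two in-class overlaps $\inner{W_{1,:}}{x_1}$ and $\inner{W_{2,:}}{x_2}$. The slow weight dynamics along these directions are approximately governed by an autonomous ODE of the form $\dot u \propto u^{r-1}$, whose finite-time blowup point is of order $u_0^{-(r-2)}$; therefore one class, say class 1, reaches its ``learned'' weight regime strictly before the other, and the gap between the two times is bounded below by a positive constant on the slow timescale even for arbitrarily small initial asymmetry.

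Next I would analyze the bias dynamics during this intermediate window. While class 1 is already learned but class 2 is not, class 1 examples contribute an exponentially small gradient to all biases because $p_1(x_1)\approx 1$. Class 2 examples, on the other hand, push $b_2$ upward and $b_1$ downward through the softmax gradient, since the logit gap $f_2(x_2)-f_1(x_2)\approx \inner{W_{2,:}}{x_2}^r + b_2 - b_1$ is still small or negative. Integrating these contributions over the intermediate window shows that $b_2-b_1$ accumulates a positive quantity bounded below independently of $\epsilon$; this ordering persists through convergence because once both classes are learned the gradients are negligible, giving the claim that the class learned last carries the larger bias.

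The main obstacle is controlling the coupled dynamics during the intermediate window: one must show that the amplification of the initial weight asymmetry is not wiped out by bias-driven changes in the softmax probabilities, and that secondary mechanisms such as growth of the cross-overlaps $\inner{W_{1,:}}{x_2}$ do not upset the clean picture. A further subtlety is guaranteeing that the notion of ``class learned last'' is well-defined and robust; for this I would use a crossing-time argument based on when $p_y(x)$ first exceeds a threshold such as $1/2$, and verify that this crossing time is monotonically related to the bias ordering. An extension to $k>2$ classes should then follow by iterating the argument class-by-class in the order they get learned.
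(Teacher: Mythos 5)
Your proposal follows essentially the same route as the paper: a tensor-power-style ODE $\dot u \propto u^{r-1}$ amplifies a small initial asymmetry so that one class is ``learned'' well before the other, and during that intermediate window the softmax gradients are no longer balanced, so the learned class's bias is pushed down while the bias of the as-yet-unlearned class is pushed up, yielding the claimed ordering. This matches the structure of the paper's Proposition~\ref{prop:main} and its supporting lemmas.

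There are, however, a few places where your reasoning as stated is either misleading or incomplete. First, your opening ``crucial structural fact'' is off: although each per-example bias gradient $p_i(x)-\indic{y=i}$ is indeed $\Theta(1)$, on a balanced dataset at small initialization the aggregated gradient $\dot b_i = 1 - \frac{k}{N}\sum_x u_i(x)$ is $\approx 0$ because $u_i(x)\approx 1/k$ for every $x$ and $i$. So biases do \emph{not} move $\Theta(1)$ before anything happens to the weights; the bias dynamics only activate after the tensor-power mechanism has broken the softmax symmetry. You implicitly recover the correct picture once you restrict to the intermediate window, but the ``time-scale separation'' framing is not what drives the result. Second, the $k>2$ extension is genuinely nontrivial and you cannot simply iterate a two-class argument: while class~$i$ is being learned, you must simultaneously keep all not-yet-learned classes' biases tightly coupled to the running maximum (otherwise a later class could race ahead purely by virtue of having a larger bias) and keep all already-learned classes' biases from drifting back up. The paper handles this via the separate lemmas on Coupling Biases and Bias Gap Control, proved inductively class by class, and your proposal would need analogues of these. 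Third, ``once both classes are learned the gradients are negligible'' is not by itself enough to lock in the ordering over the full training horizon; the paper instead shows that while the last class's weight is still $O(\delta)$, the gap $\dot b_j - \dot b_k$ remains $\leq -\Omega(1)$ for any learned $j$, which is a strictly stronger monotonicity statement. Finally, you flag but do not resolve the control of cross-overlaps $\inner{W_{j,:}}{x}$ for $x$ of a different class (and of noise correlations if the dataset is noisy); bounding these is required to keep the diagonal tensor-power approximation valid throughout training.
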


Here, a class is learned when all the samples in this class get classified correctly with good confidence. We basically show that once a class gets learned, the bias associated with this class starts decreasing and eventually the class that is learned last has the largest bias.
We formalize this claim in Theorem~\ref{thm:main}. %In general our analysis suggests that classes that are learned earlier will have smaller bias.

%For this model, we prove that even for a simple balanced dataset (that is completely symmetric across classes) the bias will still be different due to the difference in random initialization of weights. More precisely, we show that the class that is learned {\em last} will have the {\em largest} bias (and in general our analysis suggests that classes that are learned earlier will have smaller bias).

In Section~\ref{sec:experiments}, we verify these ideas empirically on fully-connected networks for MNIST~\citep{deng2012mnist}, Fashion-MNIST~\citep{xiao2017fashion} and on VGG-16~\citep{simonyan2014very} for CIFAR-10, CIFAR-100~\citep{krizhevsky2009learning}.
We first show that if we train a neural network {\em without} using any bias, then the error curve has much shorter plateau or no plateau at all. Even for networks that are trained normally with biases, we design a homogeneous interpolation scheme for biases to make sure that both biases and weights are $r$-homogeneous. Such an interpolation indeed significantly shortens the plateau for the error. We also show that decreasing the initialization scale or increasing the network depth can produce a longer plateau in both the error and loss curves. 
Finally, we show that the bias is correlated with the ordering in which the classes are being learned for small datasets, which suggests that even though the model we consider in the convergence analysis is simple, it captures some of the behavior in practice.

\subsection{Related works}
%The study on the loss landscape falls into two categories: one line of work studies the interpolation path between different solutions, and the other line studies the global optimality of local minima~\citep{hardt2016identity,kawaguchi2016deep,ge2017learning,li2018over,liang2018adding,safran2018spurious,kawaguchi2019depth}. 
There are two major lines of work studying interpolation between different points for neural networks, one on monotonic linear interpolation that interpolates the initial network and the learned network, and the other on mode connectivity that connects two learned networks.

\textbf{Monotonic linear interpolation.}
\cite{goodfellow2014qualitatively} first studied the linear interpolation between the network at initialization and the network after training on MNIST. %They showed the loss along the linear interpolation is monotonically decreasing in an approximately convex shape. 
\cite{frankle2020revisiting} extended the experiments to modern networks on CIFAR-10 and ImageNet and found that though the loss/error is still monotonically non-increasing along the path, it remains high until close to the optimum. \cite{lucas2021analyzing} showed that MLI holds when the network output curve along the interpolation path is close to linear (measured by Gaussian length). However, the Gaussian length can only be formally controlled in the NTK regime. %However, when the output curve is close to linear, the loss interpolation becomes approximately convex, which cannot explain the long plateau (non-convex shape) observed by \cite{frankle2020revisiting}.  
%They also showed that the MLI property may be violated when the distance between the network at initialization and the network after training is too far. 

\textbf{Mode connectivity.} Mode connectivity considers the interpolation between two learned networks (modes) found by SGD. In general, a linear interpolation between two different local minima crosses regions of high loss~\citep{goodfellow2014qualitatively}. Surprisingly, \cite{draxler2018essentially} and \cite{garipov2018loss} observed that local minima found by SGD from different initializations can be connected via a piece-wise linear path of low loss. \cite{frankle2020linear} and \cite{fort2020deep} observed that local minima trained from the same initialization can also be connected using a linear path. \cite{freeman2016topology,venturi2018spurious,nguyen2019connected,nguyen2021note,kuditipudi2019explaining,shevchenko2020landscape,nguyen2021solutions}  gave several theoretical explanations for this phenomenon.%shows that when a network is wide enough (exponentially wide or larger than the number of samples), all local minima are connected. \cite{} showed that mode connectivity holds for solutions that satisfy some version of \emph{dropout stability}---dropping half of the neurons from each hidden layer does not increase the loss by much. 

%\cite{shevchenko2020landscape} then showed that network becomes more dropout stable and therefore more connected when the network width increases. \cite{nguyen2021solutions} relaxed the \emph{dropout stability} condition by allowing the intermediate subnetworks to be optimized. 

%showed that for a two-layer network with exponential number of hidden neurons, all the local minima are approximately connected. \cite{venturi2018spurious} showed that for a two-layer network whose width is larger than the number of samples, no spurious valleys exist. \cite{nguyen2019connected,nguyen2021note} proved that when one hidden layer has width larger than the number of samples, all sublevel sets of the loss are connected. 

\section{Preliminaries}\label{sec:prelim}

\begin{figure}[t]
\centering
\includegraphics[width=0.9\linewidth]{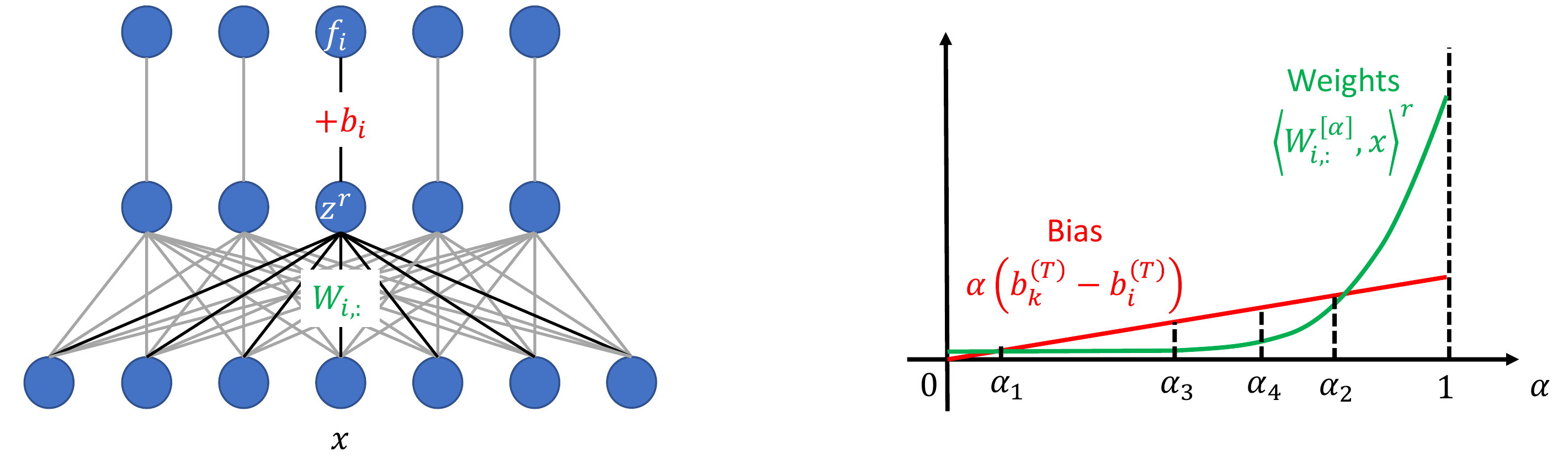}
\caption{\textbf{(Left)} Our $r$-homogeneous-weight model with $f_i(x) = \inner{W_{i,:}}{x}^r + b_i$. \textbf{(Right)} The comparison between interpolated bias $\alpha(\bT_k- \bT_i)$ and interpolated weights signal $\inner{\Wa_{i,:}}{x}^r.$ 
}\label{fig:network_signal_v_bias}
\vspace{-0.3cm}
\end{figure}

We first formally define the linear interpolation between the network at initialization and the network after training. Then we describe the notations that we will use in the paper. 

\textbf{Linear interpolation:} Consider a network with parameters $\theta\in \R^p$. Suppose the network is initialized with parameters $\theta^{(0)}$ and it converges to $\theta^{(T)}.$ A \emph{linear interpolation} is constructed by setting the parameters $\theta^{[\alpha]} = (1-\alpha) \theta^{(0)}+\alpha\theta^{(T)}$ for $\alpha\in [0,1].$ The \emph{loss interpolation curve} is defined as $\gamma_{\text{loss}}(\alpha): [0,1]\rightarrow \R$ such that $\gamma_{\text{loss}}(\alpha)$ is the training loss of the network at $\theta^{[\alpha]}.$ Similarly, the \emph{error interpolation curve} is defined as $\gamma_{\text{error}}(\alpha): [0,1]\rightarrow [0,1]$ with $\gamma_{\text{error}}(\alpha)$ as the training error of the network at $\theta^{[\alpha]}.$ Here, the training error is simply the ratio of training samples that get classified incorrectly by the network. 

\textbf{Notations: } 
We use $[k]$ to denote the set $\{1, 2,\cdots, k\}.$ We use $\mathcal{N}(0, \delta^2)$ to denote the Gaussian distribution with mean zero and variance $\delta^2.$ We use $\n{\cdot}$ to denote the $\ell_2$ norm for a vector or the spectral norm for a matrix. For any non-zero vector $v,$ we use $\bar{v}$ to denote $v/\n{v}.$ We use $O(\cdot), \Theta(\cdot), \Omega(\cdot)$ to hide the dependency on constant factors and use $\tdo(\cdot), \tdtheta(\cdot), \tdomega(\cdot)$ to hide the dependency on poly-logarithmic factors.

For any time $t$, we use $\theta^{(t)}, f^{(t)}$ to denote the parameters and the network at time $t$. For any $\alpha\in [0,1],$ we use $\theta^{[\alpha]}, f^{[\alpha]}$ to denote the $\alpha$ interpolation point, which means $\theta^{[\alpha]}:=(1-\alpha)\theta^{[0]} + \alpha \theta^{[T]}$ and $f^{[\alpha]}$ is the network with parameters $\theta^{[\alpha]}.$
\section{Plateau for loss and error interpolations}\label{sec:interpolation}
We prove that the long plateau exists in the loss and error curves when the initialization is small and the network is deep on fully-connected networks. The detailed proof can be found in Appendix~\ref{sec:proof_fcn_plateau}.

We consider an $r$-layer fully-connected neural network with $r$ at least three. Given input $x\in \R^{n_0},$ the network output is 
\begin{align}
    g(x) := V_r\sigma\pr{V_{r-1}\cdots \sigma(V_1 x)\cdots } + b, \label{eq:network1}
\end{align}
where $V_i\in \R^{n_r\times n_{r-1}}$ for each layer $i\in [r]$ and $b\in \R^{n_r}.$ Here the activation function $\sigma\pr{\cdot}$ can be either identity function or ReLU function. The output layer width equals to the number of classes, i.e., $n_r=k.$ %For convenience, we use $g(x)$ to denote the ``signal'' part $V_r\sigma\pr{V_{r-1}\cdots \sigma(V_1 x)\cdots }$.
%We consider a perfectly balanced dataset with $k$ classes. We assume each input $x$ has $\ell_2$ norm at most one. 
We use $L\pr{\cur{V_i}, b}$ to denote the sum of cross entropy loss over all samples. 

For the biases, we initialize them as zeros and assume after training there exists a gap between the largest bias and the second largest, which also holds empirically (see Figure~\ref{fig:dynamics}). Note this bias gap is essential for the plateau in the error interpolation. If all the biases are equal in the trained network, the logits for different classes only differ by the weights signal and the interpolated network has same label predictions as the trained network. 
\begin{assumption} [Bias Gap] \label{assump:diverse_bias}
Choosing $i^*\in \arg\max_{i\in[k]}b_i^{(T)},$ we have
$
    b_{i^*}^{(T)} - \max_{i\neq i^*}b_i^{(T)} >0.
$
Without loss of generality, we assume that $\bT_k > \max_{i\in [k-1]}\bT_i$. We denote $\deltamin:=\bT_k-\max_{i\in [k-1]}b_i^{(T)}$ and $\deltamax:=\bT_k-\min_{i\in [k-1]}b_i^{(T)}.$
\end{assumption}

Then, we show both the loss and error interpolation curves have a long plateau in Theorem~\ref{thm:fcn_plateau}.

\begin{restatable}{theorem}{fcnplateau}\label{thm:fcn_plateau}
Suppose the network is defined as in Equation~\eqref{eq:network1} and suppose the weights satisfy $\n{\VO_i}\leq \delta,\n{\VT_i}\leq \vmax\ $ for all layers $i\in [r].$ On a $k$-class balanced dataset whose inputs have $\ell_2$ norm at most $1$, if Assumption~\ref{assump:diverse_bias} holds, for any $\epsilon>0,$ as long as $\delta<\min\pr{\frac{\epsilon^{1/r}}{r},\frac{1}{r^2},\pr{\frac{1}{2e}}^{\frac{2}{r-2}}  }$, there exist $\alpha_1 = \frac{\delta}{\deltamin},\alpha_2 = \pr{\frac{1}{1+\sqrt{\delta}}}^{\frac{r}{r-1}}\pr{\frac{\deltamin}{2\vmax^r}}^{\frac{1}{r-1}}$ and $\alpha_3 = \frac{\epsilon^{1/r}}{\vmax}$ such that 
\begin{enumerate}
    \item for all $\alpha\in [\alpha_1, \alpha_2],$ the error is $1-1/k$;
    \item for all $\alpha\in [0, \alpha_3],$ we have 
    $
        \log k-2e\epsilon \leq \frac{1}{N}L\pr{\cur{\Va_i}, \ba} \leq \log k + \alpha \deltamax + 2e\epsilon,
    $
    where $N$ is the number of training examples.
\end{enumerate}
\end{restatable}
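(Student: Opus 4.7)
\textbf{Plan for Theorem~\ref{thm:fcn_plateau}.} The plan is to decompose the logits at the interpolation point $\theta^{[\alpha]}$ into a weight ``signal'' plus a bias contribution, bound each piece, and then read off the error and loss behavior. Since the biases are initialized to zero, $\ba = \alpha\bT$. Writing the signal as $s(x) := \Va_r\sigma(\cdots\sigma(\Va_1 x)\cdots)$ so that the logit for class $i$ is $z_i(x) = s_i(x) + \alpha\bT_i$, the $1$-Lipschitzness of $\sigma$ (identity or ReLU, both with $\sigma(0)=0$) applied coordinatewise combined with the triangle inequality on $\Va_i = (1-\alpha)\VO_i + \alpha\VT_i$ yields the a priori bound
\[
\n{s(x)} \leq \n{x}\cdot\prod_{i=1}^r \n{\Va_i} \leq \pr{(1-\alpha)\delta + \alpha\vmax}^r,
\]
using $\n{x}\leq 1$. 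All subsequent steps reduce to manipulating this single inequality in different ranges of $\alpha$.

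\textbf{Part 1 (error plateau).} It suffices to show that for every training input $x$ and every $i\neq k$, $z_k(x) > z_i(x)$, because then every sample is predicted as class $k$ and the error on the balanced $k$-class dataset is exactly $1 - 1/k$. A sufficient condition is that the bias gap dominates the signal gap, namely $\alpha\deltamin > 2\pr{(1-\alpha)\delta + \alpha\vmax}^r$. The right-hand side is an affine function of $\alpha$ raised to the $r$-th power and therefore convex, so $g(\alpha) := \alpha\deltamin - 2\pr{(1-\alpha)\delta + \alpha\vmax}^r$ is concave; by concavity it suffices to verify $g(\alpha_1)\geq 0$ and $g(\alpha_2)\geq 0$. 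The check at $\alpha_2$ is by design: the factor $(1+\sqrt\delta)^{-r/(r-1)}$ is tuned precisely so that $(1-\alpha_2)\delta + \alpha_2\vmax \leq (1+\sqrt\delta)\alpha_2\vmax$, after which $2\pr{(1+\sqrt\delta)\alpha_2\vmax}^r$ collapses to $\alpha_2\deltamin$. The check at $\alpha_1 = \delta/\deltamin$ reduces to $2\pr{(1-\alpha_1)\delta + \alpha_1\vmax}^r \leq \delta$; this is where the smallness assumptions $\delta\leq 1/r^2$ (used to bound $(1+\sqrt\delta)^r\leq e$) and $\delta\leq (1/(2e))^{2/(r-2)}$ enter, and I expect this endpoint check to be the main source of technical clutter in the proof.

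\textbf{Part 2 (loss bound).} When $\alpha\leq\alpha_3 = \epsilon^{1/r}/\vmax$, the signal bound simplifies via $\delta + \alpha\vmax \leq \epsilon^{1/r}(1 + 1/r)$ (using $\delta\leq\epsilon^{1/r}/r$), so $\n{s(x)} \leq (1+1/r)^r\epsilon \leq e\epsilon$ in particular coordinatewise. I will then compare the cross-entropy $\ell(x) = -z_y + \log\sum_i e^{z_i}$ to the bias-only loss $\ell_0(x) = -\alpha\bT_y + \log\sum_i e^{\alpha\bT_i}$: since $|s_j|\leq e\epsilon$ for every coordinate, each ratio $e^{z_i - \alpha\bT_i} = e^{s_i}$ lies in $[e^{-e\epsilon},e^{e\epsilon}]$, and so does the weighted average $\sum_i e^{\alpha\bT_i}e^{s_i}/\sum_i e^{\alpha\bT_i}$; a log-sum-exp perturbation then gives $|\ell(x) - \ell_0(x)| \leq 2e\epsilon$. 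Averaging $\ell_0$ over the balanced $k$-class dataset yields $\log\sum_i e^{\alpha\bT_i} - (\alpha/k)\sum_i\bT_i$. Jensen's inequality applied to the concave logarithm produces the lower bound $\log k$, while $\log\sum_i e^{\alpha\bT_i}\leq \log k + \alpha\max_i\bT_i = \log k + \alpha\bT_k$ gives the upper bound $\log k + \alpha\pr{\bT_k - (1/k)\sum_i\bT_i} \leq \log k + \alpha\deltamax$. Adding the $\pm 2e\epsilon$ perturbation yields the claimed two-sided bound.
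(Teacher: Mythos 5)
Your proposal is correct and reaches the same bounds as the paper, but it takes a genuinely different route in both parts, and in each case the route is a bit cleaner than the paper's.

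For the error plateau, the paper's proof of Lemma~\ref{lem:fcn_error_plateau} splits the interval $[\alpha_1,\alpha_2]$ at the auxiliary threshold $\sqrt{\delta}/\vmax$ and argues the two subintervals by separate substitutions. Your observation that $g(\alpha) := \alpha\deltamin - 2\bigl((1-\alpha)\delta + \alpha\vmax\bigr)^r$ is concave (affine minus a convex composition $\phi^r$ with $\phi$ affine and nonnegative) reduces the whole interval to the two endpoint checks, which is a neater global argument. The endpoint arithmetic ends up matching the paper's almost verbatim: at $\alpha_2$ one uses $\delta\leq\sqrt{\delta}\,\alpha_2\vmax$ to absorb the initialization norm into a $(1+\sqrt{\delta})$ factor, and at $\alpha_1$ the hypotheses $\delta\leq 1/r^2$ and $\delta<(1/(2e))^{2/(r-2)}$ enter exactly as you anticipated. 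One loose end you should close: when $\alpha_1\geq\sqrt{\delta}/\vmax$, the $\alpha_1$ endpoint check is not directly a ``$\delta$ small'' calculation but instead follows from the implicit WLOG $\alpha_1\leq\alpha_2$ (otherwise $[\alpha_1,\alpha_2]$ is empty and the claim is vacuous), since in that regime $g(\alpha_1)\geq 0$ is equivalent to $\alpha_1\leq\alpha_2$. The paper faces the same issue and handles it with an explicit, if slightly awkward, case remark.

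For the loss plateau, the paper's Lemma~\ref{lem:fcn_loss_plateau} (via Lemma~\ref{lem:loss_plateau}) partitions the dataset into $N/k$ tuples containing one example per class and applies an HM--AM (equivalently, Jensen-for-$\log$) inequality inside each tuple. Your argument instead bounds $|\ell(x)-\ell_0(x)|\leq 2e\epsilon$ uniformly for every $x$ by viewing the log-sum-exp as a $\bT$-weighted average of the multiplicative perturbations $e^{s_i(x)}\in[e^{-e\epsilon},e^{e\epsilon}]$, and then evaluates the average of the bias-only loss $\ell_0$ in closed form (Jensen for the lower bound, $\max$-term for the upper). This avoids the partition entirely, makes the $\pm 2e\epsilon$ slack transparent, and arrives at exactly the stated two-sided bound. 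Both arguments are sound; yours is shorter and arguably the more natural reading of ``the bias dominates.''
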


The above theorem shows that for all $\alpha \in [\alpha_1, \alpha_2],$ the error remains at $1-1/k$ that is the same as random guessing. We skip the very short initial region $[0,\frac{\delta}{\deltamin}]$ since the bias is very small and the error can be unpredictable due to the randomness in initial weights. When initialization scale $\delta$ is small, this error plateau region is roughly $[0, (\frac{\deltamin}{2\vmax^r})^{\frac{1}{r-1}}]$. Empirically, $\frac{\deltamin}{2\vmax^r}$ is smaller than $1$ and does not change much when depth increases. So the plateau becomes longer in a deeper network. 

Intuitively, the plateau in error curve is there because for a small initialization, the output is close to  \\$\alpha^r V_r^{(T)}\sigma\pr{V_{r-1}^{(T)}\cdots \sigma(V_1^{(T)} x)\cdots } + \alpha \bT$. When $\alpha$ is not large enough $\alpha^r$ is much smaller than $\alpha$, so for every class $i\neq k,$ the first term (signal part) cannot overcome the bias gap $\alpha(\bT_k - \bT_i)$. This implies that all samples are predicted as class $k$ and the error is $1-1/k.$ 

We also show that the average loss cannot be lower than $\log k -2e\epsilon$ when $\alpha \leq \frac{\epsilon^{1/r}}{\vmax}.$ Note a small random initialization can achieve a loss of approximately $\log k$. Usually the bias gap $\deltamax$ in practice is not very large, so the loss curve remains nearly flat during this interpolation region. Again, the loss plateau is becoming longer when depth $r$ increases. This is because the weights signal remains near 0 for a larger range of $\alpha$.

%We prove this loss plateau by showing that the signal $\ga(x)$ has $\ell_2$ norm at most $e\epsilon$ so the network output is dominated by the bias terms, which then gives the loss lower and upper bounds. We can also lower bound the average loss by $\frac{k-1}{k}\log 2$ for $\alpha \in [\alpha_1, \alpha_2]$ since every wrongly classified sample has loss at least $\log 2.$

\section{Training dynamics for creating a bias gap}\label{sec:optimization}

In this section, we explain how the gradient flow dynamics generates a bias gap on a balanced dataset by analyzing a simple model. Below, we first define the network model, training dataset and optimization procedure for our analysis.

\textbf{$r$-homogeneous-weight network:} We consider a two-layer and $k$-output neural network with activation function $\sigma(z):=z^r,$ where $r$ is a positive constant that is at least three. As illustrated in Figure~\ref{fig:network_signal_v_bias} (left), under input $x\in\R^d,$ the $i$-th output $f_i(x)$ is $\inner{W_{i,:}}{x}^r + b_i,$ where the weight vector $W_{i,:}\in \R^d$ is the $i$-th row of weight matrix $W\in \R^{k\times d}$ and $b_i\in\R$ is the $i$-th entry of vector $b\in \R^k.$ In output $f_i(x)$, we call $\inner{W_{i,:}}{x}^r$ the signal since it is input-dependent and call $b_i$ the bias. 

%We also include an analysis for the plateau phenomenon on standard fully-connected deep network.

\textbf{Dataset:} We consider a $k$-class balanced dataset, with $k$ as a constant. We denote the whole dataset as $\set$ and denote the subset for each class $i\in[k]$ as $\set_i$. Each subset $\set_i$ has exactly $N/k$ samples and each sample $x\in\R^d$ is independently sampled as $v_i + \xi,$ where the noise $\xi\sim \mathcal{N}(0, \frac{\sigma^2}{d}I).$ To differentiate the noise terms among different samples, we denote the noise associated with sample $x$ as $\xi_x.$
We assume all $v_i$'s are orthonormal; without loss of generality, we assume $v_i=e_i$ for each class $i.$ Here, we assume the orthogonal features to facilitate the convergence analysis beyond the NTK regime, following previous works~\citep{allen2020towards, ge2021understanding}. %In the training set $\set$, each class has exactly $N/k$ number of samples. We denote the set of samples in class $i$ as $\set_i.$

\textbf{Optimization:}
We initialize each entry in weight matrix $W$ by independently sampling from Gaussian distribution $\mathcal{N}(0, \delta^2)$ and then taking the absolute value~\footnote{In the Xavier initialization, each entry in weight matrix $W$ is sampled from $\mathcal{N}(0, 1/d),$ so we can think of $\delta^2 = 1/d$ that is small when input dimension $d$ is large.}. Our analysis can be trivially generalized to standard Gaussian initialization (without taking absolute value) when $r$ is an even integer. We initialize all bias terms as zeros. We use cross-entropy loss
$
    L(W,b) = \sum_{i\in [k]}\sum_{x\in \set_i}-\log\pr{\frac{\exp\pr{f_i(x)}}{\sum_{j\in[k]}\exp\pr{{f_j(x)}}}},
$
and run gradient flow on $\frac{k}{N}L(W,b)$ for time $T.$ Our analysis can also be extended to gradient descent with a small step size. 

Next we show that running gradient flow from a small initialization can converge to a model with zero error and constant bias gap. 
\begin{restatable}{theorem}{main}\label{thm:main}
Suppose the neural network, dataset and optimization procedure are as defined in Section~\ref{sec:optimization}.
Suppose initialization scale $\delta\leq \Theta(1)$, noise level $\sigma\leq \tdtheta(1)$, dimension $d\geq \tdtheta(1/\delta^{2r-2})$ and number of samples $N\geq \tdtheta(1/\delta^{r-1})$, with probability at least $0.99$ in the initialization, there exists time $T=\Theta(\log(1/\delta)/\delta^{r-2})$ such that we have
\begin{enumerate}
    \item zero error: for all different $i,j\in [k]$ and for all $x\in\set_i$,
    $
        \fT_i(x) \geq \fT_j(x) + \Omega(1);
    $
    \item bias gap: 
    $
        b_{i^*}^{(T)} - \max_{i\neq i^*}b_i^{(T)}\geq \Omega(1) \text{ with } i^*=\arg\max_{i \in [k]}\bT_i.
    $
\end{enumerate}
\end{restatable}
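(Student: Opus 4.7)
The plan is to decompose the gradient flow trajectory into $k$ stages indexed by the order in which the classes become \emph{learned}, and then combine a weight analysis (a tensor power method along the diagonal coordinates) with a bias analysis (sum-conservation plus a sequential ``turn-off''). Because the features $v_i=e_i$ are orthonormal and the noise scale is $\sigma/\sqrt d$, my first step would be to prove a deterministic invariant showing that each row $W_{i,:}$ stays essentially axis-aligned with $e_i$: the diagonal coordinate $w_i := \inner{W_{i,:}}{e_i}$ is the only entry that grows substantially, while every off-diagonal $W_{i,j}$ ($j\neq i$) and every direction orthogonal to $\{e_1,\ldots,e_k\}$ remains $O(\delta)$ throughout the horizon. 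The assumptions $d\geq \tdtheta(1/\delta^{2r-2})$ and $N\geq \tdtheta(1/\delta^{r-1})$ are exactly what is needed, via one-shot concentration, so that $\frac{k}{N}\sum_{x\in\set_i} x \approx e_i$ and the aggregated noise contributions to $\dot W_{j,:}$ along non-signal directions are negligible. I would then declare class $i$ to be \emph{learned} at the first time $T_i$ at which $w_i$ exceeds a fixed absolute constant.

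After reducing each row to its scalar $w_i$, the dynamics simplify to $\dot w_i \approx (1-\bar p_i^{(i)})\, r\, w_i^{r-1}$ where $\bar p_i^{(i)}:=\frac{k}{N}\sum_{x\in\set_i} p_i(x)$. In the initial phase (all $w_j, b_j\approx 0$), $\bar p_i^{(i)}\approx 1/k$ and this is the standard tensor-power ODE with explosion time $T_i \approx C_{r,k}\, w_i(0)^{-(r-2)}$. Since $W_{j,j}(0)$ are i.i.d.\ half-Gaussians with scale $\delta$, a simple anti-concentration argument for constantly many such samples shows that with probability at least $0.99$ the $w_j(0)$'s are pairwise separated by constant multiplicative factors, which produces a well-defined learning permutation $\pi$ with $T_{\pi(1)}<T_{\pi(2)}<\cdots<T_{\pi(k)}$.

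For the biases, two structural facts would drive the argument. First, averaging $\dot b_i = (1-\bar p_i^{(i)}) - \sum_{j\neq i}\bar p_i^{(j)}$ and using $\sum_i p_i(x) = 1$ gives $\frac{d}{dt}\sum_i b_i \equiv 0$, so $\sum_i b_i(t)$ stays pinned at zero. Second, as soon as class $i$ is learned, $\bar p_i^{(i)}\to 1$ kills the positive drift, while the negative drift $-\sum_{j\neq i}\bar p_i^{(j)}$ stays $\Omega(1)$ as long as at least one class $j$ is still unlearned: for $x\in\set_j$ with small $w_j$ one has $f_i(x)\approx b_i$ and $f_j(x)\approx w_j^r+b_j$, both $O(1)$, so $p_i(x)$ is bounded below by a constant. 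Hence biases of already-learned classes strictly decrease while biases of unlearned classes keep receiving positive drift. Iterating inductively across the $k$ stages, I expect that at time $T=\Theta(\log(1/\delta)/\delta^{r-2})$ (the $\log(1/\delta)$ buffer lets every $w_i^r$ grow past the accumulated $O(1)$ biases) the biases satisfy $b_{\pi(k)}^{(T)} > b_{\pi(k-1)}^{(T)} > \cdots > b_{\pi(1)}^{(T)}$ with $\Omega(1)$ pairwise gaps, which yields the claimed bias gap; simultaneously every $f_i^{(T)}(x) - f_j^{(T)}(x) \geq \Omega(1)$ for $x\in\set_i$, giving zero error with $\Omega(1)$ margin.

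The hard part will be maintaining the scalar reductions through all $k$ stages. The approximations $\dot w_i \approx (1-\bar p_i^{(i)}) r w_i^{r-1}$ and $\dot b_i \approx -\sum_{\text{unlearned }j} \bar p_i^{(j)}$ are only valid while the off-diagonal and noise-direction components of each $W_{i,:}$ remain $O(\delta)$; preserving this invariant over the long horizon $\Theta(\log(1/\delta)/\delta^{r-2})$, across the rapid explosions of the $w_i$'s one after another, will require a careful induction that uses the one-shot sample concentration from the $d,N$ assumptions together with a Gronwall-style bound on the off-diagonal drift. A secondary difficulty is quantitatively pinning down the cross softmax weights $\bar p_{\pi(s)}^{(\pi(t))}$ for $s<t$: these need to be bounded below by a constant large enough to accumulate the $\Omega(1)$ bias gap, yet above by a constant small enough that the diagonal dynamics of $w_{\pi(t)}$ is not derailed by the growing $b_{\pi(s)}$ for already-learned classes.
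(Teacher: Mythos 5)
Your plan matches the paper's proof of Proposition~1 almost point for point: sequential class-by-class learning driven by the gap in the half-Gaussian initial diagonals, a tensor-power-method analysis of the scalar reductions $w_i=W_{i,i}$, Gr\"onwall/induction to keep off-diagonal and noise-direction components of every $W_{j,:}$ at $O(\delta)$ and $O(\sqrt{\log N}\,\sigma\delta)$, and a bias drift argument in which a learned class's bias turns negative while unlearned classes' biases keep a positive drift. The sum-conservation identity $\frac{d}{dt}\sum_i b_i\equiv 0$ is a clean observation the paper never states (it instead couples biases pairwise via its three ``Bias Gap Control'' lemmas), and it could streamline the bookkeeping.

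Two points you should repair. First, in the paper the class $k$ that is learned last is \emph{never} ``learned'' in your weight sense within the horizon: $T=s_k$ is chosen so that $W^{(T)}_{k,k}$ is still $O(\delta)$, and class-$k$ samples are classified correctly purely because $b^{(T)}_k$ is the largest bias by $\Omega(1)$, while all competing logits are only $O(\delta^r)+b^{(T)}_j$. Your framing, with $T_{\pi(1)}<\cdots<T_{\pi(k)}\leq T$ all defined by $w_{\pi(i)}$ crossing an absolute constant, is inconsistent with the bias-gap conclusion: if $w_{\pi(k)}$ had already reached a constant by $T$, then $b_{\pi(k)}$ would have begun to drop earlier (by your own drift argument), and the claim that it still sits $\Omega(1)$ above every other bias would need an entirely separate analysis. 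Second, the assertion of $\Omega(1)$ \emph{pairwise} gaps $b_{\pi(1)}^{(T)}<\cdots<b_{\pi(k)}^{(T)}$ is both unnecessary (only the top gap enters the theorem) and likely false: once two learned classes have different biases, the smaller one receives a smaller softmax share on unlearned-class samples, hence a weaker negative drift, so the gap between them shrinks rather than staying $\Omega(1)$. The paper only shows every learned class sits $\Omega(1)$ below the maximum, and does not try to order the learned classes among themselves. You should also drop the heuristic that the $\log(1/\delta)$ factor in $T$ is a ``buffer for $w_i^r$ to overcome the biases''; it is inherited from the tensor-power lemma (Lemma~C.19 of \cite{allen2020towards}) that controls the spread of the explosion times.
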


Due to space limit, we only give a proof sketch here and leave the detailed proof in Appendix~\ref{sec:opt_proof}.
Since our dataset is perfectly balanced, it might seem surprising that gradient flow learns diverse biases. We can compute the time derivative on the bias, $\dot{b}_i = 1-\frac{k}{N}\sum_{x\in\set}u_i(x)$, where $u_i(x)$ is the softmax output for class $i,$ that is $\frac{\exp(f_i(x))}{\sum_{i'\in [k]}\exp(f_{i'}(x))}.$ At the beginning, all logits are small, we have $u_i(x)\approx 1/k$ and $\dot{b}_i \approx 0.$ If all the samples are learned at the same time, we have $u_i(x)\approx 1,u_i(x')\approx 0$ for $x\in\set_i,x'\in\set\setminus\set_i$, which again leads to $\dot{b}_i \approx 0.$

On the other hand, we can consider what happens if all samples in one class (e.g., class $i$) are learned before any sample in any other class (e.g., class $j$) is learned~\footnote{This is indeed possible since all samples of one class only differ in the noise terms in our setting. In the analysis, we can show that the noise term has negligible contribution to the network output and all samples in one class are learned almost at the same time.}. In this case we have 
\begin{align*}
    &\dot{b}_i = 1-\frac{k}{N}\sum_{x\in\set_i}u_i(x)-\frac{k}{N}\sum_{x\in\set\setminus \set_i}u_i(x) \approx 1-\frac{k}{N}\cdot\frac{N}{k}\cdot 1-\frac{k}{N}\cdot\frac{N(k-1)}{k}\cdot \frac{1}{k}  = -\frac{k-1}{k},\\
    &\dot{b}_j = 1-\frac{k}{N}\sum_{x\in\set_i}u_j(x)-\frac{k}{N}\sum_{x\in\set\setminus \set_i}u_j(x) \approx 1-\frac{k}{N}\cdot\frac{N}{k}\cdot 0-\frac{k}{N}\cdot\frac{N(k-1)}{k}\cdot \frac{1}{k} = \frac{1}{k},
\end{align*}
where for any learned sample $x\in\set_i,$ we have $u_i(x)\approx 1, u_j(x)\approx 0$; for any not yet learned sample $x\in\set\setminus\set_i,$ we have $u_i(x),u_j(x)\approx 1/k.$ The above calculation shows that $b_i$ starts to decrease and all the other bias terms increase. Generalizing this intuition, we show that $b_{i'}$ starts to decrease whenever class $i'$ is learned, and the class that is learned last will have the largest bias.

As the weights are initialized randomly, by standard anti-concentration, one can argue that there is a gap between $\WO_{i,i}$'s. Without loss of generality, we assume $\WO_{1,1}>\WO_{2,2}>\cdots > \WO_{k,k}$. The initial difference in the weights will lead to different classes being learned at different time. We show that by doing induction on the following hypothesis through training:% and the gap is at least $\Theta(\delta)$ (which happens with 

%Next, we show that thanks to the initial gaps in $\WO_{i,i}$'s, the learning of different classes does happen at different time. 

\begin{figure}[t]
\centering
\includegraphics[width=0.8\linewidth]{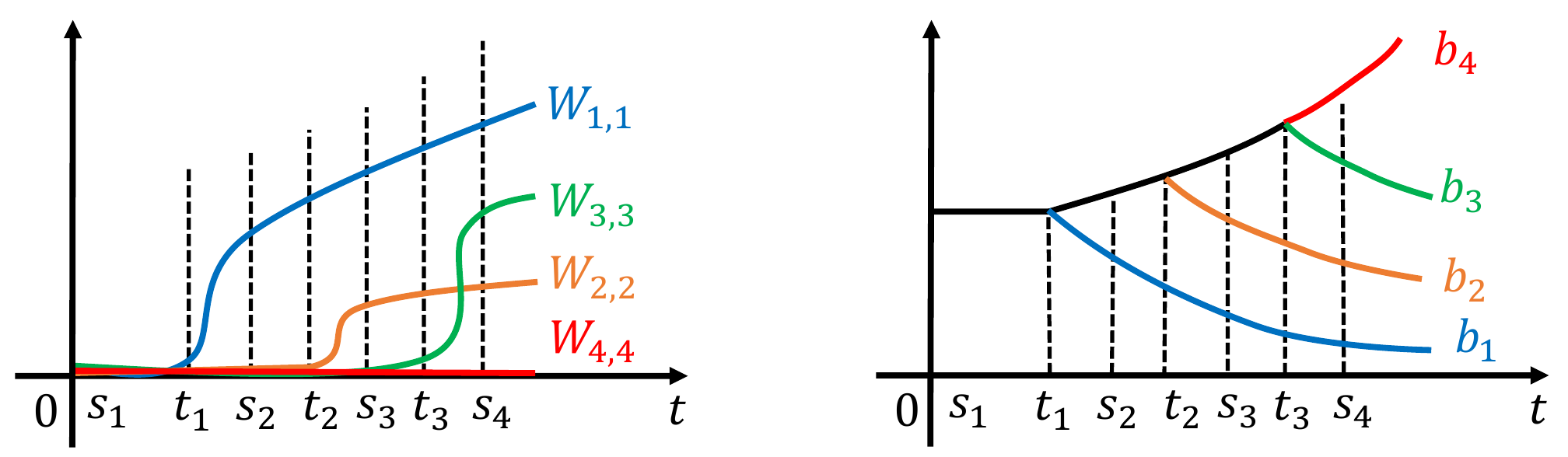}
\caption{ The training dynamics of $W$ and $b$ in a four-class example. 
%\textbf{(Left)} For each $i\in [3],$ $W_{i,i}$ slowly increases to a small constant in $t\in [s_i, t_i]$ and quickly grows larger in $t\in [t_i, s_{i+1}].$ \textbf{(Right)} For each $i\in [3],$ the bias gap $b_4 - b_i$ increases to a constant during $t\in [t_i, s_{i+1}].$
}\label{fig:w_b_dynamics}
\vspace{-0.4cm}
\end{figure}

\begin{restatable}[Induction Hypothesis]{prop}{proposition}\label{prop:main} 
In the same setting of Theorem~\ref{thm:main}, with probability at least 0.99 in initialization, there exist time points
$
    0=:s_1<t_1< s_2<t_2< \cdots< s_{k-1}<t_{k-1}<s_{k}:=T
$
with $t_i-s_i = \Theta(\log(1/\delta)/\delta^{r-2})$ and $s_{i+1}-t_{i} = \Theta(1)$ for $i\in [k-1]$ such that for any $t\in [s_i, s_{i+1}],$ 
\begin{enumerate}
    \item \textbf{(classes not yet learned)} for any class $j,j'\geq i+1,$ we have (1) $\bt_j \geq \max_{i'\in [k]}\bt_{i'} - O(\delta^r)$, (2) $\absr{\bt_j - \bt_{j'} } \leq O(\delta^r)$ and (3) $\Wt_{j,j} \leq O(\delta)$;
    \item \textbf{(classes already learned)} for any class $j\leq i-1$, we have (1) $\bt_j \leq \max_{i'\in [k]}\bt_{i'}-\Omega(1)$, (2) $\ft_j\pr{x} \geq \ft_{i'}\pr{x}+\Omega(1) \text{ for } i'\neq j,x\in\set_j$ and (3) $\Wt_{j,j} \geq \Omega(1)$;
    \item \textbf{(parameters movement)} (1) for any $j\in [k], \Theta(\delta) = \WO_{j,j} < \Wt_{j,j},$ (2) for any distinct $j,j'\in[k], 0<\Wt_{j,j'}\leq O(\delta)$ and (3) for any $j,j'\in[k]$ and any $x\in\set_{j'}, \absr{\inner{\Wt_{j,:}}{\xi_x}}\leq \min\pr{O(\delta), \Wt_{j,j'}}.$  
\end{enumerate}
\end{restatable}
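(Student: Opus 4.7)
\medskip
\noindent\textbf{Proof plan.}
The plan is a forward induction on $i$ that tracks all three invariants at once. First I would write down the gradient-flow ODEs
\[
\dot b_j = 1 - \frac{k}{N}\sum_{x \in \set}u_j(x),\qquad
\dot W_{j,:} = \frac{rk}{N}\sum_{x \in \set}\pr{\indic{x \in \set_j} - u_j(x)}\inner{W_{j,:}}{x}^{r-1}x,
\]
and decompose each sample as $x = e_{\text{class}(x)} + \xi_x$. Using the inductive bound $\absr{\inner{\Wt_{j,:}}{\xi_x}}\leq O(\delta)$ from part (3), the diagonal reduces to $\dot W_{j,j} \approx c_j(t)\,W_{j,j}^{r-1}$, with $c_j(t)$ a positive $\Theta(1)$ constant determined only by which classes are currently learned, and the off-diagonal to $\dot W_{j,j'} \approx -c'_j\,W_{j,j'}^{r-1}$, which only decays toward $0$ from its positive initialization. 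This immediately handles parts (2)--(3) of the movement claim, provided I can also keep the noise inner products small, which I would do by combining the pointwise bound on $\n{\Wt_{j,:}}$ with the Hoeffding estimate $\absr{\inner{e_j}{\xi_x}}=\tdo(\sigma/\sqrt d)$ and a union bound over samples and coordinates.

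I would establish the base case at $s_1=0$ by anti-concentration on half-Gaussian order statistics: with probability at least $0.995$, after relabeling, $\WO_{1,1}>\WO_{2,2}>\cdots>\WO_{k,k}$ have consecutive multiplicative gaps of $(1+\Omega(1))$, which is essentially the only randomness used in the rest of the argument. For the inductive step I split each phase into a growth subphase $[s_i,t_i]$ and a bias subphase $[t_i,s_{i+1}]$. On $[s_i,t_i]$, integrating $\dot w = c\,w^{r-1}$ gives the closed form $w(t)^{2-r} = w(s_i)^{2-r} - c(r-2)(t-s_i)$, so $W_{i,i}^{(t)}$ reaches $\Theta(1)$ in time $\Theta(\delta^{2-r})$ and then needs an extra $\Theta(\log(1/\delta))$ units to push $(W_{i,i}^{(t)})^{r}$ past $\Theta(\log k)$ and saturate the softmax of class $i$; any class $j \geq i+1$ follows the same polynomial ODE from a strictly smaller start (by the anti-concentration gap) and thus remains at $O(\delta)$ throughout this subphase, giving part (1). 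On the subsequent $[t_i,s_{i+1}]$, substituting the saturated softmax into $\dot b_j$ yields $\dot b_i \approx -(k-i)/(k-i+1)$ and $\dot b_j \approx 1/\bigl((k-i)(k-i+1)\bigr)$ for $j \geq i+1$, so a constant amount of time opens an $\Omega(1)$ bias gap while keeping the unlearned biases within $O(\delta^r)$ of each other, giving part (2).

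The main obstacle, I expect, is preserving (2) across the \emph{long} $[s_i,t_i]$ of later phases: after class $i-1$ is learned, $\bt_{i-1}$ keeps drifting downward for another $\Theta(\log(1/\delta)/\delta^{r-2})$ units of time, and I must certify that this drift does not destroy the margin $\ft_{i-1}(x) \geq \ft_{j}(x) + \Omega(1)$ for $x \in \set_{i-1}$. The handle should be that, once class $i-1$'s softmax is saturated, the integral form of the cross-entropy under gradient flow controls the \emph{accumulated} bias loss of $b_{i-1}$ over an entire phase by an $O(1)$ quantity, while the same saturation supplies a matching $\Omega(1)$ lower bound on $W_{i-1,i-1}^{(t)}$. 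A secondary but pervasive nuisance is the noise bookkeeping: $\absr{\inner{\Wt_{j,:}}{\xi_x}}$ has to remain $\tdo(\delta)$ uniformly over the entire $\Theta(\log(1/\delta)/\delta^{r-2})$ horizon, which is precisely where $d \geq \tdtheta(1/\delta^{2r-2})$ and $N \geq \tdtheta(1/\delta^{r-1})$ enter, so as to keep the aggregated noise subdominant to every signal at every time.
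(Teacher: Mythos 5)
Your overall architecture matches the paper's: forward induction over phases, tensor-power ODE analysis for the growth subphase, anti-concentration on the initialization gap, and noise bookkeeping under $d \geq \tdtheta(1/\delta^{2r-2})$, $N\geq\tdtheta(1/\delta^{r-1})$. But the part you yourself flag as the ``main obstacle'' --- preserving $\ft_j(x)\geq \ft_{i'}(x)+\Omega(1)$ for already-learned classes across the long phases of length $\Theta(\log(1/\delta)/\delta^{r-2})$ --- is exactly where your argument breaks.

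You propose that ``the integral form of the cross-entropy under gradient flow controls the accumulated bias loss of $b_{i-1}$ over an entire phase by an $O(1)$ quantity, while the same saturation supplies a matching $\Omega(1)$ lower bound on $W_{i-1,i-1}^{(t)}$.'' This is not the right mechanism, and the $O(1)$ claim is false. Once class $j'$ is learned and classes in a set $U$ are not, $\dot b_{j'}\approx -e^{-g}$ where $g:=\max_{i'}b_{i'}-b_{j'}$, while $\dot b_{\max}\approx 0$; integrating $\dot g \approx e^{-g}$ over a phase of length $\Theta(\log(1/\delta)/\delta^{r-2})$ gives $g=\Theta(\log(1/\delta))$, not $O(1)$. (The total loss decrease being $O(1)$ only gives, via Cauchy--Schwarz, $|b_{j'}(T)-b_{j'}(0)|=O(\sqrt{T})$, which is far weaker.) Consequently an $\Omega(1)$ lower bound on $W_{j',j'}$ alone does \emph{not} keep the margin $\ft_{j'}(x)-\ft_{i'}(x)\approx (W^{(t)}_{j',j'})^r - g$ positive: the bias term can dominate. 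The paper instead establishes an attractor invariant (Lemma~\ref{lem:pred_gap_remain}, resting on Lemma~\ref{lem:output_gap}): whenever the margin falls to a constant level $C_2$, $1-u_{j'}(x)=\Omega(1)$, which drives $\dot W_{j',j'}=\Omega(W_{j',j'}^{r-1})$, hence $\frac{\partial}{\partial t}(\ft_{j'}(x)-\ft_{i'}(x))\geq \Omega(W_{j',j'}^{2r-2})-O(1)>0$ once $W_{j',j'}\geq C_1$. In other words $W_{j',j'}$ grows to chase the widening bias gap; the margin does not survive because the gap is small, it survives because the signal responds to any shrinkage of the margin faster than the bias can close it. You need some version of this coupled feedback argument, not a static accumulated-loss bound.

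Two secondary gaps. (a) Part 3(2) of the proposition requires $0<W^{(t)}_{j,j'}$ strictly. Your statement that the off-diagonal ODE $\dot W_{j,j'}\approx -c'_j W_{j,j'}^{r-1}$ ``only decays toward $0$'' is true for the deterministic part, but the drift terms of size $\tdo(\sigma/\sqrt d)$ integrated over $T=\Theta(\log(1/\delta)/\delta^{r-2})$ can make $W_{j,j'}$ cross zero unless one shows, as in Lemma~\ref{lem:bound_wij}, that once $W_{j,j'}$ is below $\tdtheta(\delta)$ the total possible decrease over $T$ is strictly smaller than the current value. (b) Your substituted softmax values $\dot b_i \approx -(k-i)/(k-i+1)$, $\dot b_j\approx 1/((k-i)(k-i+1))$ are not quite right; the correct leading approximations during the bias subphase are $\dot b_i\approx -(k-i)/(k-i+1)$ and $\dot b_j\approx 0$ up to $\pm O(\delta^r)$ (because the $k-i$ unlearned classes share the leftover mass uniformly, and the learned classes contribute essentially nothing). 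This particular error does not sink the argument, but you should track it, since the $\pm O(\delta^r)$ bound on $\dot b_j$ is precisely what Lemmas~\ref{lem:bias_couple} and~\ref{lem:bias_gap_small} are built to control.
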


This proposition shows that gradient flow learns $k$ classes one by one, from class $1$ to class $k$. More precisely, each class $i$ is learned during time $[s_i,s_{i+1}].$ All the not yet learned classes $j\geq i+1$ have close to maximum biases and their weights $\Wt_{j,j}$'s are small. All the already learned classes $j\leq i-1$ have small biases and large weights $\Wt_{j,j}$'s. For the parameters movement, we know that all the diagonal entries $\Wt_{j,j}$'s are larger than the initialization and all the off-diagonal entries $\Wt_{j,j'}$'s are only $O(\delta).$ The correlation between the weights and noise terms also remains small. 

When learning class $i$ during time $[s_i,s_{i+1}],$ the weight $\Wt_{i,i}$ slowly grows to a small constant in $[s_i, t_i]$ and then quickly grows large in $[t_i, s_{i+1}.]$ As a result, all $x\in\set_i$ become classified correctly. During the same time, $\bt_{i}$ decreases and becomes smaller than the largest bias by at least a constant. At the end time $T=s_k$, although $\WT_{k,k}$ remains small, all $x\in\set_k$ are also classified correctly because $\bT_{k}$ is the largest bias. See an illustration of this learning process in Figure~\ref{fig:w_b_dynamics}.

Although we consider a simple neural network and data distribution, the analysis for the training dynamics is still non-trivial. There are three major challenges in our proof:
(1) How to ensure that class $i+1$ is learned much later than class $i$? 
(2) For any  class $j$ that has not been learned, how to maintain that its bias is close to the maximum?
(3) For any learned class $j,$ how to maintain the large bias gap from the top bias? 
Next, we give the proof ideas for these questions. Since all the off-diagonal entries and correlations with noise terms in $\Wt$ are negligible, in our proof we can essentially focus on the movement of $\Wt_{i,i}$'s and $\bt_i$'s.

\textbf{Lower bounding $s_{i+1}-s_i$.} During time $[s_i, t_i]$, the dynamics of $\Wt_{i,i}$ is similar as in the tensor power method~\citep{allen2020towards,ge2021understanding}. The initial gap between $\WO_{i,i}$ and $\WO_{j,j}$ ensures that when $W^{(t_i)}_{i,i}$ rises to a small constant, $W^{(t_i)}_{j,j}$ is still $O(\delta)$ for all $j\geq i+1.$ Then after constant time $s_{i+1}-t_i,$ $W^{s_{i+1}}_{j,j}$ is still $O(\delta)$ since the increasing rate of $\Wt_{j,j}$ is merely $O(\delta^{r-1}).$

\textbf{Bias for classes that are not learned.}
For $j\geq i+1,$ we maintain that $\bt_{j}\geq \max_{i'\in [k]}\bt_{i'}-O(\delta^r)$. First, we use the below lemma to show biases for any two classes $j,j' \geq i+1$ remain close. 

\begin{restatable}[Coupling Biases]{lemma}{couplebias}\label{lem:bias_couple}
Assuming $W_{j',j'}, W_{j,j} \leq O(\delta )$ and $b_{j'},b_j\geq \max_{i'\in [k]}b_{i'}-O(\delta^r)$, we have
$
    \dot{b}_{j'} - \dot{b}_j>0 \text{  if  } b_{j'}-b_j\leq -\mu\delta^r$,
    and 
    $
    \dot{b}_{j'} - \dot{b}_j<0 \text{  if  } b_{j'}-b_j\geq +\mu\delta^r
$
for some positive constant $\mu.$
\end{restatable}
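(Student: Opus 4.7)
The plan is to compute $\dot{b}_{j'}-\dot{b}_j$ explicitly from the gradient flow equations and then argue that its sign is controlled by the bias difference $b_j-b_{j'}$, because under the stated hypotheses (together with the other clauses of Proposition~\ref{prop:main} that carry over to the current time) the difference in weight signals contributes only an $O(\delta^r)$ perturbation. Starting from $\dot{b}_i = 1-\frac{k}{N}\sum_{x\in\set}u_i(x)$ one immediately gets
\begin{equation*}
\dot{b}_{j'}-\dot{b}_j \;=\; \frac{k}{N}\sum_{x\in\set}\bigl(u_j(x)-u_{j'}(x)\bigr),
\end{equation*}
and since $u_j(x)-u_{j'}(x) = \bigl(e^{f_j(x)}-e^{f_{j'}(x)}\bigr)\bigl/\sum_i e^{f_i(x)}$, this quantity has, term by term, the same sign as $f_j(x)-f_{j'}(x)$. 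So it suffices to show that for every $x\in\set$ the logit gap $f_j(x)-f_{j'}(x)$ has the same sign as $b_j-b_{j'}$ whenever $|b_j-b_{j'}|\ge \mu\delta^r$.

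For this I would bound the signal difference uniformly over $x$. Using the parameter-movement clause of Proposition~\ref{prop:main} (off-diagonals $0<\Wt_{j,j'}\le O(\delta)$ and $|\inner{\Wt_{j,:}}{\xi_x}|\le O(\delta)$) together with the lemma's hypothesis $W_{j,j},W_{j',j'}\le O(\delta)$, one has, for any $x\in \set_{i''}$,
\begin{equation*}
|\inner{W_{j,:}}{x}| \;\le\; |W_{j,i''}|+|\inner{W_{j,:}}{\xi_x}| \;\le\; O(\delta),
\end{equation*}
and likewise for $W_{j',:}$. Hence $|\inner{W_{j,:}}{x}^r|,|\inner{W_{j',:}}{x}^r|\le C\delta^r$ for some absolute constant $C>0$, and so
\begin{equation*}
\bigl|\,\inner{W_{j,:}}{x}^r-\inner{W_{j',:}}{x}^r\bigr| \;\le\; 2C\delta^r \qquad\text{for every }x\in\set.
\end{equation*}

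With this uniform bound in hand, choosing $\mu>2C$ finishes the proof. If $b_{j'}-b_j\le -\mu\delta^r$, then $f_j(x)-f_{j'}(x) = (b_j-b_{j'}) + \bigl(\inner{W_{j,:}}{x}^r-\inner{W_{j',:}}{x}^r\bigr) \ge (\mu-2C)\delta^r>0$ for every $x$, so every summand $u_j(x)-u_{j'}(x)$ is strictly positive and $\dot{b}_{j'}-\dot{b}_j>0$; the symmetric argument handles the other case. The main obstacle I anticipate is simply making sure that the signal bound is valid for \emph{every} $x\in \set$, including samples whose class has already been learned: for those $x$ the relevant off-diagonal entry $W_{j,i''}$ with $i''\le i-1$ is not controlled by the hypothesis $W_{j,j}\le O(\delta)$ alone, which is why I have to invoke the separate off-diagonal clause of Proposition~\ref{prop:main}. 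This is a routine appeal, but it is the one step where the lemma is not truly self-contained.
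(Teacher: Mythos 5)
Your proof is correct and takes a genuinely different, more elementary route than the paper's. Both proofs start from $\dot{b}_{j'}-\dot{b}_j = \frac{k}{N}\sum_{x\in\set}(u_j(x)-u_{j'}(x))$, but from there they diverge. The paper approximates each $u_{j'}(x)$ by $\exp(b_{j'})/\sum_{i'}\exp(f_{i'}(x))$ with an additive $O(\delta^r)$ error, rewrites the difference as $\frac{k}{N}\sum_x\frac{\exp(b_j)-\exp(b_{j'})}{\sum_{i'}\exp(f_{i'}(x))}+O(\delta^r)$, throws away the samples outside $\set_j\cup\set_{j'}$ (since those terms have a fixed sign), and then lower-bounds $\exp(b_j)/\sum_{i'}\exp(f_{i'}(x))\ge\Omega(1)$ on the retained samples --- a step that genuinely uses the hypothesis $b_j\ge\max_{i'}b_{i'}-O(\delta^r)$ and the restriction to $\set_j\cup\set_{j'}$ (for other samples the denominator contains a large logit from an already-learned class). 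You instead observe that $u_j(x)-u_{j'}(x)$ has, for each $x$ separately, the same sign as $f_j(x)-f_{j'}(x)$, and that the signal part $\inner{W_{j,:}}{x}^r-\inner{W_{j',:}}{x}^r$ is uniformly $O(\delta^r)$ over all of $\set$, so that once $|b_j-b_{j'}|\ge\mu\delta^r$ with $\mu$ large enough, every summand already has the desired sign. This is shorter, avoids the lower bound on the softmax probability entirely, and in fact never uses the "$b_j,b_{j'}$ close to the max" hypothesis --- your version of the lemma is strictly more general. You are right that both arguments implicitly lean on the parameter-movement clause of Proposition~\ref{prop:main} for the off-diagonal and noise-correlation bounds (the paper's proof does the same, citing $|\inner{W_{j',:}}{x}|\le O(\delta)+O(\sqrt{\log N}\sigma\delta)$ for arbitrary $x$), so this is not a gap specific to your write-up. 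The one thing your term-by-term argument does not directly produce is the quantitative rate $\dot{b}_{j'}-\dot{b}_j\le-\Omega(\mu\delta^r)$ that falls out of the paper's computation; the lemma as stated only asks for the sign, so this costs nothing here, but it is worth keeping in mind in case a later step of Proposition~\ref{prop:main} ever needs the rate (it does not).
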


Second we show that any already learned or being learned class $j'\leq i$ cannot have bias much larger than any class $j\geq i+1$ not yet learned.
\begin{restatable}[Bias Gap Control I]{lemma}{biascontrolone}\label{lem:bias_gap_small}
For any different $j',j\in[k]$, if $W_{j',j'}\geq W_{j,j} , W_{j,j}\leq O(\delta)$ and $b_{j'}-b_j\geq O(\delta^r), b_j\geq \max_{i'\in [k]} b_{i'} -O(\delta^r),$ we have 
$
    \dot{b}_{j'} - \dot{b}_j <0.
$
\end{restatable}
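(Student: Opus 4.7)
The plan is to reduce the claim $\dot b_{j'}<\dot b_j$ to showing $f_{j'}(x)>f_j(x)$ on every training sample, and then establish this inequality by a short case split on which class the sample belongs to. Using the expression for $\dot b_i$ computed in the proof sketch of Theorem~\ref{thm:main}, one obtains
\begin{align*}
\dot b_{j'} - \dot b_j \;=\; \frac{k}{N}\sum_{x\in\set}\bigl(u_j(x)-u_{j'}(x)\bigr),
\end{align*}
so it suffices to show $u_{j'}(x)>u_j(x)$ for every $x$. Since $u_{j'}(x)$ and $u_j(x)$ share the same softmax denominator, this is equivalent to $f_{j'}(x)>f_j(x)$, and after writing $f_{j'}(x)-f_j(x)=\bigl(\inner{W_{j',:}}{x}^r-\inner{W_{j,:}}{x}^r\bigr)+(b_{j'}-b_j)$ the task reduces to lower-bounding the signal difference by $-O(\delta^r)$, since the hypothesis supplies $b_{j'}-b_j\geq \Omega(\delta^r)$.

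I would handle the signal difference by splitting on the class $l$ of $x\in\set_l$. In the case $l=j'$, I would use parts 3(2) and 3(3) of Proposition~\ref{prop:main}: the noise bound $\absr{\inner{W_{j',:}}{\xi_x}}\leq W_{j',j'}$ forces $\inner{W_{j',:}}{x}=W_{j',j'}+\inner{W_{j',:}}{\xi_x}\geq 0$, so the $j'$-signal is nonnegative, while $\absr{\inner{W_{j,:}}{x}}\leq 2 W_{j,j'}\leq O(\delta)$ gives a $j$-signal of size at most $O(\delta^r)$. The signal difference is therefore at least $-O(\delta^r)$. In the complementary case $l\neq j'$, both relevant coordinates $W_{j,l}$ and $W_{j',l}$ are $O(\delta)$ (by hypothesis when $l=j$ since $W_{j,j}\leq O(\delta)$, and by part 3(2) otherwise) and the noise correlations are also $O(\delta)$ by part 3(3), so both signal terms are uniformly $O(\delta^r)$ and the signal difference is again at least $-O(\delta^r)$.

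Combining the two cases, $f_{j'}(x)-f_j(x)\geq (b_{j'}-b_j) - O(\delta^r)$ on every sample. Reading the hypothesis $b_{j'}-b_j\geq O(\delta^r)$ with a hidden constant strictly larger than the one produced by the signal estimate then yields $f_{j'}(x)-f_j(x)>0$ pointwise, hence $u_{j'}(x)>u_j(x)$ pointwise, and summing over the $N$ training points gives $\dot b_{j'}-\dot b_j<0$. The main obstacle I anticipate is purely bookkeeping: the $O(\delta^r)$ slack coming from the signal bound must be strictly dominated by the $\Omega(\delta^r)$ bias-gap assumption, so one has to track the hidden constants from parts 3(2)–(3) of Proposition~\ref{prop:main} and state the lemma's $O(\delta^r)$ hypotheses with a constant large enough to absorb them. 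There is no delicate cancellation required beyond this; once every signal term is uniformly $O(\delta^r)$, the argument is a clean pointwise dominance.
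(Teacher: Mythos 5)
Your argument is correct and proves the lemma, but it follows a noticeably simpler route than the paper's. The paper also splits into $\set_{j'}$ versus $\set\setminus\set_{j'}$; on $\set_{j'}$ it establishes the same nonnegativity of the $j'$-signal and $O(\delta^r)$ bound on the $j$-signal (giving $u_j(x)-u_{j'}(x)\leq 0$), but on $\set\setminus\set_{j'}$ it bounds the softmax values directly, $u_j(x)=\frac{\exp(b_j)}{\sum_{i'}\exp(f_{i'}(x))}\pm O(\delta^r)$ and similarly for $u_{j'}$, and then derives the quantitative estimate $\dot b_{j'}-\dot b_j\leq -\Omega(\mu\delta^r)\cdot\frac{k}{N}\sum_{x\in\set_j}\frac{\exp(b_j)}{\sum_{i'}\exp(f_{i'}(x))}+O(\delta^r)$, using the lower bound $\frac{\exp(b_j)}{\sum_{i'}\exp(f_{i'}(x))}\geq\Omega(1)$ for $x\in\set_j$ — and it is precisely here that the hypotheses $W_{j,j}\leq O(\delta)$ and $b_j\geq\max_{i'}b_{i'}-O(\delta^r)$ enter. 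You instead push the comparison all the way to the logits and obtain strict pointwise dominance $f_{j'}(x)>f_j(x)$ (hence $u_{j'}(x)>u_j(x)$) on every sample, which makes the second hypothesis $b_j\geq\max_{i'}b_{i'}-O(\delta^r)$ entirely unnecessary: your bound on $\set\setminus\set_{j'}$ only needs that the relevant $W$-entries and noise correlations are $O(\delta)$, which you correctly pull from parts 3(2)–(3) of Proposition~\ref{prop:main}. The trade-off is that the paper's version yields a quantitative drift $\leq -\Omega(\delta^r)$, whereas yours yields only the sign; since the lemma as stated (and its use in the induction, to cap $b_{j'}-b_j$ at $O(\delta^r)$) only needs the sign, your proof suffices, but anyone wanting the quantitative rate would have to fall back on the paper's softmax lower bound.
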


\textbf{Bias for learned classes.} At time $s_{j+1},$ we can prove that $1-u_j^{(s_{j+1})}(x)\leq C_1$ for all $x\in\set_j$ and $b_j^{(s_{j+1})}-b_k^{(s_{j+1})}\leq -C_2$. According to the below lemma, we can ensure that $\bt_j-\bt_k\leq -C_2$ for any $t\geq s_{j+1}.$

\begin{restatable}[Bias Gap Control II]{lemma}{biascontroltwo}\label{lem:bias_gap_remain}
There exist small positive constants $C_1,  C_2$ such that for any $j\in[k-1]$ and any $x\in\set_j$, if $1-u_j(x)\leq C_1, W_{k,k}\leq O(\delta)$ and $b_j-b_k\geq -C_2,$ we have 
$
    \dot{b}_j - \dot{b}_k < -\Omega(1).
$
\end{restatable}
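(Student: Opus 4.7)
The plan is to work directly from the explicit formula for the bias dynamics,
\begin{equation*}
    \dot{b}_j - \dot{b}_k \;=\; -\tfrac{k}{N}\sum_{x\in\set}\pr{u_j(x)-u_k(x)},
\end{equation*}
and to show that under the lemma's hypotheses the inner sum is at least $\Omega(N/k)$, which immediately yields $\dot{b}_j-\dot{b}_k\le -\Omega(1)$. I would split the sum into four groups according to the class of the sample $x$: (i) $x\in\set_j$, (ii) $x\in\set_k$, (iii) $x\in\set_i$ with $i\ne j,k$ and class $i$ already learned, and (iv) $x\in\set_i$ with $i\ne j,k$ and class $i$ not yet learned. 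All of the positive contribution comes from group (i); the proof amounts to showing the other three groups can subtract at most a small fraction of it.

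For group (i), the hypothesis $1-u_j(x)\le C_1$ for every $x\in\set_j$ immediately gives $u_j(x)-u_k(x)\ge 1-2C_1$, so this group contributes at least $(N/k)(1-2C_1)$. For groups (ii) and (iv) I would use the algebraic identity $u_j(x)-u_k(x) = u_k(x)\pr{\exp(f_j(x)-f_k(x))-1}$ together with the decomposition $f_j(x)-f_k(x) = (b_j-b_k) + \inner{W_{j,:}}{x}^r - \inner{W_{k,:}}{x}^r$. The lemma's hypothesis $W_{k,k}\le O(\delta)$, combined with the third clause of Proposition~\ref{prop:main} (all off-diagonal weight entries and noise correlations are $O(\delta)$), implies that for any $x\in\set_i$ with $i$ an unlearned class, both signals $\inner{W_{j,:}}{x}^r$ and $\inner{W_{k,:}}{x}^r$ are $O(\delta^r)$. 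Together with $b_j-b_k\ge -C_2$, this yields $u_j(x)-u_k(x) \ge -\pr{e^{C_2+O(\delta^r)}-1}u_k(x) \ge -3C_2\,u_k(x)$ for sufficiently small $C_2$ and $\delta$. Summing over all unlearned-class samples and using the crude bound $\sum_x u_k(x)\le N$ shows that groups (ii) and (iv) together contribute at least $-O(C_2)\cdot N$.

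For group (iii), the second clause of Proposition~\ref{prop:main} guarantees that for any already-learned class $i$ and any $x\in\set_i$ one has $f_i(x)\ge f_{i'}(x)+\Omega(1)$ for every $i'\ne i$; applied with $i'=j$ and $i'=k$, this forces both $u_j(x)$ and $u_k(x)$ to be at most $e^{-\Omega(1)}$, so this group contributes at worst $-e^{-\Omega(1)}\cdot N$. Combining the four groups gives
\begin{equation*}
    \sum_{x\in\set}\pr{u_j(x)-u_k(x)} \;\ge\; \tfrac{N}{k}\pr{1-2C_1 - O(kC_2) - (k-2)e^{-\Omega(1)}}.
\end{equation*}
Since $k$ is a constant, choosing the absolute constants $C_1,C_2$ small enough makes this $\Omega(N/k)$, which is exactly what is needed.

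The main obstacle I expect is ensuring the $O(\delta^r)$ bound on the signal terms actually holds uniformly over every $x\in\set$ in the unlearned groups: this genuinely requires the off-diagonal-entry and noise-correlation bounds of the third clause of Proposition~\ref{prop:main}, not merely the diagonal hypothesis $W_{k,k}\le O(\delta)$ given in the statement. A secondary subtlety is turning the induction bound $f_i(x)\ge f_{i'}(x)+\Omega(1)$, which only directly compares $u_{i'}$ to $u_i$, into the absolute statement $u_k(x)\le e^{-\Omega(1)}$ used in group (iii); this follows from $u_k(x)\le e^{-\Omega(1)}u_i(x)\le e^{-\Omega(1)}$.
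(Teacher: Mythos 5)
Your overall skeleton matches the paper's: both write $\dot{b}_j-\dot{b}_k = \frac{k}{N}\sum_{x\in\set}\pr{u_k(x)-u_j(x)}$, split off the dominating $\set_j$ sum (where $1-u_j(x)\le C_1$ forces $u_k-u_j\le -1+2C_1+O(\delta)$), and control the remainder by showing the softmax differences there are small. Your groups (ii) and (iv) are handled essentially as in the paper, via the observation that for $x\notin\set_j$ both $\inner{W_{j,:}}{x}$ and $\inner{W_{k,:}}{x}$ involve only off-diagonal weight entries (or $W_{k,k}\le O(\delta)$), so the signals are $O(\delta^r)$ and $f_j(x)-f_k(x)=b_j-b_k\pm O(\delta^r)\ge -C_2-O(\delta^r)$, giving the $O(C_2)+O(\delta^r)$ per-sample control.

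The gap is your group (iii). You bound $u_k(x)\le e^{-\Omega(1)}$ using the induction clause $f_i(x)\ge f_{i'}(x)+\Omega(1)$ and then claim the total can be made positive by choosing $C_1,C_2$ small. But $(k-2)e^{-\Omega(1)}$ is \emph{not} controlled by $C_1$ or $C_2$ --- the $\Omega(1)$ gap is a property of the trained network supplied by Proposition~\ref{prop:main}, not a free constant you are choosing. Unless you also argue that the constant hidden in $\Omega(1)$ grows as $C_1$ shrinks (which is true in the paper's Stage~2/3 construction, since the threshold $\mu_2=C_1$ forces the logit gap to satisfy $e^{-\Omega(1)}\lesssim C_1/(k-1)$, but you do not track this), the sum $1-2C_1-O(kC_2)-(k-2)e^{-\Omega(1)}$ has no reason to be positive. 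The simplest repair is also the paper's route: do not split the already-learned classes into their own group at all. The argument you used for groups (ii) and (iv) applies verbatim to group (iii), since for $x\in\set_i$ with $i\ne j,k$ the relevant weight entries $W_{j,i}$ and $W_{k,i}$ are off-diagonal and hence $O(\delta)$ by Proposition~\ref{prop:main}, regardless of whether class $i$ is learned. Treating all of $\set\setminus\set_j$ uniformly gives $u_k(x)-u_j(x)\le O(C_2)+O(\delta^r)$ throughout and closes the argument cleanly.

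One small additional note: the lemma hypothesis is ``$1-u_j(x)\le C_1$ for \emph{some} $x\in\set_j$,'' and the paper first upgrades this to all $x'\in\set_j$ via $\absr{u_j(x)-u_j(x')}\le O(\delta)$ before using it; you assume the for-all version directly, which is harmless but worth flagging.
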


\subsection{Plateau and monotonicity for $r$-homogeneous-weight network}\label{sec:simple_model_interp}
Now assuming the network at initialization and after training satisfies the properties described in Theorem~\ref{thm:main} and Proposition~\ref{prop:main}, we can prove a tighter bound on the plateau region and also show the monotonicity in error and loss curve. See the complete proofs in  Appendix~\ref{sec:proof_error_interp} and Appendix~\ref{sec:proof_loss_interp}. %In particular, we require the network after training has zero error and constant bias gap (as guaranteed in Theorem~\ref{thm:main}). For the parameters, we require the diagonal entries in $\Wt$ to be increasing; the off-diagonal entries in $\Wt$ and the correlations with the noise terms to be small. 

Same as in Assumption~\ref{assump:diverse_bias}, 
 we use $\Delta_i$ to denote the bias gap $\bT_k-\bT_i$ for $i\in [k-1]$ and denote $\deltamin := \min_{i\in [k-1]}\Delta_i$ and $\deltamax = \max_{i\in [k-1]}\Delta_i$. For the weights, we denote $\wmin = \min_{i\in [k-1]}\WT_{i,i}$ and $\wmax = \max_{i\in [k]}\WT_{i,i}.$ We denote $\rmin = \min_{i\in [k-1]}\Delta_i/[\WT_{i,i}]^r $, $\rmax = \max_{i\in [k-1]}\Delta_i/[\WT_{i,i}]^r.$ Below, we show the plateau and monotonicity of loss and error interpolations in Theorem~\ref{thm:r_homo_interp}.

\begin{theorem}\label{thm:r_homo_interp}
Suppose the neural network, dataset and optimization procedure are as defined in Section~\ref{sec:optimization}.
Suppose the network at initialization and after training satisfies the properties described in Theorem~\ref{thm:main} and Proposition~\ref{prop:main}. For any $\epsilon\in(0,1),$ suppose $\delta\leq \min(\Theta(\epsilon^{1/r}), \Theta(\rmin^{\frac{1}{r-1}}\deltamin^{1/r}), \Theta((\frac{\wmin}{\wmax})^{\frac{2r}{r-2}}) ).$ There exist $\alpha_1 = \frac{\delta}{\deltamin},\alpha_2 = (\frac{1}{1+O(\sqrt{\delta})})^{\frac{r}{r-1}}\rmin^{\frac{1}{r-1}},\alpha_3 = \frac{\epsilon^{1/r}}{\wmax}$ and $\alpha_4 = \pr{1+O(\delta)}^{\frac{1}{r-1}}\pr{\frac{\rmax}{r}}^{\frac{1}{r-1}}$ such that
\begin{enumerate}
    \item for all $\alpha\in [\alpha_1, \alpha_2],$ the error is  $1-1/k;$ for all $\alpha\in [\alpha_1, 1],$ the error is non-increasing;
    \item for all $\alpha \in [0, \alpha_3]$, we have $\log k-e\epsilon \leq \frac{1}{N}L(\Wa, \ba) \leq \log k + \alpha \deltamax + e\epsilon;$ for all $\alpha \in [\alpha_4, 1]$, the loss is strictly monotonically decreasing.
\end{enumerate}
\end{theorem}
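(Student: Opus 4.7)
The plan is to reduce every claim to two structural facts. First, by Proposition~\ref{prop:main}, $\WT$ has $\WT_{j,j}\geq\Omega(1)$ for $j\leq k-1$, $\WT_{k,k}=O(\delta)$, and all off-diagonal entries together with noise correlations $\inner{\WT_{j,:}}{\xi_x}$ and initial weights $\WO$ are $O(\delta)$. Second, $\bO=0$ so $\ba=\alpha\bT$ exactly. For $x=e_j+\xi_x\in\set_j$ these give $\inner{\Wa_{j,:}}{x}=\alpha\WT_{j,j}+O(\delta)$ when $j\leq k-1$ and $\inner{\Wa_{i,:}}{x}=O(\delta)$ otherwise, so $\fa_j(x)=(\alpha\WT_{j,j})^r(1+O(\sqrt\delta))+\alpha\bT_j$ for $j\leq k-1$, and $\fa_i(x)=O(\delta^r)+\alpha\bT_i$ when $i\neq j$ or $j=k$. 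All four claims reduce to comparing these expansions against the bias gap $\alpha(\bT_k-\bT_i)$.

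For the error plateau on $[\alpha_1,\alpha_2]$, $\alpha\geq\alpha_1=\delta/\deltamin$ makes the bias gap $\alpha(\bT_k-\bT_i)\geq\delta$ dominate the $O(\delta^r)$ signal residuals, so $\fa_k(x)>\fa_i(x)$ for every $i\neq k,j$ and every $x$. The binding case is $i=j\neq k$, where we need $\alpha\Delta_j>(\alpha\WT_{j,j})^r(1+O(\sqrt\delta))$, i.e.\ $\alpha^{r-1}<(1+O(\sqrt\delta))^{-1}\Delta_j/[\WT_{j,j}]^r$; minimizing over $j$ yields the stated $\alpha_2$. For the monotonicity claim on $[\alpha_1,1]$, bias domination keeps every $x\in\set_k$ predicted as $k$, and for $x\in\set_j$ with $j<k$ it rules out every candidate class besides $j$ and $k$. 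The margin $\fa_j(x)-\fa_k(x)\approx(\alpha\WT_{j,j})^r-\alpha\Delta_j$ is strictly convex in $\alpha$, negative at $\alpha_1$ and positive at $\alpha=1$ (zero-error of the trained model), so it crosses zero exactly once on $[\alpha_1,1]$: each sample flips from predicted-$k$ to predicted-$j$ at most once and stays correct thereafter, giving non-increasing error.

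For the loss bound on $[0,\alpha_3]$, $\alpha\leq\epsilon^{1/r}/\wmax$ together with $\delta=O(\epsilon^{1/r})$ forces $\inner{\Wa_{i,:}}{x}^r\leq\epsilon(1+o(1))$ uniformly, giving $\fa_i(x)=\alpha\bT_i+O(\epsilon)$. Lipschitzness of log-sum-exp on a bounded set then yields $\frac{1}{N}L(\Wa,\ba)=\frac{\alpha}{k}\sum_{i\neq k}\Delta_i+\log\pr{1+\sum_{i\neq k}\exp(-\alpha\Delta_i)}\pm e\epsilon$; the log-sum-exp term lies in $[\log k-\alpha\deltamax,\log k]$, producing the two-sided bound. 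For strict monotonicity on $[\alpha_4,1]$, I would differentiate sample-wise: for $x\in\set_j$, $\frac{dL_j(x)}{d\alpha}=\sum_{i\neq j}u_i(x)(\dot f_i-\dot f_j)$ with $\dot f_i=r\inner{\Wa_{i,:}}{x}^{r-1}\inner{\WT_{i,:}-\WO_{i,:}}{x}+\bT_i$. The structural bounds give $\dot f_j-\dot f_i\approx r\alpha^{r-1}[\WT_{j,j}]^r-(\Delta_j-\Delta_i)$, which is uniformly positive (worst case $\Delta_i=0$ with $j$ attaining $\rmax$) iff $\alpha^{r-1}\geq(1+O(\delta))\rmax/r$, matching $\alpha_4$. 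Since every $u_i(x)>0$, $\frac{dL_j(x)}{d\alpha}<0$ strictly.

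The main obstacle is propagating the $O(\delta)$ and $O(\sqrt\delta)$ corrections through the $r$-th power cleanly enough to recover the precise slack factors $(1+O(\sqrt\delta))^{-r/(r-1)}$ and $(1+O(\delta))^{1/(r-1)}$ inside $\alpha_2$ and $\alpha_4$; the assumption $\delta\leq\Theta((\wmin/\wmax)^{2r/(r-2)})$ is what makes these corrections absorbable uniformly across classes with possibly unequal $\WT_{j,j}$, and verifying the critical $i=j$ case in the error plateau requires the full strength of this bound rather than the cruder $\delta = O(\epsilon^{1/r})$ used for the loss control.
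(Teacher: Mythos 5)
Your proposal tracks the paper's own structure closely: the paper also proves this theorem as a direct combination of four sublemmas (error plateau, error monotonicity, loss plateau, loss monotonicity), and your loss-monotonicity argument is essentially identical to theirs. Two of your steps take genuinely different routes, and there are two concrete gaps.

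\textbf{Error plateau --- gap.} Your uniform expansion $\fa_j(x)=(\alpha\WT_{j,j})^r(1+O(\sqrt{\delta}))+\alpha\bT_j$ is only valid once $\alpha\gtrsim\sqrt{\delta}/\wmin$; near $\alpha_1=\delta/\deltamin$ the initialization term $(1-\alpha)\WO_{j,j}=\Theta(\delta)$ can dominate $\alpha\WT_{j,j}$, so the $(1+O(\sqrt{\delta}))$ multiplicative form breaks. The paper handles this by explicitly splitting the plateau range at $\sqrt{\delta}/\wmin$: on $[\alpha_1,\sqrt{\delta}/\wmin)$ the signal is bounded crudely by $e(\sqrt{\delta}\wmax/\wmin)^r$, which is smaller than the bias gap $\alpha\Delta_i\geq\delta$ precisely when $\delta\leq\Theta((\wmin/\wmax)^{2r/(r-2)})$. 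So that constraint is there to control the low-$\alpha$ regime where your expansion fails, not to ``absorb corrections uniformly across classes.'' You gesture at this at the end but misattribute the role of the constraint, and your main argument skips that regime.

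\textbf{Loss lower bound --- gap.} Your stated bound ``log-sum-exp $\in[\log k-\alpha\deltamax,\log k]$'' plus $\frac{\alpha}{k}\sum_{i\neq k}\Delta_i\geq 0$ yields only $\frac{1}{N}L\geq\log k-\alpha\deltamax-e\epsilon$, and since $\alpha$ can be as large as $\alpha_3=\epsilon^{1/r}/\wmax$ with $\deltamax=\Theta(1)$, this is $\log k-\Theta(\epsilon^{1/r})-e\epsilon$, which is strictly weaker than the claimed $\log k-e\epsilon$ when $\epsilon$ is small. You actually need that the zero-signal loss $\frac{\alpha}{k}\sum_{i\neq k}\Delta_i+\log\bigl(1+\sum_{i\neq k}\exp(-\alpha\Delta_i)\bigr)$ is $\geq\log k$ \emph{exactly} (by Jensen/AM--GM on the exponentials), so that the only loss is the $e\epsilon$ signal perturbation. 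The paper's HM--GM manipulation over balanced subsets achieves exactly this; your Lipschitz-of-log-sum-exp framing is fine as a skeleton but the lower bound needs the AM--GM step, not the per-term crude bound you wrote.

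\textbf{Error monotonicity --- different route.} You argue convexity of the margin $\fa_j(x)-\fa_k(x)$ plus sign conditions at the endpoints, giving at most one zero-crossing. The paper instead shows the margin's $\alpha$-derivative is positive whenever the margin is positive (``once correct, stays correct''), separately for $j=k$ and $j\neq k$. Your approach is an attractive alternative --- it exploits the fact that after reduction the only competing class is $k$ --- but to make it rigorous you must quantify that $m''(\alpha)=r(r-1)\bigl[a(\alpha)^{r-2}(a')^2-c(\alpha)^{r-2}(c')^2\bigr]>0$, where $a=\inner{\Wa_{j,:}}{x}$, $c=\inner{\Wa_{k,:}}{x}$: the second term is a concave $O(\delta^r)$ perturbation from class $k$, so you need $a(\alpha)^{r-2}(a')^2\gtrsim\delta^{r-2}$ to dominate. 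This holds, but it is not automatic from ``$\approx(\alpha\WT_{j,j})^r$ is convex.'' The paper also uses the $\delta\leq\Theta(\rmin^{1/(r-1)}\deltamin^{1/r})$ condition explicitly in this lemma to control $\inner{\WO_{i,:}}{x}$ relative to $\alpha\inner{\WT_{i,:}}{x}$; your sketch never invokes it.

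\textbf{Loss monotonicity --- same route.} Your sample-wise differentiation with $\dot{f}_j-\dot{f}_i\approx r\alpha^{r-1}[\WT_{j,j}]^r-(\Delta_j-\Delta_i)$ and worst case $\Delta_i=0$, $j$ attaining $\rmax$, matches the paper's Lemma on loss monotonicity exactly, including the derivation of $\alpha_4$.

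In short: the skeleton and the hard bounds ($\alpha_2$, $\alpha_4$) are right, the convexity idea is a nice alternative for error monotonicity, but the small-$\alpha$ regime of the error plateau and the loss lower bound as written have real holes that the paper's two-regime split and HM--GM argument are designed to close.
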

The analysis for the plateau is very similar as in Theorem~\ref{thm:fcn_plateau} since a $r$-homogeneous-weight network is similar to a depth-$r$ fully connected neural network with only last-layer biases in the sense that the weights are $r$-homogeneous while the (last-layer) bias is $1$-homogeneous. For the error plateau, we prove a tighter bound on the right boundary $\alpha_2$ than in Theorem~\ref{thm:fcn_plateau}. We also show the error is non-increasing for $\alpha \in \br{\delta/\deltamin, 1}$ by arguing that once a sample is correctly classified at interpolated point $\alpha'\geq \delta/\deltamin$, it will remain so for any $\alpha\geq \alpha'$. Similar as in Theorem~\ref{thm:fcn_plateau}, we can show that the loss is no smaller than $\log k -e\epsilon$ when $\alpha \leq \frac{\epsilon^{1/r}}{\wmax}.$ To show the monotonicity of loss after $\alpha_4,$ we show that $\fa_i(x)-\fa_j(x)$ is increasing in $\alpha$ for $i\neq j$ and $x\in\set_i.$

In summary, for $\alpha \in [\alpha_1, \alpha_2],$ the signal is smaller than the bias gap and the error remains at $1-1/k$. Before $\alpha_3,$ the signal is very small and the loss remains large; after $\alpha_4,$ the signal starts to overcome the bias gap and the loss is decreasing. See an illustration in Figure~\ref{fig:network_signal_v_bias} (right).
\section{Experiments}\label{sec:experiments}
\vspace{-0.1cm}
In this section we empirically show that intuitions from our simple theoretical model can also be applied to more realistic datasets and architectures. 
First, we show that bias plays an important role in creating the plateau in the error interpolation, as predicted by Theorem~\ref{thm:fcn_plateau}. We then demonstrate the influence of initialization size and network depth (also see Theorem~\ref{thm:fcn_plateau}). Finally, we show that similar to Proposition~\ref{prop:main} the class that is learned last often has larger bias. Due to space constraint, we only show the results on MNIST and CIFAR-100 in this section, while similar results also hold on Fashion-MNIST and CIFAR-10 (see Appendix~\ref{sec:experiment_appendix}). 

Unless specified otherwise, we use a depth-10 and width-1024 fully-connected ReLU neural network (FCN10) for MNIST and use VGG-16 (without batch normalization) for CIFAR-100. We use Kaiming initialization~\citep{he2015delving} for the weights and set all bias terms as zeros. For FCN10 on MNIST, we use a small initialization by scaling the weights of each layer by $(0.001)^{1/10}$ so the output is scaled by $0.001.$ We train each network using SGD for $100$ epochs. See more experiment settings in Appendix~\ref{sec:experiment_appendix}.

%On MNIST, we train the network using SGD with learning rate $0.01$ and momentum $0.9$ for $100$ epochs. On CIFAR-100, we run SGD with momentum $0.9$ and weight decay $1e-4$ for $100$ epochs. For the learning rate, we start from $0.01$ and reduce it by a factor of $0.15$ at the $60$-th epoch and $90$-th epoch.

We linearly interpolate using $50$ evenly spaced points between the network at initialization and the network at the end of training. We evaluate error and loss on the train set. For each setting, we repeat the experiments three times from different random seeds and plot the mean and deviation. 

\begin{figure}[h]
        \begin{subfigure}[b]{0.25\textwidth}
                \centering
                \includegraphics[width=\linewidth]{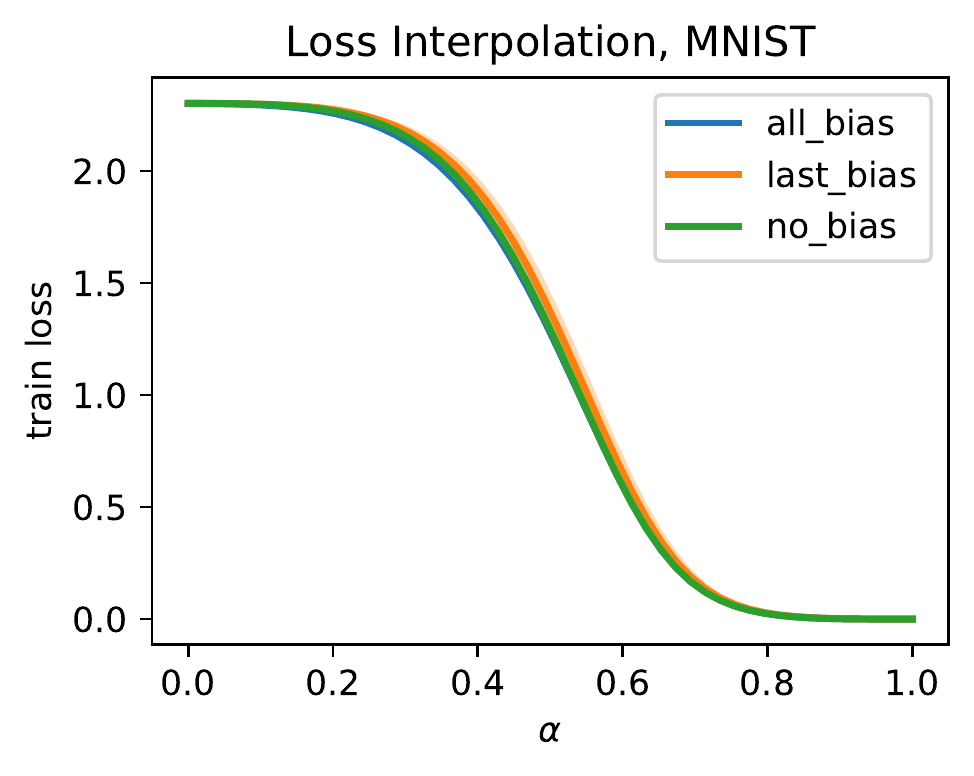}
        \end{subfigure}%
        \begin{subfigure}[b]{0.25\textwidth}
                \centering
                \includegraphics[width=\linewidth]{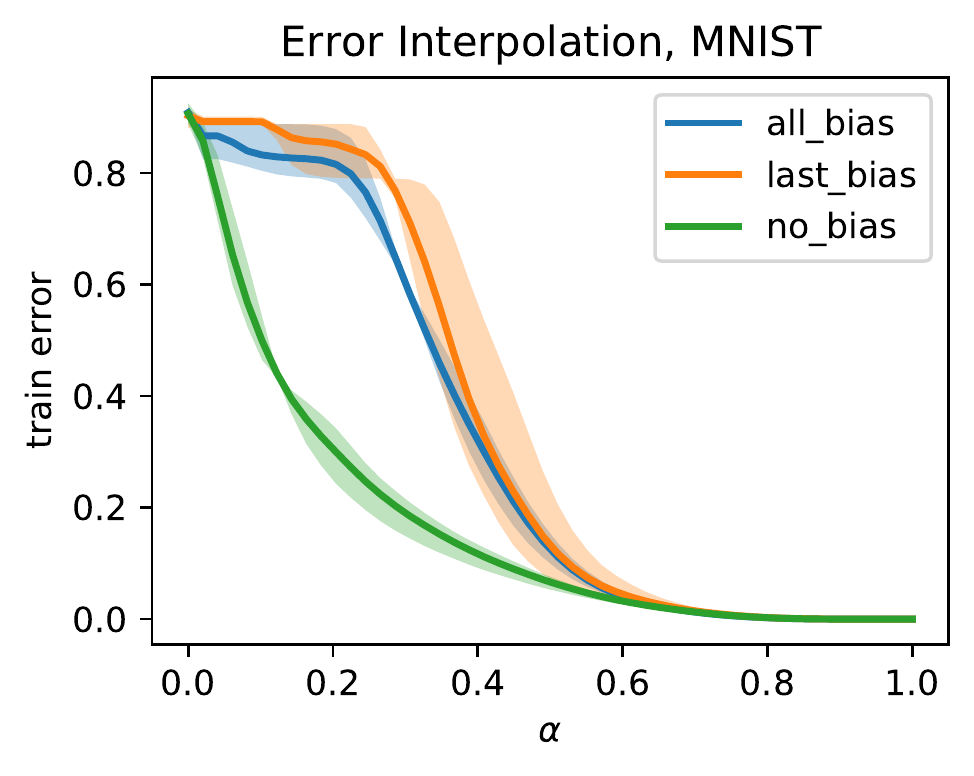}
        \end{subfigure}%
        \begin{subfigure}[b]{0.25\textwidth}
                \centering
                \includegraphics[width=\linewidth]{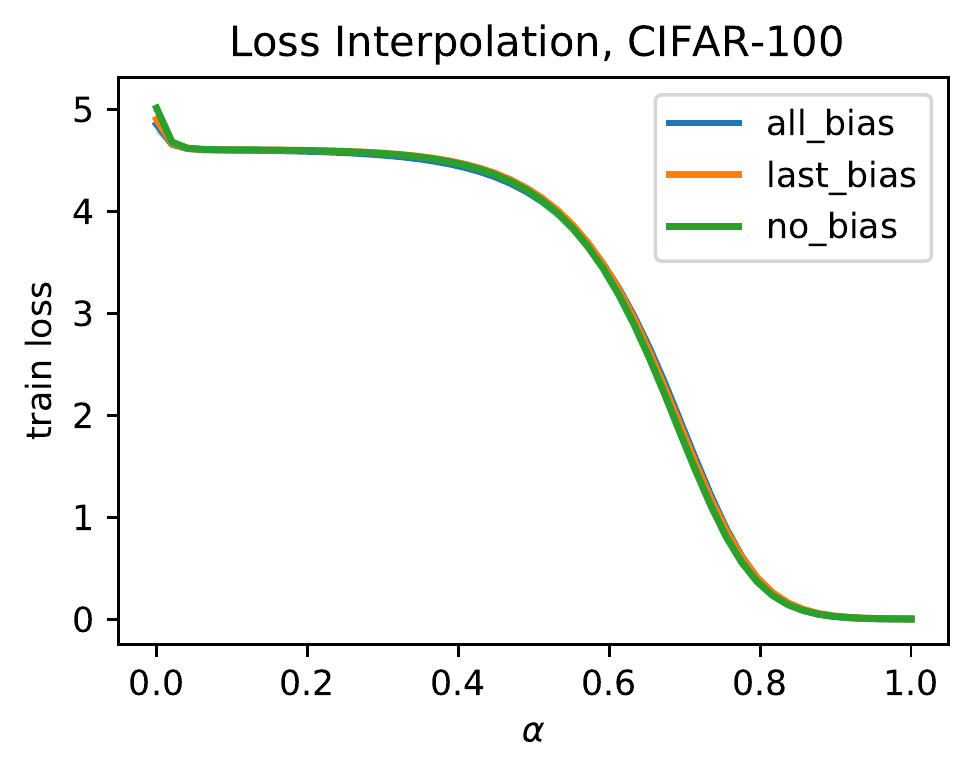}
        \end{subfigure}%
        \begin{subfigure}[b]{0.25\textwidth}
                \centering
                \includegraphics[width=\linewidth]{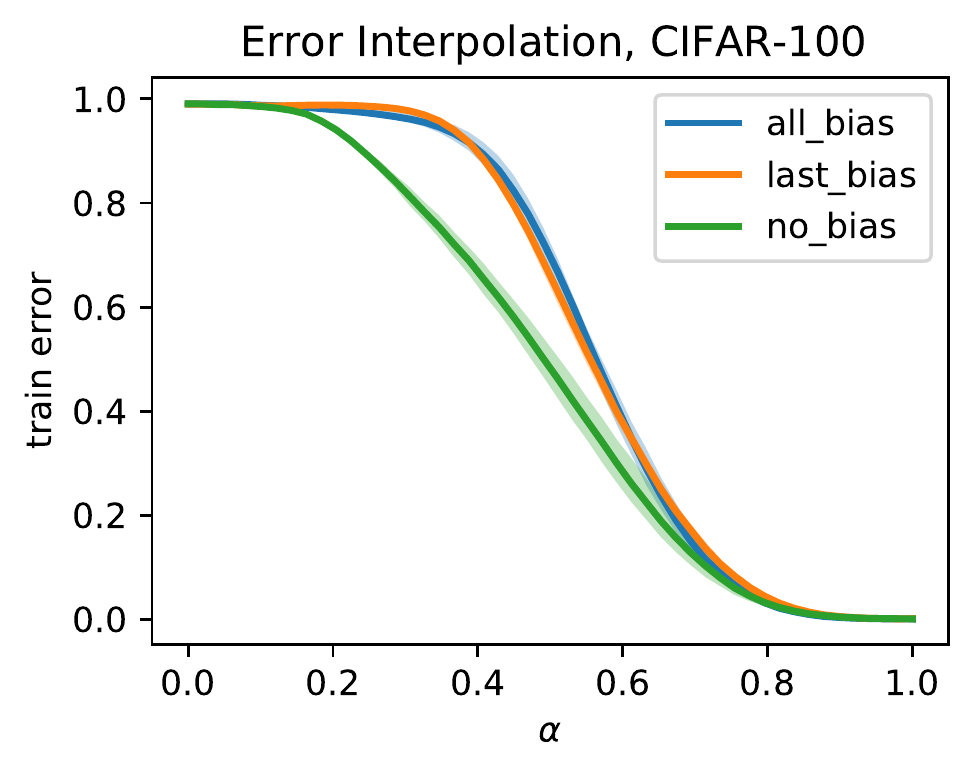}
        \end{subfigure}
        \caption{Loss and error curves across networks with all bias, last bias and no bias.}\label{fig:bias}
        \vspace{-0.25cm}
\end{figure}

\paragraph{Role of bias in creating plateau.} We demonstrate the importance of bias using two experiments.  
In the first experiment, we compare the loss/error interpolation curves between networks equipped with bias for all the layers (\emph{all bias}), with bias only for the output layer (\emph{last bias}), and with no bias at all (\emph{no bias}). Figure~\ref{fig:bias} shows that networks with all bias and last bias have a much longer error plateau than networks without bias. Three bias settings have similar loss interpolation curves.

\begin{figure}[h]
        \begin{subfigure}[b]{0.25\textwidth}
                \centering
                \includegraphics[width=\linewidth]{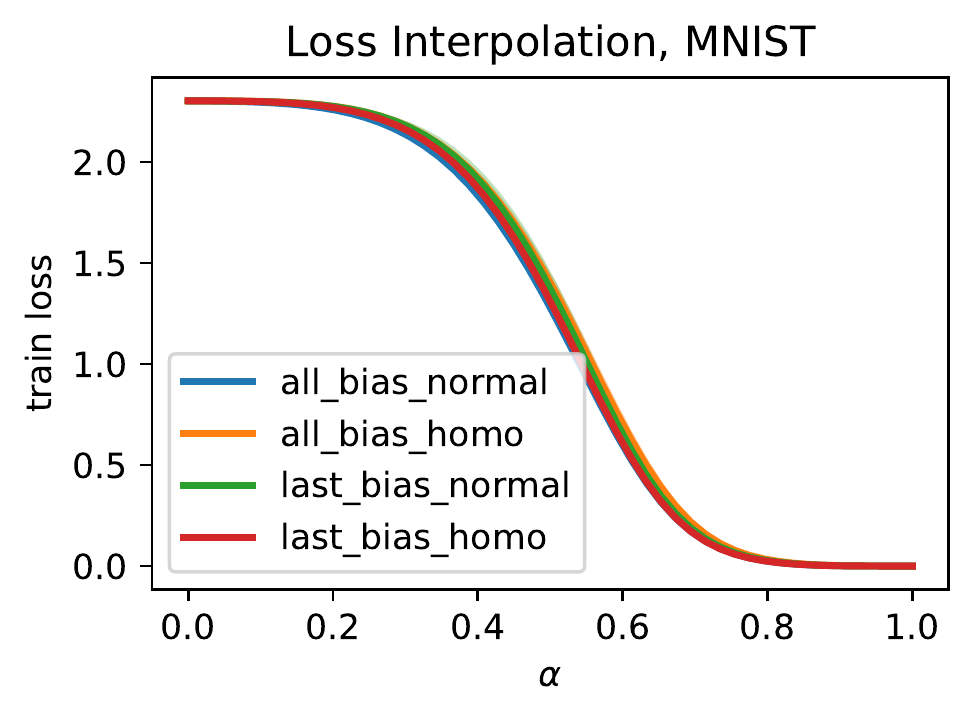}
        \end{subfigure}%
        \begin{subfigure}[b]{0.25\textwidth}
                \centering
                \includegraphics[width=\linewidth]{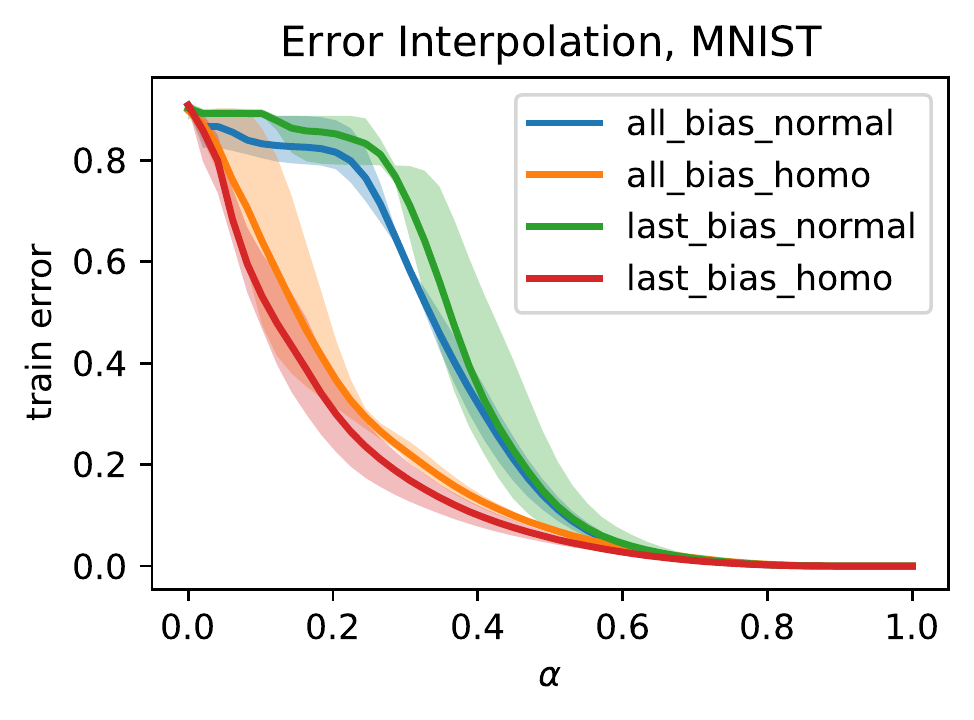}
        \end{subfigure}%
        \begin{subfigure}[b]{0.25\textwidth}
                \centering
                \includegraphics[width=\linewidth]{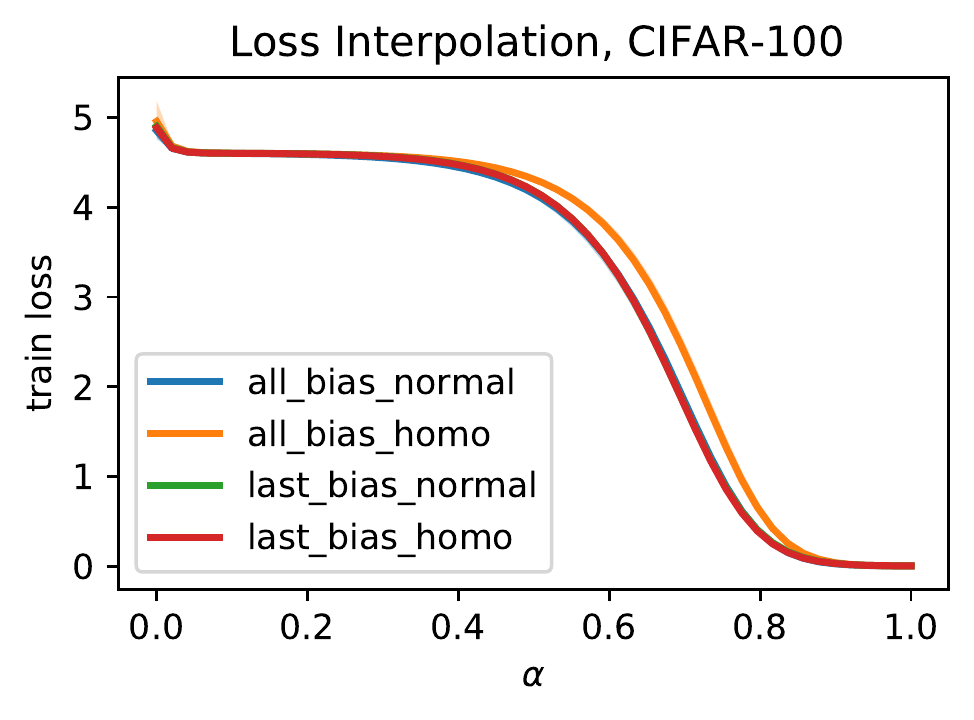}
        \end{subfigure}%
        \begin{subfigure}[b]{0.25\textwidth}
                \centering
                \includegraphics[width=\linewidth]{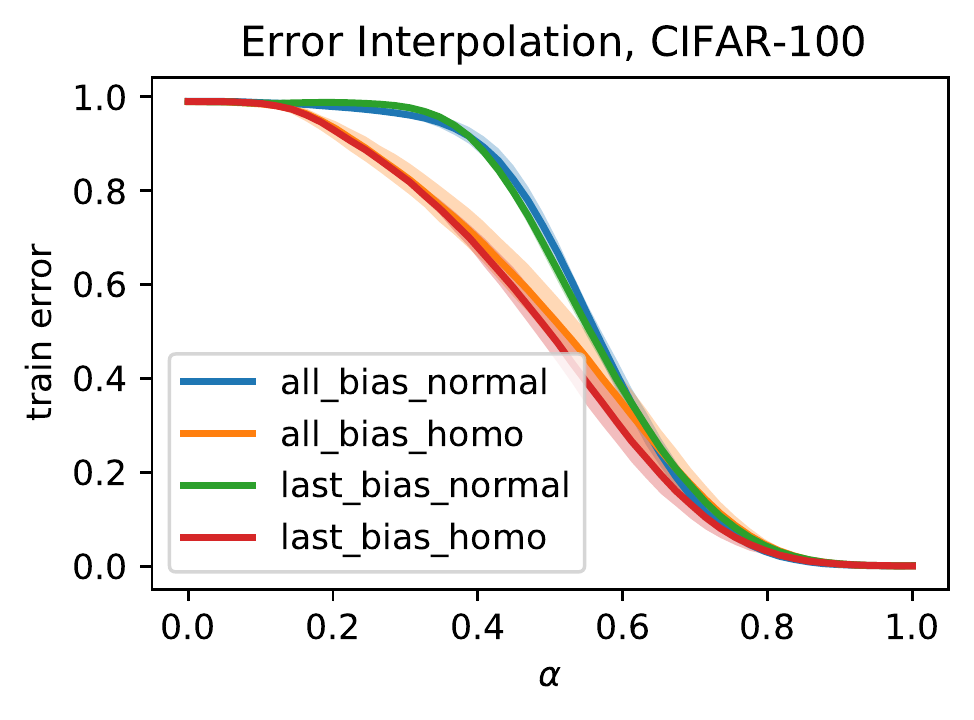}
        \end{subfigure}
        \caption{Loss and error curves across networks with normal and homogeneous interpolation on bias.}\label{fig:special}
        \vspace{-0.2cm}
\end{figure}

By our theory, the bias dominates the signal at the beginning of the interpolation because the bias term scales as $\alpha$ while the signal scales as $\alpha^r.$ In the second experiment, to correct this discrepancy, we interpolate the bias at the $h$-th layer (input is at the $0$-th layer) as $b^{[\alpha]}_h = \pr{1-\alpha}^h b^{(0)}_h + \alpha^h b^{(T)}_h = \alpha^h b^{(T)}_h.$ We call this the {\em homogeneous interpolation} as now terms involving bias and weights all have $\alpha^r$ coefficients. 
We compare this with the \emph{normal interpolation} that linearly interpolates the bias terms. Figure~\ref{fig:special} shows that for networks with all bias or last bias, using homogeneous interpolation can significantly reduce the plateau in the error interpolation, but does not affect the loss interpolation. 

\begin{figure}[h]
        \begin{subfigure}[b]{0.25\textwidth}
                \centering
                \includegraphics[width=\linewidth]{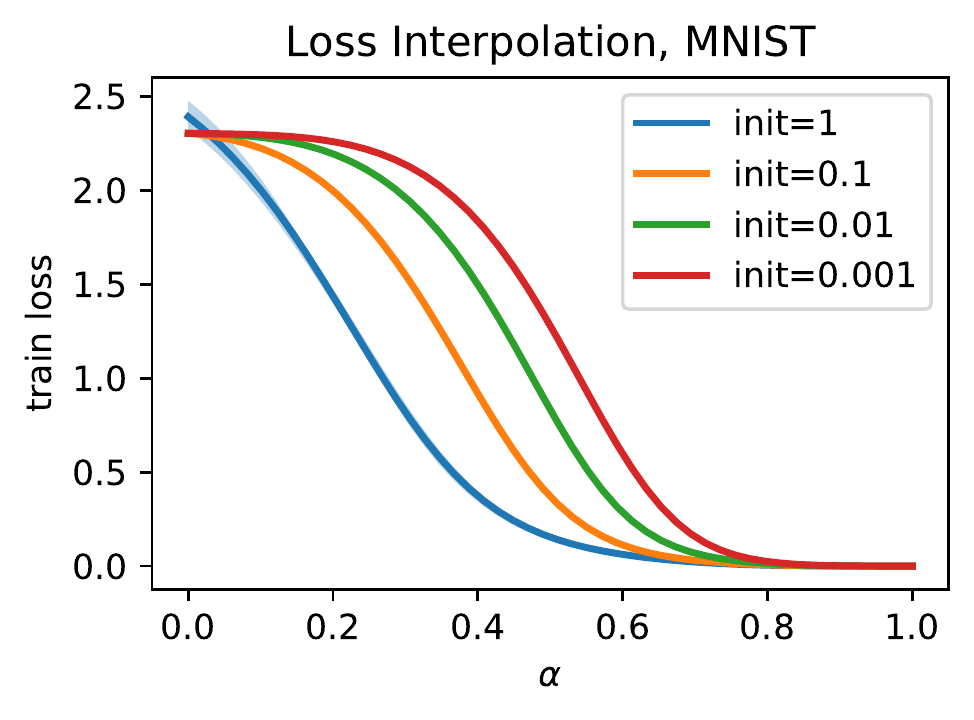}
        \end{subfigure}%
        \begin{subfigure}[b]{0.25\textwidth}
                \centering
                \includegraphics[width=\linewidth]{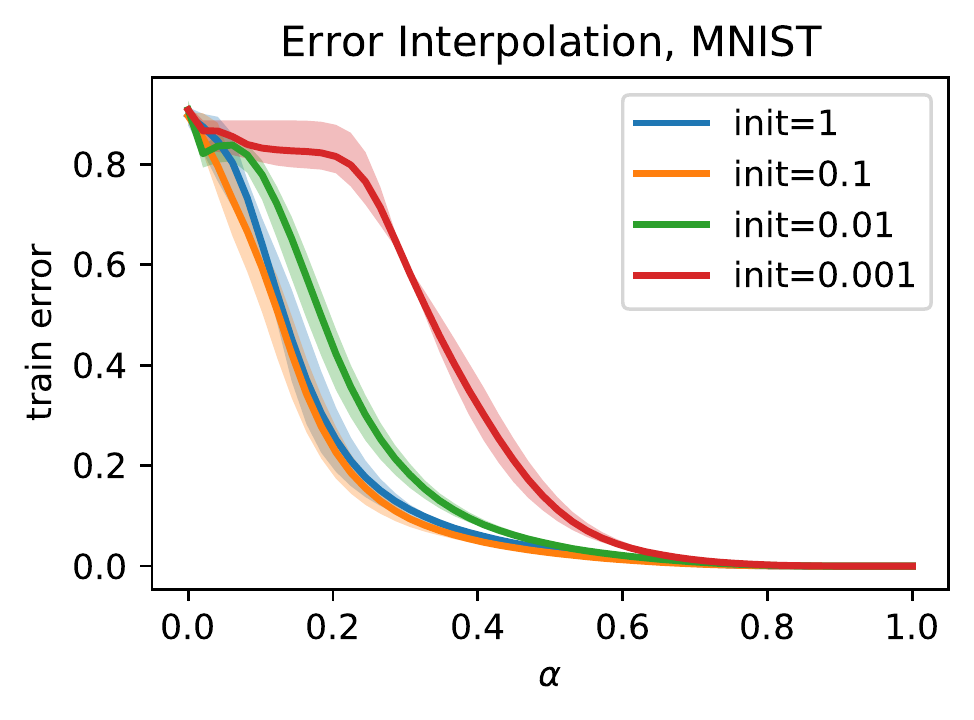}
        \end{subfigure}%
        \begin{subfigure}[b]{0.25\textwidth}
                \centering
                \includegraphics[width=\linewidth]{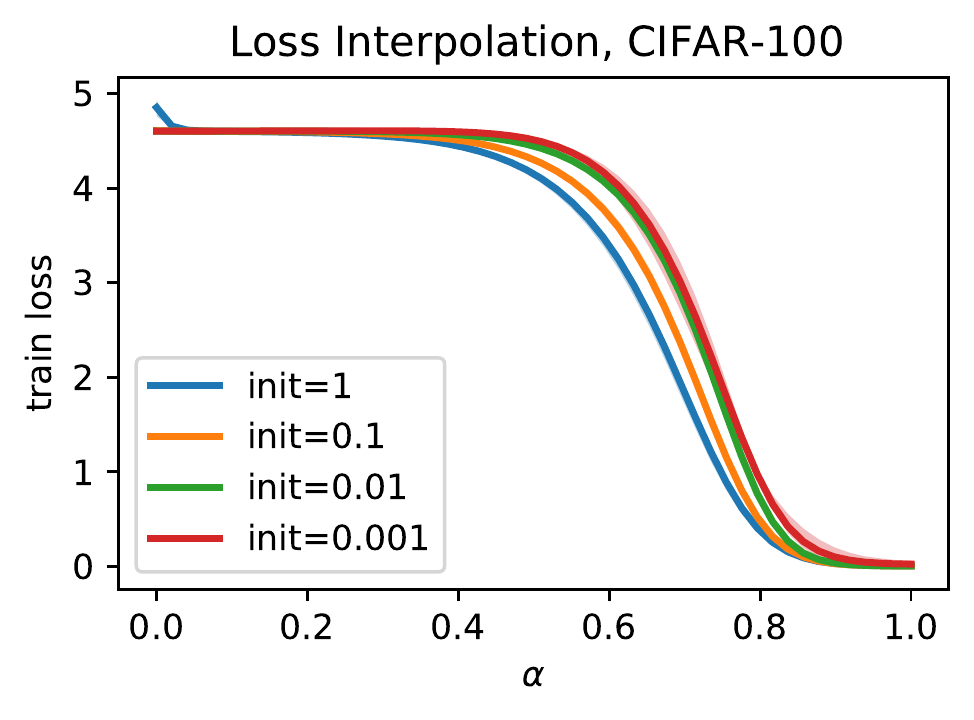}
        \end{subfigure}%
        \begin{subfigure}[b]{0.25\textwidth}
                \centering
                \includegraphics[width=\linewidth]{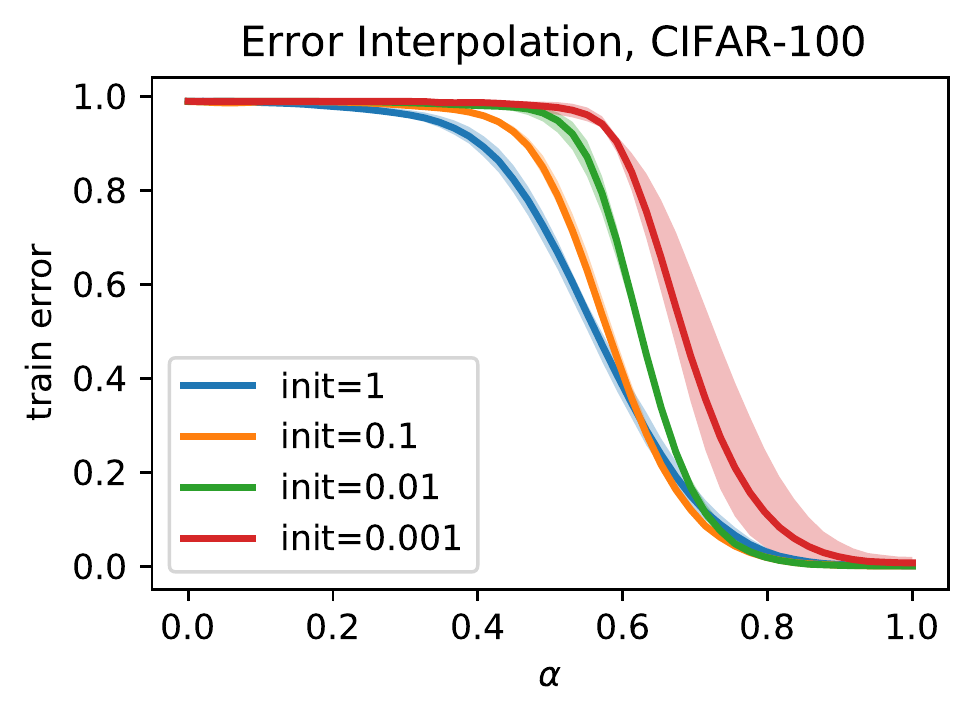}
        \end{subfigure}
        \caption{Loss and error curves across networks with different initialization scales.}\label{fig:init}
        \vspace{-0.35cm}
\end{figure}

\paragraph{Role of initialization scale and network depth.} Our theory suggests that with a smaller initialization, the signal magnitude at the initial interpolation is smaller, which can create longer plateau in both loss interpolation and error interpolation. We compare networks under initialization scales $1, 0.1, 0.01$ and $0.001,$ where scale $1$ corresponds to the standard Kaiming initialization. For other initialization $\beta,$ we rescale each layer by the same factor so the output is rescaled by $\beta.$ According to Figure~\ref{fig:init}, smaller initialization does create longer plateau in loss and error interpolation. 

\begin{figure}[h]
        \begin{subfigure}[b]{0.25\textwidth}
                \centering
                \includegraphics[width=\linewidth]{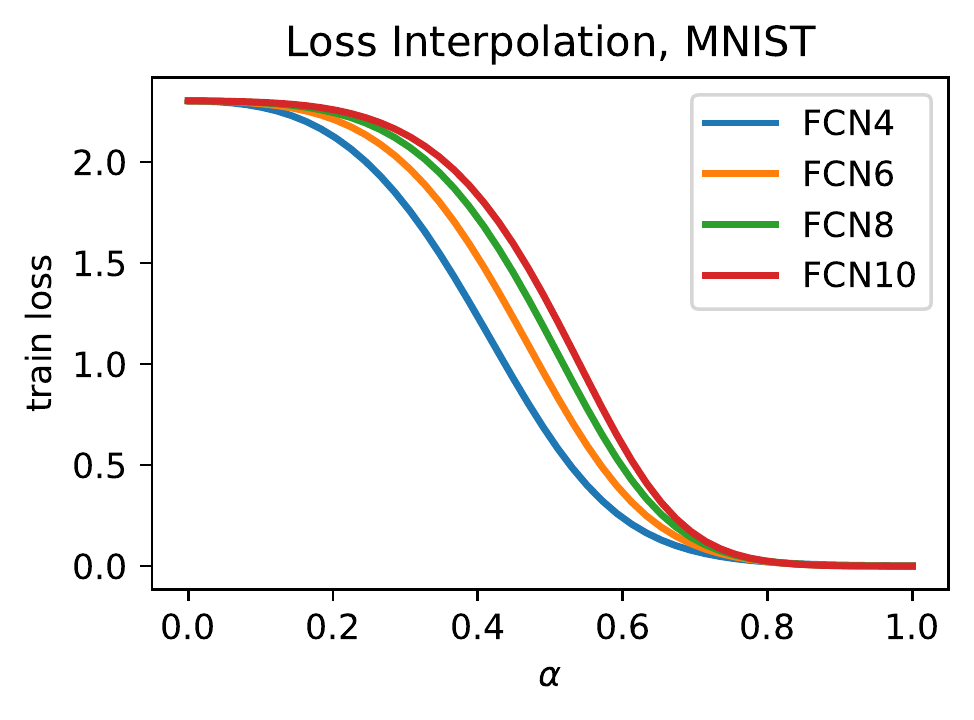}
        \end{subfigure}%
        \begin{subfigure}[b]{0.25\textwidth}
                \centering
                \includegraphics[width=\linewidth]{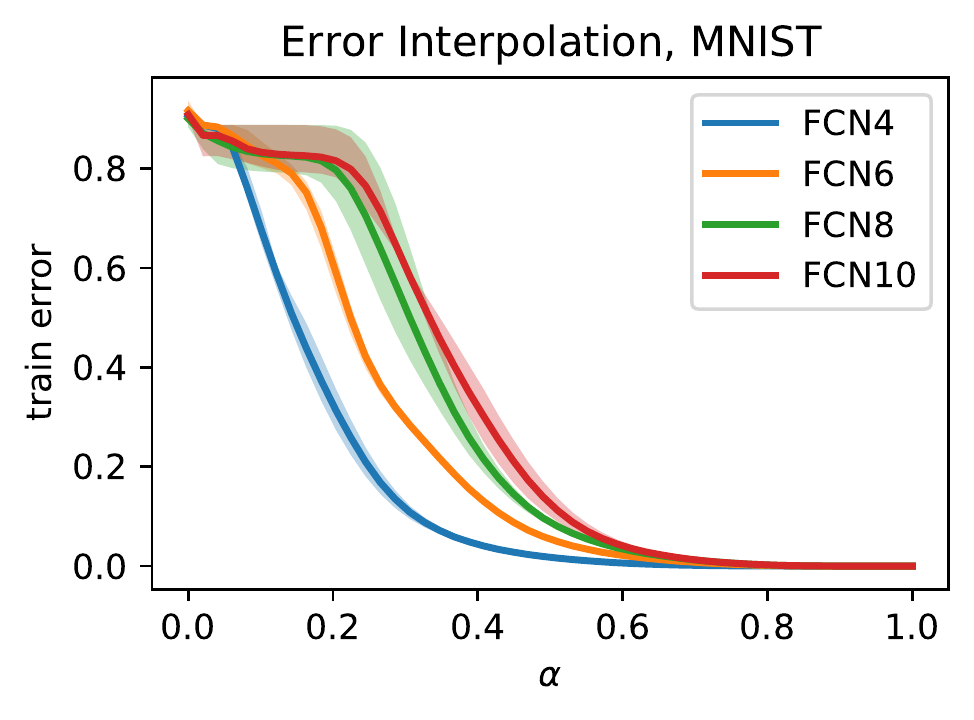}
        \end{subfigure}%
        \begin{subfigure}[b]{0.25\textwidth}
                \centering
                \includegraphics[width=\linewidth]{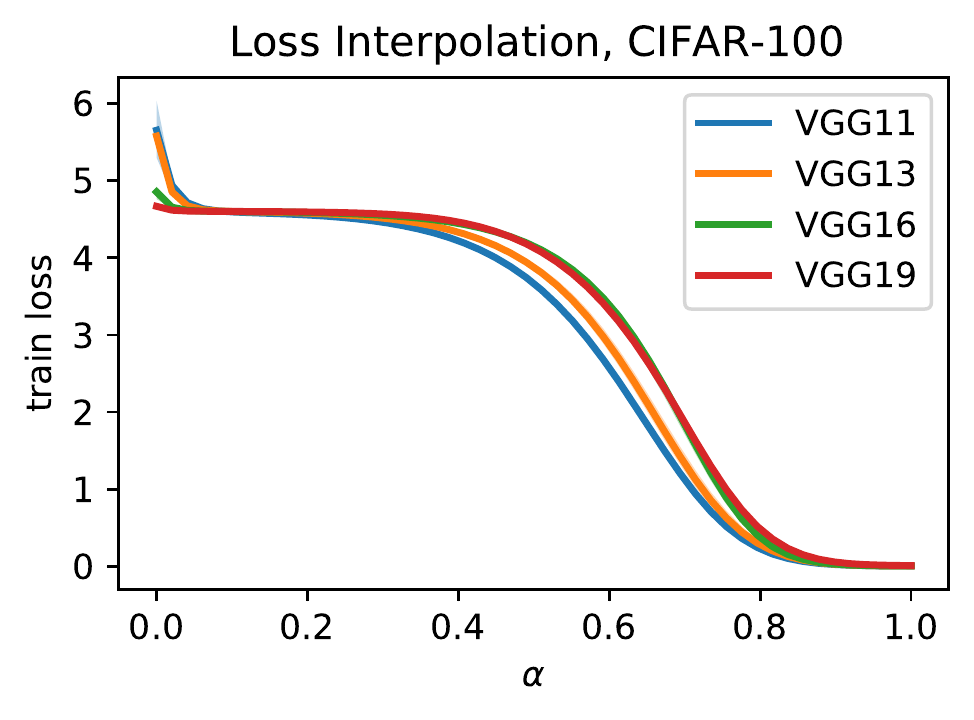}
        \end{subfigure}%
        \begin{subfigure}[b]{0.25\textwidth}
                \centering
                \includegraphics[width=\linewidth]{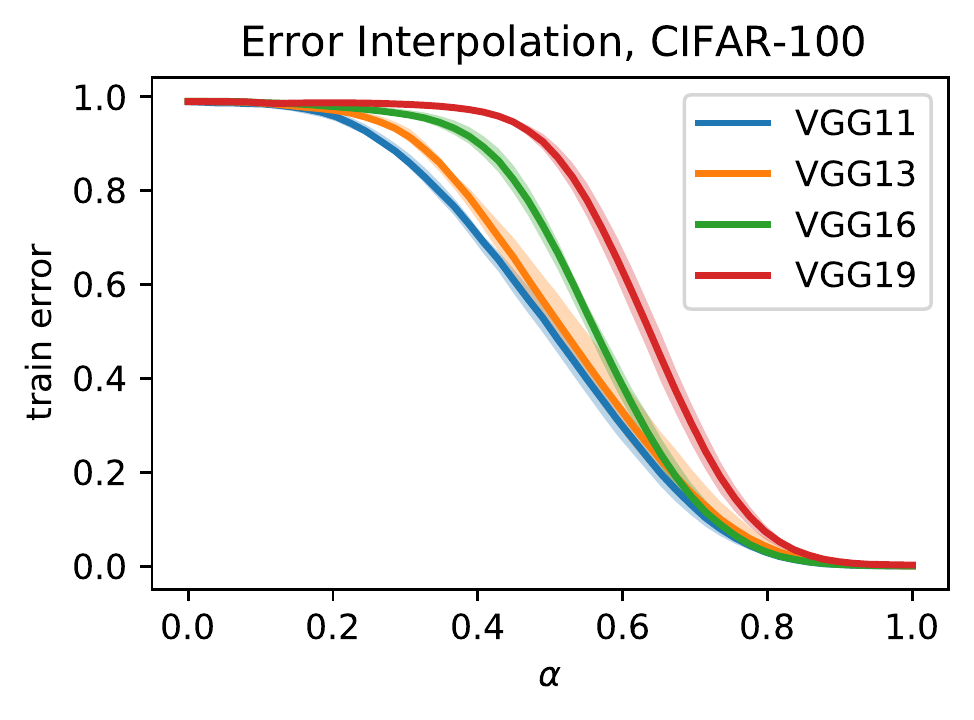}
        \end{subfigure}
        \caption{Loss and error curves across networks with different depths.}\label{fig:depth}
        \vspace{-0.2cm}
\end{figure}

With a deeper network, the signal grows slower at the initial interpolation phase, which can potentially create a longer plateau in both loss interpolation and error interpolation. We compare FCN4, FCN6, FCN8, FCN10 on MNIST and compare VGG11, VGG13, VGG16, VGG19 on CIFAR-100.  According to Figure~\ref{fig:depth}, deeper networks do have longer plateau in loss and error interpolation. 

\begin{figure}[h]
        \begin{subfigure}[b]{0.25\textwidth}
                \centering
                \includegraphics[width=\linewidth]{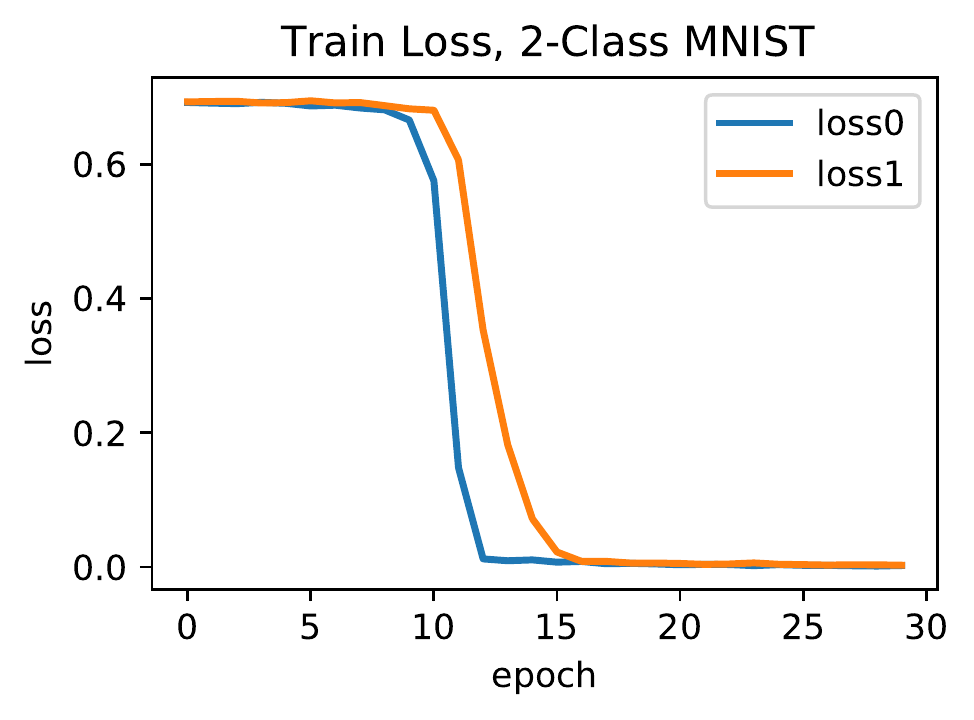}
        \end{subfigure}%
        \begin{subfigure}[b]{0.25\textwidth}
                \centering
                \includegraphics[width=\linewidth]{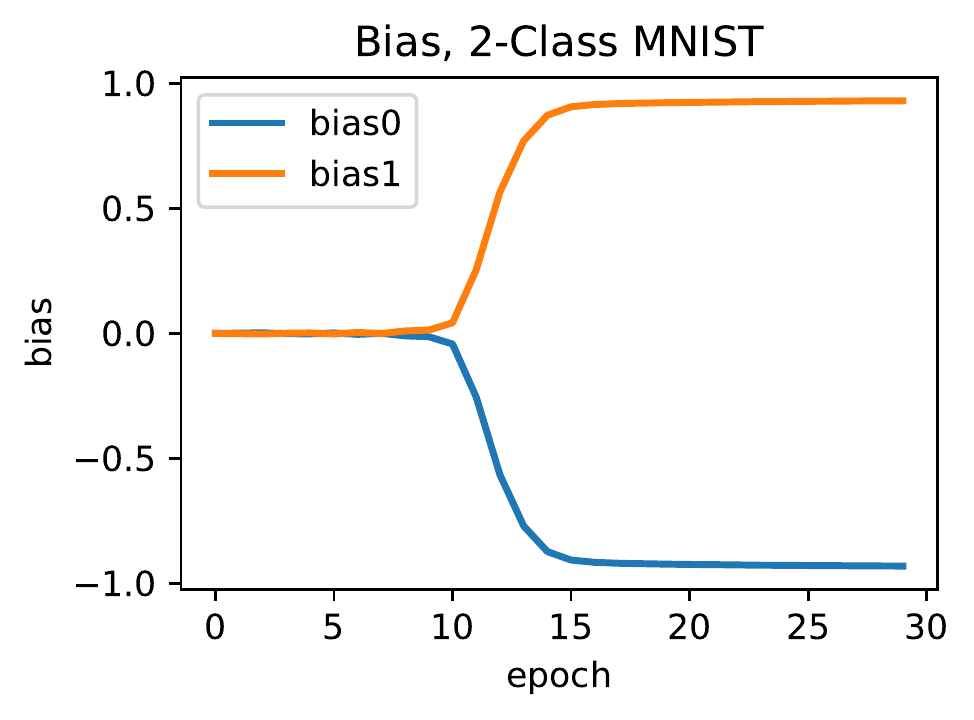}
        \end{subfigure}%
        \begin{subfigure}[b]{0.25\textwidth}
                \centering
                \includegraphics[width=\linewidth]{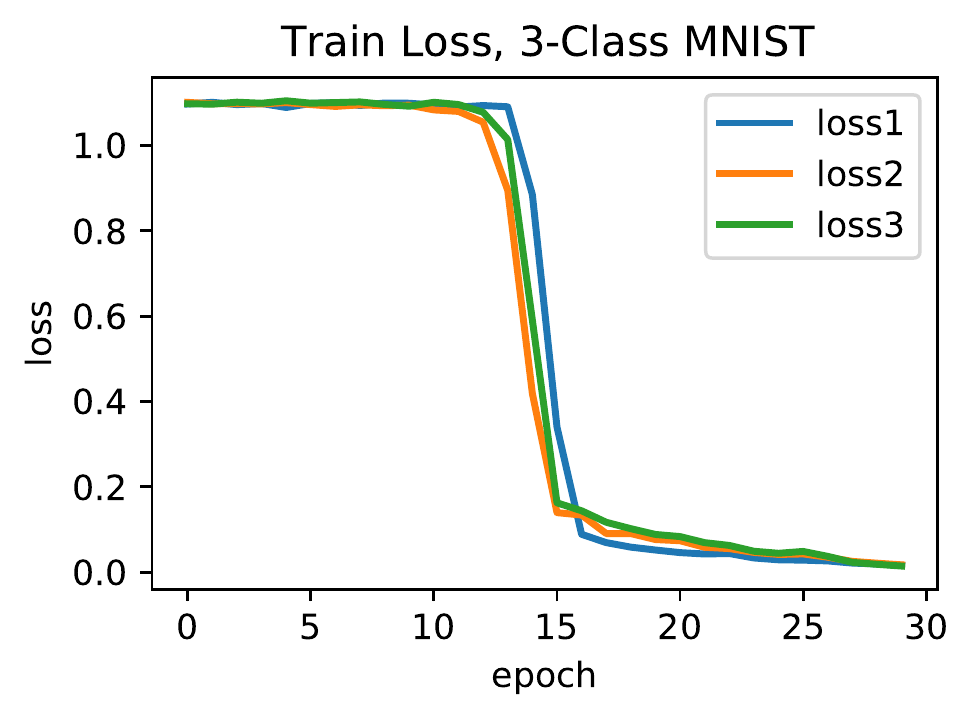}
        \end{subfigure}%
        \begin{subfigure}[b]{0.25\textwidth}
                \centering
                \includegraphics[width=\linewidth]{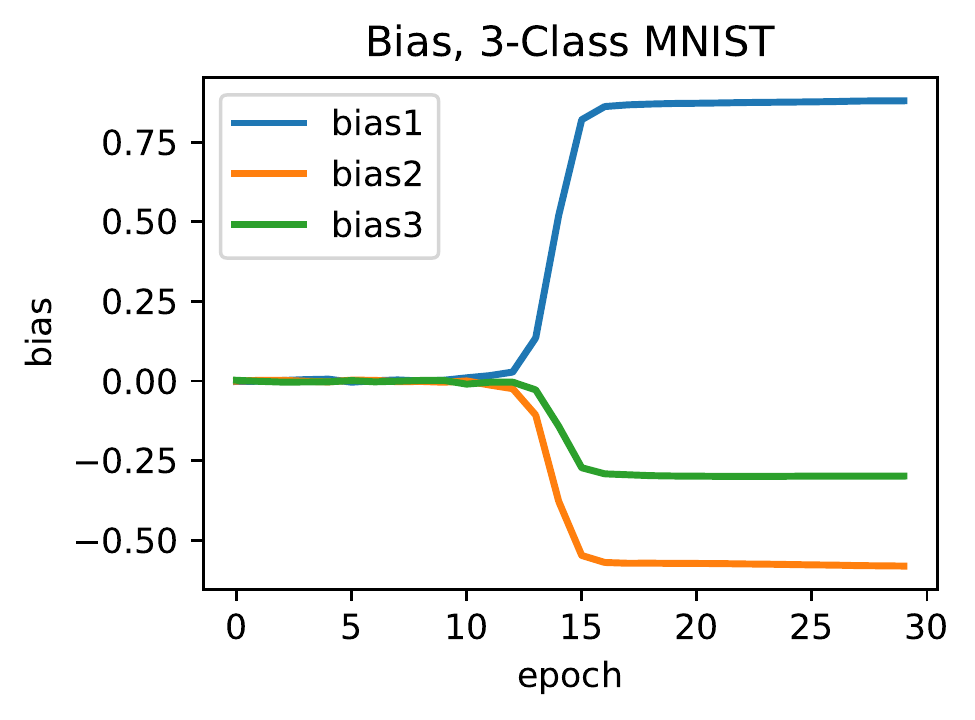}
        \end{subfigure}
        \caption{Train loss for each class and bias term dynamics on 2-class MNIST and 3-class MNIST.}\label{fig:dynamics}
        \vspace{-0.2cm}
\end{figure}

\paragraph{Bias learning dynamics.} Our dynamics analysis in Section~\ref{sec:optimization} shows that gradient descent can learn diverse biases on a balanced dataset by learning different classes at different time points. In particular, the last learned class should have the highest bias term. We verify this theory by studying FCN10 with only output bias on balanced 2-class or 3-class MNIST. To separate the learning of different classes, we compute the per-class loss by only considering the examples in that particular class. According to Figure~\ref{fig:dynamics}, in the 2-class MNIST, number 1 is learned last and its bias is larger, which fits our theory. Also in the 3-class MNIST, class 2 is learned first, class 3 the second and class 1 the last; for the learned bias, class 2 bias is smallest, class 3 bias in the middle and class 1 bias the highest. 

%For the 3-class MNIST with some subset of numbers, the ordering of the biases can actually violate the ordering of the learning. We believe this is partially due to the correlation between different class features. We offer an preliminary explanation in Appendix~\ref{sec:counter_example} and leave the thorough study into future work. 

\section{Conclusion}
Our theory suggests that the plateau in loss/error interpolation curves may be attributed to simple reasons, and it's unclear if these reasons are related to the difficulty/easiness of optimization. In our experiments although the training succeeds in all the settings, the loss and error interpolation curves can be easily manipulated by changing the initialization size, network depth and bias terms. Therefore, we believe one needs to look at structures more complicated than linear interpolation to understand why optimization succeeds for deep neural networks.

Though our theory requires a small initialization, we also observe plateau in CIFAR-100 with standard initialization, which suggests that the useful signal is still a high order term in $\alpha.$ We also observe that sometimes the ordering of the biases does not exactly follow the ordering of the learning. We believe this is partially due to the correlation between different-class features and offer a preliminary explanation in Appendix~\ref{sec:counter_example}. We leave the thorough study of these problems in the future work.  

%For the 3-class MNIST with some subset of numbers, the ordering of the biases can actually violate the ordering of the learning. We believe this is partially due to the correlation between different class features. We offer an preliminary explanation in Appendix~\ref{sec:counter_example} and leave the thorough study into future work. 

\newpage
\section*{Acknowledgement}
This work is supported by NSF Award DMS-2031849, CCF-1845171 (CAREER), CCF-1934964 (Tripods) and a Sloan Research Fellowship.

\section*{Reproducibility Statement}
For our theoretical results, we listed all the assumptions and stated the theorems in the main text and we left the complete proof for all the claims in the Appendix. For our experimental results, we defined the detailed experiment settings in the Appendix and also uploaded the source code as supplementary material.

\bibliography{ref}
\bibliographystyle{iclr2023_conference}

\newpage
\appendix
\newcommand{\xO}{x^{(0)}}
\newcommand{\zO}{z^{(0)}}
\newcommand{\xa}{x^{[\alpha]}}
\newcommand{\za}{z^{[\alpha]}}

\section{Examples for the disconnection between linear interpolation shape and optimization difficulty}\label{sec:easy_hard_example}
We give two examples that illustrate the disconnection between the linear interpolation shape and the optimization difficulty. In Section~\ref{sec:hard}, we show a function that is NP-hard to optimize, but has a convex and monotonically decreasing loss interpolation. Then in Section~\ref{sec:easy}, we give a function that is easy to optimize, but has a non-monotonic loss interpolation.  

\subsection{Hard function with convex loss interpolation}\label{sec:hard}
For any symmetric third-order tensor $T\in\R^{d\times d\times d},$ our goal is to minimize 
\begin{equation}
    f(x,z) = -T(x,x,x) + \n{x}^4 + z^4\label{eqn:hard}
\end{equation}
where $x\in\R^d$ and $z\in\R.$ 

It's known that finding the spectral norm of a symmetric third-order tensor (that is, $\max_{v\in\R^d, \n{v}=1} T(v,v,v)$) is NP-hard~\citep{hillar2013most}. We prove that minimizing $f(x,z)$ is also NP-hard by reducing the tensor spectral norm problem to it.
\begin{prop}
Minimizing $f(x,z)$ as defined in Eqn.~\ref{eqn:hard} is NP-hard.
\end{prop}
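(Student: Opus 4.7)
The plan is to reduce the NP-hard problem of computing the spectral norm of a symmetric third-order tensor (the result cited as \cite{hillar2013most}) to minimizing $f(x,z)$. Given a symmetric tensor $T$, I claim that $\min_{x,z} f(x,z)$ determines $M := \max_{\n{v}=1} T(v,v,v)$, which for odd-order tensors coincides with the spectral norm $\max_{\n{v}=1} |T(v,v,v)|$, since the map $v \mapsto -v$ flips the sign of $T(v,v,v)$.

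First I would decouple the $z$ variable: because $z^4 \geq 0$ with equality only at $z=0$, minimization over $(x,z)$ reduces to minimization of $g(x) := -T(x,x,x) + \n{x}^4$ over $x \in \R^d$, with optimum $z^\ast = 0$. Next I would exploit the homogeneity of $g$ via the polar substitution $x = tv$ with $\n{v}=1$, $t \geq 0$, giving
\[
g(tv) = -t^{3}\, T(v,v,v) + t^{4}.
\]
Because the feasible set $\{tv : t\geq 0,\ \n{v}=1\}$ is invariant under $v \mapsto -v$, it suffices to consider unit $v$ with $T(v,v,v) \geq 0$. For each such $v$, setting $\partial_t g(tv)=0$ yields the critical point $t^{\ast} = \tfrac{3}{4} T(v,v,v)$, and an elementary check confirms this is the minimizer over $t\geq 0$, with value $-\tfrac{27}{256}\, T(v,v,v)^{4}$. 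Optimizing over unit $v$ then gives
\[
\min_{x,z} f(x,z) \;=\; \min_{x} g(x) \;=\; -\tfrac{27}{256}\, M^{4}.
\]

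Finally I would package this as a polynomial-time Turing reduction: any oracle returning $\min_{x,z} f(x,z)$ to adequate additive precision yields $M$ via multiplication by $-256/27$ and a nonnegative fourth root, so NP-hardness of the tensor spectral norm transfers to minimizing $f$. I do not expect any of the steps to be a serious obstacle. The only mildly subtle point is justifying that the reduction is polynomial-time in the standard bit model of computation: one must show that approximating $\min f$ to within $\epsilon$ suffices to recover $M$ to the precision required by the hardness result for $\max T(v,v,v)$. Since the mapping from the minimum value of $f$ to $M$ is an explicit smooth function with bounded derivative on the relevant range, this reduction of precision is routine.
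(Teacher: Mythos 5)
Your proof is correct, and it takes a genuinely different route from the paper's. The paper argues by contradiction at the level of the minimizer: it observes that if $(x^*,z^*)$ minimizes $f$ and some unit $v^*$ had $T(v^*,v^*,v^*) > T(\bar{x}^*,\bar{x}^*,\bar{x}^*)$, then rotating $x^*$ to $\n{x^*}v^*$ would strictly decrease $f$ — i.e., a search-to-search reduction: any minimizer of $f$ yields a spectral-norm maximizer after normalization. You instead optimize in polar coordinates explicitly, first over the radius $t$ for each fixed unit direction $v$ (obtaining $t^* = \tfrac{3}{4}T(v,v,v)$ and optimal radial value $-\tfrac{27}{256}T(v,v,v)^4$) and then over $v$, producing the closed form $\min_{x,z} f(x,z) = -\tfrac{27}{256}M^4$. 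This is a value-to-value reduction: a weaker oracle (the optimal value rather than an optimizer) already suffices, and the explicit formula makes the decision-version reduction "is $M > \theta$?" iff "is $\min f < -\tfrac{27}{256}\theta^4$?" immediate. The paper's argument is shorter, but it implicitly assumes access to an argmin oracle and does not exhibit the quantitative relationship (though the paper does later rederive $\n{x^*}=\tfrac{3}{4}T(\bar{x}^*,\bar{x}^*,\bar{x}^*)$ inside the convexity proposition).

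One small caveat on your precision remark: the inverse map $m \mapsto (-\tfrac{256}{27}m)^{1/4}$ has unbounded derivative as $m \to 0^-$, so "bounded derivative on the relevant range" is not quite right without an additional observation that nondegenerate instances (say $\fn{T}=1$) have $M$ bounded below by $1/\poly(d)$, which caps the Lipschitz constant polynomially. Since you only need a decision-threshold comparison for the NP-hardness transfer, this is easy to patch, but as written the sentence overstates the regularity.
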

\begin{proof}
For any non-zero tensor $T,$ let $(x^*,z^*)$ be one minimizer of $f(x,z),$ it's easy to verify that $T(x^*,x^*,x^*)>0.$ We show that $\bar{x}^*:=x^*/\n{x^*}$ must be a solution to $\max_{v\in\R^d, \n{v}=1} T(v,v,v)$. 

For the sake of contradiction, assume there exists $v^*$ with unit norm such that $T(v^*,v^*,v^*) > T(\bar{x}^*,\bar{x}^*,\bar{x}^*).$ It's easy to verify that $f(\n{x^*}v^*, z^*) < f(x^*, z^*),$ which however contradicts the optimality of $(x^*,z^*)$.
\end{proof}

Next, we prove that start from certain initialization, the loss along the linear interpolation path is convex and monotonically decreasing. Note that assuming the unit Frobenius norm of $T$ does not hurt the NP-hardness of the problem. And our initialization is oblivious of the tensor $T.$
\begin{prop}
Assume $\fn{T}=1.$ Suppose we start from initialization $(x_0,z_0)$ with $x_0=0$ and $\absr{z_0}> \frac{3\sqrt{2}}{4}.$ Let $(x^*, z^*)$ be a minimizer of $f(x,z)$ as defined in Eqn.~\ref{eqn:hard}. We know the loss interpolation curve $\gamma(\alpha) := f\pr{(1-\alpha)x_0+\alpha x^*, (1-\alpha)z_0+\alpha z^* }$ is convex and monotonically decreasing for $\alpha\in [0,1].$ 
\end{prop}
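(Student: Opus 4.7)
The plan is to first characterize the minimizer $(x^*, z^*)$ explicitly, then expand $\gamma(\alpha)$ as a polynomial in $\alpha$, and finally verify monotonicity and convexity by elementary bounds on its coefficients. Since $z$ appears in $f$ only through the $z^4$ term, $z^* = 0$. For $x^*$, writing $x = t v$ with $\|v\| = 1$ and optimizing over $t \geq 0$ gives $x^* = (3\sigma/4)\bar{x}^*$, where $\bar{x}^*$ maximizes $T(v,v,v)$ over unit $v$ and $\sigma := T(\bar{x}^*, \bar{x}^*, \bar{x}^*) > 0$; the standard Cauchy--Schwarz bound $\sigma \leq \fn{T} = 1$ will play a crucial role.

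With $\xO = 0$ and $z^* = 0$, the interpolation is $\xa = \alpha x^*$, $\za = (1-\alpha) z_0$, so $\gamma(\alpha) = -A\alpha^3 + B\alpha^4 + C(1-\alpha)^4$, where $A := (27/64)\sigma^4$, $B := (81/256)\sigma^4$, and $C := z_0^4$. Two arithmetic facts drive the rest: $3A = 4B$ (a consequence of the optimality of $\|x^*\| = 3\sigma/4$), and $C > 81/64 \geq 3A$, where the first inequality is the hypothesis $|z_0| > 3\sqrt{2}/4$ and the second uses $\sigma \leq 1$.

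For monotonicity, I would compute $\gamma'(\alpha) = \alpha^2 (4B\alpha - 3A) - 4C(1-\alpha)^3$. Both terms are $\leq 0$ on $[0,1]$: the first because $4B\alpha \leq 4B = 3A$, and the second because $C > 0$. Hence $\gamma'(\alpha) \leq 0$ throughout, in fact without invoking the full strength of the hypothesis on $z_0$. For convexity, $\gamma''(\alpha) = -6A\alpha + 12B\alpha^2 + 12C(1-\alpha)^2$, and I would lower-bound this by splitting $[0,1]$ at $2/3$. On $[2/3, 1]$, the factorization $-6A\alpha + 12B\alpha^2 = 6\alpha(2B\alpha - A)$ is $\geq 0$ because $A = (4/3)B$ makes the bracket vanish exactly at $\alpha = 2/3$; combined with the nonnegative third term, $\gamma'' \geq 0$. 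On $[0, 2/3]$, the crude bounds $-6A\alpha \geq -4A$ and $(1-\alpha)^2 \geq 1/9$ yield $\gamma''(\alpha) \geq -4A + 4C/3$, which is $\geq 0$ exactly because $C \geq 3A$.

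The main obstacle is pinning down the right quantitative threshold on $z_0$. The cancellation $3A = 4B$ is essentially free from the optimality condition for $x^*$, but the bound $C \geq 3A$ used on $[0, 2/3]$ is tight: it requires $z_0^4 \geq 3(27/64)\sigma^4 = (81/64)\sigma^4$, and the worst case $\sigma = 1$ (a tensor whose symmetric spectral norm equals its Frobenius norm) yields exactly the hypothesis $|z_0| > (81/64)^{1/4} = 3\sqrt{2}/4$. Once this connection is recognized and the formula for $x^*$ is in hand, the remainder of the argument is routine polynomial bookkeeping.
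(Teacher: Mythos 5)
Your proof is correct and the convexity argument coincides with the paper's: both prove $\gamma''\geq 0$ by splitting $[0,1]$ at $\alpha=2/3$, with the upper-tensor-norm bound $\sigma\leq\fn{T}=1$ and the hypothesis $|z_0|>3\sqrt{2}/4$ entering only on $[0,2/3]$. Your bookkeeping with $A=(27/64)\sigma^4$, $B=(81/256)\sigma^4$, $C=z_0^4$, and the identity $3A=4B$ coming from the optimality of $\|x^*\|=3\sigma/4$, is a cleaner organization of the same computations. One genuine addition in your write-up: you prove the monotonic-decrease claim directly by factoring $\gamma'(\alpha)=\alpha^2(4B\alpha-3A)-4C(1-\alpha)^3$ into two nonpositive terms (the first nonpositive because $4B\alpha\leq 4B=3A$, the second because $C\geq 0$). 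The paper's proof only establishes $\gamma''>0$ and never addresses monotonicity explicitly; it is implicitly covered because $\alpha=1$ is the global minimizer of $f$ and a convex function attaining its minimum at the right endpoint must be non-increasing, but your explicit computation is more self-contained and also reveals, as you note, that monotonicity needs no lower bound on $|z_0|$ at all.
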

\begin{proof}
We first prove that at any minimizer $(x^*,z^*)$, we must have $z^*=0.$ Otherwise, we can set $z$ as zero to further decrease the loss. Starting from an initialization $(\xO, \zO)$ with $\xO = 0$, we know at each interpolation point $\xa = \alpha x^*,\za = (1-\alpha)\zO.$ Therefore, we have 
\begin{align*}
    \gamma(\alpha) = f(\xa, \za) 
    =& -T(\xa,\xa,\xa) + \n{\xa}^4 + \br{\za}^4 \\
    =& -T(\alpha x^*,\alpha x^*,\alpha x^*) + \n{\alpha x^*}^4 + \br{(1-\alpha)\zO}^4 \\
    =& -\alpha^3 T(x^*,x^*,x^*) + \alpha^4\n{x^*}^4 + (1-\alpha)^4\br{\zO}^4.
\end{align*}

To prove the convexity of $\gamma(\alpha)$ for $\alpha\in[0,1],$ we only need to prove $\gamma''(\alpha)>0$ for $\alpha\in[0,1]$. We have 
\begin{align*}
    \gamma''(\alpha) 
    =& -6\alpha T(x^*,x^*,x^*) + 12\alpha^2\n{x^*}^4 + 12(1-\alpha)^2\br{\zO}^4\\
    =& -6\alpha \n{x^*}^3T(\bar{x}^*,\bar{x}^*,\bar{x}^*) + 12\alpha^2\n{x^*}^4 + 12(1-\alpha)^2\br{\zO}^4.
\end{align*}

Since the formula for $\gamma''(\alpha)$ involves both $T(\bar{x}^*,\bar{x}^*,\bar{x}^*)$ and $\n{x^*},$ we first figure out the relation between these two quantities. Suppose $T(\bar{x}^*,\bar{x}^*,\bar{x}^*)=p>0,$ it's not hard to find $\n{x^*}$ must be equal to $\frac{3p}{4}$. This is because $-\n{x^*}^3p + \n{x^*}^4$ is minimized when $\n{x^*}=\frac{3p}{4}$. Next, we prove $\gamma''(\alpha)>0$ for $\alpha\in(2/3,1]$ and $\alpha\in[0,2/3]$ separetely. 

When $\alpha\in(2/3,1],$ we have 
\begin{align*}
    12\alpha^2 \n{x^*}^4 > 6p\alpha \n{x^*}^3 = 6\alpha \n{x^*}^3T(\bar{x}^*,\bar{x}^*,\bar{x}^*).
\end{align*}
Therefore, we know $\gamma''(\alpha)>0.$

When $\alpha\in [0,2/3],$ we know 
\begin{align*}
    12(1-\alpha)^2\br{\zO}^4 \geq  \frac{4}{3}\br{\zO}^4.
\end{align*}
Since $\fn{T}=1,$ we know $T(\bar{x}^*,\bar{x}^*,\bar{x}^*)\leq 1$ and $\n{x^*}\leq 3/4.$ Therefore, we have 
\begin{align*}
    6\alpha \n{x^*}^3T(\bar{x}^*,\bar{x}^*,\bar{x}^*) \leq 6\cdot \frac{2}{3} \cdot \pr{\frac{3}{4}}^3 \cdot 1 = \frac{27}{16}.
\end{align*}
Then, we know that if $\absr{\zO}> \frac{3\sqrt{2}}{4},$ we have $\gamma''(\alpha)>0.$
\end{proof}

\subsection{Easy function with non-monotonic loss interpolation}\label{sec:easy}
In this section, we give an easy-to-optimize function that however has a non-monotonic loss interpolation curve. We consider the following loss function
\begin{equation}\label{eqn:easy}
  f(x,y) =
    \begin{cases}
      0 & \text{if $x=y=0$}\\
      \pr{1-\frac{y}{3\sqrt{x^2 + y^2}}}\pr{\pr{x^2+y^2}^2 - 2\pr{x^2+y^2}} & \text{otherwise},
    \end{cases}       
\end{equation}
where $x,y\in\R$. We can also re-parameterize $f(x,y)$ using angle $\theta\in [0,2\pi)$ and length $r\in [0,\infty)$ as $h(\theta, r) = \pr{1-\frac{\sin(\theta)}{3}}\pr{r^4 - 2r^2}.$

Next, we prove that starting from any non-zero point, gradient flow converges to the global minimizer. 
\begin{prop}
Starting from any non-zero initialization, gradient flow on $f(x,y)$ as defined in Eqn.~\ref{eqn:easy} converges to the global minimizer $(0,-1).$ 
\end{prop}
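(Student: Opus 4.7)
The natural approach is to change to polar coordinates $x = r\cos\theta$, $y = r\sin\theta$, in which the objective takes the separable form $h(\theta, r) = (1 - \sin\theta/3)(r^4 - 2r^2)$. The Euclidean gradient flow, written using the polar metric, becomes the coupled system $\dot r = -(1 - \sin\theta/3) \cdot 4 r (r^2 - 1)$ and $\dot\theta = (\cos\theta/3)(r^2 - 2)$. Since the prefactor $1 - \sin\theta/3 \in [2/3, 4/3]$ is bounded away from $0$ and $r$ stays positive along any solution starting off the origin, the two equations are well behaved on $(0, \infty) \times S^1$, and the problem decouples nicely into a radial and an angular part.

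First I would show $r(t) \to 1$. The sign of $\dot r$ is opposite to $\mathrm{sign}(r^2-1)$, so $r$ is monotonically pulled toward $1$; using the lower bound $1 - \sin\theta/3 \geq 2/3$, a direct comparison argument shows that $r$ enters and stays in any neighborhood $(1-\eta, 1+\eta)$ after some finite time, regardless of $\theta(t)$. In particular there exists $t_0$ with $r(t) < \sqrt{2}$ for all $t \geq t_0$, and the time spent in the regime $r > \sqrt{2}$ is finite.

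Next I would analyze $\theta$ in the regime $r < \sqrt{2}$, where $r^2 - 2 < 0$ forces $\dot\theta$ to have the opposite sign to $\cos\theta$. Hence for $\theta \in (-\pi/2, \pi/2)$, $\theta$ decreases, and for $\theta \in (\pi/2, 3\pi/2)$, $\theta$ increases; in both cases $\theta$ is driven toward the stable zero $\theta = -\pi/2$ of $\cos\theta$ and repelled from the unstable zero $\theta = \pi/2$. Combined with $r(t) \to 1$, every trajectory not asymptotic to $\theta = \pi/2$ satisfies $(r, \theta) \to (1, -\pi/2)$, i.e., $(x, y) \to (0, -1)$, as claimed.

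The main obstacle is controlling the exceptional initial conditions whose trajectories drift to the other critical point $(0, 1)$, corresponding to $(r, \theta) = (1, \pi/2)$. The positive $y$-axis is forward-invariant because $\cos(\pi/2) = 0$ forces $\dot\theta \equiv 0$ along it, and it collapses into the saddle; more generally, linearization at $(1, \pi/2)$ exhibits a hyperbolic saddle with a one-dimensional stable manifold, so by the stable manifold theorem only a measure-zero set of initializations is attracted to it. The cleanest way to state the proposition is therefore convergence from every initialization off this stable manifold. To rule out a genuine coupling issue, note that $\dot\theta$ does point toward $\pi/2$ in the regime $r > \sqrt{2}$; linearizing near $u := \theta - \pi/2 = 0$ gives $|\dot u| \leq (r^2 - 2)_+ |u|/3$, and Gronwall combined with $\int_0^\infty (r(t)^2 - 2)_+\, dt < \infty$ (a consequence of the finite time $r$ spends above $\sqrt{2}$) shows $|u(t)| \geq |u(0)| \cdot e^{-C}$ uniformly in $t$. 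Thus $\theta$ cannot reach $\pi/2$ in finite time from any $u_0 \neq 0$, and the analysis of the preceding paragraph applies from time $t_0$ onward to give $(x(t), y(t)) \to (0, -1)$.
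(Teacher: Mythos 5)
Your proposal is correct and, more importantly, it catches a genuine error in both the paper's proof and the statement of the proposition itself. The paper's proof asserts that the only stationary points are $(0,0)$ and $(0,-1)$, and that the tangential gradient is nonzero whenever $y/\sqrt{x^2+y^2}\neq -1$. Neither claim holds. In polar coordinates the tangential derivative is $\tfrac{1}{r}\partial_\theta h=-\tfrac{\cos\theta}{3}\,r\,(r^2-2)$, which vanishes whenever $\cos\theta=0$, i.e.\ on $\theta=\pi/2$ as well as on $\theta=-\pi/2$; the paper excludes only the latter. Combined with $\partial_r h=0$ at $r=1$, the point $(0,1)$ is a third critical point, a saddle with $f(0,1)=-2/3>f(0,-1)=-4/3$. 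As you observe, the positive $y$-axis (minus the origin) is forward-invariant, since $\dot\theta\equiv 0$ there, and every trajectory on it converges to $(0,1)$ rather than $(0,-1)$. So the proposition is false as written: initializations of the form $(0,y_0)$ with $y_0>0$ are nonzero but do not converge to the global minimizer. Your restated version --- convergence from every initialization off the stable manifold of the saddle, which your argument identifies as exactly the positive $y$-axis --- is the correct statement, and your proof of it via the decoupled polar dynamics $\dot r=-(1-\sin\theta/3)\cdot 4r(r^2-1)$, $\dot\theta=\tfrac{\cos\theta}{3}(r^2-2)$ together with a repulsion estimate near $\theta=\pi/2$ is essentially sound. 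Two small cleanups worth making: the Gronwall inequality should be written $\tfrac{d}{dt}|u|\geq -\tfrac{|u|}{3}(r^2-2)_+$, which follows from $|\sin u|\leq|u|$ together with the observation that $|u|$ is nondecreasing whenever $r^2<2$; and one should note explicitly that $r(t)$ is bounded above (since $\dot r<0$ for $r>1$) so that $\int_0^\infty(r^2-2)_+\,dt<\infty$ and the exponential lower bound on $|u|$ is uniform. The paper's argument, by contrast, never gets past the stationary-point enumeration, and that enumeration is incomplete.
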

\begin{proof}
We know the unique minimizer of $f(x,y)$ is $(0,-1)$ by considering its equivalent form $h(\theta, r).$ For $h(\theta, r) = \pr{1-\frac{\sin(\theta)}{3}}\pr{r^4 - 2r^2},$ we know $\pr{r^4 - 2r^2}$ is minimized at $r=1$ and $\pr{1-\frac{\sin(\theta)}{3}}$ is maximized at $\theta = \frac{3\pi}{2}.$

Besides the minimizer $(0,-1)$, the other stationary point is at $(0,0).$
For any point $(x,y)$ different from $(0,-1)$ and $(0,0),$ if $x^2 + y^2 \neq 1,$ the gradient along the radial direction is non-zero; if $\frac{y}{\sqrt{x^2 + y^2}}\neq -1,$ the gradient along the tangent direction is non-zero. It's also easy to verify that starting from a non-zero point, gradient flow does not converge to $(0,0),$ so it must converge to $(0,-1)$
\end{proof}

It's also very easy to prove that gradient descent with appropriate step size converges to an $\epsilon$-neighborhood of the global minimizer within $\poly(1/\epsilon)$ number of iterations. This is because the gradient is at least $\poly(\epsilon)$ for any non-zero point outside of the $\epsilon$-neighborhood of the global minimizer. Starting from an initialization $(x,y)$ with $x^2+y^2=\Theta(1),$ the smoothness along the training is also bounded by a constant.

Next, we prove that starting from certain initialization~\footnote{Note the initialization condition in Prop.~\ref{prop:easy_non_monotonic} is satisfied with constant probability for a reasonable initialization scheme. For example, if we uniformly sample $(x,y)$ from the set $S=\{(x,y)\in \R^2 | x^2 + y^2\leq R\}$ with $R\geq 2,$ the condition is satisfied with constant probability.}, the loss interpolation between the initialization and the global minimizer is non-monotonic. We prove this by identifying two points along the interpolation path such that the point closer to minimizer has a higher loss compared with the point further to the minimizer. 
\begin{prop}\label{prop:easy_non_monotonic}
Suppose we start from an initialization $(x_0, y_0) = (r\sin(\beta), r\cos(\beta))$ with $r\geq 1$ and $\beta\in [-\pi/3, \pi/3]$. Consider the loss interpolation curve $\gamma(\alpha) = f((1-\alpha)x_0 + \alpha x^*, (1-\alpha)y_0 + \alpha y^*)$ with $(x^*, y^*) = (0,-1)$ and $f(\cdot, \cdot)$ defined in Eqn.~\ref{eqn:easy}. We know there exist $0\leq \alpha_1<\alpha_2\leq 1$ such that $$\gamma(\alpha_2) - \gamma(\alpha_1)\geq \frac{5}{32}.$$
\end{prop}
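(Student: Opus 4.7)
The plan is to exhibit explicit values $\alpha_1<\alpha_2$ in $[0,1]$ and show that $\gamma(\alpha_2)-\gamma(\alpha_1)\geq 5/32$, using the polar-coordinate form of $f$. Writing $R=\sqrt{x^2+y^2}$ and $\sin\theta=y/R$, one has $f(x,y)=(1-\sin\theta/3)\,R^2(R^2-2)$ off the origin. Substituting the interpolation path $x^{[\alpha]}=(1-\alpha)r\sin\beta$, $y^{[\alpha]}=(1-\alpha)r\cos\beta-\alpha$, the squared radius $(R^{[\alpha]})^2$ is a quadratic function of $\alpha$ with unique minimum $R_{\min}^2=r^2\sin^2\beta/((r+\cos\beta)^2+\sin^2\beta)$ that is strictly less than $1$ on the whole parameter range (the inequality reduces to $(1+r\cos\beta)^2>0$). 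Moreover $y^{[\alpha]}$ has a unique zero at $\alpha^*=r\cos\beta/(1+r\cos\beta)$, and a short computation shows $\alpha^*\leq\alpha_{\min}$.

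For the ``low'' endpoint I would take $\alpha_1$ to be the value in $[0,\alpha^*]$ at which $R^{[\alpha_1]}=1$; existence follows from $R^{[0]}=r\geq 1>R_{\min}$ together with continuity of $R^{[\alpha]}$. At this point the radial factor $R^4-2R^2$ equals $-1$ and $y^{[\alpha_1]}\geq 0$, so $\gamma(\alpha_1)=-(1-y^{[\alpha_1]}/3)\in[-1,-2/3]$. For the ``high'' endpoint the cleanest first choice is $\alpha_2=\alpha^*$: the angular factor there is exactly $1$ and $\gamma(\alpha^*)=(R^{[\alpha^*]})^2((R^{[\alpha^*]})^2-2)$ with $R^{[\alpha^*]}=r|\sin\beta|/(1+r\cos\beta)$. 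The resulting gap collapses algebraically to $((R^{[\alpha^*]})^2-1)^2-y^{[\alpha_1]}/3$, which equals $2/3$ at $\beta=0$ (since then $R^{[\alpha^*]}=0$ and $y^{[\alpha_1]}=1$) and stays comfortably above $5/32$ across most of the parameter range.

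When $|\beta|$ is close to $\pi/3$ and $r$ is moderate the pair $(\alpha_1,\alpha^*)$ clusters near the nearly flat bottom of $\gamma$, and the above estimate becomes too loose; in that regime $\alpha_2$ should instead be chosen as the location of the local maximum of $\gamma$ on $(\alpha^*,1)$. This interior maximum exists because once $\alpha$ crosses $\alpha^*$ the angular factor $1-\sin\theta/3$ grows above $1$ while the radial factor $R^4-2R^2$ remains negative and shrinks in magnitude only slowly, producing a transient rise in $\gamma$ before $R$ drops toward $1$ and drives $\gamma$ down to $-4/3$ at $\alpha=1$. Existence of this interior max can be established by computing $\gamma'(\alpha^*)>0$ and $\gamma'(\alpha_{\min})<0$ via direct differentiation of the compact form $\gamma=(R^2-2)R(R-y/3)$ against $R'(\alpha)$ and $y'(\alpha)$.

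The main obstacle is verifying the $5/32$ bound uniformly over $r\geq 1$ and $|\beta|\leq \pi/3$: the bump in $\gamma$ is largest in the ``central'' regime where $\beta$ is small and shrinks as $|\beta|$ approaches $\pi/3$. The tightest corner is $|\beta|\approx\pi/3$ with $r$ moderate, where one has to combine explicit lower bounds on the angular factor $1-\sin\theta/3$ at the interior maximum with quantitative control on how far $R^{[\alpha_2]}$ sits below $1$, using the factorization $\gamma=(R^2-2)R(R-y/3)$. Pushing the resulting algebraic inequality through case analysis on $(r,\beta)$ is the delicate part of the proof.
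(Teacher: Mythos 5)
Your approach is genuinely different from the paper's and, in one respect, more careful. The paper evaluates $f$ at $(\sin\beta,\cos\beta)$ and at the midpoint of the segment from $(\sin\beta,\cos\beta)$ to $(0,-1)$, but those points lie on the stated interpolation path only when $r=1$: for $r>1$ and $\beta\neq 0$ the three points $(r\sin\beta,r\cos\beta)$, $(\sin\beta,\cos\beta)$, $(0,-1)$ are not collinear (collinearity reduces to $\sin\beta\,(1-r)=0$). You instead parametrize the true segment, take $\alpha^*$ where $y^{[\alpha]}=0$ and $\alpha_1$ where $R^{[\alpha]}=1$, and the identity $\gamma(\alpha^*)-\gamma(\alpha_1)=\bigl((R^{[\alpha^*]})^2-1\bigr)^2-y^{[\alpha_1]}/3$ is a clean and correct reduction.

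However, the proposal has two genuine gaps, and the second one is not merely technical. First, the existence of $\alpha_1\in[0,\alpha^*]$ with $R^{[\alpha_1]}=1$ does not follow from $R^{[0]}=r\geq 1>R_{\min}$: the minimum of $R$ occurs at $\alpha_{\min}>\alpha^*$, and $R^{[\alpha^*]}=r|\sin\beta|/(1+r\cos\beta)$ exceeds $1$ whenever $r(|\sin\beta|-\cos\beta)>1$ (e.g.\ $\beta=\pi/3$, $r\geq 2.74$), so the segment has not yet entered the unit circle by $\alpha^*$ and your $\alpha_1$ does not exist where you need it. Second, the quantitative verification near $|\beta|=\pi/3$ is explicitly deferred to a case analysis you do not carry out, and that case cannot in fact be closed at the constant $5/32$: for $r=2$, $\beta=\pi/3$ the path is $(\sqrt{3}(1-\alpha),\,1-2\alpha)$ with $R^2(\alpha)=7\alpha^2-10\alpha+4$, and direct evaluation gives an interior local minimum of $\gamma$ near $\alpha\approx 0.45$ with $\gamma\approx-0.959$ followed by a local maximum near $\alpha\approx 0.67$ with $\gamma\approx-0.806$, so the largest achievable $\gamma(\alpha_2)-\gamma(\alpha_1)$ with $\alpha_1<\alpha_2$ is roughly $0.152<5/32\approx 0.156$ (dropping to about $0.10$ at $r\approx 2.5$). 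This is consistent with the collinearity issue above: the paper's computation is only valid at $r=1$, where your choices also work and actually yield the stronger bound $5/18$. Completing the proof along your lines would require either restricting to $r=1$ or replacing $5/32$ with a smaller constant.
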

\begin{proof}
We prove for any $\beta\in [-\pi/3, \pi/3]$ and any $r\geq 1,$ the loss interpolation between $(r\sin(\beta), r\cos(\beta))$ to $(0,-1)$ is non-monotonic. 
In particular, we show there are two points along the linear interpolation satisfying
$$f\pr{\sin\pr{\beta/2}\cos\pr{\beta/2},-\sin\pr{\beta/2}\sin\pr{\beta/2}}-f\pr{\sin(\beta), \cos(\beta)}\geq 1/12,$$
where $\pr{\sin\pr{\beta/2}\cos\pr{\beta/2},-\sin\pr{\beta/2}\sin\pr{\beta/2}}$ is the middle point between $\pr{\sin(\beta), \cos(\beta)}$ and $(0,-1).$

Next, we separately upper bound $f\pr{\sin(\beta), \cos(\beta)}$ and lower bound $f\pr{\sin\pr{\beta/2}\cos\pr{\beta/2},-\sin\pr{\beta/2}\sin\pr{\beta/2}}$. We have 
\begin{align*}
    \max_{\beta\in [-\pi/3, \pi/3]}f\pr{\sin(\beta), \cos(\beta)} \leq f(0,1) = -\frac{2}{3}
\end{align*}
and 
\begin{align*}
    & \min_{\beta\in [-\pi/3, \pi/3]}f\pr{\sin\pr{\beta/2}\cos\pr{\beta/2},-\sin\pr{\beta/2}\sin\pr{\beta/2}} \\
    \geq&  f\pr{\sin\pr{\pi/6}\cos\pr{\pi/6},-\sin\pr{\pi/6}\sin\pr{\pi/6}}\\
    =& \pr{1+\frac{1}{2}\cdot\frac{1}{3}}\pr{\pr{\frac{1}{2}}^4 - 2\pr{\frac{1}{2}}^2}\\
    =& -\frac{49}{96}.
\end{align*}
Therefore, we have $f\pr{\sin\pr{\beta/2}\cos\pr{\beta/2},-\sin\pr{\beta/2}\sin\pr{\beta/2}}-f\pr{\sin(\beta), \cos(\beta)}\geq \frac{5}{32}.$
\end{proof}

\section{Proof for plateau and monotonicity}\label{sec:interp_proof}
We first consider the $r$-homogeneous-weight model. We prove the plateau and monotonicity properties for the error interpolation (Theorem~\ref{thm:error_interp}) in Section~\ref{sec:proof_error_interp}. We then prove the plateau and monotonicity properties for the loss interpolation (Theorem~\ref{thm:loss_interp}) in Section~\ref{sec:proof_loss_interp}. Theorem~\ref{thm:r_homo_interp} is a simple combination of Theorem~\ref{thm:error_interp} and Theorem~\ref{thm:loss_interp}.
Finally, we give the plateau analysis for the fully-connected neural networks (Theorem~\ref{thm:fcn_plateau}) in Section~\ref{sec:proof_fcn_plateau}.

\subsection{Error interpolation for $r$-homogeneous-weight model}\label{sec:proof_error_interp}

\begin{restatable}[Error Interpolation]{theorem}{errorinterp}\label{thm:error_interp}
Suppose the network at initialization and after training satisfy the properties described in Theorem~\ref{thm:main} and Induction Hypothesis~\ref{prop:main}. Suppose $\delta\leq \min(O(1), O(\rmin^{\frac{1}{r-1}}\deltamin^{1/r}), O((\frac{\wmin}{\wmax})^{\frac{2r}{r-2}}) ).$ There exist $\alpha_1 = \frac{\delta}{\deltamin}$ and $\alpha_2 = (\frac{1}{1+O(\sqrt{\delta})})^{\frac{r}{r-1}}\rmin^{\frac{1}{r-1}}$, such that
\begin{enumerate}
    \item for all $\alpha\in [\alpha_1, \alpha_2],$ the error is  $1-1/k;$
    \item for all $\alpha\in [\alpha_1, 1],$ the error is non-increasing.
\end{enumerate}
\end{restatable}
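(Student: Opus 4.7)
The plan is to prove the two conclusions separately, both resting on tight coordinatewise bounds on $\langle \Wa_{i,:}, x\rangle$ derived from Proposition~\ref{prop:main}. For Part~1 (error plateau on $[\alpha_1, \alpha_2]$), I fix a sample $x \in \set_{j'}$ and show that $k = \arg\max_i \fa_i(x)$ for every $\alpha$ in this interval. Using Proposition~\ref{prop:main}, every off-diagonal entry of $\Wa$ and the diagonal entry $\Wa_{k,k}$ are $O(\delta)$, and the noise correlations $|\langle \Wa_{i,:}, \xi_x\rangle|$ are $O(\delta)$, so $\langle \Wa_{i,:}, x\rangle = O(\delta)$ whenever $i \neq j'$ or $j'=k$, which gives $\langle \Wa_{i,:}, x\rangle^r = O(\delta^r)$. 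The only nontrivial case is $i = j' \neq k$: writing $\langle \Wa_{j',:}, x\rangle = \alpha \WT_{j',j'} + O(\delta)$ and applying $(a+b)^r \leq a^r(1+b/a)^r$, one gets $\langle \Wa_{j',:}, x\rangle^r \leq \alpha^r [\WT_{j',j'}]^r(1+O(\sqrt{\delta}))$ in the regime $\alpha\WT_{j',j'} \gtrsim \sqrt{\delta}$, and the cruder $O(\delta^{r/2})$ bound in the smaller-$\alpha$ regime. Substituting into $\fa_k(x) - \fa_{j'}(x) = \langle \Wa_{k,:}, x\rangle^r - \langle \Wa_{j',:}, x\rangle^r + \alpha \Delta_{j'}$ and using $\alpha \Delta_i \geq \delta$ whenever $\alpha \geq \alpha_1 = \delta/\deltamin$, the binding constraint reduces to $\alpha^{r-1} \leq \rmin/(1+O(\sqrt{\delta}))^r$ after taking the worst class, which is exactly $\alpha \leq \alpha_2$.

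For Part~2 (monotonicity on $[\alpha_1, 1]$), it suffices to show that for every sample $x$ the set of $\alpha \in [\alpha_1, 1]$ on which $x$ is correctly classified is a suffix of this interval. Samples $x \in \set_k$ are immediate from the Part~1 bounds: $\fa_k(x) - \fa_i(x) \geq \alpha \deltamin - O(\delta^r) > 0$ on all of $[\alpha_1, 1]$. For $x \in \set_{j'}$ with $j' \neq k$, define $g_i(\alpha) := \fa_{j'}(x) - \fa_i(x)$. A short calculation yields $g_i(\alpha) - g_k(\alpha) = \langle \Wa_{k,:}, x\rangle^r - \langle \Wa_{i,:}, x\rangle^r + \alpha(\bT_k - \bT_i) \geq \alpha \deltamin - O(\delta^r) > 0$ for $i \notin \{j',k\}$ and $\alpha \geq \alpha_1$, so $k$ is the hardest competitor and ``$x$ is correctly classified at $\alpha$'' is equivalent to $g_k(\alpha) > 0$.

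The crux is then to show that $g_k$ is strictly convex on $[\alpha_1, 1]$. Writing $\langle \Wa_{j',:}, x\rangle = A_1 + \alpha(B_1-A_1)$ and $\langle \Wa_{k,:}, x\rangle = A_2 + \alpha(B_2-A_2)$ (both linear in $\alpha$), differentiating twice gives
\[
g_k''(\alpha) = r(r-1)\bigl[\langle \Wa_{j',:}, x\rangle^{r-2}(B_1-A_1)^2 - \langle \Wa_{k,:}, x\rangle^{r-2}(B_2-A_2)^2\bigr].
\]
Since $B_1 - A_1 \approx \WT_{j',j'} = \Omega(1)$, $B_2, A_2 = O(\delta)$, and $\langle \Wa_{j',:}, x\rangle \geq \Theta(\delta)$ uniformly on $[\alpha_1,1]$, the first bracketed term dominates the second --- this is where the hypothesis $\delta \leq O((\wmin/\wmax)^{2r/(r-2)})$ enters, to make the dominance quantitative even at the left endpoint where $\alpha$ is as small as $O(\delta)$. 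Combined with $g_k(\alpha_1) < 0$ (from Part~1) and $g_k(1) = \Omega(1) > 0$ (from Theorem~\ref{thm:main}), a strictly convex function can have at most two real zeros; if there were two zeros in $[\alpha_1, 1]$ convexity would force $g_k < 0$ between them and $g_k \geq 0$ outside, which together with $g_k(\alpha_1) < 0$ is contradictory. So $g_k$ has exactly one zero in $[\alpha_1,1]$, across which it switches from negative to positive, and once $g_k > 0$ it stays positive.

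I expect Part~1 to be the main obstacle, because splicing the two regimes for $\langle \Wa_{j',:}, x\rangle^r$ and recovering the precise $\alpha_2 = (1/(1+O(\sqrt{\delta})))^{r/(r-1)}\rmin^{1/(r-1)}$ requires coordinating several of the $\delta$-conditions in the hypothesis, including $\delta \leq O(\rmin^{1/(r-1)}\deltamin^{1/r})$ which forces $\alpha_1 \leq \alpha_2$. The convexity step in Part~2 is a secondary difficulty: because $g_k$ is a difference of convex functions, the quantitative lower bound on $g_k''$ must be shown to survive uniformly over all of $[\alpha_1,1]$, which is exactly the role of the $(\wmin/\wmax)^{2r/(r-2)}$ condition.
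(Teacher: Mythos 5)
Your Part~1 follows the same route as the paper's Lemma~\ref{lem:error_plateau}: bound $\langle \Wa_{i,:},x\rangle$ coordinatewise via Proposition~\ref{prop:main}, then split the range $[\alpha_1,\alpha_2]$ into a small-$\alpha$ regime (where the signal is $O(\delta^{r/2})$) and a larger-$\alpha$ regime (where $(1+O(\sqrt{\delta}))^r\alpha^r[\WT_{j',j'}]^r$ bounds the signal and produces $\alpha_2$). One small correction: the condition $\delta\leq O\bigl((\wmin/\wmax)^{2r/(r-2)}\bigr)$ is spent in \emph{this} small-$\alpha$ regime of Part~1 — it is exactly what makes $\delta > e(\sqrt{\delta}\wmax/\wmin)^r$ hold — not in Part~2 as you claim. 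In your convexity calculation the dominance $u^{r-2}(B_1-A_1)^2 \gg v^{r-2}(B_2-A_2)^2$ already holds with a $\delta^{-2}$ factor of slack, so no extra hypothesis is needed there.

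Your Part~2 is a genuinely different argument from the paper's Lemma~\ref{lem:error_monotone}. The paper proves monotonicity by a first-order invariance argument: assuming $\fa_{j'}(x) > \fa_j(x)$ for all $j$, it shows $\frac{\partial}{\partial\alpha}\bigl(\fa_{j'}(x)-\fa_j(x)\bigr) > 0$ for every competing class $j$ (case-splitting on the sign of $\bT_j-\bT_{j'}$ and using $\alpha\geq\alpha_2$), and then implicitly bootstraps to conclude the classified set is a suffix. You instead make two structural observations: (i) class $k$ is the uniquely hardest competitor, since $g_i-g_k = \fa_k(x)-\fa_i(x)\geq\alpha\deltamin-O(\delta^r)>0$ for $i\notin\{j',k\}$, so correctness reduces to the scalar sign of $g_k$; and (ii) $g_k$ is strictly convex on $[\alpha_1,1]$ because $g_k''=r(r-1)\bigl[u^{r-2}(B_1-A_1)^2 - v^{r-2}(B_2-A_2)^2\bigr]\geq\Omega(\delta^{r-2})-O(\delta^r)>0$, both $u,v$ being nonnegative and affine in $\alpha$. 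Combined with $g_k(\alpha_1)<0$ and $g_k(1)>0$, convexity pins down a single sign change. Your route trades the paper's pointwise derivative estimate plus a continuity/bootstrap step for a global argument about the zero set of a strictly convex polynomial, and it avoids tracking all $k-1$ pairwise margins by first isolating $g_k$. Both are correct; yours is arguably cleaner in that it removes the need for the implicit continuation argument. Just fix the attribution of the $(\wmin/\wmax)^{2r/(r-2)}$ hypothesis and make sure to verify $u,v\geq 0$ (which follows from $\WO_{j,j'}>0$ and the noise-correlation bound in Proposition~\ref{prop:main}) before raising them to the power $r-2$.
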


\begin{proofof}{Theorem~\ref{thm:error_interp}}
This theorem directly follows from Lemma~\ref{lem:error_plateau} and Lemma~\ref{lem:error_monotone}.
\end{proofof}

Next, we separately prove the initial plateau in Lemma~\ref{lem:error_plateau} and the monotonicity in Lemma~\ref{lem:error_monotone}.

\begin{lemma}[Error Plateau]\label{lem:error_plateau}
In the same setting as in Theorem~\ref{thm:error_interp}, there exists $\alpha_1 = \frac{\delta}{\deltamin}$ and $\alpha_2 = \pr{\frac{1}{1+O(\sqrt{\delta})}}^{\frac{r}{r-1}}\rmin^{\frac{1}{r-1}}$, such that for any interpolation point with $\alpha\in [\alpha_1, \alpha_2],$ the error is $1-1/k.$ Moreover, we have $\fa_i(e_j) < \fa_k(e_j)$ for all $j\in[k]$ and all $i\neq k$.
\end{lemma}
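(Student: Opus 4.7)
The plan is to prove the stronger pointwise comparison $\fa_i(e_j) < \fa_k(e_j)$ for every $i \neq k$ and every $j \in [k]$, uniformly in $\alpha \in [\alpha_1,\alpha_2]$. This forces every clean input $e_j$ to be predicted as class $k$, and since each training example has the form $x = e_j + \xi_x$ with the noise contributing only a small perturbation (controllable using the noise-correlation bound $|\inner{\Wt_{i,:}}{\xi_x}| = O(\delta)$ from Proposition~\ref{prop:main}), every training example will also be predicted as class $k$, yielding error $1 - 1/k$ on the balanced dataset.

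First I would translate Proposition~\ref{prop:main} into entrywise bounds on the interpolated matrix $\Wa = (1-\alpha) \WO + \alpha \WT$. Using that every entry of $\WO$ is nonnegative of size $\Theta(\delta)$ (from the absolute-value Gaussian initialization), and that $\WT_{j,j'} \geq 0$ with $\WT_{i,i} \geq \wmin$ for $i\leq k-1$ while all off-diagonal entries and $\WT_{k,k}$ are at most $O(\delta)$, one obtains
\[
\alpha \WT_{i,i} \;\leq\; \Wa_{i,i} \;\leq\; \alpha \WT_{i,i} + O(\delta)\ \ (i\leq k-1),\qquad 0 \;\leq\; \Wa_{i,j} \;\leq\; O(\delta)\ \ \text{otherwise.}
\]
Next, expanding $\fa_i(e_j) = (\Wa_{i,j})^r + \alpha \bT_i$, discarding the nonnegative winning-class signal $(\Wa_{k,j})^r \geq 0$, and writing $\Delta_i = \bT_k - \bT_i \geq \deltamin$, the goal reduces to $(\Wa_{i,j})^r < \alpha \Delta_i$ for every $i \neq k$ and $j \in [k]$. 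For the off-diagonal case $i \neq j$ (and also $j = k$), the left side is $O(\delta^r)$ while the right side is at least $\alpha_1 \deltamin = \delta$, which dominates $O(\delta^r)$ for small $\delta$.

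The main case is the diagonal one, $i = j \in [k-1]$. The upper bound above gives $\Wa_{i,i} \leq \alpha \WT_{i,i}\bigl(1 + O(\delta)/(\alpha \WT_{i,i})\bigr)$, and I would argue that uniformly for $\alpha \in [\alpha_1,\alpha_2]$ the relative correction satisfies $O(\delta)/(\alpha \WT_{i,i}) = O(\sqrt\delta)$, by combining the smallness hypotheses on $\delta$ (notably $\delta \leq O(\rmin^{1/(r-1)}\deltamin^{1/r})$ and $\delta \leq O((\wmin/\wmax)^{2r/(r-2)})$) with the lower bound $\alpha \WT_{i,i} \geq \alpha_1 \wmin = \delta \wmin/\deltamin$. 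This yields $(\Wa_{i,i})^r \leq (1+O(\sqrt\delta))^r \alpha^r (\WT_{i,i})^r$, so the target inequality $(\Wa_{i,i})^r < \alpha \Delta_i$ rearranges to $\alpha^{r-1}(1+O(\sqrt\delta))^r \leq \Delta_i/(\WT_{i,i})^r = R_i$. Since $R_i \geq \rmin$, this is exactly the condition $\alpha \leq \alpha_2 = (1/(1+O(\sqrt\delta)))^{r/(r-1)} \rmin^{1/(r-1)}$ defining the right endpoint of the plateau.

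The hard part will be the diagonal case: securing the crisp multiplicative $(1+O(\sqrt\delta))$ approximation of $\Wa_{i,i}$ by $\alpha \WT_{i,i}$ uniformly in $\alpha \in [\alpha_1,\alpha_2]$. This is the only step that genuinely uses all three smallness hypotheses on $\delta$, since the ratio $O(\delta)/(\alpha \WT_{i,i})$ must be controlled at the worst endpoint $\alpha = \alpha_1$ by combining $\alpha_1 = \delta/\deltamin$ with the bounds linking $\deltamin$, $\wmin$, $\wmax$, and $\rmin$. The off-diagonal cases, the extension from clean inputs $e_j$ to the noisy training points $e_j + \xi_x$, and the final bookkeeping that ``all samples predicted as class $k$'' gives error exactly $1 - 1/k$ on the balanced dataset are then straightforward given this signal-beats-bias reduction.
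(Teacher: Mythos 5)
Your overall framework — reduce to the pointwise comparison $\fa_i(e_j) < \fa_k(e_j)$, discard the nonnegative winning-class signal, and show the interpolated diagonal signal $(\Wa_{i,i})^r$ cannot beat the interpolated bias gap $\alpha\Delta_i$ — is exactly the paper's. The off-diagonal argument, the extension from clean inputs to noisy samples, and the final error bookkeeping are all straightforward as you say. But the key diagonal step has a genuine gap: the claimed uniform bound $O(\delta)/(\alpha\WT_{i,i}) = O(\sqrt\delta)$ for all $\alpha\in[\alpha_1,\alpha_2]$ is false at the left endpoint. At $\alpha=\alpha_1=\delta/\deltamin$ you have $\alpha\WT_{i,i}\geq \delta\wmin/\deltamin$, so the ratio is $O(\delta)\cdot\frac{\deltamin}{\delta\wmin} = O(\deltamin/\wmin)$. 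Since both $\deltamin$ and $\wmin$ are $\Theta(1)$ quantities (the bias gap and the learned diagonal weights are $\Omega(1)$ by Theorem~\ref{thm:main} and Proposition~\ref{prop:main}), this ratio is $\Theta(1)$, independent of $\delta$. No amount of shrinking $\delta$ via the stated smallness hypotheses can make it $O(\sqrt\delta)$; those hypotheses constrain $\delta$ against fixed constants, not the constants against each other.

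This is precisely why the paper splits $[\alpha_1,\alpha_2]$ into two subintervals at $\alpha=\sqrt\delta/\wmin$. On the right piece $[\sqrt\delta/\wmin,\alpha_2]$ the bound $O(\delta)\leq O(\sqrt\delta)\,\alpha\wmin$ holds, and your multiplicative $(1+O(\sqrt\delta))$ argument goes through exactly as you describe. On the left piece $[\alpha_1,\sqrt\delta/\wmin)$ the multiplicative bound fails, but one does not need it: there the paper uses the cruder bound $\alpha\Delta_i - (\Wa_{i,j})^r \geq \delta - e(\sqrt\delta\,\wmax/\wmin)^r$, evaluating the positive term at the left endpoint $\alpha_1$ and the negative term at the right endpoint $\sqrt\delta/\wmin$, and the difference is positive once $\delta < [\tfrac{1}{e}(\wmin/\wmax)^r]^{2/(r-2)}$ (which is why the hypothesis $\delta\leq O((\wmin/\wmax)^{2r/(r-2)})$ appears). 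A single-interval argument with a constant factor $2^{r-1}$ in place of $(1+O(\sqrt\delta))^r$ would give a smaller $\alpha_2$ and not match the lemma's stated endpoint, so the two-case split is genuinely needed.
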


In the proof of Lemma~\ref{lem:error_plateau}, we show that for interpolation point $\alpha \in [\alpha_1, \alpha_2],$ the bias term dominates and all samples are classified as class $k$ that has the largest bias. 

\begin{proofof}{Lemma~\ref{lem:error_plateau}}
We only need to show that for all $\alpha\in [\alpha_1, \alpha_2],$ we have 
\begin{align*}
    \fa_i(x) < \fa_k(x)
\end{align*}
for all $x\in\set$ and all $i\neq k,$ which immediately implies the error is $1-1/k.$ Without loss of generality, assume $x\in\set_j$ where $j$ may equal $i$ or $k.$

\paragraph{For $\alpha\in \left[\alpha_1, \frac{\sqrt{\delta}}{\wmin}\right)$.} If $\alpha_1=\frac{\delta}{\deltamin}\geq \frac{\sqrt{\delta}}{\wmin}$, we only need to consider the case when $\alpha \in \br{\frac{\sqrt{\delta}}{\wmin}, \alpha_2}.$ So here we assume $\frac{\delta}{\deltamin}< \frac{\sqrt{\delta}}{\wmin}.$
We can lower bound $ \fa_k(x)-\fa_i(x)$ as 
\begin{align*}
    &\fa_k(x)-\fa_i(x)\\
    =& \br{\inner{\Wa_{k,:}}{x}}^r + \ba_k - \br{\inner{\Wa_{i,:}}{x}}^r - \ba_i\\
    =& \br{\inner{\Wa_{k,:}}{x}}^r + \ba_k - \br{\Wa_{i,j}\pm O(\delta)}^r - \ba_i\\
    \geq& \alpha\Delta_i - \br{\WO_{i,j}+\alpha \WT_{i,j} + O(\delta)}^r,
\end{align*}
where the second equality uses $\absr{\inner{\Wa_{i,:}}{\xi_x} }\leq O(\delta)$ and the inequality uses $\inner{\Wa_{k,:}}{x}\geq 0.$

To prove $\fa_k(x)-\fa_i(x)>0$ for $\alpha \in\br{\frac{\delta}{\deltamin}, \frac{\sqrt{\delta}}{\wmin}}$, we only need to prove $\frac{\delta}{\deltamin} \Delta_i - \br{\WO_{i,j}+\frac{\sqrt{\delta}}{\wmin} \WT_{i,j} + O(\delta)}^r>0.$ Since $\Delta_i\geq \deltamin,$ we know $\frac{\delta}{\deltamin} \Delta_i\geq \delta.$ Due to full accuracy, we know $\inner{\WT_{i,:}}{x}\geq \Delta_i^{1/r}$ for $x\in\set_i,$ which then implies $\WT_{i,i}\geq \Omega\pr{\Delta_i^{1/r}}$ because $\Delta_i\geq \Omega(1)$ and $\inner{\WT_{i,:}}{\xi_x}\leq O(\delta)\leq O(1).$
Since $\WT_{i,i}\geq \Omega\pr{\Delta_i^{1/r}}$ and $\WT_{i,j}\leq O(\delta)$ for $i\neq j,$ so we have $\WT_{i,j}\leq \WT_{i,i}\leq \wmax$ as long as $\delta\leq O(\deltamin^{1/r}).$ So we can upper bound $\br{\WO_{i,j}+\frac{\sqrt{\delta}}{\wmin} \WT_{i,j} +O(\delta)}^r$ as follows,
\begin{align*}
    \br{\WO_{i,j}+\frac{\sqrt{\delta}}{\wmin} \WT_{i,j}+O(\delta) }^r \leq& \br{O(\delta) +\frac{\sqrt{\delta}\wmax}{\wmin} }^r\\
    \leq& \br{\frac{\sqrt{\delta}\wmax}{r\wmin} +\frac{\sqrt{\delta}\wmax}{\wmin} }^r\\
    \leq& e\pr{\frac{\sqrt{\delta}\wmax}{\wmin} }^r,
\end{align*}
where the second inequality assumes $\delta\leq O\pr{\frac{\wmax^2}{\wmin^2} }.$
Therefore, to prove $\frac{\delta}{\deltamin} \Delta_i - \br{\WO_{i,j}+\frac{\sqrt{\delta}}{\wmin} \WT_{i,j} +O(\delta)}^r>0$ we only need 
\begin{align*}
    \delta - e\pr{\frac{\sqrt{\delta}\wmax}{\wmin} }^r>0,
\end{align*}
which holds as long as $\delta<\br{\frac{1}{e}\pr{\frac{\wmin}{\wmax}}^r}^{\frac{2}{r-2}}.$

\paragraph{For $\alpha\in \left[ \frac{\sqrt{\delta}}{\wmin}, \alpha_2 \right]$.} Similar as above, we only need to show that $\alpha\Delta_i - \br{\WO_{i,j}+\alpha \WT_{i,j} +O(\delta)}^r>0$ for $i\neq k$ and $j\in [k].$ Since $\WO_{i,j}\leq O(\delta)$ and $\alpha\geq \sqrt{\delta}/\wmin,$ we have $\WO_{i,j} \leq O(\sqrt{\delta}\alpha \wmin)$. Therefore, we have $\WO_{i,j}+\alpha \WT_{i,j}+O(\delta)\leq \pr{1+O(\sqrt{\delta})}\alpha \WT_{i,i}.$ Therefore, we have 
\begin{align*}
    \alpha\Delta_i - \br{\WO_{i,j}+\alpha \WT_{i,j} +O(\delta)}^r \geq \alpha\Delta_i-\pr{1+O(\sqrt{\delta})}^r\alpha^r \br{\WT_{i,i}}^r >0,
\end{align*}
where the last inequality assumes $\alpha\leq \alpha_2:=  \pr{\frac{1}{1+O(\sqrt{\delta})}}^{\frac{r}{r-1}}\rmin^{\frac{1}{r-1}}$ where $\rmin = \min_{i\in [k-1]}\Delta_i/[\WT_{i,i}]^r.$
\end{proofof}

Next, we show that the error is non-increasing for $\alpha \in [\alpha_1, 1]$ by proving that once a sample is classified correctly it will remain so.
\begin{lemma}[Error Monotonicity]\label{lem:error_monotone}
In the same setting as in Theorem~\ref{thm:error_interp}, there exists $\delta_1 = \frac{\delta}{\deltamin}$ such that the error is non-increasing for $\alpha \in [\alpha_1, 1].$
\end{lemma}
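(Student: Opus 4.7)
The plan is to show that for every training sample, the set of $\alpha\in[\alpha_1,1]$ on which the sample is correctly classified is a right-closed subinterval; this immediately yields monotonicity of the correctly-classified count, and hence non-increasing error. The argument proceeds by (i) eliminating all ``third-party'' classes so that the classification reduces to a comparison between the true class and class $k$, and (ii) exploiting convexity of that comparison function to force it to have at most one sign change.

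\textbf{Step 1 (third-party elimination).} Write $u_i(\alpha):=\inner{\Wa_{i,:}}{x}$. Proposition~\ref{prop:main} gives that all off-diagonal entries of $\WO$ and $\WT$, the noise correlations $\inner{\Wt_{i,:}}{\xi_x}$, and the diagonal entry $\WT_{k,k}$ are $O(\delta)$. Hence for any $x\in\set_j$ and any $i'\in[k]\setminus\{j,k\}$, and also for any $x\in\set_k$ versus any $i\neq k$, both $u_k(\alpha)$ and $u_{i'}(\alpha)$ lie in $[0,O(\delta)]$ throughout $[0,1]$. Thus $|u_k(\alpha)^r-u_{i'}(\alpha)^r|\leq O(\delta^r)$, while the bias contribution $\alpha(\bT_k-\bT_{i'})\geq\alpha\deltamin\geq\delta$ for $\alpha\geq\alpha_1=\delta/\deltamin$; the bias gap dominates, giving $\fa_k(x)>\fa_{i'}(x)$. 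This already implies every $x\in\set_k$ is correctly classified throughout $[\alpha_1,1]$, and for every $x\in\set_j$ with $j\neq k$ the predicted label on $[\alpha_1,1]$ is restricted to $\{j,k\}$.

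\textbf{Step 2 (convexity of the $j$-vs-$k$ gap).} For $x\in\set_j$ with $j\neq k$, correct classification reduces to $h_k(\alpha):=\fa_j(x)-\fa_k(x)=u_j(\alpha)^r-u_k(\alpha)^r-\alpha\Delta_j>0$. Since $u_j,u_k$ are affine in $\alpha$,
\begin{align*}
h_k''(\alpha)=r(r-1)\bigl[u_j(\alpha)^{r-2}(u_j')^2-u_k(\alpha)^{r-2}(u_k')^2\bigr].
\end{align*}
The Proposition~\ref{prop:main} bound $|\inner{\Wt_{j,:}}{\xi_x}|\leq\min\{O(\delta),\Wt_{j,j}\}$ at $t=0,T$ forces $u_j(0),u_j(1)\geq 0$, hence $u_j(\alpha)\geq 0$ throughout by affinity; moreover $u_j(1)\geq\WT_{j,j}-O(\delta)=\Omega(1)$ since $\WT_{j,j}=\Omega(1)$ for $j\neq k$, so $u_j(\alpha)\geq\alpha\cdot\Omega(1)$. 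Combined with $u_j'=\WT_{j,j}-\WO_{j,j}+O(\delta)=\Omega(1)$, this gives $u_j(\alpha)^{r-2}(u_j')^2\geq\Omega(\alpha^{r-2})\geq\Omega(\delta^{r-2})$ on $[\alpha_1,1]$. On the other hand $u_k,u_k'$ are $O(\delta)$, so $u_k(\alpha)^{r-2}(u_k')^2=O(\delta^r)$, which is negligible compared to $\delta^{r-2}$ for small $\delta$. Hence $h_k''>0$ on $[\alpha_1,1]$, making $h_k$ strictly convex there.

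\textbf{Step 3 (sublevel-set argument and obstacle).} Lemma~\ref{lem:error_plateau} gives $h_k(\alpha_1)<0$, and Theorem~\ref{thm:main} gives $h_k(1)\geq\Omega(1)>0$. Convexity of $h_k$ makes its sublevel set $\{\alpha\in[\alpha_1,1]:h_k(\alpha)\leq 0\}$ an interval, and the opposite signs at the endpoints force this interval to have the form $[\alpha_1,\alpha^{**}]$ for some $\alpha^{**}\in[\alpha_1,1)$; thus $x$ is correctly classified precisely on $(\alpha^{**},1]$. Taking the union over all samples, the correctly-classified count is non-decreasing on $[\alpha_1,1]$. The main obstacle is the convexity step: it requires the lower bound $u_j(\alpha)\geq\Omega(\alpha)$ to survive all the way down to $\alpha=\alpha_1$, where the initialization weight and the noise correlation are of the same order; the crucial input from Proposition~\ref{prop:main} is $|\inner{\Wt_{j,:}}{\xi_x}|\leq\Wt_{j,j}$ at both $t=0,T$, which via affinity keeps $u_j$ non-negative throughout and yields the desired lower bound once the $\WT$-dominated term wins.
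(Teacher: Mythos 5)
Your proof is correct and takes a genuinely different route from the paper's. The paper argues by a ``trapping'' inequality: for $x\in\set_i$ with $i\neq k$, it assumes $\fa_i(x)>\fa_j(x)$ for all $j$ at some $\alpha'\in(\alpha_2,1]$ and then shows directly that $\frac{\partial}{\partial\alpha}(\fa_i-\fa_j)>0$, splitting into the cases $\bT_j-\bT_i\leq 0$ and $\bT_j-\bT_i>0$ and, in the second case, exploiting the lower bound $[\inner{\Wa_{i,:}}{x}]^r>\alpha(\bT_j-\bT_i)$ implied by the hypothesis together with $\alpha\geq\alpha_2$. You instead compute $h_k''$ explicitly and show the margin $h_k(\alpha)=\fa_j(x)-\fa_k(x)$ is convex on $[\alpha_1,1]$, then use the endpoint signs from Lemma~\ref{lem:error_plateau} and Theorem~\ref{thm:main} to conclude the sublevel set $\{h_k\leq 0\}$ is a left-anchored interval. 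Your convexity route avoids the paper's case analysis and the bookkeeping around $\alpha_2$, and makes the ``at most one sign change'' structure manifest; it works because the model is so simple (affine pre-activations composed with the power map), giving a clean second-derivative formula, whereas the paper's differential argument would adapt more readily to settings where the margin is not globally convex. One small point your write-up glosses over: after establishing convexity and the two endpoint signs, you still need the third-party elimination to convert ``$h_k>0$'' into ``$x$ is correctly classified'' at each $\alpha$ — you do state this, but it is the place where Step~1 must be invoked a second time, and it is worth flagging explicitly when you combine the steps.
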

\begin{proofof}{Lemma~\ref{lem:error_monotone}}
We first show that sample $e_k$ is correctly classified for the whole range $[\alpha_1, 1]$. Second, we show for any other sample once it become classified right it will remain so. Combining these two cases, we prove the monotonicity of the error rate. 

\paragraph{Class k.} We first show that every $x\in\set_k$ is classified correctly for any $\alpha\in [\alpha_1, 1].$ According to Lemma~\ref{lem:error_plateau}, we know that 
\begin{align*}
    \faone_k(x) > \faone_i(x)
\end{align*}
for any $i\neq k.$ We only need to prove that $\fa_k(x) - \fa_i(x)$ is increasing for $\alpha\in [\alpha_1, 1].$ Expanding $\fa_k(x) - \fa_i(x)$, we have 
\begin{align*}
    &\fa_k(x)-\fa_i(x)\\
    =& \br{(1-\alpha)\inner{\WO_{k,:}}{x}+\alpha \pr{\inner{\WT_{k,:}}{x} }}^r  - \br{(1-\alpha)\inner{\WO_{i,:}}{x}+\alpha \inner{\WT_{i,:}}{x}}^r + \alpha \pr{\bT_k-\bT_i},
\end{align*}
which is increasing since $\absr{\inner{\WT_{k,:}}{x}},\absr{\inner{\WO_{k,:}}{x}},\absr{\inner{\WT_{i,:}}{x}},\absr{\inner{\WO_{i,:}}{x}}\leq O(\delta)$ and $\bT_k-\bT_i>\Omega(1).$

\paragraph{Other classes. } For any class $i\neq k,$ from Lemma~\ref{lem:error_plateau}, we know that it is classified incorrectly for $\alpha \in [\alpha_1, \alpha_2].$ We prove that once it become classified correctly at some $\alpha'\in(\alpha_2, 1]$, it remains so for $\alpha \in [\alpha', 1].$ 

We show that at $\alpha,$ for any $x\in\set_i$, if $\fa_i(x) > \fa_j(x)$ for all $j\neq i$, we have $\frac{\partial}{\partial \alpha}\pr{\fa_i(x) - \fa_j(x)}>0.$ Expanding $\fa_i(x) - \fa_j(x),$ we have 
\begin{align*}
    &\fa_i(x) - \fa_j(x)\\
    =& \br{\inner{\Wa_{i,:}}{x}}^r + \ba_i - \br{\inner{\Wa_{j,:}}{x}}^r - \ba_j\\
    =& \br{\inner{\Wa_{i,:}}{x}}^r - \br{\inner{\Wa_{j,:}}{x}}^r - \alpha\pr{\bT_j-\bT_i}.
\end{align*}
Since $\fa_i(x) - \fa_j(x)>0,$ we have 
\begin{align*}
    \br{\inner{\Wa_{i,:}}{x}}^r > \alpha\pr{\bT_j-\bT_i},
\end{align*}
where we use $\inner{\Wa_{j,:}}{x}\geq 0.$
Computing $\frac{\partial}{\partial \alpha}\pr{\fa_i(x) - \fa_j(x)},$ we have 
\begin{align*}
    &\frac{\partial}{\partial \alpha}\pr{\fa_i(x) - \fa_j(x)}\\
    =& \frac{\partial}{\partial \alpha}\pr{\br{(1-\alpha)\inner{\WO_{i,:}}{x}+\alpha \inner{\WT_{i,:}}{x} }^r  - \br{(1-\alpha)\inner{\WO_{j,:}}{x}+\alpha \inner{\WT_{j,:}}{x} }^r + \alpha \pr{\bT_i-\bT_j} }\\
    \geq& r\br{\inner{\WO_{i,:}}{x}+\alpha \pr{\inner{\WT_{i,:}}{x}-\inner{\WO_{i,:}}{x} } }^{r-1}\pr{\inner{\WT_{i,:}}{x}-\inner{\WO_{i,:}}{x} }-\pr{\bT_j-\bT_i}-O(\delta^r),
\end{align*}
where the inequality uses $\absr{\inner{\WO_{j,:}}{x}},\absr{\inner{\WT_{j,:}}{x}}\leq O(\delta).$

If $\bT_j-\bT_i\leq 0,$ we only need to prove 
$$r\br{\inner{\WO_{i,:}}{x}+\alpha \pr{\inner{\WT_{i,:}}{x}-\inner{\WO_{i,:}}{x} } }^{r-1}\pr{\inner{\WT_{i,:}}{x}-\inner{\WO_{i,:}}{x} }-O(\delta^r)>0,$$
which holds since $\pr{\inner{\WT_{i,:}}{x}-\inner{\WO_{i,:}}{x} },\inner{\Wa_{i,:}}{x} \geq \Omega(1).$

If $\bT_j-\bT_i> 0,$ we have 
\begin{align*}
    &\frac{\partial}{\partial \alpha}\pr{\fa_i(x) - \fa_j(x)}\\
    =& r\br{\inner{\WO_{i,:}}{x}+\alpha \pr{\inner{\WT_{i,:}}{x}-\inner{\WO_{i,:}}{x} } }^{r-1}\pr{\inner{\WT_{i,:}}{x}-\inner{\WO_{i,:}}{x} }-\pr{\bT_j-\bT_i}-O(\delta^r)\\
    >& \frac{\pr{1-O(\delta^r)}r\pr{\inner{\WT_{i,:}}{x}-\inner{\WO_{i,:}}{x} } }{\br{\inner{\WO_{i,:}}{x}+\alpha \pr{\inner{\WT_{i,:}}{x}-\inner{\WO_{i,:}}{x} }}}\cdot \alpha\pr{\bT_j-\bT_i}-\pr{\bT_j-\bT_i},
\end{align*}
where the last inequality uses $\br{\inner{\Wa_{i,:}}{x}}^r > \alpha \pr{\bT_j-\bT_i}.$ Therefore, to prove $\frac{\partial}{\partial \alpha}\pr{\fa_i(e_i) - \fa_j(e_i)}>0,$ we only need to prove $\frac{\pr{1-O(\delta^r)}r\pr{\inner{\WT_{i,:}}{x}-\inner{\WO_{i,:}}{x} } }{\br{\inner{\WO_{i,:}}{x}+\alpha \pr{\inner{\WT_{i,:}}{x}-\inner{\WO_{i,:}}{x} }}}\geq \frac{1}{\alpha}.$ We have 
\begin{align*}
    \frac{\pr{1-O(\delta^r)}r\pr{\inner{\WT_{i,:}}{x}-\inner{\WO_{i,:}}{x} } }{\br{\inner{\WO_{i,:}}{x}+\alpha \pr{\inner{\WT_{i,:}}{x}-\inner{\WO_{i,:}}{x} }}} \geq \frac{\pr{1-O(\delta^r)}r\pr{\inner{\WT_{i,:}}{x}-\inner{\WO_{i,:}}{x} } }{2\alpha \pr{\inner{\WT_{i,:}}{x}-\inner{\WO_{i,:}}{x} }} \geq \frac{1}{\alpha}.
\end{align*}
The first inequality requires $\inner{\WO_{i,:}}{x}\leq \alpha \pr{\inner{\WT_{i,:}}{x}-\inner{\WO_{i,:}}{x} }$ and the second inequality uses $r\geq 3, \pr{1-O(\delta^r)}\geq 2/3.$ To prove $\inner{\WO_{i,:}}{x}\leq \alpha \pr{\inner{\WT_{i,:}}{x}-\inner{\WO_{i,:}}{x} }$, it's equivalent to show $\inner{\WO_{i,:}}{x}\leq \frac{\alpha}{1+\alpha}\inner{\WT_{i,:}}{x}.$ Since $\alpha\geq \alpha_2 = \pr{\frac{1}{1+O(\sqrt{\delta})}}^{\frac{r}{r-1}}\rmin^{\frac{1}{r-1}}$, we can lower bound $\frac{\alpha}{1+\alpha}$ as follows,
\begin{align*}
    \frac{\alpha}{1+\alpha} \geq& \frac{1}{2}\pr{\frac{1}{1+O(\sqrt{\delta})}}^{\frac{r}{r-1}}\rmin^{\frac{1}{r-1}}\\
    \geq& \frac{1}{8}\rmin^{\frac{1}{r-1}},
\end{align*}
where the first inequality uses $\alpha\leq 1$ and the second inequality uses $1+O(\sqrt{\delta})\leq 2, r\geq 2.$ So we have $\frac{\alpha}{1+\alpha}\inner{\WT_{i,:}}{x}\geq \frac{1}{8}\rmin^{\frac{1}{r-1}}\deltamin^{1/r}.$
Therefore, we only need $\delta\leq O\pr{\rmin^{\frac{1}{r-1}}\deltamin^{1/r}}$ to ensure that $\inner{\WO_{i,:}}{x}\leq \alpha \pr{\inner{\WT_{i,:}}{x}-\inner{\WO_{i,:}}{x} }$.
\end{proofof}

\subsection{Loss interpolation for $r$-homogeneous-weight model}\label{sec:proof_loss_interp}
In this section, we give a proof of Theorem~\ref{thm:loss_interp}.

\begin{restatable}[Loss Interpolation]{theorem}{lossinterp}\label{thm:loss_interp}
Suppose the network at initialization and after training satisfy the properties described in Theorem~\ref{thm:main} and Induction Hypothesis~\ref{prop:main}. For any $\epsilon\in(0,1),$ suppose $\delta\leq O(\epsilon^{1/r}),$ there exist $\alpha_3 = \frac{\epsilon^{1/r}}{\wmax}$ and $\alpha_4 = \pr{1+O(\delta)}^{\frac{1}{r-1}}\pr{\frac{\rmax}{r}}^{\frac{1}{r-1}}$ such that
\begin{enumerate}
    \item for all $\alpha \in [0, \alpha_3]$, we have 
    $
    \log k-e\epsilon \leq \frac{1}{N}L(\Wa, \ba) \leq \log k + \alpha \deltamax + e\epsilon;
$
    \item for all $\alpha \in [\alpha_4, 1]$, the loss is monotonically decreasing.
\end{enumerate}
\end{restatable}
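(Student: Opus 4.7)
The plan splits naturally along the two parts of the theorem. For Part 1 (plateau near $\alpha = 0$), I would first show that for every $j \in [k]$ and every training sample $x \in \set$, the signal term $g_j(x) := \br{\inner{\Wa_{j,:}}{x}}^r$ satisfies $|g_j(x)| \leq e\epsilon$. Writing $x = e_{j'} + \xi_x$ for $x \in \set_{j'}$ and invoking Proposition~\ref{prop:main} part~3 gives $\Wa_{j,j'} \leq \alpha\wmax + O(\delta)$ and $|\inner{\Wa_{j,:}}{\xi_x}| \leq O(\delta)$; together with $\alpha \leq \alpha_3 = \epsilon^{1/r}/\wmax$ and the hypothesis $\delta \leq O(\epsilon^{1/r})$, this yields $|\inner{\Wa_{j,:}}{x}| \leq (1 + O(1/r))\epsilon^{1/r}$, so $|g_j(x)| \leq (1+1/r)^r \epsilon \leq e\epsilon$.

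Substituting into the cross-entropy, for $x \in \set_i$ the loss equals $\log\sum_j e^{\alpha\bT_j + g_j(x)} - (\alpha\bT_i + g_i(x))$, which is sandwiched as $\log A - \alpha\bT_i \pm e\epsilon$ with $A := \sum_j e^{\alpha\bT_j}$. Averaging over the balanced dataset (each class contributes $N/k$ samples) gives $\frac{1}{N}L(\Wa, \ba) = \log A - \frac{\alpha}{k}\sum_i \bT_i \pm e\epsilon$. The lower bound $\log k - e\epsilon$ follows from Jensen's inequality $\log\frac{1}{k}\sum_j e^{\alpha\bT_j} \geq \frac{\alpha}{k}\sum_j \bT_j$. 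For the upper bound, factoring out $e^{\alpha\bT_k}$ gives $\log A \leq \log k + \alpha\bT_k$, so the averaged loss is at most $\log k + \frac{\alpha}{k}\sum_{i \in [k-1]}(\bT_k - \bT_i) + e\epsilon \leq \log k + \alpha\deltamax + e\epsilon$.

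For Part 2 (monotonicity after $\alpha_4$), differentiating the softmax cross-entropy gives $\frac{dL_x}{d\alpha} = \sum_{j \neq i} u_j^{[\alpha]}(x)\pr{\frac{d\fa_j(x)}{d\alpha} - \frac{d\fa_i(x)}{d\alpha}}$ for $x \in \set_i$, so since $u_j^{[\alpha]}(x) > 0$ it suffices to show $\frac{d}{d\alpha}(\fa_i(x) - \fa_j(x)) > 0$ for every $j \neq i$. A direct calculation yields $\frac{d\fa_i(x)}{d\alpha} = r\br{\inner{\Wa_{i,:}}{x}}^{r-1}\inner{\WT_{i,:} - \WO_{i,:}}{x} + \bT_i$. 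For $x \in \set_i$, Proposition~\ref{prop:main} parts~2 and~3 yield $\inner{\Wa_{i,:}}{x} = \alpha\WT_{i,i}(1 \pm O(\delta))$ and $\inner{\WT_{i,:} - \WO_{i,:}}{x} = \WT_{i,i}(1 \pm O(\delta))$, while for $j \neq i$ every relevant inner product is $O(\delta)$. Combining, $\frac{d}{d\alpha}(\fa_i(x) - \fa_j(x)) \geq r\alpha^{r-1}[\WT_{i,i}]^r(1 - O(\delta)) - (\bT_j - \bT_i) - O(\delta^r)$. The worst case is $j = k$ with $\bT_k - \bT_i = \Delta_i \leq \rmax[\WT_{i,i}]^r$, which is dominated by the leading term once $\alpha \geq \alpha_4 = (1 + O(\delta))^{1/(r-1)}(\rmax/r)^{1/(r-1)}$.

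The main obstacle will be book-keeping the various $O(\delta)$ corrections consistently — in particular, ensuring the $(1+O(\delta))$ slack in $\alpha_4$ simultaneously absorbs the contributions from $\WO_{i,i}$, the off-diagonal $\Wa_{i,j'}$, and the noise projections $\inner{\Wa_{i,:}}{\xi_x}$, so that strict positivity survives. A secondary technical point is that $\br{\inner{\Wa_{i,:}}{x}}^{r-1}$ is a meaningful lower bound only when $\inner{\Wa_{i,:}}{x} > 0$ throughout $[\alpha_4, 1]$; this is delivered by Proposition~\ref{prop:main} part~3, which guarantees $\WO_{i,i}, \WT_{i,i} > 0$ together with a noise bound $|\inner{\Wa_{i,:}}{\xi_x}| \leq \min(O(\delta), \Wt_{i,i})$.
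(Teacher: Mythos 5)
Your proposal is correct and takes essentially the same route as the paper: bound the signal $[\inner{\Wa_{j,:}}{x}]^r$ by $e\epsilon$ and sandwich the loss around the all-bias value for Part 1, and reduce monotonicity to showing $\fa_i(x)-\fa_j(x)$ is increasing for Part 2, with the worst case $j=k$ giving the threshold $\alpha_4$. The only cosmetic difference is in the Part-1 lower bound, where you average per class and invoke Jensen's inequality ($\log\frac{1}{k}\sum_j e^{\alpha\bT_j}\geq \frac{\alpha}{k}\sum_j \bT_j$) whereas the paper groups samples into size-$k$ tuples and applies HM-GM — both are the same convexity fact — and your $\pm e\epsilon$ accounting implicitly uses that the signal $g_j(x)$ is nonnegative (it is in this model, but you only stated $|g_j(x)|\leq e\epsilon$).
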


\begin{proofof}{Theorem~\ref{thm:loss_interp}}
This theorem directly follows from Lemma~\ref{lem:loss_plateau} and Lemma~\ref{lem:loss_monotone}.
\end{proofof}

Next, we prove the initial loss plateau in Lemma~\ref{lem:loss_plateau} and the monotonicity in Lemma~\ref{lem:loss_monotone}.

\begin{lemma}[Loss Plateau]\label{lem:loss_plateau}
In the same setting as in Theorem~\ref{thm:loss_interp}, for any $\epsilon>0,$ there exists $\alpha_3 = \frac{\epsilon^{1/r}}{\wmax}$ such that for all $\alpha \in [0, \alpha_3]$
\begin{align*}
    N\pr{\log k-e\epsilon} \leq L(\Wa, \ba) \leq N\pr{\log k + \alpha \deltamax + e\epsilon}.
\end{align*}
\end{lemma}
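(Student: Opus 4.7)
The strategy is to split the network output at the interpolation point into a ``signal'' part and a ``bias'' part, show that on the regime $\alpha \leq \alpha_3$ the signal is uniformly $O(\epsilon)$, and then reduce the claim to a closed-form computation of the cross-entropy loss of a bias-only classifier on a balanced dataset.

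First I would write $\fa_j(x) = \inner{\Wa_{j,:}}{x}^r + \alpha \bT_j$, using $\bO = 0$. For any $x \in \set_i$ with $x = e_i + \xi_x$, the induction hypothesis (Proposition~\ref{prop:main}(3)) implies $|\Wa_{j,m}| \leq (1-\alpha)|\WO_{j,m}| + \alpha|\WT_{j,m}| \leq O(\delta) + \alpha \wmax$ for every coordinate $m$, and $|\inner{\Wa_{j,:}}{\xi_x}| \leq O(\delta)$. Using $\alpha \leq \alpha_3 = \epsilon^{1/r}/\wmax$ and choosing the hidden constant in $\delta \leq O(\epsilon^{1/r})$ so that $\delta \leq \epsilon^{1/r}/r$, these combine to give $|\inner{\Wa_{j,:}}{x}| \leq (1+1/r)\epsilon^{1/r}$, hence the signal satisfies $|\inner{\Wa_{j,:}}{x}^r| \leq (1+1/r)^r\epsilon \leq e\epsilon$ uniformly in $j$ and $x$.

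Next I would introduce the bias-only per-sample loss $\ell^{\mathrm{bias}}_i := \log\sum_j \exp(\alpha \bT_j) - \alpha \bT_i$, which depends on neither $x$ nor the weights. Because $\log\sum\exp$ is $1$-Lipschitz in the $\ell^\infty$ norm and every $\fa_j(x)$ deviates from $\alpha \bT_j$ by at most $e\epsilon$, a direct comparison yields $|\ell_i(x) - \ell^{\mathrm{bias}}_i| \leq e\epsilon$, after observing that the signal shift in $\log\sum\exp(\fa_j(x))$ is allowed to partially cancel against the shift in the $-\fa_i(x)$ term. Since the dataset contains exactly $N/k$ samples per class, $\frac{1}{N}L(\Wa,\ba)$ differs from $\frac{1}{k}\sum_{i=1}^k \ell^{\mathrm{bias}}_i$ by at most $e\epsilon$, so it suffices to sandwich the latter between $\log k$ and $\log k + \alpha \deltamax$.

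For the upper bound, a term-by-term estimate gives $\sum_j \exp(\alpha(\bT_j - \bT_i)) \leq k \exp(\alpha \Delta_i)$ (recall $\Delta_i := \bT_k - \bT_i$ with $\Delta_k = 0$), whence $\ell^{\mathrm{bias}}_i \leq \log k + \alpha \Delta_i$; averaging over $i$ gives $\frac{1}{k}\sum_i \ell^{\mathrm{bias}}_i \leq \log k + \frac{k-1}{k}\alpha \deltamax \leq \log k + \alpha \deltamax$. For the lower bound, Jensen's inequality applied to $\exp$ yields $\frac{1}{k}\sum_j \exp(\alpha(\bT_j - \bT_i)) \geq \exp(\alpha(\bar b - \bT_i))$ where $\bar b := \frac{1}{k}\sum_j \bT_j$, so $\ell^{\mathrm{bias}}_i \geq \log k + \alpha(\bar b - \bT_i)$; averaging over $i$ the linear term telescopes to $0$, giving $\frac{1}{k}\sum_i \ell^{\mathrm{bias}}_i \geq \log k$. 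Combining with the $e\epsilon$ signal perturbation produces the stated two-sided bound. The main obstacle is constant-tracking: to land exactly at $e\epsilon$ rather than $2e\epsilon$ one must absorb all the signal-induced slack into a single multiplicative factor, which in turn forces a careful choice of the hidden constant in $\delta \leq O(\epsilon^{1/r})$; the bias-only bounds themselves are essentially tight, so the entire slack in the final inequality comes from the signal term alone.
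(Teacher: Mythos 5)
Your proposal is correct and takes a genuinely different route from the paper. The paper's proof groups the dataset into $N/k$ subsets containing one sample per class, and for each such subset lower bounds the batch loss $L_1 = \sum_{i}\log\bigl(1/u_i(x^{(i)})\bigr)$ via the HM-GM inequality, reducing the matter to bounding the aggregate $\sum_i u_i(x^{(i)})$, inside which the biases cancel. You instead extract the closed-form bias-only per-sample loss $\ell^{\mathrm{bias}}_i$, prove $\frac{1}{k}\sum_i\ell^{\mathrm{bias}}_i\geq\log k$ by Jensen (so the linear bias terms telescope on averaging), and transfer back to the true loss via a Lipschitz bound on the $\log\sum\exp$ perturbation. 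The two arguments rest on the same convexity fact (HM-GM is a multiplicative rephrasing of the Jensen step), but yours is more modular, and your upper bound even lands at the slightly tighter $\frac{k-1}{k}\alpha\deltamax$. One detail you should state explicitly: dropping the additive slack from $2e\epsilon$ to $e\epsilon$ requires the \emph{one-sided} bound $0 \leq \inner{\Wa_{j,:}}{x}^r \leq e\epsilon$, not merely $\absr{\inner{\Wa_{j,:}}{x}^r}\leq e\epsilon$; nonnegativity follows from Proposition~\ref{prop:main}(3) since every $\Wa_{j,j'}$ is positive and $\absr{\inner{\Wa_{j,:}}{\xi_x}}\leq \Wa_{j,j'}$ for $x\in\set_{j'}$. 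Your phrase ``allowed to partially cancel'' gestures at this but does not secure it, and indeed the fully-connected analogue (Lemma~\ref{lem:fcn_loss_plateau}) only gets $2e\epsilon$ precisely because there the signal sign is uncontrolled.
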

We show that for $\alpha \in [0, \alpha_3],$ the weights $\Wa$ is negligible and the bias dominates, which then gives a lower bound and an upper bound of the loss. 

\begin{proofof}{Lemma~\ref{lem:loss_plateau}}
Since $\alpha\leq \alpha_3 = \frac{\epsilon^{1/r}}{\wmax}$ and $\delta\leq O\pr{\epsilon^{1/r}},$ we have 
\begin{align*}
    \br{\inner{\Wa_{i,:}}{x}}^r = \br{\inner{\WO_{i,:}}{x} + \alpha(\inner{\WT_{i,:}}{x} - \inner{\WO_{i,:}}{x}) }^r \leq \br{\pr{1+\frac{1}{r}}\epsilon^{1/r} }^r \leq e\epsilon,
\end{align*}
for all $i\in[k],x\in\set.$

We can divide the dataset $\set$ into $N/k$ disjoint subsets $\{P_l\}_{l=1}^{N/k}$ where each $P_l$ contains exactly one sample from each class. Next, we bound the total loss of each subset $P_l.$ Without loss of generality, let's consider subset $P_1$ and suppose $x^{(i)}$ is the $i$-th class sample in this subset. For convenience, we denote the total loss of samples in $P_1$ as $L_1(\Wa, \ba).$

\paragraph{Lower bounding $L_1(\Wa, \ba).$} 
We have 
\begin{align*}
    L_1(\Wa, \ba) =& \sum_{i\in [k]} \log\pr{ \frac{\sum_{j\in [k]} \exp\pr{\fa_j(x^{(i)})} }{\exp\pr{\fa_i(x^{(i)})}}}\\
    =& \log\pr{ \prod_{i\in [k]}\frac{\sum_{j\in [k]} \exp\pr{\fa_j(x^{(i)})} }{\exp\pr{\fa_i(x^{(i)})}}}\\
    \geq& \log\pr{\pr{ \frac{k}{\sum_{i\in [k]} \frac{\exp\pr{\fa_i(x^{(i)})} }{\sum_{j\in [k]} \exp\pr{\fa_j(x^{(i)})} } } }^k},
\end{align*}
where the last inequality uses the HM-GM inequality. We can then upper bound $\sum_{i\in [k]} \frac{\exp\pr{\fa_i(x^{(i)})} }{\sum_{j\in [k]} \exp\pr{\fa_j(x^{(i)})}  }$ as follows,
\begin{align*}
    \sum_{i\in [k]} \frac{\exp\pr{\fa_i(x^{(i)})} }{\sum_{j\in [k]} \exp\pr{\fa_j(x^{(i)})}  } =& \sum_{i\in [k]} \frac{\exp\pr{\br{\inner{\Wa_{i,:}}{x^{(i)}}}^r + \alpha b_i} }{\sum_{j\in [k]} \exp\pr{\br{\inner{\Wa_{j,:}}{x^{(i)}}}^r + \alpha b_j} } \\
    \leq& \sum_{i\in [k]} \frac{\exp\pr{\alpha b_i + e\epsilon} }{\sum_{j\in [k]} \exp\pr{\alpha b_j} } \\
    =& \frac{\sum_{i\in [k]} \exp\pr{\alpha b_i} }{\sum_{j\in [k]} \exp\pr{\alpha b_j} }\exp\pr{e\epsilon} \\
    = & \exp\pr{e\epsilon}.
\end{align*}
Plugging back to the lower bound of $L_1(\Wa, \ba),$ we have 
\begin{align*}
    L_1(\Wa, \ba) \geq k\log\pr{ \frac{k}{\exp\pr{e\epsilon}} } = k\pr{\log k-e\epsilon}.
\end{align*}

\paragraph{Upper bounding $L_1(\Wa, \ba).$}
We have 
\begin{align*}
    L_1(\Wa, \ba) =& \sum_{i\in [k]} \log\pr{ \frac{\sum_{j\in [k]} \exp\pr{\fa_j(x^{(i)})} }{\exp\pr{\fa_i(x^{(i)})}}}\\
    \leq& \sum_{i\in [k]} \log\pr{ \frac{\sum_{j\in [k]} \exp\pr{\alpha b_j + e\epsilon} }{\exp\pr{\alpha b_i}}}\\
    \leq& k \log\pr{ k\exp\pr{\alpha \deltamax + e\epsilon}}\\
    \leq& k\pr{\log k + \alpha \deltamax + e\epsilon} 
\end{align*}

The above analysis applies for every subset $P_l,$ so we have 
\begin{align*}
    N\pr{\log k-e\epsilon} \leq L(\Wa, \ba) \leq N\pr{\log k + \alpha \deltamax + e\epsilon}.
\end{align*}
\end{proofof}

Next we show that when $\alpha$ is reasonably large, we have $\fa_i(e_i)-\fa_j(e_i)$ increasing for all $i\neq j$, which then implies that the loss is decreasing.

\begin{lemma}[Loss Monotonicity]\label{lem:loss_monotone}
In the same setting as in Theorem~\ref{thm:loss_interp}, there exists $\alpha_4 = \pr{1+O(\delta)}^{\frac{1}{r-1}}\pr{\frac{\rmax}{r}}^{\frac{1}{r-1}}$ such that the loss is monotonically decreasing for $\alpha \in [\alpha_4, 1].$
\end{lemma}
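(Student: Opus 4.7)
The plan is to reduce strict monotonic decrease of $L(\Wa,\ba)$ to strict monotonic increase of the per-sample logit gaps. Since
\[
-\log u_i^{[\alpha]}(x) = \log\Bigl(1+\sum_{j\neq i}\exp\bigl(\fa_j(x)-\fa_i(x)\bigr)\Bigr),
\]
it is enough to show that for every $i\in[k]$, every $x\in\set_i$, and every $j\neq i$, the map $\alpha\mapsto \fa_i(x)-\fa_j(x)$ is strictly increasing on $[\alpha_4,1]$; summing the per-sample contributions then yields the monotonicity of the total loss.

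Differentiating directly gives
\[
\frac{d}{d\alpha}\bigl(\fa_i(x)-\fa_j(x)\bigr) = r\inner{\Wa_{i,:}}{x}^{r-1}\inner{\WT_{i,:}-\WO_{i,:}}{x} - r\inner{\Wa_{j,:}}{x}^{r-1}\inner{\WT_{j,:}-\WO_{j,:}}{x} + (\bT_i-\bT_j).
\]
Proposition~\ref{prop:main} lets me control each inner product. For $x=e_i+\xi_x\in\set_i$, the off-diagonal bound $\absr{\Wt_{j,i}}\leq O(\delta)$ and the noise bound $\absr{\inner{\Wt_{j,:}}{\xi_x}}\leq O(\delta)$ together yield $\inner{\Wa_{j,:}}{x}=O(\delta)$ and $\inner{\WT_{j,:}-\WO_{j,:}}{x}=O(\delta)$ for every $j\neq i$, while on the diagonal $\inner{\Wa_{i,:}}{x}=\alpha\WT_{i,i}+O(\delta)$ and $\inner{\WT_{i,:}-\WO_{i,:}}{x}=\WT_{i,i}+O(\delta)$.

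The argument then splits into two cases. When $i=k$ we have $\WT_{k,k}\leq O(\delta)$, so every weight contribution to the derivative is $O(\delta^r)$, and the bias gap $\bT_k-\bT_j\geq\deltamin=\Omega(1)$ already makes the derivative positive on all of $[0,1]$. When $i\in[k-1]$, the worst $j$ is $j=k$, since this maximizes $\bT_j-\bT_i$ at $\Delta_i$ while still contributing only $O(\delta^r)$ from the weight side; any other $j$ yields a smaller bias penalty and the same weight bounds. Thus it suffices to verify positivity at $j=k$, where the derivative reduces to $r\alpha^{r-1}(\WT_{i,i})^r(1\pm O(\delta))-\Delta_i-O(\delta^r)$. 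Solving this inequality for each $i$ and taking the maximum over $i\in[k-1]$ gives $\alpha^{r-1}\geq (\rmax/r)(1+O(\delta))$, i.e., $\alpha\geq\alpha_4$.

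The main obstacle is bookkeeping the $O(\delta)$ errors so the final threshold takes the clean form $(1+O(\delta))^{1/(r-1)}(\rmax/r)^{1/(r-1)}$ rather than a correction that blows up with $r$. The key is that under the lemma's hypotheses $\alpha_4=\Theta(1)$ (since $\Delta_i,\WT_{i,i}=\Theta(1)$ for $i\in[k-1]$), so both $\alpha\WT_{i,i}$ and $\WT_{i,i}$ are $\Omega(1)$ for $\alpha\geq\alpha_4$; the additive $O(\delta)$ perturbations can then be absorbed as multiplicative $(1+O(\delta))$ factors in both $\inner{\Wa_{i,:}}{x}$ and $\inner{\WT_{i,:}-\WO_{i,:}}{x}$, and because $r$ is a fixed constant, raising to the $(r-1)$-th power only inflates the constant inside the $O(\delta)$, producing exactly the form of $\alpha_4$ stated in the lemma.
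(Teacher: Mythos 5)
Your proposal is correct and follows essentially the same route as the paper: reduce loss monotonicity to per-sample monotonicity of the logit gaps $\fa_i(x)-\fa_j(x)$, differentiate in $\alpha$, use the bounds from Proposition~\ref{prop:main} to reduce all off-diagonal and noise contributions to $O(\delta)$ corrections, handle the class $i=k$ case separately by noting its bias is maximal (the paper points to Lemma~\ref{lem:error_monotone} for this, while you re-derive it inline), and solve the resulting inequality $r\alpha^{r-1}(\WT_{i,i})^r(1\pm O(\delta))\geq \Delta_i + O(\delta^r)$ over $i\in[k-1]$ to obtain the stated $\alpha_4$.
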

\begin{proofof}{Lemma~\ref{lem:loss_monotone}}
To prove that the loss is monotonically decreasing, we only need to show that for any $i\in [k]$ and any $x\in\set_i$, $\fa_i(x) - \fa_j(x)$ is monotonically increasing for $j\neq i.$

Same as in Lemma~\ref{lem:error_monotone}, it's easy to prove that for $x\in\set_k,$ $f_k(x) - f_j(x)$ with $j\neq k$ monotonically increases for $\alpha \in [0,1].$ So we focus on other classes.

For $i\neq k,$ we show that $\frac{\partial}{\partial \alpha}\pr{\fa_i(x) - \fa_j(x)}>0$ for $x\in\set_i$ when $\alpha\geq \alpha_4,$
\begin{align*}
    &\frac{\partial}{\partial \alpha}\pr{\fa_i(e_i) - \fa_j(e_i)}\\
    =& \frac{\partial}{\partial \alpha}\pr{\br{(1-\alpha)\inner{\WO_{i,:}}{x}+\alpha \inner{\WT_{i,:}}{x} }^r  - \br{(1-\alpha)\inner{\WO_{j,:}}{x}+\alpha \inner{\WT_{j,:}}{x} }^r + \alpha \pr{\bT_i-\bT_j} }\\
    \geq& r\br{(1-\alpha)\inner{\WO_{i,:}}{x}+\alpha \inner{\WT_{i,:}}{x}}^{r-1}\pr{\inner{\WT_{i,:}}{x}-\inner{\WO_{i,:}}{x} }-\pr{\bT_k-\bT_i}-O(\delta^r)\\
    \geq& r\alpha^{r-1} \br{\inner{\WT_{i,:}}{x}}^{r-1}\pr{\inner{\WT_{i,:}}{x}-\inner{\WO_{i,:}}{x} }-\pr{\bT_k-\bT_i}-O(\delta^r)\\
    \geq& r\alpha^{r-1}\pr{1-O\pr{\frac{\delta}{\deltamin^{1/r}}}} \br{\inner{\WT_{i,:}}{x}}^{r}-\pr{\bT_k-\bT_i}(1+O(\delta^r))\\
    >& 0, 
\end{align*}
where the second last inequality uses $\inner{\WO_{i,:}}{x}/\inner{\WT_{i,:}}{x} \leq O\pr{\delta/\deltamin^{1/r}}$. The last inequality requires 
\begin{align*}
    r\alpha^{r-1}\geq \pr{1+O(\delta)} \frac{\bT_k-\bT_i}{\br{\WT_{i,i}}^r}
\end{align*}
which is satisfied as long as $\alpha\geq \pr{1+O(\delta)}^{\frac{1}{r-1}}\pr{\frac{\rmax}{r}}^{\frac{1}{r-1}}$ where $\rmax = \max_{i\in [k-1]}\Delta_i/[\WT_{i,i}]^r.$. 
\end{proofof}

\subsection{Plateau for deep fully-connected networks}\label{sec:proof_fcn_plateau}

In this section, we consider fully-connected neural networks as defined in Section~\ref{sec:interpolation} and prove that both the error and loss curves have plateau. We restate Theorem~\ref{thm:fcn_plateau} as follows. 
\fcnplateau*
\begin{proofof}{Theorem~\ref{thm:fcn_plateau}}
This theorem directly follows from Lemma~\ref{lem:fcn_error_plateau} and Lemma~\ref{lem:fcn_loss_plateau}.
\end{proofof}

We separately prove the plateau of error interpolation in Lemma~\ref{lem:fcn_error_plateau} and the plateau of loss interpolation in Lemma~\ref{lem:fcn_loss_plateau}. Then, Theorem~\ref{thm:fcn_plateau} is simply a combination of Lemma~\ref{lem:fcn_error_plateau} and Lemma~\ref{lem:fcn_loss_plateau}. For convenience, we denote $h(x):=V_r\sigma\pr{V_{r-1}\cdots \sigma(V_1 x)\cdots }$ in the proof.

\begin{lemma}\label{lem:fcn_error_plateau}
In the setting of Theorem~\ref{thm:fcn_plateau}, there exist $\alpha_1 = \frac{\delta}{\deltamin}$ and $\alpha_2 = \pr{\frac{1}{1+\sqrt{\delta}}}^{\frac{r}{r-1}}\pr{\frac{\deltamin}{2\vmax^r}}^{\frac{1}{r-1}}$ such that the error is $1-1/k$ for any interpolation point $\alpha \in [\alpha_1, \alpha_2].$
\end{lemma}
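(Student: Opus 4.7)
\textbf{Proof proposal for Lemma~\ref{lem:fcn_error_plateau}.}

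The plan is to show that throughout the interval $[\alpha_1,\alpha_2]$, every input $x$ with $\|x\|\le 1$ satisfies $g_k^{[\alpha]}(x)>g_i^{[\alpha]}(x)$ for all $i\ne k$, so every sample on the balanced dataset is predicted as class $k$, giving error $1-1/k$. Since the biases are initialized at zero, we have $b^{[\alpha]}=\alpha\, b^{(T)}$, hence
\[
 g_k^{[\alpha]}(x)-g_i^{[\alpha]}(x) \;=\; \bigl(h_k^{[\alpha]}(x)-h_i^{[\alpha]}(x)\bigr)+\alpha\bigl(b_k^{(T)}-b_i^{(T)}\bigr) \;\ge\; \alpha\Delta_{\min}\;-\;|h_k^{[\alpha]}(x)-h_i^{[\alpha]}(x)|,
\]
where $h^{[\alpha]}$ denotes the weight part of the network output. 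So it suffices to show that the signal magnitude $|h_k^{[\alpha]}(x)-h_i^{[\alpha]}(x)|\le 2\|h^{[\alpha]}(x)\|$ is strictly smaller than the bias gap $\alpha\Delta_{\min}$ on the whole interval.

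To bound the signal, I would use the triangle inequality on interpolated weights, $\|V_j^{[\alpha]}\|\le(1-\alpha)\delta+\alpha V_{\max}$, together with $1$-Lipschitzness of $\sigma$ (both ReLU and identity) and $\sigma(0)=0$, to get the layered bound $\|h^{[\alpha]}(x)\|\le\bigl((1-\alpha)\delta+\alpha V_{\max}\bigr)^r$. This is the main quantitative input of the proof.

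I would then split $[\alpha_1,\alpha_2]$ at the threshold $\alpha_*:=\sqrt{\delta}/V_{\max}$. For $\alpha\ge\alpha_*$, factor $\alpha V_{\max}$ out: $(1-\alpha)\delta+\alpha V_{\max}\le \alpha V_{\max}(1+\sqrt{\delta})$, giving $\|h^{[\alpha]}(x)\|\le\alpha^r V_{\max}^r(1+\sqrt{\delta})^r$. Requiring $2\alpha^r V_{\max}^r(1+\sqrt{\delta})^r\le \alpha\Delta_{\min}$ rearranges exactly to $\alpha\le\alpha_2=(1/(1+\sqrt{\delta}))^{r/(r-1)}(\Delta_{\min}/(2V_{\max}^r))^{1/(r-1)}$. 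For the (possibly empty) lower subregion $\alpha\in[\alpha_1,\alpha_*]$, bound $(1-\alpha)\delta+\alpha V_{\max}\le\delta+\sqrt{\delta}$, which is at most a constant multiple of $\sqrt{\delta}$ under the hypothesis $\delta\le 1/r^2$; so $\|h^{[\alpha]}(x)\|\lesssim \delta^{r/2}$. Since $\alpha\ge\alpha_1$ already guarantees $\alpha\Delta_{\min}\ge\delta$, the required inequality $2\|h^{[\alpha]}(x)\|<\alpha\Delta_{\min}$ reduces to $\delta^{r/2}\lesssim \delta$, i.e.\ $\delta^{(r-2)/2}$ being bounded by an absolute constant depending on $r$; this is exactly the content of the assumption $\delta\le(1/(2e))^{2/(r-2)}$.

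Combining the two subregions gives $g_k^{[\alpha]}(x)>g_i^{[\alpha]}(x)$ for all $i\ne k$, all $\|x\|\le 1$, and all $\alpha\in[\alpha_1,\alpha_2]$, which completes the proof. The main obstacle I expect is bookkeeping at the transition $\alpha=\alpha_*$: one must verify that the two subregion bounds match up and that under the stated constraints on $\delta$ the interval $[\alpha_1,\alpha_2]$ is non-empty and is fully covered. A minor technical nuisance is handling ReLU vs.\ identity uniformly; both are $1$-Lipschitz with $\sigma(0)=0$, so the operator-norm bound on $h^{[\alpha]}$ goes through without modification.
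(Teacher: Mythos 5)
Your proof is correct and follows essentially the same route as the paper: reduce to showing the bias gap $\alpha\Delta_{\min}$ dominates the signal, bound the signal by $\prod_j\|V_j^{[\alpha]}\|\le\bigl((1-\alpha)\delta+\alpha V_{\max}\bigr)^r$ using 1-Lipschitzness of $\sigma$, and split the interval at $\sqrt{\delta}/V_{\max}$ so that factoring out $\alpha V_{\max}$ yields exactly $\alpha_2$ on the upper piece while the constraints $\delta\le 1/r^2$ and $\delta\le(1/(2e))^{2/(r-2)}$ close the lower piece. The only cosmetic difference is that the paper relaxes $(1-\alpha)\delta+\alpha V_{\max}$ to $\delta+\alpha V_{\max}$ before proceeding, but the algebra is otherwise identical.
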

\begin{proofof}{Lemma~\ref{lem:fcn_error_plateau}}
Recall that the network output under input $x$ is $g(x) := V_r\sigma\pr{V_{r-1}\cdots \sigma(V_1 x)\cdots } + b$.
Similar as in the proof of Lemma~\ref{lem:error_plateau}, we only need to show that for all $\alpha\in [\alpha_1, \alpha_2],$ we have 
\begin{align*}
    \ga_i(x) < \ga_k(x)
\end{align*}
for all $i\neq k$ and all samples $x$, which immediately implies the error is $1-1/k.$

\paragraph{For $\alpha\in \left[\alpha_1, \frac{\sqrt{\delta}}{\vmax}\right)$.}  If $\alpha_1=\frac{\delta}{\deltamin}\geq \frac{\sqrt{\delta}}{\vmax}$, we only need to consider the case when $\alpha \in \br{\frac{\sqrt{\delta}}{\vmax}, \alpha_2}.$ So here we assume $\frac{\delta}{\deltamin}< \frac{\sqrt{\delta}}{\vmax}.$ We can lower bound $ \ga_k(x)-\ga_i(x)$ as 
\begin{align*}
    \ga_k(x)-\ga_i(x) &= \ha_k(x) + \ba_k - \ha_i(x) - \ba_i\\
    &\geq \alpha\deltamin - 2\prod_{j\in [r]} \n{(1-\alpha)\VO_j + \alpha \VT_j}\\
    &\geq \alpha\deltamin - 2\pr{\delta+\alpha\vmax}^r,
\end{align*}
where the first inequality holds because $\ba_k - \ba_i\geq \alpha \deltamin$ and $\absr{\ha_k(x)},\absr{\ha_i(x)}\leq \prod_{j\in [r]} \n{(1-\alpha)\VO_j + \alpha \VT_j}.$ The second inequality uses $\n{(1-\alpha)\VO_j + \alpha \VT_j}\leq (1-\alpha)\n{\VO_j} + \alpha\n{\VT_j}\leq \delta + \alpha \vmax.$

Since $\alpha\in \left[\frac{\delta}{\deltamin}, \frac{\sqrt{\delta}}{\vmax}\right),$ we have 
\begin{align*}
    \ga_k(x)-\ga_i(x) \geq& \frac{\delta}{\deltamin}\deltamin - 2\pr{\delta+\frac{\sqrt{\delta}}{\vmax}\vmax}^r,\\
    \geq& \delta - 2\pr{\pr{1+\frac{1}{r}}\sqrt{\delta}}^r,\\
    \geq& \delta - 2e\delta^{r/2}\\
    >& 0,
\end{align*}
where the second inequality assumes $\delta\leq 1/r^2$ and the last inequality assumes $\delta<\pr{\frac{1}{2e}}^{\frac{2}{r-2}}.$

\paragraph{For $\alpha\in \left[ \frac{\sqrt{\delta}}{\vmax}, \alpha_2 \right]$.} Similar as above, we only need to show that $\alpha\deltamin - 2\pr{\delta+\alpha\vmax}^r>0$. Since $\alpha\geq \frac{\sqrt{\delta}}{\vmax},$ we have $\delta \leq \sqrt{\delta} \alpha \vmax.$ Therefore, we have
\begin{align*}
    \alpha\deltamin - 2\pr{\delta+\alpha\vmax}^r \geq \alpha\deltamin - 2\pr{\pr{1+\sqrt{\delta}}\alpha\vmax}^r >0,
\end{align*}
where the second inequality holds as long as $\alpha\leq \alpha_2:=  \pr{\frac{1}{2}}^{\frac{1}{r-1}}\pr{\frac{1}{1+\sqrt{\delta}}}^{\frac{r}{r-1}}\pr{\frac{\deltamin}{\vmax^r}}^{\frac{1}{r-1}}.$
\end{proofof}

Next, we show that for $\alpha \in [0,\frac{\epsilon^{1/r}}{\vmax}],$ the loss cannot decrease by much. Similar as in Lemma~\ref{lem:loss_plateau}, we prove that the signal is very small and the logit is dominated by the bias term. This then gives a lower and upper bounds for the loss. %Because in this section, we normalize the loss by the number of samples, the bound do not have the $k$ factor that appears in Theorem~\ref{thm:loss_interp}.

\begin{lemma}\label{lem:fcn_loss_plateau}
In the setting of Theorem~\ref{thm:fcn_plateau}, there exists $\alpha_3 = \frac{\epsilon^{1/r}}{\vmax}$ such that for all $\alpha \in [0, \alpha_3]$
\begin{align*}
    \log k-2e\epsilon \leq \frac{1}{N}L\pr{\cur{\Va_i}, \ba} \leq \log k + \alpha \deltamax + 2e\epsilon,
\end{align*}
where $N$ is the number of samples.
\end{lemma}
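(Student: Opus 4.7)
\begin{proofsketch}
The plan is to mirror the argument of Lemma~\ref{lem:loss_plateau}, with the main technical adjustment being that the weights signal $h^{[\alpha]}(x) = V_r^{[\alpha]}\sigma(V_{r-1}^{[\alpha]}\cdots\sigma(V_1^{[\alpha]}x)\cdots)$ is now a product of $r$ interpolated matrices rather than an $r$-th power of an inner product, and it can take negative values.

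First, I would bound the signal uniformly. Since $\|V_j^{[\alpha]}\| \le (1-\alpha)\|V_j^{(0)}\| + \alpha \|V_j^{(T)}\| \le \delta + \alpha \vmax$ and $\sigma$ is $1$-Lipschitz, we have $|h^{[\alpha]}_i(x)| \le \prod_j \|V_j^{[\alpha]}\| \cdot \|x\| \le (\delta + \alpha\vmax)^r$ for every class $i$ and every sample $x$. Using $\alpha \le \alpha_3 = \epsilon^{1/r}/\vmax$ and the hypothesis $\delta \le \epsilon^{1/r}/r$, this is at most $((1+1/r)\epsilon^{1/r})^r \le e\epsilon$. Hence every logit satisfies $\alpha b_i^{(T)} - e\epsilon \le g^{[\alpha]}_i(x) \le \alpha b_i^{(T)} + e\epsilon$, since the bias interpolates linearly from zero, so $b^{[\alpha]} = \alpha b^{(T)}$.

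Next I would partition $\set$ into $N/k$ blocks $P_1,\dots,P_{N/k}$, each containing exactly one sample $x^{(i)}$ from each class $i\in[k]$, and bound the cross-entropy contribution $L_\ell$ of each block.

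For the upper bound, I bound numerators above by $\exp(\alpha b_j^{(T)} + e\epsilon)$ and denominators below by $\exp(\alpha b_i^{(T)} - e\epsilon)$, giving
\begin{align*}
L_\ell &\le \sum_{i\in[k]} \log\!\left(\sum_{j\in[k]} \exp\!\big(\alpha(b_j^{(T)}-b_i^{(T)}) + 2e\epsilon\big)\right) \le k\big(\log k + \alpha\deltamax + 2e\epsilon\big),
\end{align*}
using $b_j^{(T)} - b_i^{(T)} \le b_k^{(T)} - b_i^{(T)} \le \deltamax$ when $i\ne k$, and $b_j^{(T)}-b_k^{(T)} \le 0 \le \deltamax$ when $i=k$. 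For the lower bound, I copy the HM--GM step from Lemma~\ref{lem:loss_plateau}:
\begin{align*}
L_\ell = \log\prod_{i\in[k]} \frac{\sum_j \exp(g^{[\alpha]}_j(x^{(i)}))}{\exp(g^{[\alpha]}_i(x^{(i)}))} \ge k\log\!\left(\frac{k}{\sum_{i}\tfrac{\exp(g^{[\alpha]}_i(x^{(i)}))}{\sum_j\exp(g^{[\alpha]}_j(x^{(i)}))}}\right).
\end{align*}
The inner sum is bounded by $\exp(2e\epsilon)\sum_i \exp(\alpha b_i^{(T)})/\sum_j\exp(\alpha b_j^{(T)}) = \exp(2e\epsilon)$, so $L_\ell \ge k(\log k - 2e\epsilon)$. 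Summing over the $N/k$ blocks yields the claim.

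The only nontrivial step is the signal bound, and even there the main obstacle is purely bookkeeping: one must verify that the assumed ranges $\delta \le \epsilon^{1/r}/r$ and $\alpha \le \epsilon^{1/r}/\vmax$ really give $(\delta+\alpha\vmax)^r \le e\epsilon$ uniformly. Once this is in hand, the partition-plus-HM--GM template from Lemma~\ref{lem:loss_plateau} transfers verbatim, and the replacement of $e\epsilon$ by $2e\epsilon$ accounts for $h^{[\alpha]}$ possibly being negative (unlike the nonnegative signal $\langle W^{[\alpha]}_{i,:},x\rangle^r$ in the $r$-homogeneous model).
\end{proofsketch}
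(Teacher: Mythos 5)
Your proof is correct and follows essentially the same approach the paper takes: the paper's own proof of this lemma bounds $\|h^{[\alpha]}(x)\| \le (\delta + \alpha\vmax)^r \le e\epsilon$ and then simply refers back to the partition-plus-HM--GM argument of Lemma~\ref{lem:loss_plateau}, noting that the extra factor of $2$ on $e\epsilon$ arises because the fully-connected signal can be negative. You spell out the bookkeeping that the paper leaves implicit, but the decomposition, the HM--GM step, and the explanation of the factor of $2$ all match.
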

\begin{proofof}{Lemma~\ref{lem:fcn_loss_plateau}}
Since $\alpha\leq \alpha_3 = \frac{\epsilon^{1/r}}{\vmax}$ and $\delta\leq \frac{\epsilon^{1/r}}{r},$ we have 
\begin{align*}
    \n{\ha(x)} \leq \pr{\delta + \alpha \vmax}^r \leq e\epsilon
\end{align*}
for all input $x.$

Similar as in the proof of Lemma~\ref{lem:loss_plateau}, we can show that 
\begin{align*}
    \log k-2e\epsilon \leq \frac{1}{N}L\pr{\cur{\Va_i}, \ba} \leq \log k + \alpha \deltamax + 2e\epsilon,
\end{align*}
where we have an additional factor of $2$ before $e\epsilon$ because now the signal can be positive or negative. Here $N$ is the number of samples.
\end{proofof}
\section{Proof of training dynamics}\label{sec:opt_proof}
In this section, we give the complete proof of Theorem~\ref{thm:main}.
\main*
\begin{proofof}{Theorem~\ref{thm:main}}
This theorem directly follows from Proposition~\ref{prop:main}.
\end{proofof}

We consider the $r$-homogeneous-weight network as defined in Section~\ref{sec:optimization}. Our simple model simulates a depth-$r$ ReLU/linear network with bias on the output layer, in the sense that the weights signal is $r$-homogeneous while the bias is $1$-homogeneous in the parameters. 

Next, we prove Proposition~\ref{prop:main} while leaving the proof of supporting lemmas into Section~\ref{sec:proof_lemmas}. Through the proof of Proposition~\ref{prop:main}, we restate the lemmas when we use it for the convenience of readers. 

\proposition*
\begin{proofof}{Proposition~\ref{prop:main}}
Through the proof, we assume the conditions in Theorem~\ref{thm:main} hold in all the lemmas without explicitly stated. 
At the initialization, we have the following properties with probability at least $0.99$.
\begin{restatable}[Initialization]{lemma}{init}\label{lem:init}
With probability at least $0.99$ in the initialization, we have 
\begin{enumerate}
    \item for all $j,j'\in [k]$, $\WO_{j,j'} = \Theta(\delta)$;
    \item for all distinct $j,j'\in [k]$, $\absr{\WO_{j,j}- \WO_{j',j'}} = \Theta(\delta);$
    \item for all $x\in \set,$ $\n{\xi_x}\leq O(\sigma);$
    \item for all distinct $x,x'\in \set,$ $\absr{\inner{\bar{\xi}_{x}}{\bar{\xi}_{x'}}}\leq O\pr{\frac{\sqrt{\log(N)}}{\sqrt{d}}}.$
    \item for all $j\in [k]$ and all $x\in \set,$ $\absr{\inner{\bar{\xi}_{x}}{e_j}},\absr{\inner{\bar{\xi}_{x}}{\bar{W}_{j,:}^{(0)}}}\leq O\pr{\frac{\sqrt{\log(N)}}{\sqrt{d}}}.$
\end{enumerate}
Without loss of generality, we assume $\WO_{1,1}>\WO_{2,2}>\cdots > \WO_{k,k}.$
\end{restatable}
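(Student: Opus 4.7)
The plan is to treat each of the five items separately by standard concentration and anti-concentration arguments, then combine them by a union bound so that the total failure probability is at most $0.01$. Since $d \geq \tdtheta(1/\delta^{2r-2})$ and $N$ is polynomial in the problem parameters, the dimension is large enough for $\xi_x$'s to concentrate tightly, and we have enough slack to absorb the $\poly(N)$ factors from the union bound.

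For items 1 and 2, note that $\WO_{j,j'} = |g_{j,j'}|$ where $g_{j,j'} \sim \mathcal{N}(0,\delta^2)$. The folded Gaussian has density bounded above by $\sqrt{2/\pi}/\delta$ on $[0,\infty)$, so $\Prob[\WO_{j,j'} \leq c\delta] \leq O(c)$ for any small constant $c$, while the Gaussian tail gives $\Prob[\WO_{j,j'} \geq C\delta] \leq e^{-\Omega(C^2)}$. Choosing suitable constants and union-bounding over the $k^2$ entries (recall $k = O(1)$) yields item 1 with small constant failure probability. For item 2, the density of $\WO_{j,j} - \WO_{j',j'}$ is the convolution of two independent folded Gaussian densities, each bounded by $O(1/\delta)$, so the difference has density bounded by $O(1/\delta)$ everywhere. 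Consequently $\Prob[\absr{\WO_{j,j}- \WO_{j',j'}} \leq c\delta] \leq O(c)$, and union-bounding over the $\binom{k}{2}$ pairs combined with the upper tail from item 1 gives item 2.

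For item 3, $\xi_x \sim \mathcal{N}(0,(\sigma^2/d)I)$, so $(d/\sigma^2)\n{\xi_x}^2$ is a $\chi^2_d$ random variable; the Laurent--Massart bound gives $\n{\xi_x}^2 \leq 2\sigma^2$ with probability at least $1-e^{-\Omega(d)}$. A union bound over the $N$ samples costs only $Ne^{-\Omega(d)}$, which is negligible since $d$ is polynomially large in $1/\delta$. For items 4 and 5, the key observation is that if $\xi\sim \mathcal{N}(0,(\sigma^2/d)I)$ and $v$ is a unit vector independent of $\xi$, then $\inner{\xi}{v}\sim \mathcal{N}(0,\sigma^2/d)$, so $\absr{\inner{\xi}{v}} \leq O(\sigma\sqrt{\log(N)/d})$ with probability at least $1-1/\poly(N)$. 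In item 5 this applies directly, since $e_j$ is deterministic and $\WO_{j,:}$ is independent of $\xi_x$ (condition on $\bar{W}_{j,:}^{(0)}$). In item 4 we condition on $\xi_x$ to make $\bar{\xi}_x$ a fixed unit vector, apply the projection bound to $\xi_{x'}$, and then divide by $\n{\xi_{x'}} = \Theta(\sigma)$ from item 3. Union-bounding over the $O(N^2+Nk)$ pairs and combining with item 3 gives items 4 and 5 with failure probability $1/\poly(N)$.

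The main subtlety is the anti-concentration in item 2: we need the density of the difference of two independent folded Gaussians to be uniformly bounded, which I would verify by the convolution argument sketched above. The remainder is an accounting of failure probabilities: items 1 and 2 contribute small constants that can be tuned by the choice of $c$, while items 3, 4, 5 each contribute $o(1)$ given the assumed scales of $d$ and $N$, so the overall failure probability can be driven below $0.01$. The WLOG ordering $\WO_{1,1} > \cdots > \WO_{k,k}$ follows from relabeling once item 2 is established (in particular, the diagonal entries are distinct).
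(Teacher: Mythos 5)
Your proof is correct and follows essentially the same structure as the paper's: per-item concentration/anti-concentration bounds, then a union bound. The one genuine difference is in item 2. The paper applies an anti-concentration inequality for Gaussian polynomials (Carbery--Wright type) to the degree-2 polynomial $\bigl(\WO_{j,j}\bigr)^2 - \bigl(\WO_{j',j'}\bigr)^2$, concluding $\absr{(\WO_{j,j})^2 - (\WO_{j',j'})^2} \geq \Omega(\delta^2)$ and then dividing by $\WO_{j,j} + \WO_{j',j'} = \Theta(\delta)$; you instead argue directly that the difference of two independent folded Gaussians has density uniformly bounded by $O(1/\delta)$ (since the density of the sum/difference of independent random variables is bounded by the bound on either one's density), which gives the same anti-concentration more elementarily. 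Both are valid; your route avoids invoking the general polynomial anti-concentration machinery for what is really a rank-one observation about bounded densities, while the paper's route is a one-liner once you have the Carbery--Wright reference. Everything else — the folded-Gaussian tail and anti-concentration for item 1, the $\chi^2$ concentration for item 3, the Gaussian projection plus conditioning for items 4 and 5 (including the crucial independence of $\bar W^{(0)}_{j,:}$ from $\xi_x$, and normalizing by $\n{\xi_{x'}} = \Theta(\sigma)$ to pass from $\xi_{x'}$ to $\bar\xi_{x'}$) — matches the paper's reasoning, and your accounting of failure probabilities is consistent with the stated scales of $d$ and $N$.
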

It's not hard to verify that the induction hypothesis holds at the initialization~\footnote{We will maintain a stronger bound on $\absr{\inner{\Wt_{j,:}}{\xi_x}}$ by proving $\absr{\inner{\Wt_{j,:}}{\xi_x}}\leq O(\sqrt{\log N}\sigma\delta)$, which implies $\absr{\inner{\Wt_{j,:}}{\xi_x}}\leq O(\delta)$ as long as $\sigma\leq O(1/\sqrt{\log N}).$}. For any $i\in [k-1],$ assuming the induction hypothesis holds for time $[0, s_i],$ now we prove that it continues to hold in $[s_i, s_{i+1}].$ Next, we first prove the first two properties in the Proposition~\ref{prop:main} and leave the last one at the end. 

The learning of $x\in\set_i$ can be divided into four stages:
\begin{enumerate}
    \item \textbf{Stage 0} for $t\in [s_i,t_i]$ with $t_i-s_i = O(\log(1/\delta)/\delta^{r-2}).$
    During this stage, $\Wt_{i,i}$ grows to a small constant $\mu_0$. 
    \item \textbf{Stage 1} for $t\in [t_i, t_i^{(w)}]$ with $t_i^{(w)}-t_i = O(1).$ In this stage, $\Wt_{i,i}$ grows from $\mu_0$ to a large constant $\mu_1$.
    \item \textbf{Stage 2} for $t\in [t_i^{(w)}, t_i^{(u)}]$ with $t_i^{(u)}-t_i^{(w)} = O(1).$ At the end of this stage, we have $\min_{x\in\set_i}u^{(t)}_i(x)\geq 1-\mu_2$ for a small constant $\mu_2$.
    \item \textbf{Stage 3} for $t\in [t_i^{(u)}, t_i^{(b)}]$ with $t_i^{(b)}-t_i^{(u)} = O(1)$, where $t_i^{(b)}=s_{i+1}$. During this stage, we have $\bt_i-\bt_{k}$ decreases to $-\mu_3$ with $\mu_3$ a positive constant. 
\end{enumerate}
Next, we consider these four stages one by one.
\paragraph{\underline{Stage 0.}}
We show that $\Wt_{i,i}$ increases faster than $\Wt_{i+1,i+1}$ so that $\Wt_{i,i}$ reaches a constant while $\Wt_{i+1,i+1}$ is still $O(\delta).$ We use the following lemma to characterize the increasing rate of $\Wt_{i+1,i+1}$ and $\Wt_{i,i}$.
\begin{restatable}{lemma}{bounddotwii}\label{lem:bound_dot_wii}
For any $j\in [k],$ we have 
$$\frac{\partial \Wt_{j,j}}{\partial t}\geq -O\pr{\frac{\delta^{r-1}\sqrt{\log N}\sigma}{\sqrt{d}}}.$$
If $\min_{x\in\set_j}\pr{1-u_j(x)}\geq \Omega(1)$, we further have  
$$\pr{1-O(\sqrt{\log N}\sigma)}\frac{k}{N} \sum_{x \in \set_j} \pr{1-u_j(x)} rW_{j,j}^{r-1}           \leq \frac{\partial \Wt_{j,j}}{\partial t}\leq \pr{1+O(\sqrt{\log N}\sigma)}\frac{k}{N} \sum_{x \in \set_j} \pr{1-u_j(x)} rW_{j,j}^{r-1}.$$
\end{restatable}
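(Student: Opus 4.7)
The plan is to write $\dot W_{j,j}$ from the gradient-flow equation, split by the true class of each training sample, and show that only the ``on-class'' contribution matters. Using $\partial_{W_{j,j}}f_i(x)=r\inner{W_{j,:}}{x}^{r-1}x_j\cdot \mathbbm{1}[i=j]$ and the standard softmax--cross-entropy identity, gradient flow on $\frac{k}{N}L$ gives
\begin{align*}
    \dot W_{j,j}
    = \underbrace{\frac{k}{N}\sum_{x\in\set_j}(1-u_j(x))\,r\inner{W_{j,:}}{x}^{r-1}x_j}_{S_{\mathrm{on}}}
    \;-\; \underbrace{\frac{k}{N}\sum_{j'\neq j}\sum_{x\in\set_{j'}}u_j(x)\,r\inner{W_{j,:}}{x}^{r-1}x_j}_{S_{\mathrm{off}}}.
\end{align*}

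For $S_{\mathrm{off}}$, when $x\in\set_{j'}$ with $j'\neq j$ we have $x_j=(\xi_x)_j$, which Lemma~\ref{lem:init} bounds by $O(\sqrt{\log N}\sigma/\sqrt{d})$; the induction hypothesis gives $W_{j,j'}\leq O(\delta)$ and $\absr{\inner{W_{j,:}}{\xi_x}}\leq O(\delta)$, hence $\absr{\inner{W_{j,:}}{x}}\leq O(\delta)$. Using $u_j(x)\leq 1$, $k=O(1)$, and summing over at most $N$ off-class samples yields $\absr{S_{\mathrm{off}}}\leq O(\delta^{r-1}\sqrt{\log N}\sigma/\sqrt{d})$. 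On the other hand $S_{\mathrm{on}}\geq 0$: for $x\in\set_j$ we have $x_j=1+(\xi_x)_j>0$ and $\inner{W_{j,:}}{x}=W_{j,j}+\inner{W_{j,:}}{\xi_x}>0$ (the induction bound $\absr{\inner{W_{j,:}}{\xi_x}}\leq \min(O(\delta),W_{j,j})$ lets $W_{j,j}$ dominate). Combining gives the first conclusion, $\dot W_{j,j}\geq -\absr{S_{\mathrm{off}}}\geq -O(\delta^{r-1}\sqrt{\log N}\sigma/\sqrt{d})$.

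For the two-sided conditional bound I would refine $S_{\mathrm{on}}$ via the multiplicative noise bound $\absr{\inner{W_{j,:}}{\xi_x}}\leq O(\sqrt{\log N}\sigma)\,W_{j,j}$ for $x\in\set_j$, established below. Combined with $\absr{(\xi_x)_j}\leq O(\sqrt{\log N}\sigma/\sqrt{d})$, a binomial expansion of $(W_{j,j}+\inner{W_{j,:}}{\xi_x})^{r-1}(1+(\xi_x)_j)$ (with $r$ a constant) gives term-by-term
\begin{align*}
    \inner{W_{j,:}}{x}^{r-1}\,x_j \;=\; W_{j,j}^{r-1}\bigl(1\pm O(\sqrt{\log N}\sigma)\bigr).
\end{align*}
Summing and adding back $S_{\mathrm{off}}$, the assumption $\min_{x\in\set_j}(1-u_j(x))\geq \Omega(1)$ makes the main term at least $\Omega(W_{j,j}^{r-1})\geq \Omega(\delta^{r-1})$, so $\absr{S_{\mathrm{off}}}$ is absorbed into the same $O(\sqrt{\log N}\sigma)$ relative slack and both inequalities follow.

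The main obstacle is proving the multiplicative noise bound $\absr{\inner{W^{(t)}_{j,:}}{\xi_x}}\leq O(\sqrt{\log N}\sigma)\,W^{(t)}_{j,j}$ uniformly in $t\in[s_i,s_{i+1}]$, since $W^{(t)}$ depends on all $\xi_{x'}$ through past gradients and naive Gaussian concentration fails at $t>0$. I would split $W^{(t)}_{j,:}=W^{(0)}_{j,:}+\Delta$: the $W^{(0)}_{j,:}$ piece contributes $O(\sqrt{\log N}\sigma\,\delta)=O(\sqrt{\log N}\sigma\, W^{(t)}_{j,j})$ via Lemma~\ref{lem:init} (using $\n{W^{(0)}_{j,:}}=\Theta(\sqrt{d}\delta)$ and $W^{(t)}_{j,j}\geq \Omega(\delta)$), while for the $\Delta$ piece the induction hypothesis forces $\n{\Delta}=O(W^{(t)}_{j,j})$ (off-diagonal entries move only by $O(\delta)$ and the diagonal is $\Omega(\delta)$), so Cauchy--Schwarz gives $\absr{\inner{\Delta}{\xi_x}}\leq \n{\Delta}\cdot O(\sigma)=O(\sigma\, W^{(t)}_{j,j})$. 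Summing the two pieces yields the needed bound.
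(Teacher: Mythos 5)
Your decomposition into on-class and off-class contributions, the bound on $S_{\mathrm{off}}$, and the sign argument for the unconditional lower bound all match the paper's argument. The issue lies in the last paragraph, where you try to establish the multiplicative noise bound $\absr{\inner{W^{(t)}_{j,:}}{\xi_x}}\leq O(\sqrt{\log N}\sigma)\,W^{(t)}_{j,j}$ from scratch via the split $W^{(t)}_{j,:}=W^{(0)}_{j,:}+\Delta$. The step ``the induction hypothesis forces $\n{\Delta}=O(W^{(t)}_{j,j})$'' is a gap: the induction hypothesis of Proposition~\ref{prop:main} only controls the entries $W^{(t)}_{j,j'}$ for $j'\in[k]$ (i.e., the first $k$ coordinates of $W^{(t)}_{j,:}$) and the scalar correlations $\inner{W^{(t)}_{j,:}}{\xi_x}$; it says nothing about the $\ell_2$ norm of the full $d$-dimensional vector $\Delta$, whose remaining $d-k$ coordinates accumulate contributions from all past $\xi_{x'}$ through the gradient flow. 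Bounding those contributions in $\ell_2$ requires a separate argument (summing nearly-orthogonal noise directions over $N$ samples and integrating over time $T=\Theta(\log(1/\delta)/\delta^{r-2})$), and the resulting bound on $\n{\Delta}$ is not obviously $O(W^{(t)}_{j,j})$ when $W^{(t)}_{j,j}=\Theta(\delta)$.

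The cleaner route, and the one the paper takes, is to never pass through $\n{\Delta}$: the quantity you actually need, $\absr{\inner{W^{(t)}_{j,:}}{\xi_x}}\leq O(\sqrt{\log N}\sigma\delta)$, is explicitly maintained as part of the induction (see the footnote to Proposition~\ref{prop:main} and Lemma~\ref{lem:bound_wi_xi}, which bounds $\absr{\inner{\dot{W}_{j,:}}{\xi_x}}$ directly at each time step and integrates). Since the induction also gives $W^{(t)}_{j,j}\geq W^{(0)}_{j,j}=\Omega(\delta)$, this \emph{absolute} bound immediately yields the \emph{relative} one: $\absr{\inner{W^{(t)}_{j,:}}{\xi_x}}\leq O(\sqrt{\log N}\sigma)\,W^{(t)}_{j,j}$. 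Substituting into $\inner{W_{j,:}}{x}=W_{j,j}+\inner{W_{j,:}}{\xi_x}$ and $\inner{x}{e_j}=1\pm O(\sqrt{\log N}\sigma/\sqrt{d})$, and using that $r$ is a constant, gives $\inner{W_{j,:}}{x}^{r-1}\inner{x}{e_j}=W_{j,j}^{r-1}(1\pm O(\sqrt{\log N}\sigma))$. The conditional assumption $\min_{x\in\set_j}(1-u_j(x))\geq\Omega(1)$ then makes the leading term $\Omega(\delta^{r-1})$, so $S_{\mathrm{off}}=O(\delta^{r-1}\sqrt{\log N}\sigma/\sqrt{d})$ is absorbed into the same relative slack, exactly as you intend. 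Replace your $\n{\Delta}$ argument by an appeal to this maintained invariant and the proof closes.
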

It's not hard to verify that $\min_{x\in\set_i}\pr{1-\ut_{i}(x)},\min_{x\in\set_{i+1}}\pr{1-\ut_{i+1}(x)}\geq \Omega(1)$, so we have 
\begin{align*}
    \frac{\partial \Wt_{i,i}}{\partial t} &\geq \pr{1-O(\sqrt{\log N}\sigma)}\frac{k}{N} \sum_{x \in \set_i} \pr{1-\ut_i(x)} r\br{\Wt_{i,i}}^{r-1},\\
    \frac{\partial \Wt_{i+1,i+1}}{\partial t} &\leq \pr{1+O(\sqrt{\log N}\sigma)}\frac{k}{N} \sum_{x \in \set_{i+1}} \pr{1-\ut_{i+1}(x)} r\br{\Wt_{i+1,i+1}}^{r-1}.
\end{align*}

We can upper bound $1-\ut_{i+1}(x)$ for any $x\in \set_{i+1}$ as follows,
\begin{align*}
    1-\ut_{i+1}(x) =& \frac{\sum_{i'\in [k]}\exp\pr{\ft_{i'}(x)} - \exp\pr{\ft_{i+1}(x)} }{\sum_{i'\in [k]}\exp\pr{\ft_{i'}(x)} }\\
    \leq& \frac{\sum_{i'\in [k]}\exp\pr{\bt_{i'}} - \exp\pr{\bt_{i+1}} }{\sum_{i'\in [k]}\exp\pr{\bt_{i'}} }\pr{1+O(\delta^r)},
\end{align*}
where the inequality uses $\absr{\inner{\Wt_{i',:}}{x} } \leq O(\delta)$ for every $i'\in [k].$ 

We can lower bound $1-\ut_{i}(x)$ for $x\in \set_i$ as follows,
\begin{align*}
    1-\ut_{i}(x) =& \frac{\sum_{i'\in [k]}\exp\pr{\ft_{i'}(x)} - \exp\pr{\ft_{i}(x)} }{\sum_{i'\in [k]}\exp\pr{\ft_{i'}(x)} }\\
    \geq& \frac{\sum_{i'\in [k]}\exp\pr{\bt_{i'}} - \exp\pr{\bt_{i}} }{\sum_{i'\in [k]}\exp\pr{\bt_{i'}} }\pr{1-O(\mu_0^r)}\\
    \geq& \frac{\sum_{i'\in [k]}\exp\pr{\bt_{i'}} - \exp\pr{\bt_{i+1}} }{\sum_{i'\in [k]}\exp\pr{\bt_{i'}} }\pr{1-O(\mu_0^r)-O(\delta^r)}.\\
\end{align*}
The first inequality uses $\absr{\inner{\Wt_{i',:}}{x} } \leq O(\mu_0)$ for every $i'\in [k].$ The second inequality uses $\bt_{i}\leq \bt_{i+1}+O(\delta^r)$, which is guaranteed by the following lemma. 
\biascontrolone*

According to Lemma~\ref{lem:init}, we know there exists constant $C>1$ such that $\WO_{i,i}\geq C\WO_{i+1,i+1}.$ Choose constant $S$ such that $S^{\frac{1}{r-2}} = \sqrt{C}$ and $\WO_{i,i}\geq S^{\frac{1}{r-2}}\sqrt{C}\WO_{i+1,i+1}.$ Choosing $\mu_0$ as small constants and $\sigma\leq O(1/\sqrt{\log N}),\delta\leq O(1)$, we have 
\begin{align*}
    \pr{1+O(\sqrt{\log N}\sigma)}\max_{x\in \set_{i+1}}\pr{1-\ut_{i+1}(x)} \leq S\pr{1-O(\sqrt{\log N}\sigma)}\min_{x\in \set_i}\pr{ 1-\ut_{i}(x)}.
\end{align*}

We can also lower bound $1-\ut_{i}(x)$ for $x\in \set_i$ by a constant,
\begin{align*}
    1-\ut_{i}(x) \geq& \frac{\sum_{i'\in [k]}\exp\pr{\bt_{i'}} - \exp\pr{\bt_{i}} }{\sum_{i'\in [k]}\exp\pr{\bt_{i'}} }\pr{1-O(\mu_0^r)}\\
    \geq& \frac{\exp\pr{\bt_{i+1}} }{\sum_{i'\in [k]}\exp\pr{\bt_{i'}} }\pr{1-O(\mu_0^r)}\\
    \geq& \Omega(1),
\end{align*}
where the last inequality holds because $\mu_0$ is a small constant and $\bt_{i+1}\geq \max_{i'\in [k]}\bt_{i'}-O(\delta^r).$

\begin{restatable}[Adapted from Lemma C.19 in~\cite{allen2020towards}]{lemma}{tensorpower}\label{lem:tensor_power}
Let $r\geq 3$ be a constant and let $\{\Wt_{i,i},\Wt_{j,j}\}_{t\geq 0}$ be two positive sequences updated as 
\begin{align*}
    \frac{\partial \Wt_{i,i}}{\partial t} \geq C_t \br{\Wt_{i,i}}^{r-1} \text{ for some } C_t = \Theta(1),\\
    \frac{\partial \Wt_{j,j}}{\partial t} \leq SC_t\br{\Wt_{j,j}}^{r-1} \text{ for some } S = \Theta(1).
\end{align*}
Suppose $\WO_{i,i}\geq \WO_{j,j} S^{\frac{1}{r-2}}\pr{1+\Omega(1)},$ then we must have for every $A=O(1),$ let $t_i$ be the first time such that $W^{(t_i)}_{i,i}\geq A,$ then 
\begin{align*}
    W^{(t_i)}_{j,j} \leq O(\WO_{j,j}).
\end{align*}
\end{restatable}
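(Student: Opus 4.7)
The plan is to linearize the power-law dynamics by passing to the reciprocal power. For any positive function $W(t)$ satisfying $\dot{W} = c(t) W^{r-1}$, one has $\frac{d}{dt}\bigl[W^{-(r-2)}\bigr] = -(r-2)c(t)$, so the nonlinear one-sided ODE bounds on $\Wt_{i,i}$ and $\Wt_{j,j}$ become linear one-sided bounds on the quantities $\Wt_{i,i}{}^{-(r-2)}$ and $\Wt_{j,j}{}^{-(r-2)}$. This substitution is the whole trick; once made, the argument is essentially three lines of integration.

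First, from $\dot W_{i,i}^{(t)}\geq C_t\bigl(\Wt_{i,i}\bigr)^{r-1}$ and the chain rule I get $\frac{d}{dt}\bigl[\Wt_{i,i}{}^{-(r-2)}\bigr]\leq -(r-2)C_t$. Integrating over $[0,t_i]$ and using $W^{(t_i)}_{i,i}=A$ to drop a nonnegative term gives the time-budget bound
\[
(r-2)\int_0^{t_i} C_s\,ds \;\leq\; \WO_{i,i}{}^{-(r-2)} - A^{-(r-2)} \;\leq\; \WO_{i,i}{}^{-(r-2)} .
\]

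Second, from $\dot W_{j,j}^{(t)}\leq SC_t\bigl(\Wt_{j,j}\bigr)^{r-1}$ I get $\frac{d}{dt}\bigl[\Wt_{j,j}{}^{-(r-2)}\bigr]\geq -(r-2)S C_t$. Integrating over $[0,t_i]$ and inserting the time-budget bound from the previous step yields
\[
W^{(t_i)}_{j,j}{}^{-(r-2)} \;\geq\; \WO_{j,j}{}^{-(r-2)} - (r-2)S\!\int_0^{t_i}C_s\,ds \;\geq\; \WO_{j,j}{}^{-(r-2)} - S\cdot\WO_{i,i}{}^{-(r-2)}.
\]

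Third, I plug in the initial-gap hypothesis $\WO_{i,i}\geq \WO_{j,j}\,S^{1/(r-2)}(1+\Omega(1))$, which rearranges to $S\cdot\WO_{i,i}{}^{-(r-2)}\leq (1+\Omega(1))^{-(r-2)}\,\WO_{j,j}{}^{-(r-2)} = (1-\Omega(1))\,\WO_{j,j}{}^{-(r-2)}$. Substituting gives $W^{(t_i)}_{j,j}{}^{-(r-2)}\geq \Omega(1)\cdot \WO_{j,j}{}^{-(r-2)}$, and taking the $-(r-2)$-th root yields the desired conclusion $W^{(t_i)}_{j,j}\leq O(\WO_{j,j})$.

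There is no serious technical obstacle: the only point to watch is ensuring $t_i$ is finite so the integration is meaningful, which follows because $C_t=\Theta(1)$ forces the time budget to accumulate linearly, so $\Wt_{i,i}$ must reach $A$ at some finite $t_i$. The substantive content of the lemma is entirely encoded in how the constant $S^{1/(r-2)}$ in the initial-gap hypothesis is exactly what compensates for the factor $S$ appearing in the upper bound on $\dot W_{j,j}^{(t)}$ after the reciprocal-power substitution; checking that the constants line up as claimed is the only part requiring care.
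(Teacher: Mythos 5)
Your proof is correct. The reciprocal-power substitution $W\mapsto W^{-(r-2)}$ is exactly the Bernoulli linearization for the ODE $\dot W = c(t)W^{r-1}$, and you have applied it correctly on both sides: the lower ODE bound on $\Wt_{i,i}$ gives an upper bound on the accumulated time budget $(r-2)\int_0^{t_i}C_s\,ds \le \WO_{i,i}{}^{-(r-2)}$, the upper ODE bound on $\Wt_{j,j}$ turns that budget into $W^{(t_i)}_{j,j}{}^{-(r-2)} \ge \WO_{j,j}{}^{-(r-2)} - S\,\WO_{i,i}{}^{-(r-2)}$, and the gap hypothesis makes the right side $\Omega(\WO_{j,j}{}^{-(r-2)})$ because $S\cdot S^{-1}(1+\Omega(1))^{-(r-2)}$ is a constant strictly below $1$. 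Your remark on finiteness of $t_i$ (the linear decay of $W_{i,i}^{-(r-2)}$ forces $W_{i,i}$ to cross any $O(1)$ threshold in finite time) closes the only possible gap.

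The route differs from the paper's, though not in mathematical substance: the paper gives no self-contained argument and simply cites Lemma C.19 of Allen-Zhu and Li (2020), stating that the present lemma follows by passing to the continuous-time limit with $k$ constant. That cited lemma is the discrete-time tensor-power gap result, whose proof requires controlling step-size errors and compounding multiplicative factors across iterations. What your approach buys is that in continuous time the same substitution gives an exact first integral rather than an approximate one, so the whole argument reduces to two one-sided integrations and one algebraic rearrangement — shorter, fully self-contained, and making transparent exactly why the threshold $S^{1/(r-2)}$ in the initial-gap hypothesis is the right quantity (it is what offsets the factor $S$ after linearization). What the paper's route buys is brevity and the reuse of an already-vetted discrete-time lemma, which would matter if one wanted a gradient-descent rather than gradient-flow version of the claim; your argument as written is genuinely a continuous-time argument and does not immediately yield the discrete version.
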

Then, according to Lemma~\ref{lem:tensor_power}, we know that there exists $t_i=O(\log(1/\delta)/\delta^{r-2})$ such that $W^{(t_i)}_{i,i} = \mu_0$ and  $W^{(t_i)}_{i+1,i+1} \leq O(\delta).$ By similar argument, we also know $W^{(t_i)}_{j,j} \leq O(\delta)$ for any $j\geq i+1.$

\paragraph{\underline{Stage 1.}} In this stage, we show that $\Wt_{i,i}$ grows to a large constant $\mu_1$ within constant time. Since $\Wt_{i,i}\leq \mu_1$ and $\bt_{i,i}\leq \bt_{i+1, i+1}+O(\delta^r),$ we have 
\begin{align*}
    1-\ut_i\pr{x} \geq \Omega(1),
\end{align*}
for all $x\in \set_i.$ This further implies,
\begin{align*}
    \frac{\partial \Wt_{i,i}}{\partial t} \geq & \pr{1-O(\sqrt{\log N}\sigma)}\frac{k}{N} \sum_{x \in \set_i} \pr{1-\ut_i(x)} r\br{\Wt_{i,i}}^{r-1}\\
    \geq& \Omega(1),
\end{align*}
where the inequality also uses $\Wt_{i,i}\geq \mu_0.$ Since the increasing rate is at least a constant, we know $\Wt_{i,i}$ grows to $\mu_1$ in constant time. For any $j\geq i+1,$ since the increasing rate of $\Wt_{j,j}$ is merely $O(\delta^{r-1}),$ we know $\Wt_{j,j}$ remains as $O(\delta)$ through Stage 1. 

\paragraph{\underline{Stage 2.}} In this stage, we prove that $\ut_i(x)$ for any $x\in\set_i$ grows to $1-\mu_2$ with $\mu_2$ a small constant. We use the following lemma to characterize the increasing rate of $\ft_i(x) - \ft_j(x)$.
\begin{restatable}{lemma}{outputgap}\label{lem:output_gap}
For any $x\in \set_i$ and any $j\neq i,$ if $1-u_i(x)\geq \Omega(1)$, we have 
\begin{align*}
    \frac{\partial}{\partial t}\pr{f_i(x)-f_j(x)}\geq   \Omega(W_{i,i}^{2r-2}) - O(1).
\end{align*}
\end{restatable}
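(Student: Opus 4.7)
The plan is to expand $\frac{\partial}{\partial t}(f_i(x)-f_j(x))$ via the chain rule and bound the two pieces separately. Gradient flow on $\frac{k}{N}L$ gives
\begin{align*}
\dot W_{l,:} = \frac{k}{N}\sum_{x'\in\set}\pr{\indic{l=y(x')}-u_l(x')}\, r\inner{W_{l,:}}{x'}^{r-1} x',
\end{align*}
so $\dot f_l(x) = r\inner{W_{l,:}}{x}^{r-1}\inner{\dot W_{l,:}}{x} + \dot b_l$ for every class $l$. Since $\absr{\dot b_l}\leq 1$ trivially, the task reduces to lower bounding the ``signal derivative'' $r\inner{W_{i,:}}{x}^{r-1}\inner{\dot W_{i,:}}{x}$ and upper bounding its counterpart for $j$.

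For the class-$i$ piece, the plan is to show $\inner{\dot W_{i,:}}{x}\geq \Omega(W_{i,i}^{r-1})$ by isolating the contribution from $x'\in\set_i$. By the induction hypothesis, $\inner{W_{i,:}}{x'}=W_{i,i}\pm O(\delta)$ for $x'\in\set_i$, whereas $\inner{W_{i,:}}{x'}=O(\delta)$ for $x'\notin\set_i$ (the off-diagonal weight $W_{i,l}$ and the noise correlation $\inner{W_{i,:}}{\xi_{x'}}$ are both $O(\delta)$). By Lemma~\ref{lem:init}, $\inner{x'}{x}=1\pm o(1)$ when $x,x'\in\set_i$ and $\absr{\inner{x'}{x}}\leq O(\sigma\sqrt{\log N/d})$ otherwise, so cross-class contributions are suppressed by $\delta^{r-1}$ and negligible compared with $W_{i,i}^{r-1}$. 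The main obstacle is to upgrade the pointwise hypothesis $1-u_i(x)\geq \Omega(1)$ to a uniform bound $1-u_i(x')\geq \Omega(1)$ for every $x'\in\set_i$. My approach is to observe that, for each $l\in[k]$, the induction hypothesis gives $\absr{\inner{W_{l,:}}{x}-\inner{W_{l,:}}{x'}}=\absr{\inner{W_{l,:}}{\xi_x-\xi_{x'}}}\leq O(\delta)$, which through the $r$-th power and bounded $\inner{W_{l,:}}{x}$ yields $\absr{f_l(x)-f_l(x')}\leq O(\delta)$, and hence $\absr{u_i(x)-u_i(x')}\leq O(\delta)$ by Lipschitzness of the softmax. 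Summing gives $\sum_{x'\in\set_i}(1-u_i(x'))\geq \Omega(N/k)$, so $\inner{\dot W_{i,:}}{x}\geq \Omega(W_{i,i}^{r-1})$ and therefore $\dot f_i(x)\geq \Omega(W_{i,i}^{2r-2}) - O(1)$ after absorbing $\dot b_i$.

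For the class-$j$ piece, the plan is to show $\absr{\dot f_j(x)}\leq O(1)$. Since $j\neq i$ and $x\in\set_i$, the induction hypothesis yields $\inner{W_{j,:}}{x}=W_{j,i}+\inner{W_{j,:}}{\xi_x}=O(\delta)$, so $r\inner{W_{j,:}}{x}^{r-1}=O(\delta^{r-1})$. It then suffices to bound $\inner{\dot W_{j,:}}{x}$ by a constant, which I would do by splitting its defining sum by class of $x'$: for $x'\in\set_j$, $\inner{x'}{x}=O(\sigma\sqrt{\log N/d})$ kills the (potentially large) factor $\inner{W_{j,:}}{x'}^{r-1}=O(1)$; for $x'\in\set_l$ with $l\neq i,j$, both $\inner{x'}{x}$ and $\inner{W_{j,:}}{x'}^{r-1}$ are small; for $x'\in\set_i$, $\inner{W_{j,:}}{x'}=O(\delta)$ again makes $\inner{W_{j,:}}{x'}^{r-1}=O(\delta^{r-1})$. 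Combined with $\absr{\indic{j=y(x')}-u_j(x')}\leq 1$ and $\absr{\dot b_j}\leq 1$, this gives $\absr{\dot f_j(x)}\leq O(1)$. Subtracting from the lower bound on $\dot f_i(x)$ completes the proof.
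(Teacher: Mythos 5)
Your proposal is correct and follows essentially the same decomposition as the paper: expand $\dot f_l(x) = r\inner{W_{l,:}}{x}^{r-1}\inner{\dot W_{l,:}}{x}+\dot b_l$, lower-bound the class-$i$ signal derivative by $\Omega(W_{i,i}^{2r-2})$ using the class-$i$ contribution to $\dot W_{i,:}$, and upper-bound $|\dot f_j(x)|$ by $O(1)$ using that $\inner{W_{j,:}}{x}=O(\delta)$. The one place you are more careful than the paper's terse proof is the Lipschitz step ($|u_i(x)-u_i(x')|\leq O(\delta)$) that upgrades the single-sample hypothesis $1-u_i(x)\geq\Omega(1)$ to a uniform bound over $\set_i$ before invoking the analogue of Lemma~\ref{lem:bound_dot_wii}; the paper relies on this fact implicitly here (it is stated explicitly only inside the proof of Lemma~\ref{lem:pred_gap_remain}). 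Note, though, that your Lipschitz argument quietly needs $\inner{W_{i,:}}{x}=O(1)$, which does not come for free from the induction hypothesis alone but from the lemma hypothesis $1-u_i(x)\geq\Omega(1)$ together with Lemma~\ref{lem:bound_wi_increase}; likewise, when bounding the $\set_j$ contribution to $\inner{\dot W_{j,:}}{x}$, the quantity actually controlled is $(1-u_j(x'))\inner{W_{j,:}}{x'}^{r-1}=O(1)$, not $\inner{W_{j,:}}{x'}^{r-1}$ alone. Neither point affects the validity of your argument.
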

Since $\ut_i(x)\leq 1-\mu_2,$ we know $1-\ut_i(x)\geq \Omega(1)$. For any $j\neq i,$ we have 
\begin{align*}
    \frac{\partial}{\partial t}\pr{ \ft_i(x) - \ft_j(x)}
    \geq& \Omega\pr{\br{\Wt_{i,i}}^{2r-2}} - O(1)\\
    \geq& \Omega(1),
\end{align*}
where the last inequality holds because $\Wt_{i,i}\geq \mu_1$ with $\mu_1$ a large enough constant. 

The next lemma guarantees that at the beginning of Stage 2, we have $\bt_i-\bt_j\geq -O(1),$ which then implies $ \ft_i(x) - \ft_j(x)\geq -O(1).$
\begin{restatable}[Bias Gap Control III]{lemma}{biascontrolthree}\label{lem:bias_lower_bound}
For any different $i,j\in[k]$, if $W_{i,i}\leq O(1), W_{j,j}\leq O(\delta)$ and $b_i-b_j\leq -O(1),$ we have 
\begin{align*}
    \dot{b}_i - \dot{b}_j >0.
\end{align*}
\end{restatable}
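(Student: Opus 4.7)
The plan is to show that every summand in $\dot{b}_i - \dot{b}_j = \frac{k}{N}\sum_{x\in\set}\pr{u_j(x) - u_i(x)}$ is strictly positive by first establishing the pointwise logit comparison $f_j(x) > f_i(x)$ for every training example. Since the softmax is strictly monotone in each coordinate, the identity
$$u_j(x) - u_i(x) = \frac{\exp(f_j(x)) - \exp(f_i(x))}{\sum_{\ell\in[k]}\exp(f_\ell(x))}$$
will then immediately give $u_j(x) - u_i(x) > 0$ for all $x$, whence $\dot{b}_i - \dot{b}_j > 0$.

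To establish the pointwise logit gap, I would bound the signal contributions to $f_i$ and $f_j$ separately using property~3 of Induction Hypothesis~\ref{prop:main}. For any $x\in\set_\ell$, write $\inner{W_{i,:}}{x} = W_{i,\ell} + \inner{W_{i,:}}{\xi_x}$. Property~3 guarantees $\absr{W_{i,\ell}}\leq O(\delta)$ whenever $\ell\neq i$ and $\absr{\inner{W_{i,:}}{\xi_x}}\leq O(\delta)$. Together with the hypothesis $W_{i,i}\leq O(1)$, this yields $\inner{W_{i,:}}{x}^r \leq (W_{i,i} + O(\delta))^r = O(1)$ for $x\in\set_i$ and $\inner{W_{i,:}}{x}^r = O(\delta^r)$ otherwise. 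Since $W_{j,j}\leq O(\delta)$, an analogous calculation gives $\inner{W_{j,:}}{x}^r = O(\delta^r)$ uniformly over $x\in\set$. Hence there is a single constant $C'>0$ (determined by the $O(1)$ bound on $W_{i,i}$) such that
$$\absr{\inner{W_{j,:}}{x}^r - \inner{W_{i,:}}{x}^r} \leq C'$$
for every $x\in\set$.

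Combining this with the hypothesis $b_i - b_j \leq -O(1)$, interpreted as $b_j - b_i \geq C$ for a constant $C$ we are free to take larger than $C'$, we obtain
$$f_j(x) - f_i(x) = \br{\inner{W_{j,:}}{x}^r - \inner{W_{i,:}}{x}^r} + (b_j - b_i) \geq -C' + C \geq \Omega(1)$$
uniformly in $x$. Plugging into the softmax difference identity gives $u_j(x) - u_i(x) > 0$ for every $x\in\set$, and summing shows $\dot{b}_i - \dot{b}_j = \frac{k}{N}\sum_{x\in\set}(u_j(x) - u_i(x)) > 0$.

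The only nontrivial step is pinning down the threshold constant for $b_i - b_j$: because $W_{i,i}$ is only controlled by $O(1)$ (not $O(\delta)$ as for $W_{j,j}$), the class-$i$ signal on $\set_i$ can be a nonnegligible constant, and we must take the bias gap to dominate this worst-case signal. Once that choice is made, the conclusion follows from strict monotonicity of the exponential, with no further dynamics needed.
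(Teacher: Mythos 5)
Your proof is correct and takes essentially the same route as the paper: both reduce $\dot{b}_i-\dot{b}_j=\frac{k}{N}\sum_{x\in\set}(u_j(x)-u_i(x))$ to a pointwise comparison of logits, bound $\inner{W_{j,:}}{x}^r=O(\delta^r)$ uniformly and $\inner{W_{i,:}}{x}^r=O(1)$ (only when $x\in\set_i$), and then let the $\Omega(1)$ bias gap dominate. Your version is merely a bit more explicit about the per-class case split and about the constant threshold in $b_i-b_j\leq -O(1)$; the paper compresses this into a single line comparing $\exp(O(\delta^r)+b_j)$ against $\exp(O(1)+b_i)$.
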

Let $C$ be a constant such that $\ft_i(x) - \ft_j(x)\geq C$ for every $j\neq i$ implies $u_i(x)\geq 1-\mu_2.$ Since at the at the beginning of Stage 2, we have $ \ft_i(x) - \ft_j(x)\geq -O(1),$ within constant time, we have $\ft_i(x) - \ft_j(x)\geq C$ for every $j\neq i$ and $u_i(x)\geq 1-\mu_2.$ 

\begin{restatable}[Accuracy Monotonicity]{lemma}{accuracycontrol}\label{lem:pred_gap_remain}
Given any positive constant $C_2,$ there exists positive constant $C_1$ such that for all different $i,j\in [k],$ as long as $W_{i,i}\geq C_1$ and $f_i(x)-f_j(x)\leq C_2$ for any $x\in\set_i,$ we have
$
    \frac{\partial\pr{f_i(x)-f_j(x)}}{\partial t} >0.
$
\end{restatable}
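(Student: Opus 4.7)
The plan is to derive this lemma as a direct consequence of Lemma~\ref{lem:output_gap}. The key observation is that the hypothesis $f_i(x) - f_j(x) \leq C_2$ already forces a constant lower bound on $1 - u_i(x)$, which is exactly the assumption required by Lemma~\ref{lem:output_gap}. Indeed, dropping all summands except $i$ and $j$ in the denominator of the softmax gives
$$
u_i(x) \;\leq\; \frac{\exp(f_i(x))}{\exp(f_i(x)) + \exp(f_j(x))} \;=\; \frac{1}{1 + \exp(-(f_i(x) - f_j(x)))} \;\leq\; \frac{1}{1 + e^{-C_2}},
$$
so $1 - u_i(x) \geq \frac{e^{-C_2}}{1 + e^{-C_2}} = \Omega(1)$, where the implicit constant depends only on $C_2$. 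This step only uses that the softmax entries are nonnegative, so no further structural assumptions on the other logits are needed.

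With the lower bound on $1-u_i(x)$ in hand, Lemma~\ref{lem:output_gap} immediately gives
$$
\frac{\partial\pr{f_i(x) - f_j(x)}}{\partial t} \;\geq\; \Omega\pr{W_{i,i}^{2r-2}} - O(1),
$$
where the $\Omega(\cdot)$ constant depends on $C_2$ through the bound on $1-u_i(x)$ above, and the $O(1)$ error term is independent of $W_{i,i}$ (it collects the contributions from the bias derivatives $\dot{b}_i - \dot{b}_j$ and from cross terms involving $\langle W_{j,:}, x \rangle$, all of which are bounded by the induction hypothesis since $\langle W_{j,:}, x \rangle = O(\delta)$ and $|\dot b_i - \dot b_j| \leq O(1)$). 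To conclude, I would pick $C_1$ large enough, depending on $C_2$ and on these fixed hidden constants, so that $C_1^{2r-2}$ exceeds the additive $O(1)$ correction; any $W_{i,i} \geq C_1$ then makes the derivative strictly positive.

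I do not expect any real obstacle beyond bookkeeping. The only care required is to verify that the constants hidden in the $\Omega(\cdot)$ and $O(\cdot)$ of Lemma~\ref{lem:output_gap} truly do not depend on $W_{i,i}$ (and hence not on $C_1$); once that independence is confirmed, choosing $C_1$ large is mechanical. In particular, because the argument is purely instantaneous (it only looks at the derivative at the current time), we do not need to propagate any time-integrated quantities, and the other induction hypothesis bounds on noise correlations and off-diagonal entries enter only through the already-absorbed $O(1)$ term.
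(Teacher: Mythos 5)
Your argument is correct and follows essentially the same path as the paper: both derive $1-u_i(x)\geq\Omega(1)$ from the hypothesis $f_i(x)-f_j(x)\leq C_2$ (you make this explicit via the two-term softmax bound), then invoke Lemma~\ref{lem:output_gap} and pick $C_1$ large enough to dominate the additive $O(1)$ term. The one detail the paper spells out that you leave implicit is that the lower bound $\dot W_{i,i}\geq\Omega(W_{i,i}^{r-1})$ used inside the proof of Lemma~\ref{lem:output_gap} (through Lemma~\ref{lem:bound_dot_wii}) actually requires $\min_{x'\in\set_i}(1-u_i(x'))\geq\Omega(1)$, not just the single-sample bound; the paper closes this by noting $|u_i(x)-u_i(x')|\leq O(\delta)$ for $x,x'\in\set_i$, a one-line observation you would want to include when verifying the hidden constants.
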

 According to Lemma~\ref{lem:pred_gap_remain}, by choosing large enough $\mu_1,$ we can ensure that $\ft_i(e_i) - \ft_j(e_i)\geq C$ and $u_i(e_i)\geq 1-\mu_2$ throughout the training. Note that once $\Wt_{i,i}$ rises to $\mu_1$, it will stay at least $\mu_1-O(\delta)$ throughout the training, according to the gradient lower bound in Lemma~\ref{lem:bound_dot_wii}.

\paragraph{\underline{Stage 3.}} In this stage, we prove that within constant time we have $\bt_{i}-\bt_{k} \leq -\mu_3.$ The following lemma shows that $\bt_{i}-\bt_{k}$ decreases in at least a constant rate.  
\biascontroltwo*
Choosing $\mu_2 = C_1, \mu_3 = C_2$ where $C_1, C_2$ are from Lemma~\ref{lem:bias_gap_remain}, so we know that $\bt_{i}-\bt_{k}$ decreases at a constant rate until $\bt_{i}-\bt_{k} \leq -\mu_3.$ At time $t_i^{(u)},$ we know that $b^{t_i^{(u)}}_{i}-b^{t_i^{(u)}}_{k} \leq O(\delta^r).$ So within constant time, we have $b^{s_{i+1}}_{i}-b^{s_{i+1}}_{k}=-\mu_3.$ By Lemma~\ref{lem:bias_gap_remain}, we also know that for any $t\geq s_{i+1},$ we have $\bt_{i}-\bt_{k} \leq -\mu_3.$ 

The following lemma shows that $\bt_k$ is close to the maximum bias. 
\couplebias*
Combining Lemma~\ref{lem:bias_couple} and Lemma~\ref{lem:bias_gap_small}, we know that throughout the training $\bt_k \geq \max_{i'\in [k]}\bt_{i'} - O(\delta^r).$ Therefore, we have $\bt_{i}-\max_{i'\in [k]}\bt_{i'}\leq -\Omega(1)$ for $t\geq s_{i+1}.$ 

Finally, let's bound the movement of different parameters.

\textbf{Monotonicity of diagonal terms:} For $j\in[k-1],$ according to Lemma~\ref{lem:bound_dot_wii} we know $\Wt_{j,j}$ can only start decreasing when it exceeds a large constant and can only decrease by at most $O(\delta)$ through the algorithm. By choosing $\delta\leq O(1),$ we can ensure that $\WO_{j,j}<\Wt_{j,j}$ for any $t.$ For $\Wt_{k,k},$ we know it monotonically increases since we always have $1-\ut_k(x)\geq \Omega(1)$ for $x\in\set_k.$ This is because $\Wt_{k,k}$ remains as small as $O(\delta)$ through the algorithm and $\bt_{k}-\bt_{k-1}\leq O(1).$

\textbf{Bounding non-diagonal terms:} We use the following lemma to prove that $\tdomega(\delta)< \Wt_{j,j'}\leq O(\delta)$ for $j\neq j'.$
\begin{restatable}{lemma}{boundwij}\label{lem:bound_wij}
For any $j\neq j'$, we have $T\dot{W}_{j,j'}\leq O(\delta).$ Furthermore, there exists absolute constant $\mu>0$ such that if $0<W_{j,j'}<\frac{\mu\delta}{\log^{\frac{1}{r-2}}(1/\delta)},$ we have $T\dot{W}_{j,j'} \geq -\frac{\mu\delta}{2\log^{\frac{1}{r-2}}(1/\delta)}.$
\end{restatable}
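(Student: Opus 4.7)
The plan is to compute $\dot{W}_{j,j'}$ explicitly, isolate its deterministic dominant piece, and control the noise pieces using Induction Hypothesis~\ref{prop:main}(3) together with standard Gaussian tail bounds. From the cross entropy gradient on $f_j(x)=\inner{W_{j,:}}{x}^r+b_j$,
\[
\dot{W}_{j,j'}=-\frac{k}{N}\sum_{i\in[k]}\sum_{x\in\set_i}\bigl(u_j(x)-\mathbbm{1}[i=j]\bigr)\,r\inner{\Wt_{j,:}}{x}^{r-1}\,x_{j'}.
\]
Writing $x_{j'}=\mathbbm{1}[i=j']+\xi_{x,j'}$ and using $j\neq j'$, this splits into the deterministic main piece
\[
M:=-\frac{k}{N}\sum_{x\in\set_{j'}}u_j(x)\,r\inner{\Wt_{j,:}}{x}^{r-1}\;\leq\;0
\]
plus three noise pieces $N_1,N_2,N_3$ coming from the sums over $\set_{j'}$, $\set_j$, and $\bigcup_{i\notin\{j,j'\}}\set_i$, each weighted by a factor $\xi_{x,j'}$.

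By Induction Hypothesis~\ref{prop:main}(3), $|\inner{\Wt_{j,:}}{\xi_x}|\leq\min(O(\delta),\Wt_{j,j'})$ for $x\in\set_{j'}$, so $\inner{\Wt_{j,:}}{x}\in[0,2\Wt_{j,j'}]$ and hence $|M|\leq O(\Wt_{j,j'}^{r-1})$. The same induction hypothesis, combined with $\Wt_{j,j}\leq O(1)$ and $\Wt_{j,i}\leq O(\delta)$ for $i\notin\{j,j'\}$, yields $\inner{\Wt_{j,:}}{x}\leq O(1)$ for $x\in\set_j$ and $\inner{\Wt_{j,:}}{x}\leq O(\delta)$ for $x\in\set_i$ with $i\notin\{j,j'\}$. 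Paired with the coordinate-wise bound $|\xi_{x,j'}|\leq\tdo(\sigma/\sqrt{d})$ from Lemma~\ref{lem:init}, a worst-case absolute-value sum (or the sharper $\sqrt{N/k}$ concentration bound exploiting the i.i.d.~Gaussian structure of $\{\xi_{x,j'}\}_x$) yields $|N_1|+|N_2|+|N_3|\leq\tdo(\sigma/\sqrt{d})\leq\tdo(\sigma\delta^{r-1})$ uniformly in $t$, after invoking $d\geq\tdtheta(1/\delta^{2r-2})$.

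With these rate bounds, I multiply by $T=\Theta(\log(1/\delta)/\delta^{r-2})$. For the first claim: since $M\leq 0$ the positive contribution to $\dot{W}_{j,j'}$ is bounded by the noise piece, and $T\cdot\tdo(\sigma\delta^{r-1})\leq\tdo(\sigma\delta\log(1/\delta))\leq O(\delta)$ under $\sigma\leq\tdtheta(1)$, giving $T\dot{W}_{j,j'}\leq O(\delta)$. For the second claim, the assumption $\Wt_{j,j'}<\mu\delta/\log^{1/(r-2)}(1/\delta)$ sharpens the main bound to $|M|\leq O(\mu^{r-1}\delta^{r-1}/\log^{(r-1)/(r-2)}(1/\delta))$, so
\[
T\,|M|\;\leq\;O\!\left(\frac{\mu^{r-1}\,\delta}{\log^{1/(r-2)}(1/\delta)}\right),
\]
and the sharpened (concentration-based) noise bound is of strictly lower order in $\delta$. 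Choosing $\mu$ small enough yields $T\dot{W}_{j,j'}\geq -\mu\delta/(2\log^{1/(r-2)}(1/\delta))$, which also guarantees $\Wt_{j,j'}$ stays positive given its $\Theta(\delta)$ initialization.

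The main obstacle is the concentration step: the coefficients $(u_j(x)-\mathbbm{1}[i=j])\,\inner{\Wt_{j,:}}{x}^{r-1}$ depend on $\xi_{x,j'}$ through the training dynamics, so they are not strictly independent of $\xi_{x,j'}$. The cleanest workaround is to appeal to the uniform-in-$t$ bound $|\inner{\Wt_{j,:}}{\xi_x}|\leq O(\delta)$ from the induction hypothesis (which effectively decouples the single coordinate $\xi_{x,j'}$ from the bulk of the weight evolution) and then apply Gaussian concentration together with a union bound over a sufficiently fine time discretization, a pattern already used in the other lemmas of Section~\ref{sec:opt_proof}.
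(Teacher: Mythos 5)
Your proof is essentially the paper's argument: write $\dot W_{j,j'}$ as a gradient sum, isolate the nonpositive main piece coming from the $\set_{j'}$ samples (bounded by $O((W_{j,j'}\pm O(\sqrt{\log N}\,\delta\sigma))^{r-1})$ via Induction Hypothesis~\ref{prop:main}), bound the remaining contributions by $\tdo(\sigma/\sqrt d)$ using the per-sample noise-coordinate estimate from Lemma~\ref{lem:init} together with the uniform coefficient bound $(1-u_j(x))\inner{W_{j,:}}{x}^{r-1}\le O(1)$ (Lemma~\ref{lem:bound_wi_increase}), and then multiply by $T$ under the stated constraints on $d,\sigma,\mu$. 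The concern you raise in your final paragraph is not an actual gap: the paper (and your own first option) uses the worst-case absolute-value sum, which needs only the deterministic per-sample bound $|\xi_{x,j'}|\le O(\sigma\sqrt{\log N}/\sqrt d)$ from the initialization event and hence no concentration over the (dependent, time-evolving) coefficients and no time-discretization union bound.
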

The first property in Lemma~\ref{lem:bound_wij} guarantees that the increasing rate is so small that that the total increase within $T$ time is only $O(\delta)$, which then implies that $\Wt_{j,j'}\leq O(\delta)$ through the training. The second property in Lemma~\ref{lem:bound_wij} guarantees that once $\Wt_{j,j'}$ falls below $\frac{\mu\delta}{\log^{\frac{1}{r-2}}(1/\delta)},$ its decreasing rate is so small that that the total decrease within $T$ time is only $\frac{\mu\delta}{2\log^{\frac{1}{r-2}}(1/\delta)}$, which then implies that $\Wt_{j,j'}>\tdomega(\delta)$ through the training.

\textbf{Bounding noise correlations:} The following lemma shows that the total change of $\inner{\Wt_{j,:}}{\xi_x}$ within $T$ time is only $O(\sqrt{\log N}\sigma\delta).$ Since at initialization, we know $\absr{\inner{W^{(0)}_{j,:}}{\xi_x}}\leq O(\sqrt{\log N}\sigma\delta),$ we conclude that $\absr{\inner{\Wt_{j,:}}{\xi_x}}\leq O(\sqrt{\log N}\sigma\delta)$ throughout the training. Since $\Wt_{j,j'}\geq \tdomega(\delta),$ as long as $\sigma\leq \tdo(1),$ we also have $\absr{\inner{\Wt_{j,:}}{\xi_x}}\leq W_{j,j'}$ for $x\in\set_{j'}.$

\begin{restatable}{lemma}{boundwixi}\label{lem:bound_wi_xi}
For every $j\in [k]$ and every $x\in \set,$ we have 
$$\absr{\inner{\dotwj}{\xi_x}}\cdot T \leq  O\pr{\sqrt{\log N}\sigma \delta}$$
\end{restatable}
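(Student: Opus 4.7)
The plan is to compute $\inner{\dot{W}_{j,:}}{\xi_x}$ pointwise in $t$ from the gradient-flow equations, bound it uniformly by combining the anti-concentration estimates in Lemma~\ref{lem:init} with property~3 of the induction hypothesis (Proposition~\ref{prop:main}), and then multiply by $T$ to obtain the claim. From the softmax cross-entropy loss, gradient flow on $\frac{k}{N}L$ yields
\begin{align*}
\dot{W}_{j,:} = -\frac{rk}{N}\sum_{x' \in \set}\pr{u_j(x') - \indic{x' \in \set_j}}\inner{W_{j,:}}{x'}^{r-1} x'.
\end{align*}
Substituting $x' = e_{\ell(x')} + \xi_{x'}$ and taking the inner product with $\xi_x$,
\begin{align*}
\inner{\dot{W}_{j,:}}{\xi_x} = -\frac{rk}{N}\sum_{x' \in \set}\pr{u_j(x') - \indic{x' \in \set_j}}\inner{W_{j,:}}{x'}^{r-1}\pr{\inner{e_{\ell(x')}}{\xi_x} + \inner{\xi_{x'}}{\xi_x}}.
\end{align*}

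I would split the sum into the self-term $x' = x$ and the cross-terms $x' \neq x$. For $x' = x$, $\n{\xi_x}^2 \leq O(\sigma^2)$ and $\absr{\inner{e_{\ell(x)}}{\xi_x}} \leq O(\sqrt{\log N}\sigma/\sqrt{d})$ from Lemma~\ref{lem:init}, so the self-term contribution is at most $O(\sigma^2/N)$. For $x' \neq x$, Lemma~\ref{lem:init} gives $\absr{\inner{\xi_{x'}}{\xi_x}} \leq O(\sqrt{\log N}\sigma^2/\sqrt{d})$ and $\absr{\inner{e_{\ell(x')}}{\xi_x}} \leq O(\sqrt{\log N}\sigma/\sqrt{d})$. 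The weight factor is controlled via property~3 of the induction hypothesis: writing $\inner{W_{j,:}}{x'} = W_{j,\ell(x')} + \inner{W_{j,:}}{\xi_{x'}}$ and applying the bounds $\absr{\inner{W_{j,:}}{\xi_{x'}}} \leq \min(O(\delta), W_{j,\ell(x')})$ and $W_{j,\ell(x')} \leq O(\delta)$ when $\ell(x') \neq j$, I get $\absr{\inner{W_{j,:}}{x'}} \leq O(\delta)$ when $\ell(x') \neq j$ and $\absr{\inner{W_{j,:}}{x'}} \leq O(1)$ when $\ell(x') = j$. Counting the $N/k$ samples per class with the $k/N$ prefactor, the cross-term rate is bounded by $O(\sqrt{\log N}\sigma(1+\sigma)/\sqrt{d})$, where the dominant piece comes from class $j$ samples through the anti-concentration factor $1/\sqrt{d}$.

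Multiplying the rate by $T = \Theta(\log(1/\delta)/\delta^{r-2})$ and plugging in $d \geq \tdtheta(1/\delta^{2r-2})$ (so $1/\sqrt{d} \leq \tdo(\delta^{r-1})$) and $N \geq \tdtheta(1/\delta^{r-1})$, the self-term contributes $\tdo(\sigma^2\delta)$ and the cross-terms contribute $\tdo(\sqrt{\log N}\sigma\delta)$; taking the maximum gives the claimed $\tdo(\sqrt{\log N}\sigma\delta)$ bound on $\absr{\inner{\dot{W}_{j,:}}{\xi_x}}\cdot T$. The main obstacle is mild circularity: the rate estimate at time $t$ itself invokes the bound $\absr{\inner{W_{j,:}}{\xi_{x'}}} \leq \min(O(\delta), W_{j,\ell(x')})$ that this lemma is helping to maintain. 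This is resolved by proving all three properties of Proposition~\ref{prop:main} jointly in the induction and coupling the estimate with a continuity/first-crossing argument: if the noise correlation were to first violate its bound at some interior $t^* \leq s_{i+1}$, then the rate estimate above (which is valid for all $t < t^*$) would contradict the first-crossing at $t^*$, since the integrated change up to $t^*$ is strictly smaller than the allowed slack.
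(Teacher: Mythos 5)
Your decomposition and strategy closely follow the paper's actual proof: you write out the gradient-flow expression for $\dot{W}_{j,:}$, pair it with $\xi_x$, split into self-term ($x'=x$) and cross-terms ($x'\neq x$), bound the geometric inner products by Lemma~\ref{lem:init}, and multiply by $T$ using the parameter ranges. The circularity discussion at the end also correctly reflects how the paper resolves the issue --- Lemma~\ref{lem:bound_wi_xi} is maintained jointly with the other invariants inside the induction of Proposition~\ref{prop:main}.

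The one genuine gap is your handling of the class-$j$ weight factor. You assert $\absr{\inner{W_{j,:}}{x'}}\leq O(1)$ when $\ell(x')=j$, citing property~3 of Proposition~\ref{prop:main}, but that property only gives a \emph{lower} bound $\Theta(\delta)=W^{(0)}_{j,j}<W^{(t)}_{j,j}$ and supplies no upper bound on $W^{(t)}_{j,j}$. Nothing in the stated induction hypothesis caps $W_{j,j}$ by a constant, and indeed over the long time horizon $T=\Theta(\log(1/\delta)/\delta^{r-2})$ it is not \emph{a priori} clear that $W_{j,j}$ stays $O(1)$. The paper avoids this by invoking Lemma~\ref{lem:bound_wi_increase}, which directly controls the quantity that actually appears in the gradient term: it shows $\absr{(1-u_j(x'))\inner{W_{j,:}}{x'}^{r-1}}\leq O(1)$ for $x'\in\set_j$, exploiting the self-limiting structure (once this product reaches $\Theta(1)$ its derivative becomes negative). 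You should bound the \emph{product} rather than $\inner{W_{j,:}}{x'}$ alone; substituting $1-u_j(x')\leq 1$ and a hypothetical $W_{j,j}\leq O(1)$ both leaves the argument incomplete and loses the decoupling that makes the estimate work. The remaining pieces --- the $O(\delta^{r-1})$ bound for $x'\notin\set_j$, the bookkeeping with $N$ and $d$, and the conclusion $\tdo(\sqrt{\log N}\sigma\delta)$ --- are correct and match the paper's computation.
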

\end{proofof}

\subsection{Proof of lemmas}\label{sec:proof_lemmas}

\init*
\begin{proofof}{Lemma~\ref{lem:init}}
Recall that each $\WO_{j,j'}$ is independently sampled from $\mathcal{N}(0,\delta^2)$ before taking the absolute value. By standard Gaussian concentration inequality, we know for any $j,j'\in [k],$ with probability at least $1-\frac{1}{1000k^2},$
\begin{align*}
    \WO_{j,j'} \leq O(\delta).
\end{align*}
By anti-concentration inequality of Gaussian polynomials, we know for any $j,j'\in [k],$ with probability at least $1-\frac{1}{1000k^2},$
\begin{align*}
    \WO_{j,j'} \geq \Omega(\delta).
\end{align*}

Also by anti-concentration inequality of Gaussian polynomials, we know for any distinct $j,j'\in [k],$ with probability at least $1-\frac{1}{1000k^2},$
\begin{align*}
    \absr{\br{\WO_{j,j}}^2 - \br{\WO_{j',j'}}^2 } \geq \Omega(\delta^2),
\end{align*}
which implies $\absr{\WO_{j,j} - \WO_{j',j'} } \geq \Omega(\delta)$ assuming $\WO_{j,j} , \WO_{j',j'}=\Theta(\delta).$

By the norm concentration of random vectors with independent Gaussian entries, for each $x\in \set,$ we have with probability at least $1-\frac{1}{1000N^2},$
\begin{align*}
    \n{\xi_x} \leq O(\sigma)
\end{align*}
as long as $d\geq O(\log N).$

By the concentration of standard Gaussian variable, for any distinct $x,x'\in \set,$ we have with probability at least $1-\frac{1}{1000N^2},$
\begin{align*}
    \absr{\inner{\bar{\xi}_{x}}{\bar{\xi}_{x'}}}\leq O\pr{\frac{\sqrt{\log N}}{\sqrt{d}}}.
\end{align*}
Similarly, for any $x$ and any $e_j$, we have with probability  at least $1-\frac{1}{1000kN},$
\begin{align*}
    \absr{\inner{\bar{\xi}_{x}}{e_j}}\leq O\pr{\frac{\sqrt{\log N}}{\sqrt{d}}};
\end{align*}
for any $x$ and any $\bar{W}^{(0)}_{j,:}$, we have with probability  at least $1-\frac{1}{1000kN},$
\begin{align*}
    \absr{\inner{\bar{\xi}_{x}}{\bar{W}^{(0)}_{j,:}}}\leq O\pr{\frac{\sqrt{\log N}}{\sqrt{d}}};
\end{align*}

Taking a union bound over all these events, we know with probability at least $0.99$ in the initialization, we have 
\begin{enumerate}
    \item for all $j,j'\in [k]$, $\WO_{i,j} = \Theta(\delta)$;
    \item for all distinct $j,j'\in [k]$, $\absr{\WO_{j,j}- \WO_{j',j'}} = \Theta(\delta);$
    \item for all $x\in \set,$ $\n{\xi_x}\leq O(\sigma);$
    \item for all distinct $x,x'\in \set,$ $\absr{\inner{\bar{\xi}_{x}}{\bar{\xi}_{x'}}}\leq O\pr{\frac{\sqrt{\log(N)}}{\sqrt{d}}}.$
    \item for all $j\in [k]$ and all $x\in \set,$ $\absr{\inner{\bar{\xi}_{x}}{e_j}}, \absr{\inner{\bar{\xi}_{x}}{\bar{W}^{(0)}_{j,:}}}\leq O\pr{\frac{\sqrt{\log(N)}}{\sqrt{d}}}.$
\end{enumerate}
\end{proofof}

\tensorpower*
\begin{proofof}{Lemma~\ref{lem:tensor_power}}
This lemma directly follows from Lemma C.19 in~\cite{allen2020towards} by taking the continuous time limit and setting $k$ as a constant.
\end{proofof}

\couplebias*
\begin{proofof}{Lemma~\ref{lem:bias_couple}}
Let's first write down the time derivative on $b_{j'},$
\begin{align*}
    \dot{b}_{j'}
    =& 1-\frac{k}{N}\sum_{x\in\set} u_{j'}(x)\\
    =& 1-\frac{k}{N}\sum_{x\in \set} \frac{\exp\pr{\inner{W_{j',:}}{x}^r + b_{j'} }}{\sum_{i'\in [k]} \exp\pr{\inner{W_{i',:}}{x}^r +b_{i'}}}
\end{align*}

For any $x\in\set,$ we can bound $\frac{\exp\pr{\inner{W_{j',:}}{x}^r + b_{j'} }}{\sum_{i'\in [k]} \exp\pr{\inner{W_{i',:}}{x}^r +b_{i'}}}$ as follows,
\begin{align*}
\absr{\frac{\exp\pr{\inner{W_{j',:}}{x}^r + b_{j'} }}{\sum_{i'\in [k]} \exp\pr{\inner{W_{i',:}}{x}^r +b_{i'}}} -\frac{\exp\pr{b_{j'} }}{\sum_{i'\in [k]} \exp\pr{\inner{W_{i',:}}{x}^r +b_{i'}}}}\leq O(\delta^r),
\end{align*}
where we uses $\absr{\inner{W_{j',:}}{x}} \leq O(\delta) + O(\sqrt{\log N}\sigma\delta)\leq O(\delta)$ assuming $\sigma\leq 1/\sqrt{\log N}.$ The similar bound also holds for $\frac{\exp\pr{\inner{W_{j,:}}{x}^r + b_{j} }}{\sum_{i'\in [k]} \exp\pr{\inner{W_{i',:}}{x}^r +b_{i'}}}$

If $b_{j'}-b_j \geq \mu\delta^r,$ we can now upper bound $\dot{b}_{j'}-\dot{b}_j$ as follows,
\begin{align*}
    \dot{b}_{j'}-\dot{b}_j \leq& \frac{k}{N}\sum_{x\in\set} \frac{\exp\pr{b_j} -\exp\pr{b_{j'}}}{\sum_{i'\in [k]} \exp\pr{\inner{W_{i',:}}{x}^r +b_{i'}}} + O(\delta^r)\\
    \leq& \frac{k}{N}\sum_{x\in\set_j\cup \set_{j'}} \frac{\exp\pr{b_j} -\exp\pr{b_{j'}}}{\sum_{i'\in [k]} \exp\pr{\inner{W_{i',:}}{x}^r +b_{i'}}} + O(\delta^r)\\
    \leq& -\Omega(\mu\delta^r) \cdot \frac{k}{N}\sum_{x\in\set_j\cup \set_{j'}} \frac{\exp\pr{b_j}}{\sum_{i'\in [k]} \exp\pr{\inner{W_{i',:}}{x}^r +b_{i'}}} + O(\delta^r)
\end{align*}

When $x\in\set_j\cup \set_{j'},$ we can lower bound $\frac{\exp\pr{b_j}}{\sum_{i'\in [k]} \exp\pr{\inner{W_{i',:}}{x}^r +b_{i'}}}$ as follows,
\begin{align*}
    \frac{\exp\pr{b_j}}{\sum_{i'\in [k]} \exp\pr{\inner{W_{i',:}}{x}^r +b_{i'}}}
    = & \frac{\exp\pr{b_j }}{\sum_{i'\in [k]} \exp\pr{b_{i'}}\exp\pr{\inner{W_{i',:}}{x}^r}}\\
    \geq & \frac{\exp\pr{b_j }}{\sum_{i'\in [k]} \exp\pr{b_{i'}}}\cdot \frac{1}{1+O(\delta^r)}\\
    \geq & \Omega(1),
\end{align*}
where the first inequality uses $\absr{\inner{W_{i',:}}{x}}\leq \delta$ and the second inequality assumes $b_j\geq \max_{i'\in [k]}b_{i'}-O(\delta^r)$ and $\delta$ is at most some small constant.  

Therefore, if $b_{j'}-b_j \geq \mu\delta^r,$ we have
\begin{align*}
    \dot{b}_{j'}-\dot{b}_j \leq -\Omega(\mu\delta^r) + O(\delta^r) <0,
\end{align*}
where the second inequality chooses $\mu$ as a large enough constant. Similarly, we can prove that if $b_{j'}-b_j \leq -\mu\delta^r,$ we have
\begin{align*}
    \dot{b}_{j'}-\dot{b}_j \geq \Omega(\mu\delta^r) - O(\delta^r) >0.
\end{align*}
\end{proofof}

\biascontrolone*
\begin{proofof}{Lemma~\ref{lem:bias_gap_small}}
We can write down $\dot{b}_{j'}-\dot{b}_j$ as follows,
\begin{align*}
    \dot{b}_{j'} - \dot{b}_j 
    =& \pr{1-\frac{k}{N}\sum_{x\in\set} u_{j'}(x)} - \pr{1-\frac{k}{N}\sum_{x\in\set} u_{j}(x)}\\
    =& \frac{k}{N}\sum_{x\in\set_{j'}}\pr{u_j(x) - u_{j'}(x)} + \frac{k}{N}\sum_{x\in\set\setminus \set_{j'}}\pr{u_j(x) - u_{j'}(x)}.
\end{align*}
We first prove that for any $x\in\set_{j'},$ we have $u_j(x) - u_{j'}(x)\leq 0.$ We can upper bound $f_j(x)$ and lower bound $f_{j'}(x)$ as follows,
\begin{align*}
    f_j(x) =& \inner{W_{j,:}}{x}^r + b_j \leq O(\delta^r) + b_j\\
    f_{j'}(x) =& \inner{W_{j',:}}{x}^r + b_{j'} \geq b_j.
\end{align*}
The bound on $f_j(x)$ holds because $\inner{W_{j,:}}{x} = W_{j,j'} + \inner{W_{j,:}}{\xi_x}\leq O(\delta) + O(\sqrt{\log N}\sigma \delta) \leq O(\delta).$ The bound on $f_{j'}(x)$ holds because $\inner{W_{j',:}}{x} = W_{j',j'}+\inner{W_{j',:}}{\xi_x} \geq \Omega(\delta) - O(\sqrt{\log N}\sigma \delta) >0.$ With the above two bounds, we know that $u_j(x) - u_{j'}(x)\leq 0$ as long as $b_{j'}-b_j\geq O(\delta^r).$

Same as in the proof of Lemma~\ref{lem:bias_couple}, for each $x\in\set\setminus\set_{j'},$ we can bound $u_{j'}(x),u_j(x)$ as follows,
\begin{align*}
    \frac{\exp\pr{b_{j'} }}{\sum_{i'\in [k]} \exp\pr{f_{i'}(x)}}- O(\delta^r) \leq u_{j'}(x)\leq \frac{\exp\pr{b_{j'} }}{\sum_{i'\in [k]} \exp\pr{f_{i'}(x)}} + O(\delta^r),\\
    \frac{\exp\pr{b_j }}{\sum_{i'\in [k]} \exp\pr{f_{i'}(x)}}- O(\delta^r)  \leq u_j(x)\leq \frac{\exp\pr{b_j }}{\sum_{i'\in [k]} \exp\pr{f_{i'}(x)}} + O(\delta^r).
\end{align*}
Therefore, if $b_{j'}-b_j\geq \mu\delta^r,$ we can further upper bound $\dot{b}_{j'} - \dot{b}_j$ as follows, 
\begin{align*}
    \dot{b}_{j'} - \dot{b}_j\leq& \frac{k}{N}\sum_{x\in\set\setminus \set_{j'}}\pr{u_j(x) - u_{j'}(x)}.\\
    \leq& \frac{k}{N}\sum_{x\in\set\setminus \set_{j'}}\frac{\exp\pr{b_j } -\exp\pr{b_{j'}}}{\sum_{i'\in [k]} \exp\pr{f_{i'}(x)}} + O(\delta^r)\\
    \leq& \frac{k}{N}\sum_{x\in\set_{j}}\frac{\exp\pr{b_j } -\exp\pr{b_{j'}}}{\sum_{i'\in [k]} \exp\pr{f_{i'}(x)}} + O(\delta^r)\\
    \leq& -\Omega(\mu \delta^r)\frac{k}{N}\sum_{x\in\set_{j}}\frac{\exp\pr{b_j }}{\sum_{i'\in [k]} \exp\pr{f_{i'}(x)}} + O(\delta^r).
\end{align*}

Similar as in Lemma~\ref{lem:bias_couple}, we can show that $\frac{\exp\pr{b_j }}{\sum_{i'\in [k]} \exp\pr{f_{i'}(x)}}\geq \Omega(1)$ due to $W_{j,j}\leq O(\delta)$ and $b_j\geq \max_{i'\in [k]} b_{i'} -O(\delta^r)$. So, finally we have
\begin{align*}
    \dot{b}_{j'} - \dot{b}_j\leq -\Omega(\mu \delta^r)+ O(\delta^r) <0,
\end{align*}
where the last inequality chooses $\mu$ as a large enough constant. 
\end{proofof}

\accuracycontrol*
\begin{proofof}{Lemma~\ref{lem:pred_gap_remain}}
Since $f_i(x)-f_j(x)\leq C_2,$ we know $1-u_i(x)\geq \Omega(1).$ This immediately implies $\min_{x'\in\set_i}\pr{1-u_i(x')}\geq \Omega(1)$ since $\absr{u_i(x)-u_i(x')}\leq O(\delta).$
According to Lemma~\ref{lem:output_gap}, we can bound $\frac{\partial\pr{f_i(e_i)-f_j(e_i)}}{\partial t}$ as follows, 
\begin{align*}
\frac{\partial\pr{f_i(x)-f_j(x)}}{\partial t} \geq \Omega(W_{i,i}^{2r-2}) - O(1) > 0
\end{align*}
where the second inequality holds because $W_{i,i}\geq C_1$ with $C_1$ a large enough constant. 
\end{proofof}

\biascontroltwo*
\begin{proofof}{Lemma~\ref{lem:bias_gap_remain}}
Since $1-u_j(x)\leq C_1$ for some $x\in\set_j$, we know $1-u_j(x')\leq C_1+O(\delta)$ for every $x'\in\set_j.$
We can write down $\dot{b}_j-\dot{b}_k$ as follows,
\begin{align*}
    \dot{b}_j - \dot{b}_k 
    =& \pr{1-\frac{k}{N}\sum_{x'\in\set} u_j(x')} - \pr{1-\frac{k}{N}\sum_{x'\in\set} u_k(x')}\\
    =& \frac{k}{N}\sum_{x'\in\set_j}\pr{u_k(x') - u_j(x')} + \frac{k}{N}\sum_{x'\in\set\setminus\set_j}\pr{u_k(x') - u_j(x')}.
\end{align*}
First, we upper bound $u_k(x') - u_j(x')$ for every $x'\in\set_j$ as follows,
\begin{align*}
    u_k(x') - u_j(x') \leq 1-u_j(x')-u_j(x') = -1+2\pr{1-u_j(x')}\leq 2C_1-1+O(\delta).
\end{align*}

Same as in the proof of Lemma~\ref{lem:bias_couple}, for each $x'\in\set\setminus\set_j,$ we can bound $u_j(x'),u_k(x')$ as follows,
\begin{align*}
    \frac{\exp\pr{b_j }}{\sum_{i'\in [k]} \exp\pr{f_{i'}(x')}}- O(\delta^r) \leq u_j(x')\leq \frac{\exp\pr{b_j }}{\sum_{i'\in [k]} \exp\pr{f_{i'}(x')}} + O(\delta^r),\\
    \frac{\exp\pr{b_k }}{\sum_{i'\in [k]} \exp\pr{f_{i'}(x')}}- O(\delta^r)  \leq u_k(x')\leq \frac{\exp\pr{b_k }}{\sum_{i'\in [k]} \exp\pr{f_{i'}(x')}} + O(\delta^r).
\end{align*}
Therefore, we can upper bound $u_k(x') - u_j(x')$ as follows, 
\begin{align*}
    u_k(x') - u_j(x') \leq& \frac{\exp\pr{b_k } - \exp\pr{b_j}}{\sum_{i'\in [k]} \exp\pr{f_{i'}(x')}} + O(\delta)\\
    =& \frac{\exp\pr{b_j}}{\sum_{i'\in [k]} \exp\pr{f_{i'}(x')}}\cdot\pr{ \exp(b_k-b_j)-1} + O(\delta^r)\\
    \leq& O(C_2) + O(\delta^r),
\end{align*}
where the last inequality uses $b_k-b_j\leq C_2.$

Above all, we can upper bound $\dot{b}_j - \dot{b}_k$ as follows,
\begin{align*}
    \dot{b}_j - \dot{b}_k 
    \leq& -1+2C_1+O(C_2) + O(\delta^r)\\
    <& -\Omega(1),
\end{align*}
where the second inequality holds as long as $C_1,C_2,\delta$ are at most some small constants. 
\end{proofof}

\biascontrolthree*
\begin{proofof}{Lemma~\ref{lem:bias_lower_bound}}
We can write down $\dot{b}_i-\dot{b}_j$ as follows,
\begin{align*}
    \dot{b}_i - \dot{b}_j 
    =& \pr{1-\frac{k}{N}\sum_{x\in\set} u_i(x)} - \pr{1-\frac{k}{N}\sum_{x\in\set} u_j(x)}\\
    =& \frac{k}{N}\sum_{x\in\set}\pr{u_j(x) - u_i(x)}
\end{align*}
Next, we lower bound $u_j(x) - u_i(x)$ for every $x\in\set,$
\begin{align*}
    u_j(x) - u_i(x) =& \frac{\exp\pr{\inner{W_{j,:}}{x}^r + b_j } - \exp\pr{ \inner{W_{i,:}}{x}^r + b_i} }{\sum_{i'\in [k]} f_{i'}(x)}\\
    \geq& \frac{\exp\pr{O(\delta^r) + b_j } - \exp\pr{ O(1) + b_i} }{\sum_{i'\in [k]} f_{i'}(x)}
\end{align*}
So as long as $b_j-b_i > O(1),$ we have $u_j(x) - u_i(x)>0$ for all $x\in\set,$ which then implies $\dot{b}_i - \dot{b}_j>0.$ 
\end{proofof}

\boundwixi*
\begin{proofof}{Lemma~\ref{lem:bound_wi_xi}}
For each $j\in [k],$ we have 
\begin{align*}
    \dotwj = \frac{k}{N} \pr{\sum_{x' \in \set_j} \pr{1-u_j(x')} r\inner{\wj}{x'}^{r-1} x' - \sum_{x' \in \set\setminus \set_j} u_j(x') r\inner{\wj}{x'}^{r-1} x'}
\end{align*}
and 
\begin{align*}
    &\inner{\dotwj}{\bar{\xi}_x} \\
    =& \frac{k}{N} \pr{\sum_{x' \in \set_j} \pr{1-u_j(x')} r\inner{\wj}{x'}^{r-1} \inner{x'}{\bar{\xi}_x} - \sum_{x' \in \set\setminus \set_j} u_j(x') r\inner{\wj}{x'}^{r-1} \inner{x'}{\bar{\xi}_x}}
\end{align*}
We know that $\absr{\inner{x}{\bar{\xi}_x}} \leq O(\sigma + \sqrt{\log N}/\sqrt{d})$. For any $x'\neq x,$ we have $\absr{\inner{x'}{\bar{\xi}_x}} \leq O((\sigma\sqrt{\log N})/\sqrt{d} + \sqrt{\log N}/\sqrt{d}) \leq O(\sqrt{\log N}/\sqrt{d})$ as long as $\sigma\leq 1.$

According to Lemma~\ref{lem:bound_wi_increase}, we know that for $x'\in \set_j,$ we have $\absr{\pr{1-u_j(x')} \inner{\wj}{x'}^{r-1}}\leq O(1).$ For $x'\in \set\setminus \set_i$, we have $\absr{u_i(x') \inner{\wj}{x'}^{r-1}}\leq O(\delta^{r-1})$ since $\absr{\inner{\wj}{x'}}\leq O(\delta) + O(\sqrt{\log N}\delta\sigma)\leq O(\delta)$ assuming $\sigma\leq 1/\sqrt{\log N}.$

Therefore, we can bound $\absr{\inner{\dotwj}{\bar{\xi}_x}}$ as follows,
\begin{align*}
\absr{\inner{\dotwj}{\bar{\xi}_x}} \leq O\pr{\frac{\sigma}{N} + \frac{\sqrt{\log N}}{\sqrt{d}}}
\end{align*}
Since $T\leq O(\log(1/\delta)/\delta^{r-2}), N\geq \log(1/\delta)/\delta^{r-1}$ and $d\geq \log^2(1/\delta)/\delta^{2r-2},$ we know 
\begin{align*}
\absr{\inner{\dotwj}{\bar{\xi}_x}}\cdot T \leq O(\sqrt{\log N}\delta).
\end{align*}
\end{proofof}

\begin{restatable}{lemma}{boundwiincrease}\label{lem:bound_wi_increase}
For any $i\in [k]$ and $x\in \set_i,$ if $(1-u_i(x))\inner{\wi}{x}^{r-1} \geq \Theta(1),$ we have 
\begin{align*}
    \frac{d}{dt}\pr{(1-u_i(x))\inner{\wi}{x}^{r-1}} < 0. 
\end{align*}
\end{restatable}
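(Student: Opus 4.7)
Let $y := \inner{\wi}{x}$ and $A := (1-u_i(x))\,y^{r-1}$. The plan is to compute $\dot A$ by the product rule, isolate a leading term whose sign is $\mathrm{sgn}\!\bigl((r-1) - r\,u_i(x)\,y^r\bigr)$, and then show that $A\geq C$ (for a sufficiently large absolute constant $C$) forces $y^r$ to be large, hence $u_i(x)$ close to~$1$, so this leading sign is strictly negative by a definite margin.

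The calculation starts from
\[
\dot A \;=\; -\dot u_i(x)\,y^{r-1} \;+\; (r-1)(1-u_i(x))\,y^{r-2}\,\inner{\dotwi}{x},
\]
combined with the softmax identity $\dot u_i(x)=u_i(x)\bigl[(1-u_i(x))\dot f_i(x) - \sum_{l\neq i}u_l(x)\dot f_l(x)\bigr]$ and the gradient-flow formula for $\inner{\dotwi}{x}$. Part~3 of Proposition~\ref{prop:main} provides the required control on lower-order contamination: for $x,x'\in\set_i$ the inner products $\inner{x'}{x}$ are all $1\pm o(1)$, for $x'\in\set\setminus\set_i$ the bound $\inner{\wi}{x'}\leq O(\delta)$ makes its contribution to $\inner{\dotwi}{x}$ of order $O(\delta^{r-1})$, and for $j\neq i$ the bound $\inner{\wj}{x}\leq O(\delta)$ reduces the signal part of $\dot f_j(x)$ to $O(\delta^{r-1})$ so that $\dot f_j(x)$ is dominated by $\dot b_j = O(1)$. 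After absorbing these, the leading contributions collapse to $\inner{\dotwi}{x} \approx rA$ and $\dot f_i(x)-\sum_l u_l(x)\dot f_l(x) \approx r^2 A^2$, giving
\[
\dot A \;=\; \frac{r\,A^2}{y}\Bigl[(r-1)\;-\;r\,u_i(x)\,y^r\Bigr] \;+\; (\text{lower order}).
\]

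To conclude, suppose $A\geq C$. Since $1-u_i(x)\leq 1$, we have $y\geq C^{1/(r-1)}$ and therefore $y^r\geq C^{r/(r-1)}$. By parts~1--2 of Proposition~\ref{prop:main}, $f_i(x)-f_j(x)=y^r - \inner{\wj}{x}^r + (b_i-b_j) \geq y^r - O(1)$ for every $j\neq i$, so $u_i(x)\geq 1/(1+(k-1)e^{-y^r+O(1)}) \geq 1/2$ once $y^r$ exceeds the absolute threshold $O(1)+\log(k-1)$. Picking $C$ large enough so that $C^{r/(r-1)}$ exceeds this threshold and also $(r/2)\,C^{r/(r-1)} > (r-1)+\Omega(1)$, the bracket is negative and bounded away from zero. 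The main technical obstacle is verifying that the omitted ``lower order'' terms --- namely $\dot b_i$ inside $\dot f_i(x)$, the $O(1)$ bias-derivative contributions to $\sum_{l\neq i}u_l\dot f_l$, the cross-sample and noise contributions to $\inner{\dotwi}{x}$, and the $o(1)$ variation of $(1-u_i(x'))\inner{\wi}{x'}^{r-1}$ across $x'\in\set_i$ --- are all strictly dominated by $(r\,A^2/y)\cdot\Omega(1)$; this reduces to plugging in the quantitative bounds from the induction hypothesis with $\delta,\sigma$ small and observing that at the threshold $A=\Theta(1)$ the leading coefficient $r A^2/y$ is itself $\Theta(1)$, whereas every error term is explicitly $o(1)$.
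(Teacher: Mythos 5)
Your approach is essentially the same as the paper's, and the core mechanism you identify is exactly right: writing $A=(1-u_i(x))y^{r-1}$ with $y=\inner{\wi}{x}$, the product rule pits a growth term of order $(r-1)y^{r-2}\inner{\dotwi}{x}$ against a decay term (coming from $-\dot u_i y^{r-1}$) of order $r u_i y^{2r-2}\inner{\dotwi}{x}$, and the extra factor of $y^r$ in the decay term wins once $A$ forces $y$ large. The paper reaches the same comparison by a slightly different decomposition: it writes $1-u_i(x)=\sum_{j'\neq i}\exp(f_{j'})/\sum_l\exp(f_l)$, proves each summand-times-$y^{r-1}$ is decreasing, and bounds each via the product rule with the same $(r-1)$-vs-$\frac{r}{2}y^r$ tradeoff. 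Your route via the softmax derivative identity $\dot u_i = u_i[(1-u_i)\dot f_i - \sum_{l\neq i}u_l\dot f_l]$ avoids the per-class decomposition but is otherwise the same argument; neither gains or loses generality.

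One imprecision worth flagging. You claim that at the threshold $A=\Theta(1)$ the leading coefficient $rA^2/y$ is $\Theta(1)$ and every discarded term is $o(1)$. Neither half is quite right as stated: the coefficient $rA^2/y$ is only $O(1)$ (and shrinks when $y$ is large), and the bias-derivative contributions you discard (e.g.\ the term $-u_i(1-u_i)\dot b_i\,y^{r-1} = -u_iA\,\dot b_i = O(A)$) are $O(1)$, not $o(1)$. What saves the argument is that the negative piece of the bracket recombines with the coefficient to give $r^2 u_i A^2 y^{r-1}\geq \tfrac{r^2}{2}A^3=\Omega(C^3)$ at the threshold $A=C$ (using $y^{r-1}\geq A$ and $u_i\geq 1/2$), so the bias terms and the $(r-1)$-piece are both dominated once $C$ is taken large enough. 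The paper sidesteps this by never cancelling the $\inner{\dotwi}{x}$ factor and directly comparing the two coefficients $(r-1)$ and $\tfrac{r}{2}y^r$, which makes the dominance more transparent. Your proof is correct in substance, but the final bookkeeping should compare against the $\Omega(C^3)$ leading magnitude rather than asserting the errors are $o(1)$.
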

\begin{proofof}{Lemma~\ref{lem:bound_wi_increase}}
We can write $1-u_i(x)$ as $\frac{\sum_{j\in[k], j\neq i} \exp\pr{f_{j}(x)}}{\sum_{j\in [k], j\neq i} \exp\pr{f_j(x)}+\exp\pr{f_i(x)} }.$ Next, we prove that for any $j'\neq i,$ we have 
$$\frac{d}{dt}\pr{\frac{\exp\pr{f_{j'}(x)}}{\sum_{j\in [k], j\neq i} \exp\pr{f_j(x)}+\exp\pr{f_i(x)} }\inner{\wi}{x}^{r-1}} < 0. $$
This derivative can be written the sum of two terms:
\begin{align*}
    &\frac{d}{dt}\pr{\frac{\exp\pr{f_{j'}(x)}}{\sum_{j\in [k], j\neq i} \exp\pr{f_j(x)}+\exp\pr{f_i(x)} }\inner{\wi}{x}^{r-1}}\\
    =& \frac{1}{\sum_{j\in [k], j\neq i} \exp\pr{f_j(x)-f_{j'}(x)}+\exp\pr{f_i(x)-f_{j'}(x)} }\frac{d}{dt}\pr{\inner{\wi}{x}^{r-1}}\\
    &+ \frac{d}{dt}\pr{\frac{1}{\sum_{j\in [k], j\neq i} \exp\pr{f_j(x)-f_{j'}(x)}+\exp\pr{f_i(x)-f_{j'}(x)} }}\inner{\wi}{x}^{r-1}.
\end{align*}

For the first term, we have 
\begin{align*}
    &\frac{1}{\sum_{j\in [k], j\neq i} \exp\pr{f_j(x)-f_{j'}(x)}+\exp\pr{f_i(x)-f_{j'}(x)} }\frac{d}{dt}\pr{\inner{\wi}{x}^{r-1}} \\
    =& \frac{1}{\sum_{j\in [k], j\neq i} \exp\pr{f_j(x)-f_{j'}(x)}+\exp\pr{f_i(x)-f_{j'}(x)} }(r-1)\inner{\wi}{x}^{r-2}\inner{\dotwi}{x} \\
    \leq & \frac{1}{\exp\pr{f_i(x)-f_{j'}(x)} }(r-1)\inner{\wi}{x}^{r-2}\inner{\dotwi}{x}.
\end{align*}

For the second term, we have 
\begin{align*}
    &\frac{d}{dt}\pr{\frac{1}{\sum_{j\in [k], j\neq i} \exp\pr{f_j(x)-f_{j'}(x)}+\exp\pr{f_i(x)-f_{j'}(x)} }}\inner{\wi}{x}^{r-1} \\
    =& -\frac{\sum_{j\in [k], j\neq i} \exp\pr{f_j(x)-f_{j'}(x)}\pr{\dot{f}_j(x)-\dot{f}_{j'}(x)}+\exp\pr{f_i(x)-f_{j'}(x)}\pr{\dot{f}_i(x)-\dot{f}_{j'}(x)}}{\pr{\sum_{j\in [k], j\neq i} \exp\pr{f_j(x)-f_{j'}(x)}+\exp\pr{f_i(x)-f_{j'}(x)}}^2 }\inner{\wi}{x}^{r-1} \\
    \leq& -\frac{1}{2}\cdot \frac{r\inner{\wi}{x}^{r-1}\inner{\dotwi}{x}}{\exp\pr{f_i(x)-f_{j'}(x)} }\inner{\wi}{x}^{r-1},
\end{align*}
where the last inequality uses $f_i(x)-f_j(x)\geq \Omega(1), \absr{\dot{f}_j(x)-\dot{f}_{j'}(x)}\leq O(1)$ and $\dot{f}_i(x)-\dot{f}_{j'}(x)\geq r\inner{\wi}{x}^{r-1}\inner{\dotwi}{x}-O(1)\geq \Omega(1).$ %Note that for any constant $C_1>0,$ there exists $C_2>0$ such that $(1-u_i(x))\inner{\wi}{x}^{r-1} \geq C_2$ implies $f_i(x)-f_j(x)\geq C_1$ for all $j\neq i.$ 

Combining the bounds on both terms, as long as $\inner{\wi}{x}$ is larger than certain constant (which is guaranteed by $(1-u_i(x))\inner{\wi}{x}^{r-1} \geq \Theta(1)$), we know $\frac{d}{dt}\pr{(1-u_i(x))\inner{\wi}{x}^{r-1}} < 0$. 
\end{proofof}

\boundwij*
\begin{proofof}{Lemma~\ref{lem:bound_wij}}
We can write down the derivative of $W_{j,j'}$ as follows,
\begin{align*}
        &\dot{W}_{j,j'} \\
        =& \frac{k}{N} \sum_{x \in \set_j} \pr{1-u_j(x)} r\inner{\wj}{x}^{r-1} \inner{e_{j'}}{x}\\
        &- \frac{k}{N}\sum_{x \in \set_{j'}} u_j(x) r\inner{\wj}{x}^{r-1} \inner{e_{j'}}{x} \\
        &- \frac{k}{N}\sum_{x \in \set\setminus (\set_j\cup \set_{j'})} u_j(x) r\inner{\wj}{x}^{r-1} \inner{e_{j'}}{x}\\
        =& \pm O\pr{\frac{\sigma\sqrt{\log N}}{\sqrt{d}}}
        - O\pr{\pr{W_{j,j'} \pm O\pr{\sqrt{\log N} \delta \sigma} }^{r-1} }
         \pm O\pr{\delta^{r-1}\frac{\sigma\sqrt{\log N}}{\sqrt{d}}}.
\end{align*}
The bound on the first term relies on $\pr{1-u_j(x)} \inner{\wj}{x}^{r-1}\leq O(1)$ and $\inner{e_{j'}}{x} = \pm O\pr{\frac{\sigma\sqrt{\log N}}{\sqrt{d}}}$ for $x\in\set_j,$ where $\pr{1-u_j(x)} \inner{\wj}{x}^{r-1}\leq O(1)$ is guaranteed by Lemma~\ref{lem:bound_wi_increase}.
The bound on the second term uses $\inner{\wj}{x} = W_{j,j'}\pm O\pr{\sqrt{\log N} \delta \sigma}$ and $\inner{e_{j'}}{x}=1\pm O\pr{\frac{\sigma\sqrt{\log N}}{\sqrt{d}}}$ for $x\in\set_{j'}.$ The bound on the third term uses $\inner{\wj}{x} =  O(\delta)$ and $\inner{e_{j'}}{x} = \pm O\pr{\frac{\sigma\sqrt{\log N}}{\sqrt{d}}}$ for $x\in\set\setminus (\set_j\cup \set_{j'}).$

To prove the upper bound of the derivative, we have 
\begin{align*}
        \dot{W}_{j,j'} 
        \leq & O(\frac{\sigma\sqrt{\log N}}{\sqrt{d}})
\end{align*}
where we use $W_{j,j'} \pm O\pr{\sqrt{\log N} \delta \sigma}\geq 0$. Since $T=O(\log(1/\delta)/\delta^{r-2}),$ we have 
$$T \dot{W}_{j,j'} \leq O(\delta),$$
as long as $d\geq O(\frac{\log N\log^2(1/\delta)}{\delta^{2r-2}}).$

We show that there exists absolute constant $\mu>0$ such that if $0<W_{j,j'}<\frac{\mu\delta}{\log^{\frac{1}{r-2}}(1/\delta)},$ we have $T\dot{W}_{j,j'} \geq -\frac{\mu\delta}{2\log^{\frac{1}{r-2}}(1/\delta)},$ which holds as long as $\dot{W}_{j,j'}\geq -O\pr{\frac{\mu\delta^{r-1}}{\log^{\frac{r-1}{r-2}}(1/\delta)}}.$ We have 
\begin{align*}
        &\dot{W}_{j,j'} \\
        =& \pm O(\frac{\sigma\sqrt{\log N}}{\sqrt{d}})- O\pr{\pr{W_{j,j'} \pm O\pr{\sqrt{\log N} \delta \sigma} }^{r-1} }\\
        \geq & - O(\frac{\sigma\sqrt{\log N}}{\sqrt{d}})-O\pr{\frac{\mu^{r-1}\delta^{r-1}}{\log^{\frac{r-1}{r-2}}(1/\delta)}}\\
        \geq & -O\pr{\frac{\mu^{r-1}\delta^{r-1}}{\log^{\frac{r-1}{r-2}}(1/\delta)}}\\
        \geq & -O\pr{\frac{\mu\delta^{r-1}}{\log^{\frac{r-1}{r-2}}(1/\delta)}}.
\end{align*}
The first inequality assumes $\sigma \leq O\pr{\frac{\mu}{\sqrt{\log N}\log^{\frac{1}{r-2}}(1/\delta)}}.$ The second inequality assumes $d\geq O(\frac{\log N\log^{\frac{2r-2}{r-2}}(1/\delta)}{\mu^{2r-2}\delta^{2r-2}}).$ The third inequality chooses $\mu$ as a small enough constant. 
\end{proofof}

\bounddotwii*
\begin{proofof}{Lemma~\ref{lem:bound_dot_wii}}
We have
\begin{align*}
    \dot{W}_{j,j} =& \frac{k}{N} \sum_{x \in \set_j} \pr{1-u_j(x)} r\inner{\wj}{x}^{r-1}\inner{x}{e_j} - \frac{k}{N}\sum_{x \in \set\setminus \set_j} u_j(x) r\inner{\wj}{x}^{r-1} \inner{x}{e_j}\\
    =& \frac{k}{N} \sum_{x \in \set_j} \pr{1-u_j(x)} r\pr{W_{j,j}\pm O\pr{\sqrt{\log N}\delta\sigma} }^{r-1}\pr{1\pm O\pr{\frac{\sqrt{\log N}\sigma}{\sqrt{d}}}} \\
    &- \frac{k}{N}\sum_{x \in \set\setminus \set_j} u_j(x) r\pr{O(\delta)\pm O\pr{\sqrt{\log N}\delta\sigma} }^{r-1} \pr{\pm O\pr{\frac{\sqrt{\log N}\sigma}{\sqrt{d}}}} \\
    =& \pr{1\pm O(\sqrt{\log N}\sigma)}\frac{k}{N} \sum_{x \in \set_j} \pr{1-u_j(x)} rW_{j,j}^{r-1}\pm O\pr{\frac{\delta^{r-1}\sqrt{\log N}\sigma}{\sqrt{d}}},
\end{align*}
where the second equality uses $\absr{\inner{\wj}{\xi_x}}\leq O(\sqrt{\log N}\delta \sigma), \absr{\inner{\xi_x}{e_j}}\leq O\pr{\frac{\sqrt{\log N}\sigma}{\sqrt{d}}}$ and $W_{j,j'}\leq O(\delta)$ for $j\neq j'.$

Therefore, if $\min_{x\in\set_j}\pr{1-u_j(x)}\geq \Omega(1)$, we know 
$$\pr{1-O(\sqrt{\log N}\sigma)}\frac{k}{N} \sum_{x \in \set_j} \pr{1-u_j(x)} rW_{j,j}^{r-1}           \leq \dot{W}_{j,j}\leq \pr{1+O(\sqrt{\log N}\sigma)}\frac{k}{N} \sum_{x \in \set_j} \pr{1-u_j(x)} rW_{j,j}^{r-1}.$$
And we always have 
$$\dot{W}_{j,j}\geq -O\pr{\frac{\delta^{r-1}\sqrt{\log N}\sigma}{\sqrt{d}}}.$$
\end{proofof}

\outputgap*
\begin{proofof}{Lemma~\ref{lem:output_gap}}
Recall that $f_i(x) = \inner{\wi}{x}^r + b_i,$ so we have 
\begin{align*}
    \dot{f}_i(x) 
    =& r\inner{\wi}{x}^{r-1}\inner{\dotwi}{x} + \dot{b}_i\\
    =& r\pr{W_{i,i} \pm \sqrt{\log N}\sigma\delta }^{r-1}\pr{\dot{W}_{i,i} + \inner{\dotwi}{\xi_x}} + \dot{b}_i\\
    \geq& \Omega(W_{i,i}^{2r-2}) - O(1),
\end{align*}
where in the last inequality we uses $\dot{W}_{i,i}\geq \Omega(W_{i,i}^{r-1})\geq \Omega(\delta^r)$ and $\absr{\inner{\dotwi}{\xi_x}}\leq \frac{\sigma\sqrt{\log N}\delta^{r-1}}{\log(1/\delta)} \leq O(\delta^{r-1}).$

We also have 
\begin{align*}
    \dot{f}_j(x) 
    =& r\inner{W_{j,:}}{x}^{r-1}\inner{\dot{W}_{j,:}}{x} + \dot{b}_j\\
    =& r\pr{W_{j,i} + \sqrt{\log N}\sigma\delta }^{r-1}\pr{\dot{W}_{j,i} \pm \inner{\dot{W}_{j,:}}{\xi_x}} + \dot{b}_j\\
    \leq& O(1),
\end{align*}
where we uses $\absr{W_{j,i}}\leq O(\delta), \absr{ \dot{W}_{j,i}}\leq O(\delta^{r-1})$ and $\absr{\inner{\dot{W}_{j,:}}{\xi_x}} \leq O(\delta^{r-1}).$

Therefore, we have 
\begin{align*}
\frac{dt}{d}\pr{f_i(x)-f_j(x)}\geq   \Omega(W_{i,i}^{2r-2}) - O(1).
\end{align*}
\end{proofof}
\section{Additional experiments}\label{sec:experiment_appendix}
In this section, we describe the detailed setting of our experiments and also include additional experiment results. 

\paragraph{MNIST \& Fashion-MNIST. }
Unless specified otherwise, we use a depth-10 and width-1024 fully-connected ReLU neural network (FCN10) for MNIST and Fashion-MNIST. We use Kaiming initialization for the weights and set all bias terms as zero. We use a small initialization by scaling the weights of each layer by $(0.001)^{1/h}$ so the output is scaled by $0.001,$ where $h$ is the network depth. We train the network using SGD with learning rate $0.01$ and momentum $0.9$ for $100$ epochs. 

\paragraph{CIFAR-10 \& CIFAR-100}
We use VGG-16 (without batch normalization) for CIFAR-10 and CIFAR-100. We use Kaiming initialization for the weights and set all bias terms as zero. We run SGD with momentum $0.9$ and weight decay $1\mathrm{e}{-4}$ for $100$ epochs. For the learning rate, we start from $0.01$ and reduce it by a factor of $0.1$ at the $60$-th epoch and $90$-th epoch.

We linearly interpolate using $50$ evenly spaced points between the network at initialization and the network at the end of training. We evaluate error and loss on the train set. For each setting, we repeat the experiments three times from different random seeds and plot the mean and deviation. 

Note in Figure~\ref{fig:contrast}, to contrast the convex curve and plateau curve, we have used FCN4 with standard initialization on MNIST, and VGG-16 with $0.001$ initialization on CIFAR-10.

Our code is based on the implementation from~\cite{lucas2021analyzing}. Each trial of our experiment can be finished on an Nvidia Tesla P100 within one hour. 

\subsection{All bias v.s. last bias v.s. no bias}
\begin{figure}[H]
        \begin{subfigure}[b]{0.25\textwidth}
                \centering
                \includegraphics[width=\linewidth]{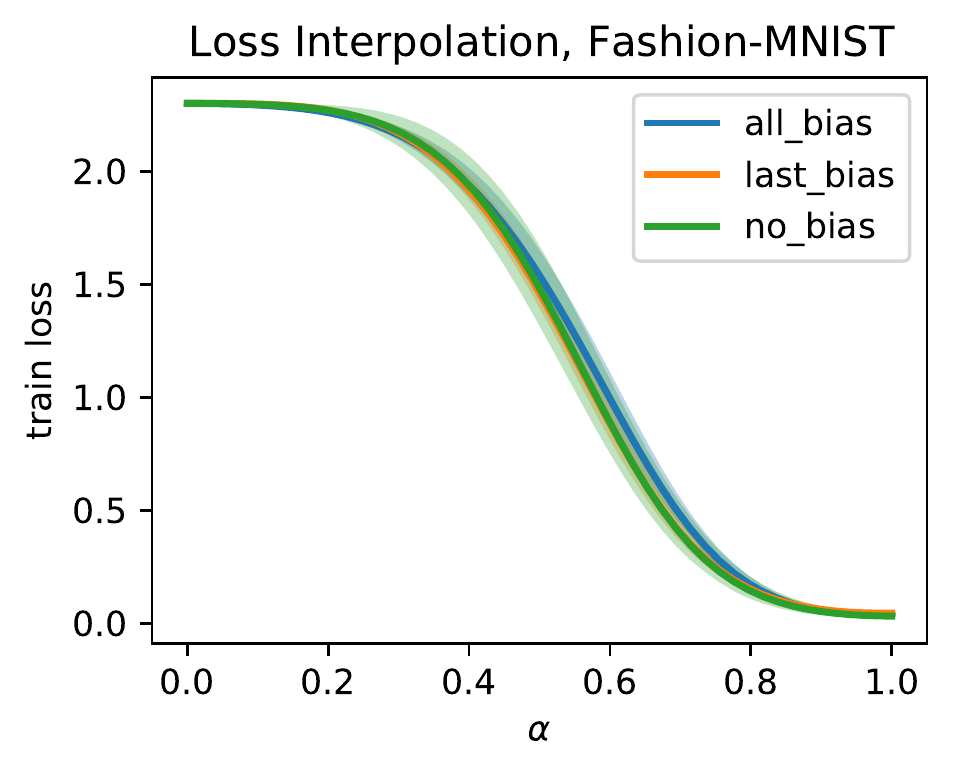}
        \end{subfigure}%
        \begin{subfigure}[b]{0.25\textwidth}
                \centering
                \includegraphics[width=\linewidth]{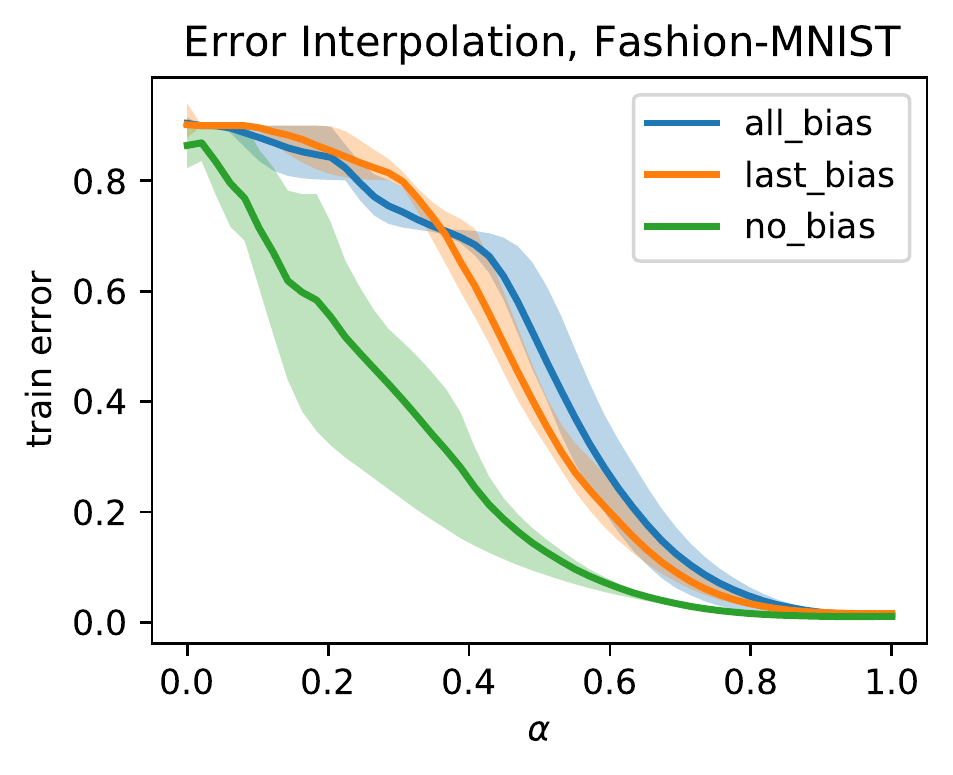}
        \end{subfigure}%
        \begin{subfigure}[b]{0.25\textwidth}
                \centering
                \includegraphics[width=\linewidth]{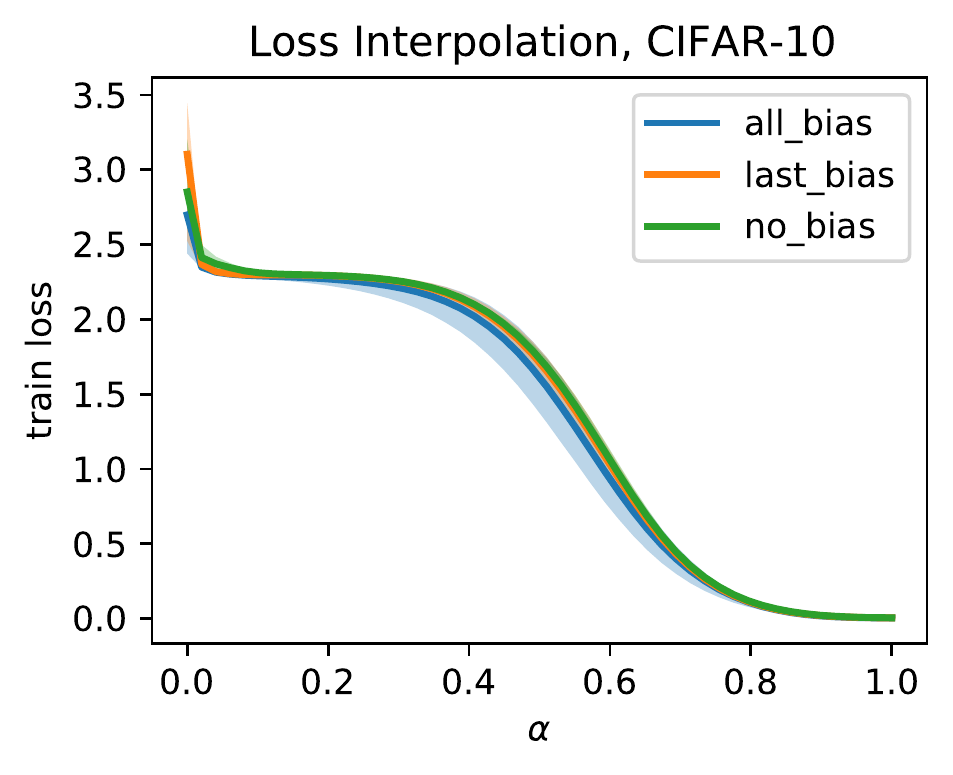}
        \end{subfigure}%
        \begin{subfigure}[b]{0.25\textwidth}
                \centering
                \includegraphics[width=\linewidth]{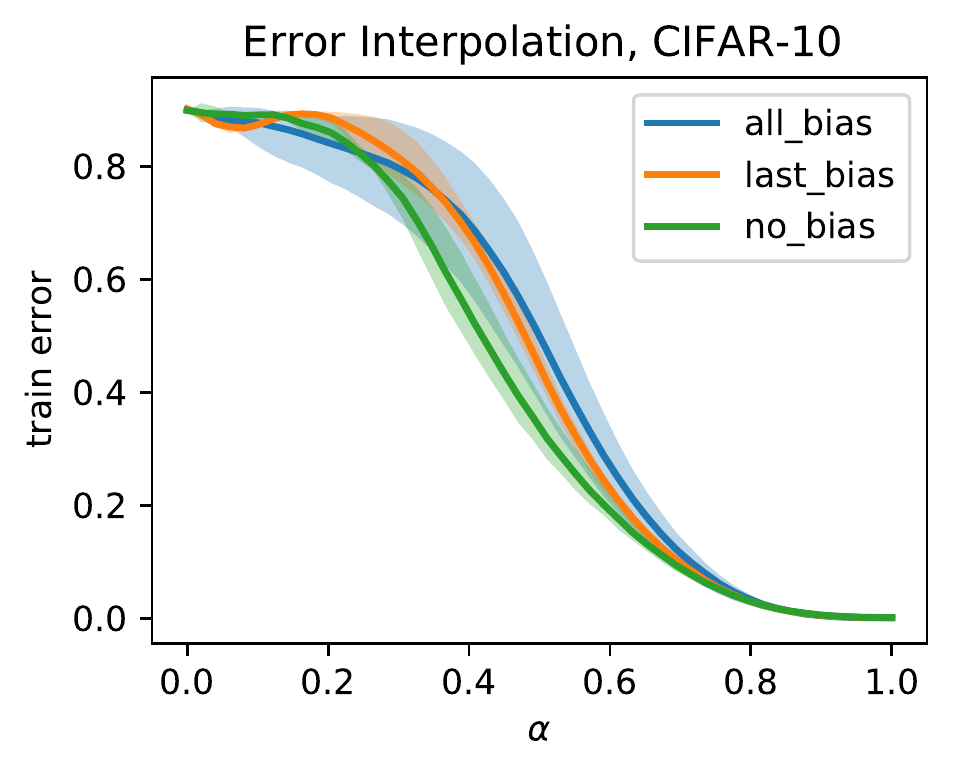}
        \end{subfigure}
        \caption{Comparison between networks with all bias, last bias and no bias on Fashion-MNIST and CIFAR-10.}\label{fig:appendix_bias}
\end{figure}
Figure~\ref{fig:appendix_bias} shows that on both Fashion-MNIST and CIFAR-10, having bias on the last layer or on all layers can create longer plateau in error curve, while does not significantly affect the loss curve.

\subsection{Normal interpolation v.s. homogeneous interpolation.}

\begin{figure}[H]
        \begin{subfigure}[b]{0.25\textwidth}
                \centering
                \includegraphics[width=\linewidth]{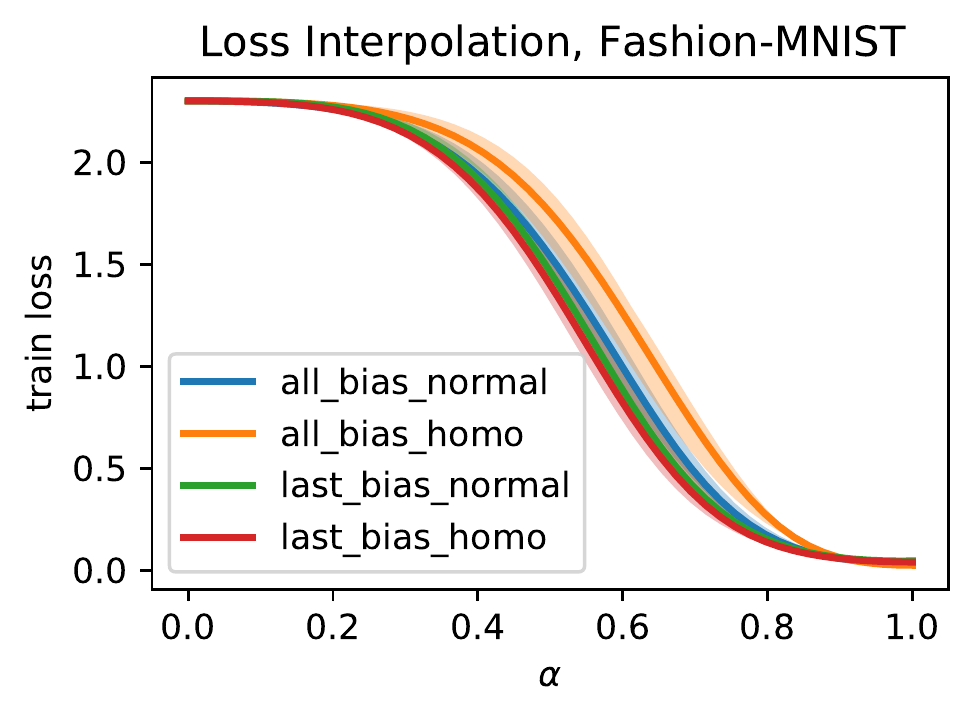}
        \end{subfigure}%
        \begin{subfigure}[b]{0.25\textwidth}
                \centering
                \includegraphics[width=\linewidth]{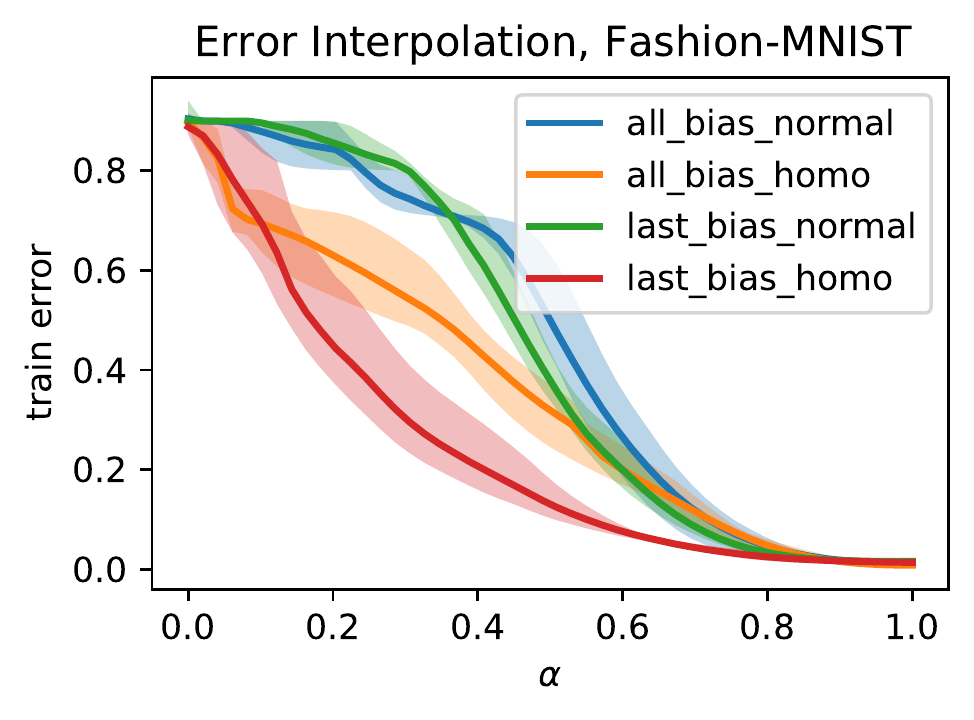}
        \end{subfigure}%
        \begin{subfigure}[b]{0.25\textwidth}
                \centering
                \includegraphics[width=\linewidth]{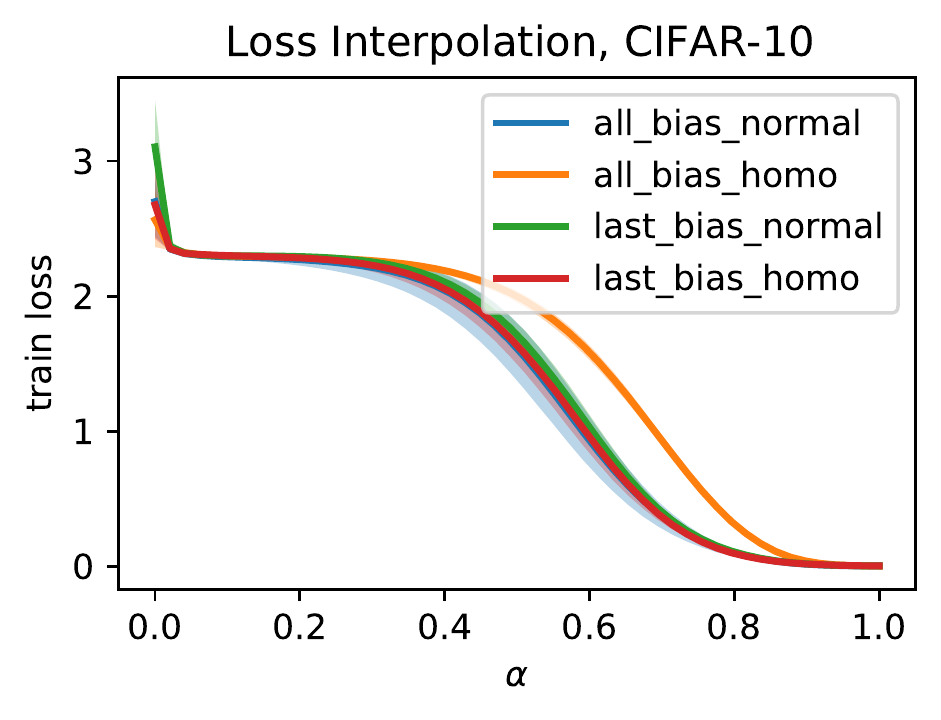}
        \end{subfigure}%
        \begin{subfigure}[b]{0.25\textwidth}
                \centering
                \includegraphics[width=\linewidth]{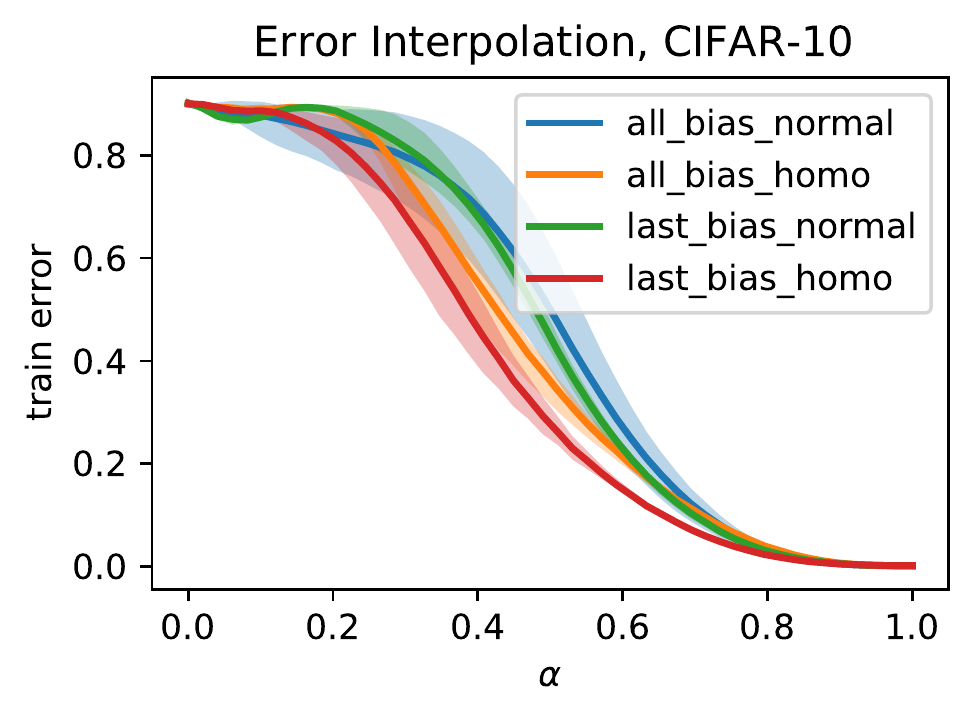}
        \end{subfigure}
        \caption{Comparison between networks with normal interpolation and homogeneous interpolation on bias on Fashion-MNIST and CIFAR-10.}\label{fig:appendix_special}
\end{figure}
Figure~\ref{fig:appendix_special} shows that on both Fashion-MNIST and CIFAR-10, applying homogeneous interpolation on biases can significantly reduce the plateau on error interpolation curve.

\subsection{Different initializations}
\begin{figure}[H]
        \begin{subfigure}[b]{0.25\textwidth}
                \centering
                \includegraphics[width=\linewidth]{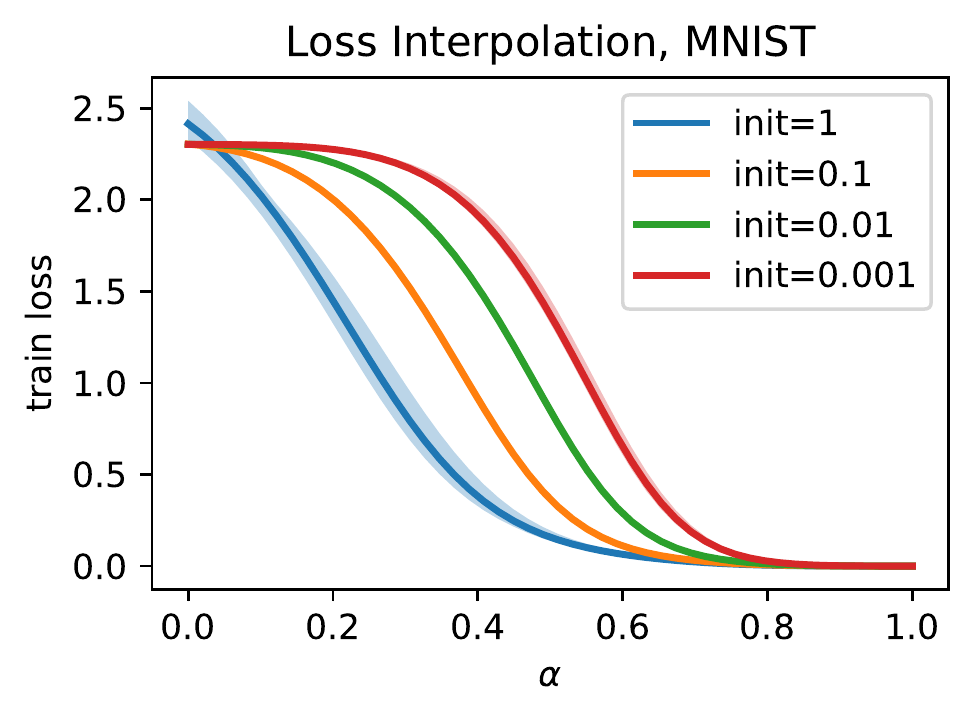}
        \end{subfigure}%
        \begin{subfigure}[b]{0.25\textwidth}
                \centering
                \includegraphics[width=\linewidth]{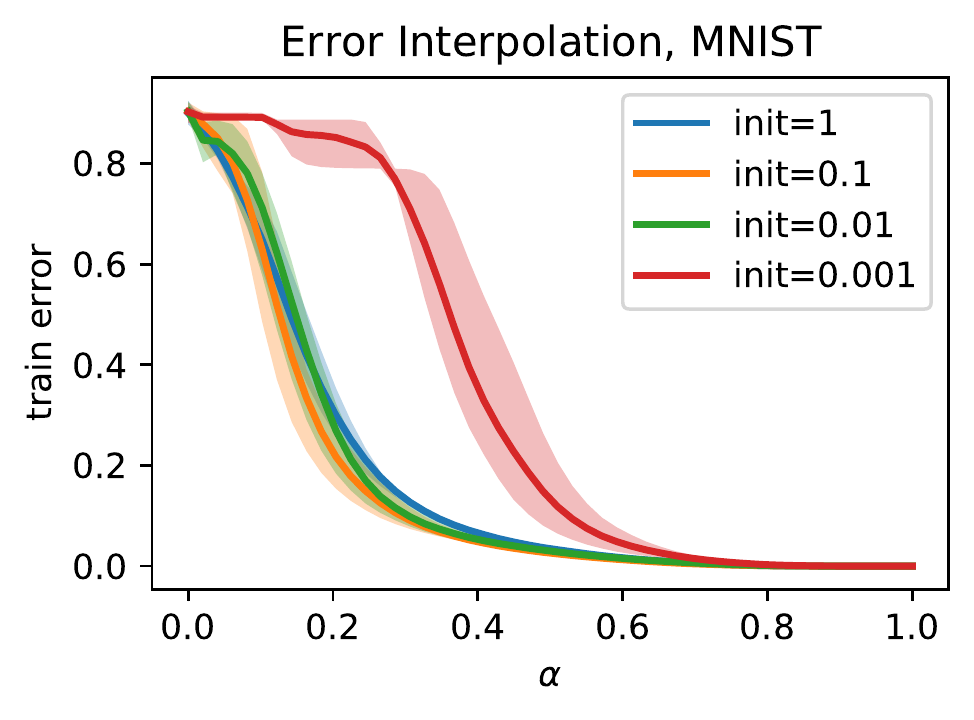}
        \end{subfigure}%
        \begin{subfigure}[b]{0.25\textwidth}
                \centering
                \includegraphics[width=\linewidth]{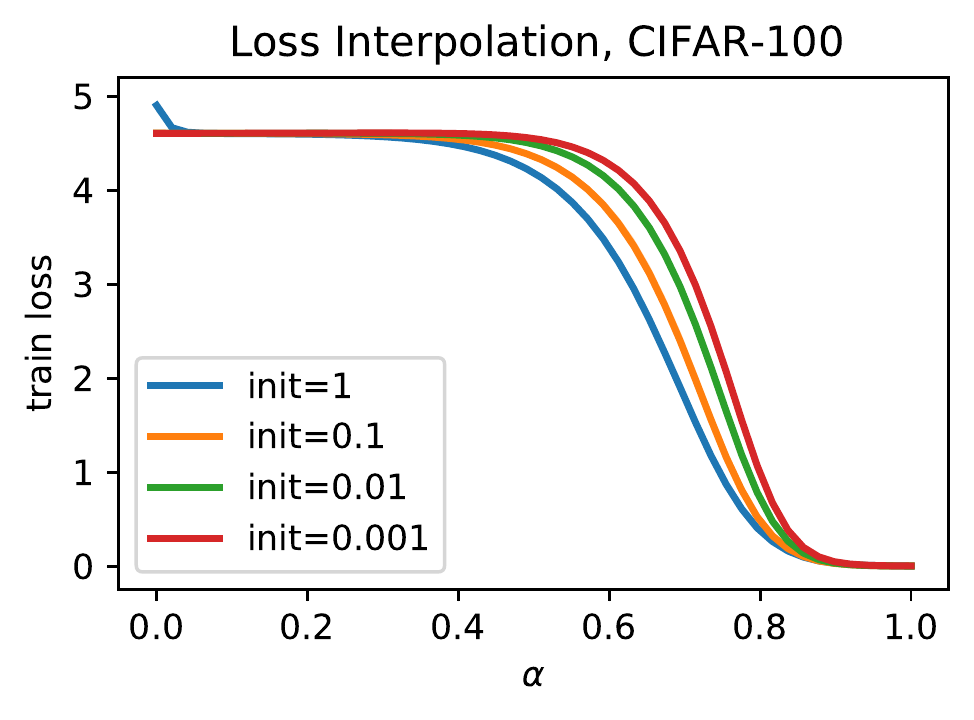}
        \end{subfigure}%
        \begin{subfigure}[b]{0.25\textwidth}
                \centering
                \includegraphics[width=\linewidth]{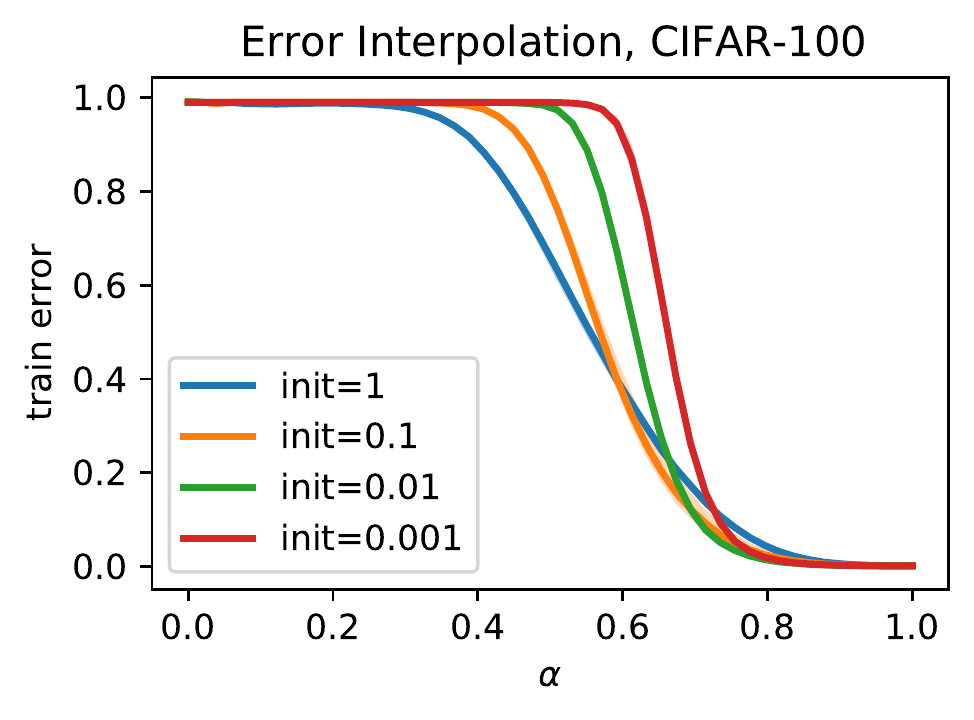}
        \end{subfigure}
        \caption{Comparison between networks with different initialization scales on MNIST and CIFAR-100 with last bias.}\label{fig:appendix_init_mnist_last}
\end{figure}

\begin{figure}[H]
        \begin{subfigure}[b]{0.25\textwidth}
                \centering
                \includegraphics[width=\linewidth]{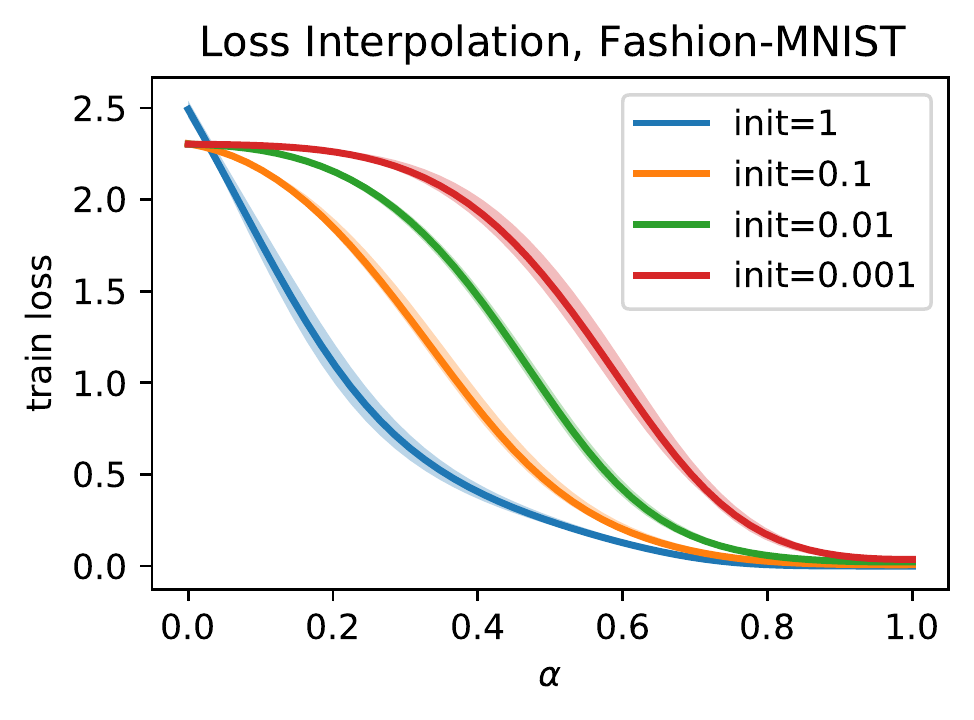}
        \end{subfigure}%
        \begin{subfigure}[b]{0.25\textwidth}
                \centering
                \includegraphics[width=\linewidth]{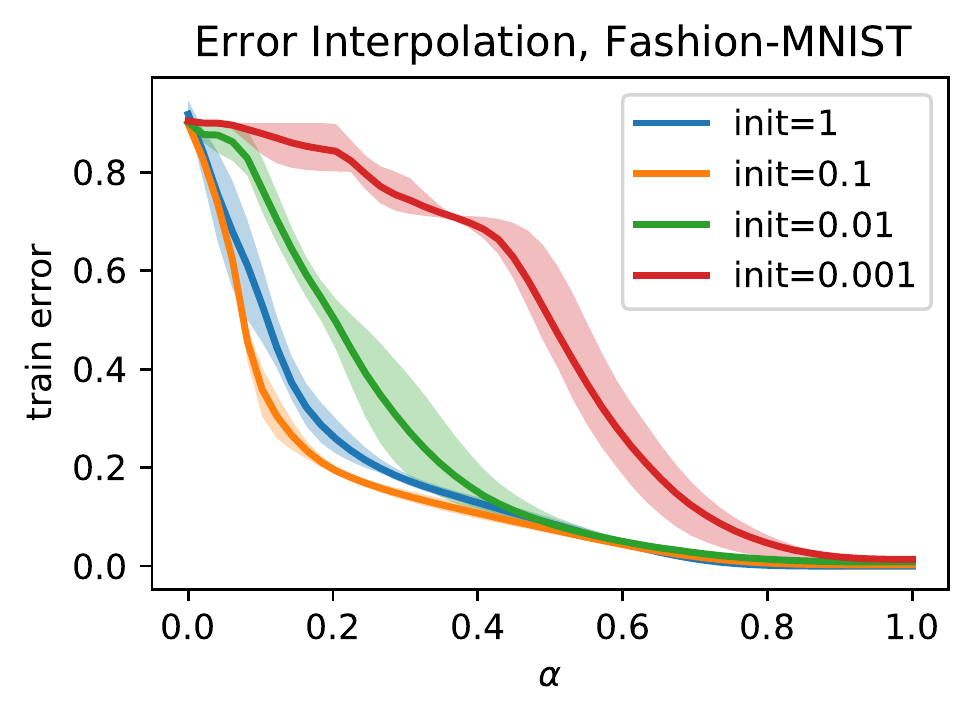}
        \end{subfigure}%
        \begin{subfigure}[b]{0.25\textwidth}
                \centering
                \includegraphics[width=\linewidth]{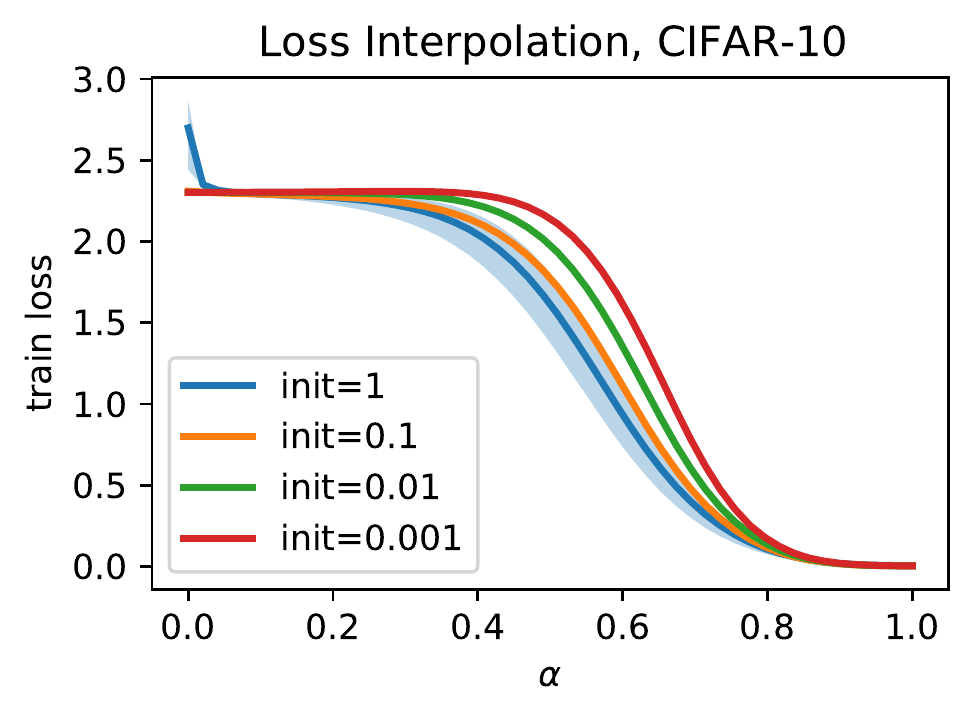}
        \end{subfigure}%
        \begin{subfigure}[b]{0.25\textwidth}
                \centering
                \includegraphics[width=\linewidth]{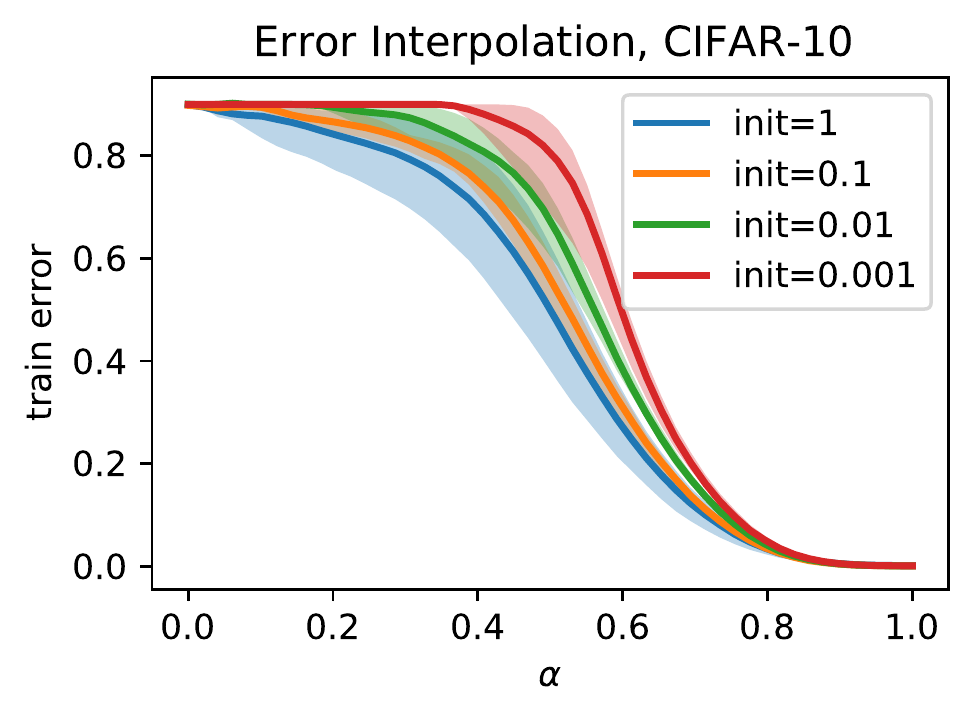}
        \end{subfigure}
        \caption{Comparison between networks with different initialization scales on Fashion-MNIST and CIFAR-10 with all bias.}\label{fig:appendix_init_fmnist_all}
\end{figure}

\begin{figure}[H]
        \begin{subfigure}[b]{0.25\textwidth}
                \centering
                \includegraphics[width=\linewidth]{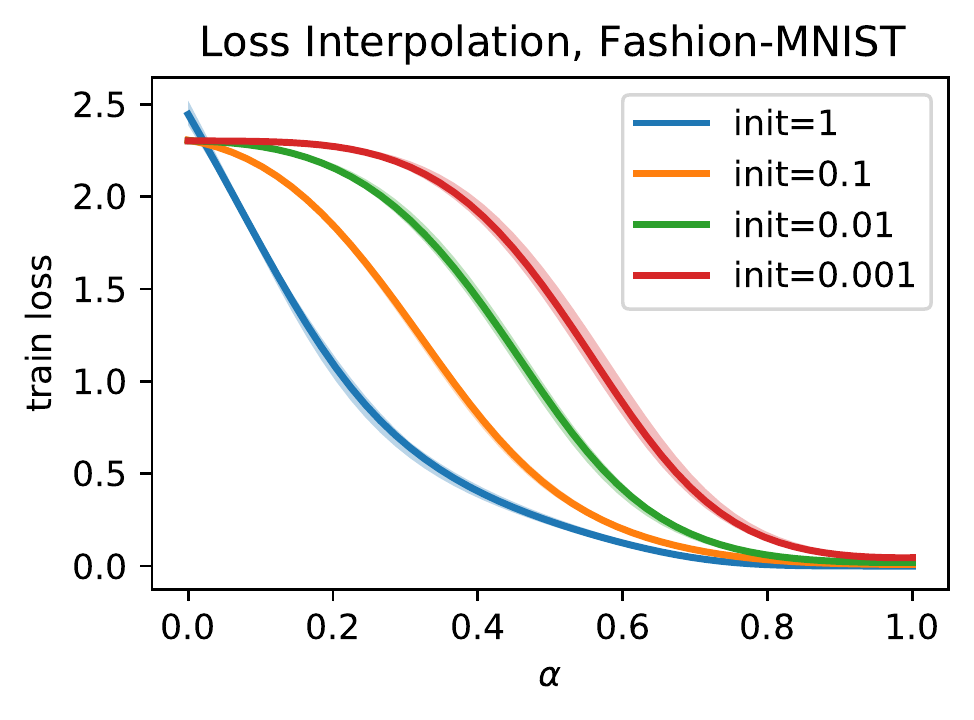}
        \end{subfigure}%
        \begin{subfigure}[b]{0.25\textwidth}
                \centering
                \includegraphics[width=\linewidth]{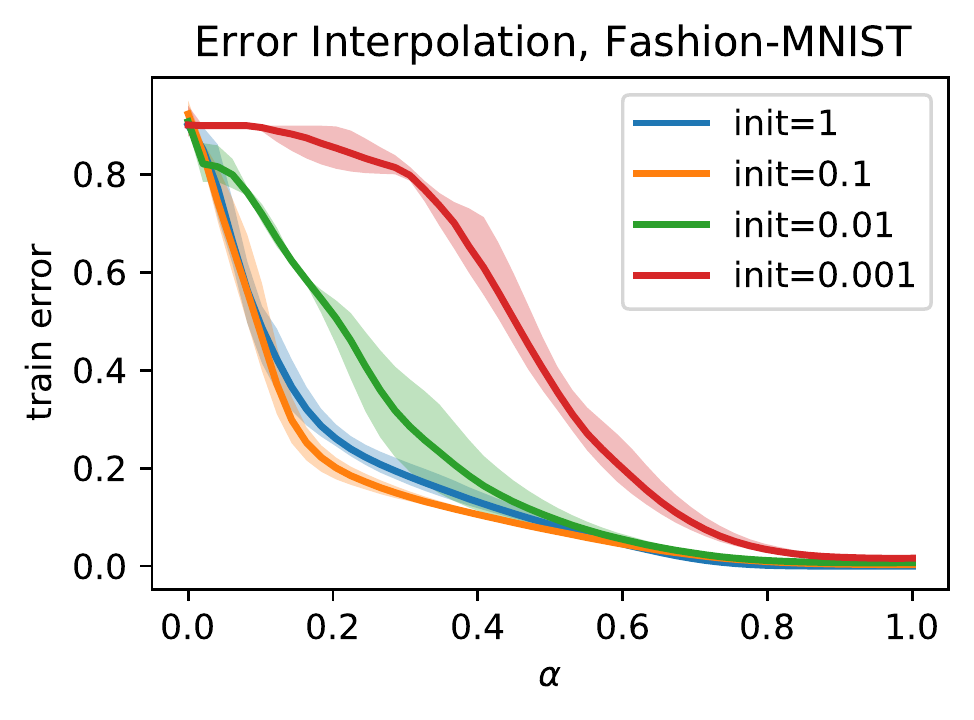}
        \end{subfigure}%
        \begin{subfigure}[b]{0.25\textwidth}
                \centering
                \includegraphics[width=\linewidth]{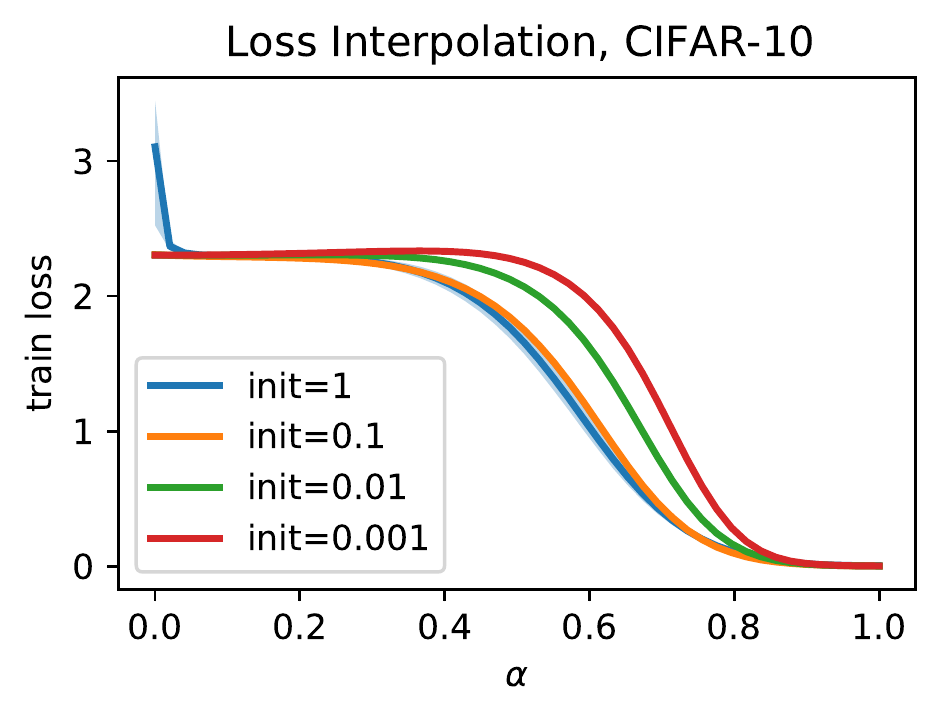}
        \end{subfigure}%
        \begin{subfigure}[b]{0.25\textwidth}
                \centering
                \includegraphics[width=\linewidth]{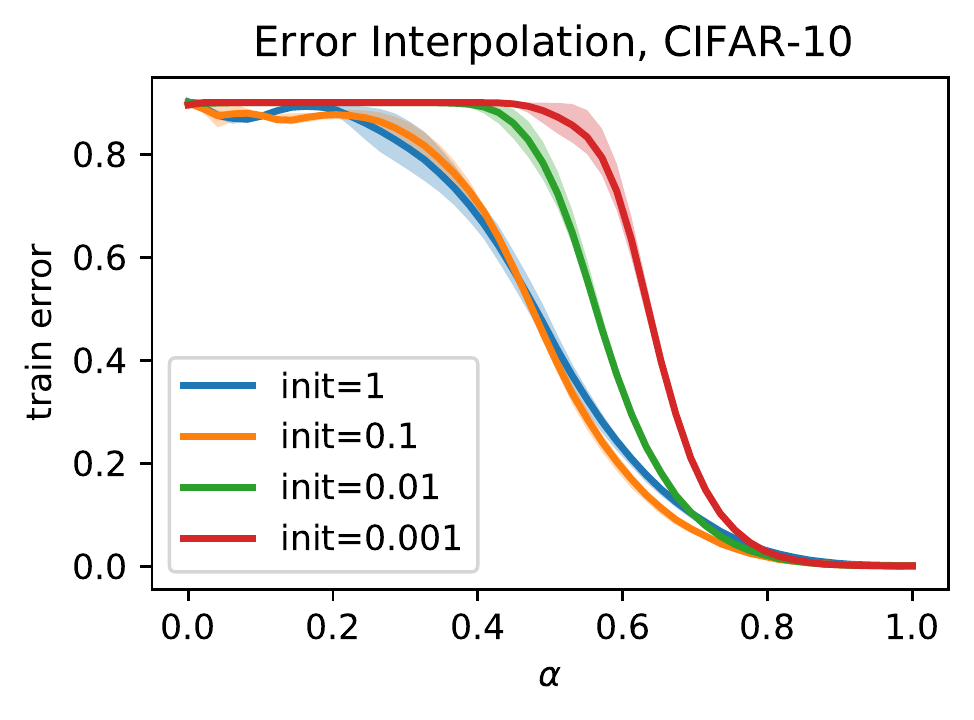}
        \end{subfigure}
        \caption{Comparison between networks with different initialization scales on Fashion-MNIST and CIFAR-10 with last bias.}\label{fig:appendix_init_fmnist_last}
\end{figure}

Smaller initialization creates longer plateau in both error and loss curves. See Figure~\ref{fig:appendix_init_mnist_last} for MNIST, CIFAR-100 with last bias; see Figure~\ref{fig:appendix_init_fmnist_all} for Fashion-MNIST, CIFAR-10 with all bias;  see Figure~\ref{fig:appendix_init_fmnist_last} for Fashion-MNIST, CIFAR-10 with last bias.

\subsection{Different depths} 

\begin{figure}[H]
        \begin{subfigure}[b]{0.25\textwidth}
                \centering
                \includegraphics[width=\linewidth]{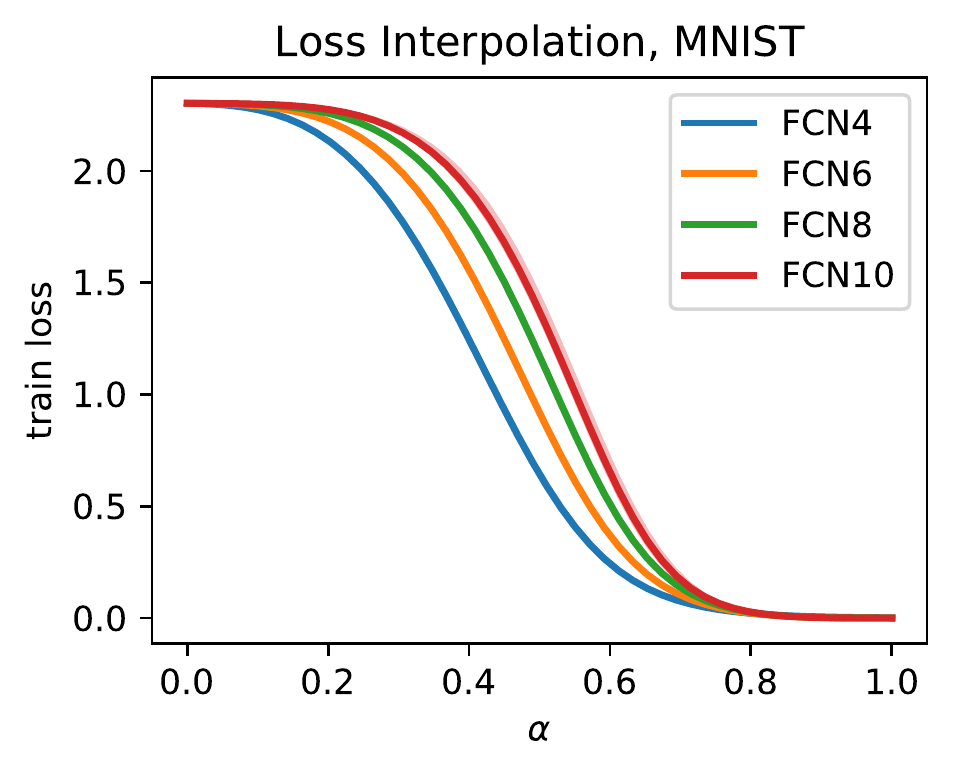}
        \end{subfigure}%
        \begin{subfigure}[b]{0.25\textwidth}
                \centering
                \includegraphics[width=\linewidth]{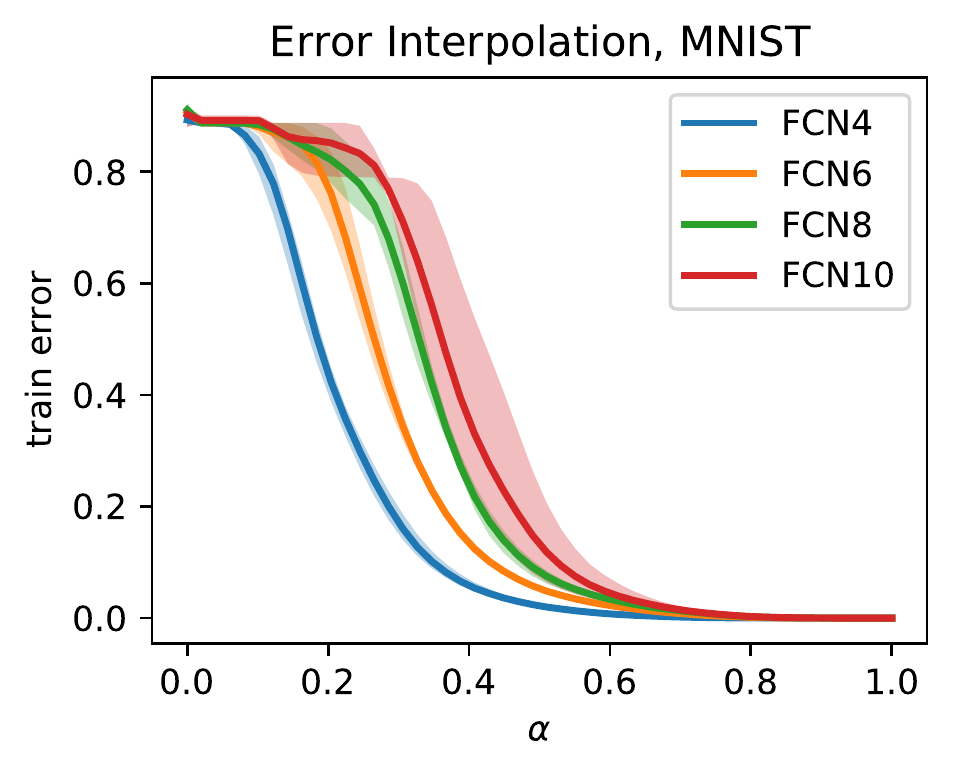}
        \end{subfigure}%
        \begin{subfigure}[b]{0.25\textwidth}
                \centering
                \includegraphics[width=\linewidth]{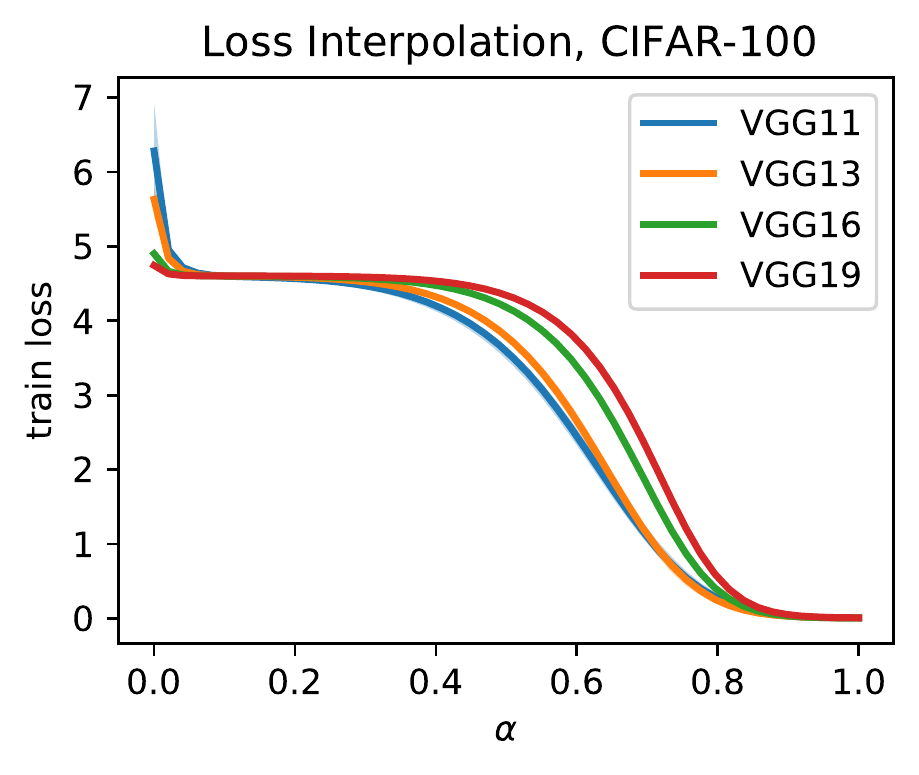}
        \end{subfigure}%
        \begin{subfigure}[b]{0.25\textwidth}
                \centering
                \includegraphics[width=\linewidth]{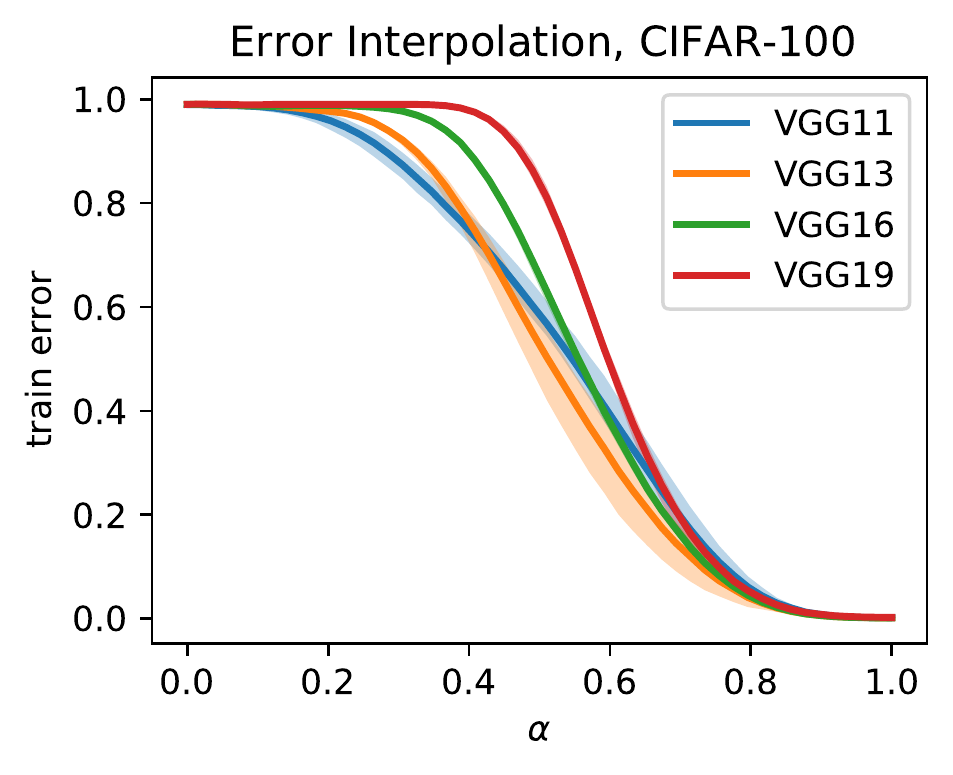}
        \end{subfigure}
        \caption{Comparison between networks with different depth on MNIST and CIFAR-100 with last bias.}\label{fig:appendix_depth_mnist_last}
\end{figure}

\begin{figure}[H]
        \begin{subfigure}[b]{0.25\textwidth}
                \centering
                \includegraphics[width=\linewidth]{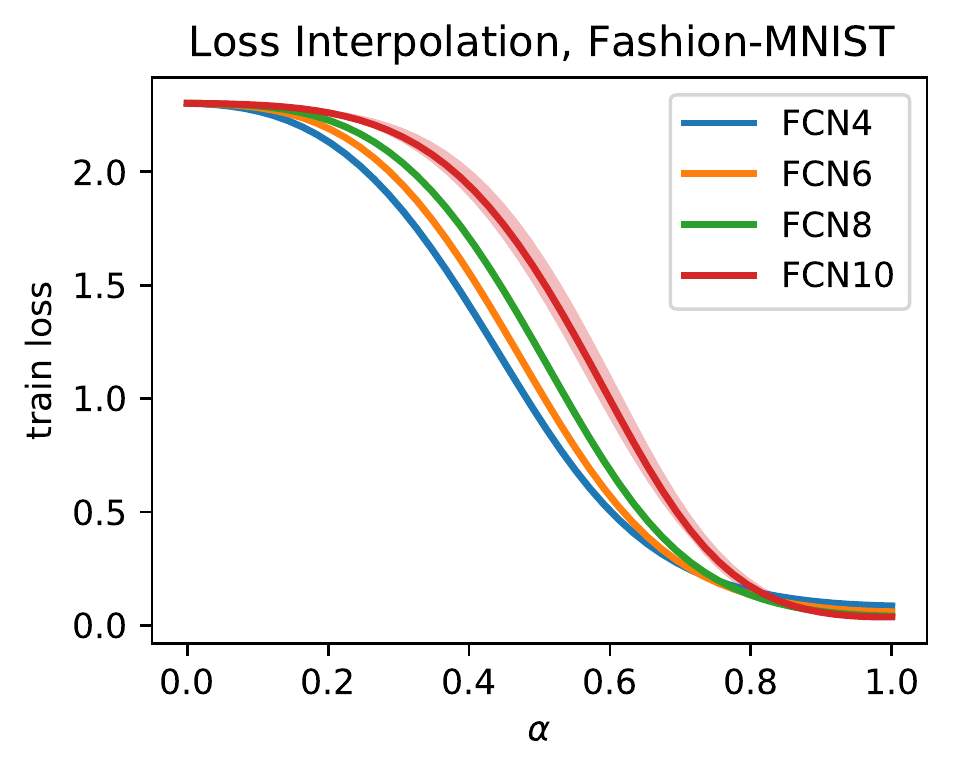}
        \end{subfigure}%
        \begin{subfigure}[b]{0.25\textwidth}
                \centering
                \includegraphics[width=\linewidth]{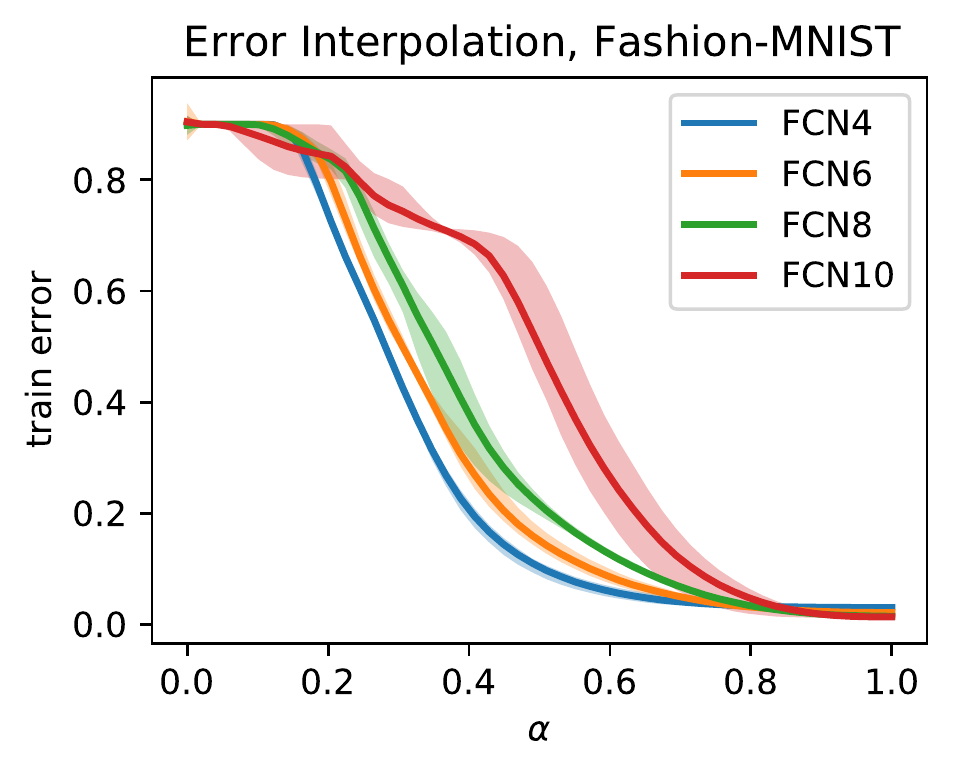}
        \end{subfigure}%
        \begin{subfigure}[b]{0.25\textwidth}
                \centering
                \includegraphics[width=\linewidth]{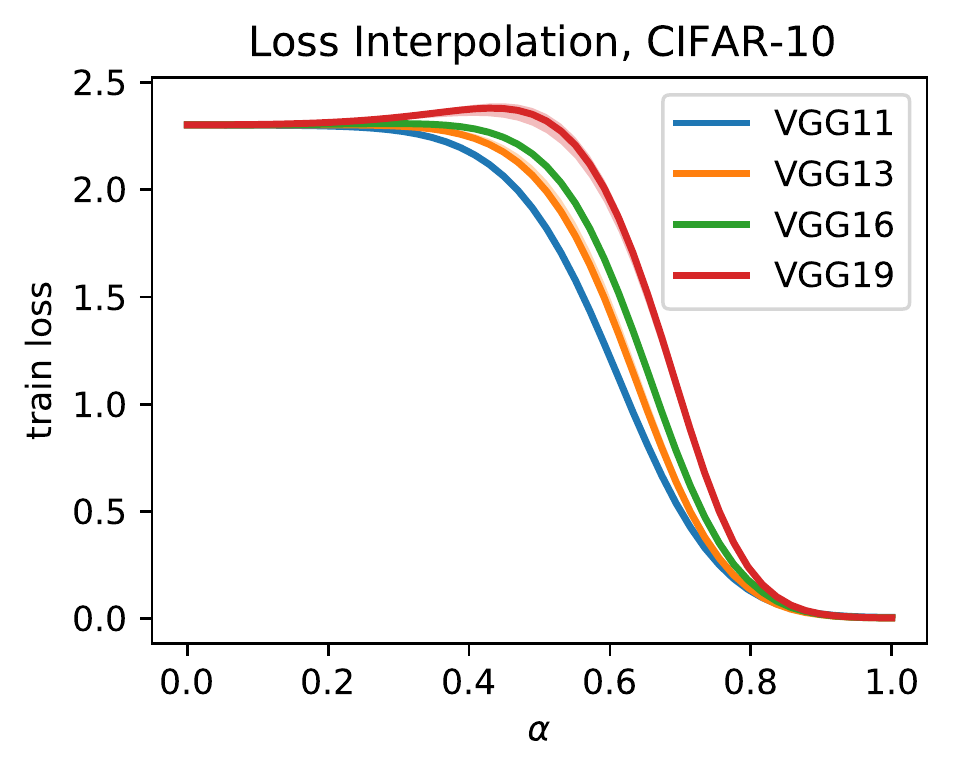}
        \end{subfigure}%
        \begin{subfigure}[b]{0.25\textwidth}
                \centering
                \includegraphics[width=\linewidth]{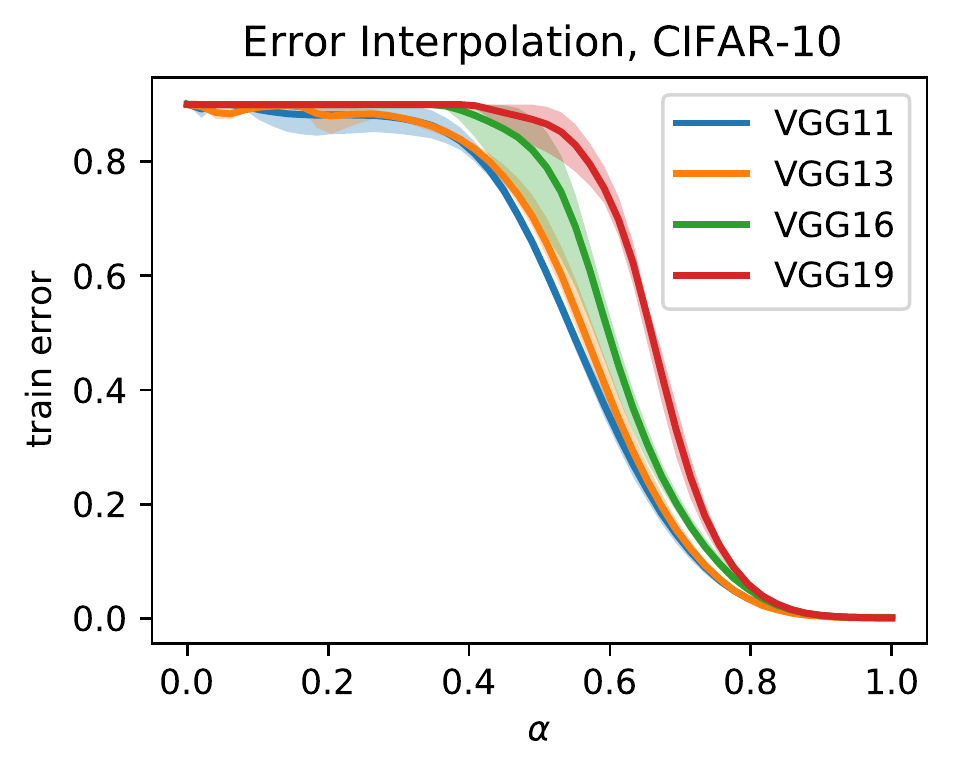}
        \end{subfigure}
        \caption{Comparison between networks with different depth on Fashion-MNIST and CIFAR-10 with all bias. We use 0.001 initialization scale for VGG-16 on CIFAR-10.}\label{fig:appendix_depth_fmnist_all}
\end{figure}

\begin{figure}[H]
        \begin{subfigure}[b]{0.25\textwidth}
                \centering
                \includegraphics[width=\linewidth]{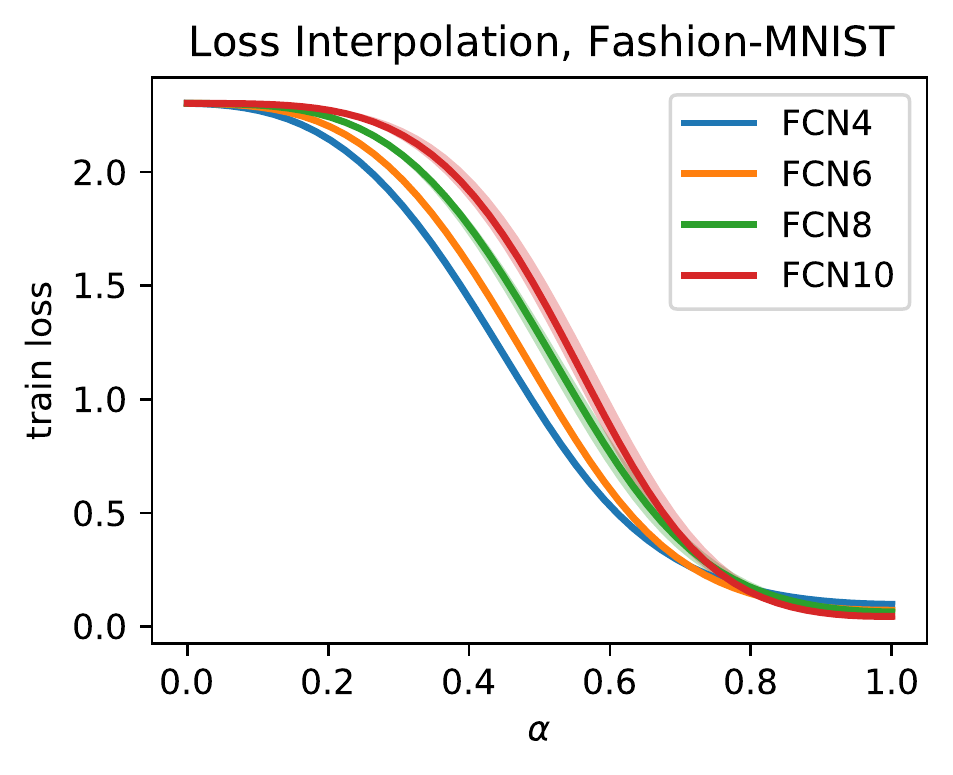}
        \end{subfigure}%
        \begin{subfigure}[b]{0.25\textwidth}
                \centering
                \includegraphics[width=\linewidth]{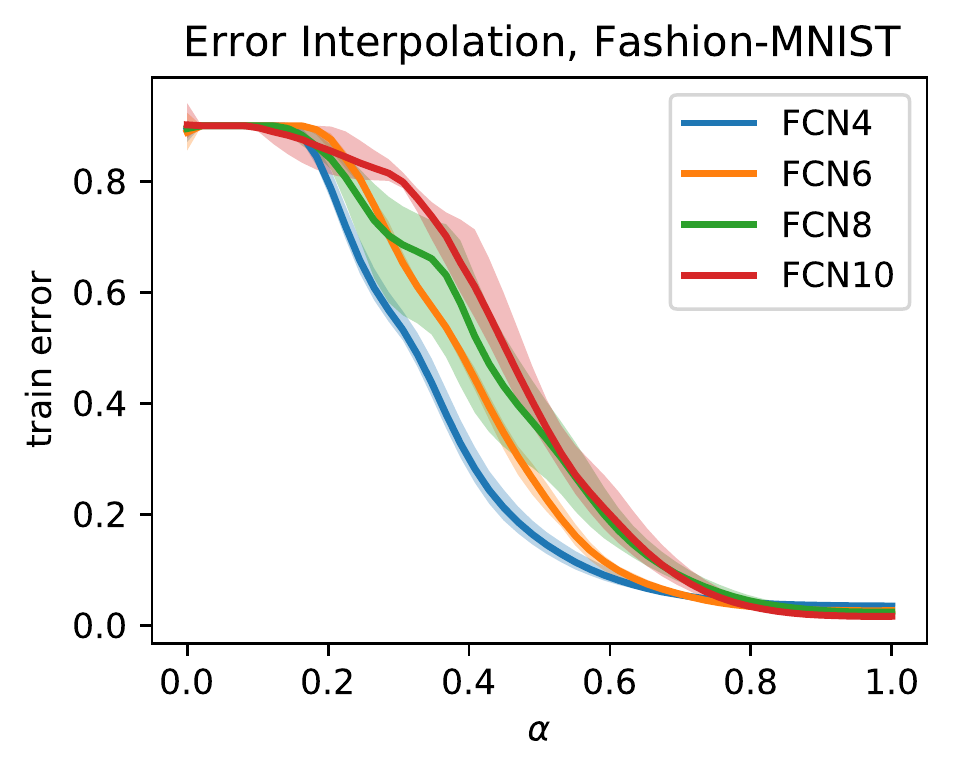}
        \end{subfigure}%
        \begin{subfigure}[b]{0.25\textwidth}
                \centering
                \includegraphics[width=\linewidth]{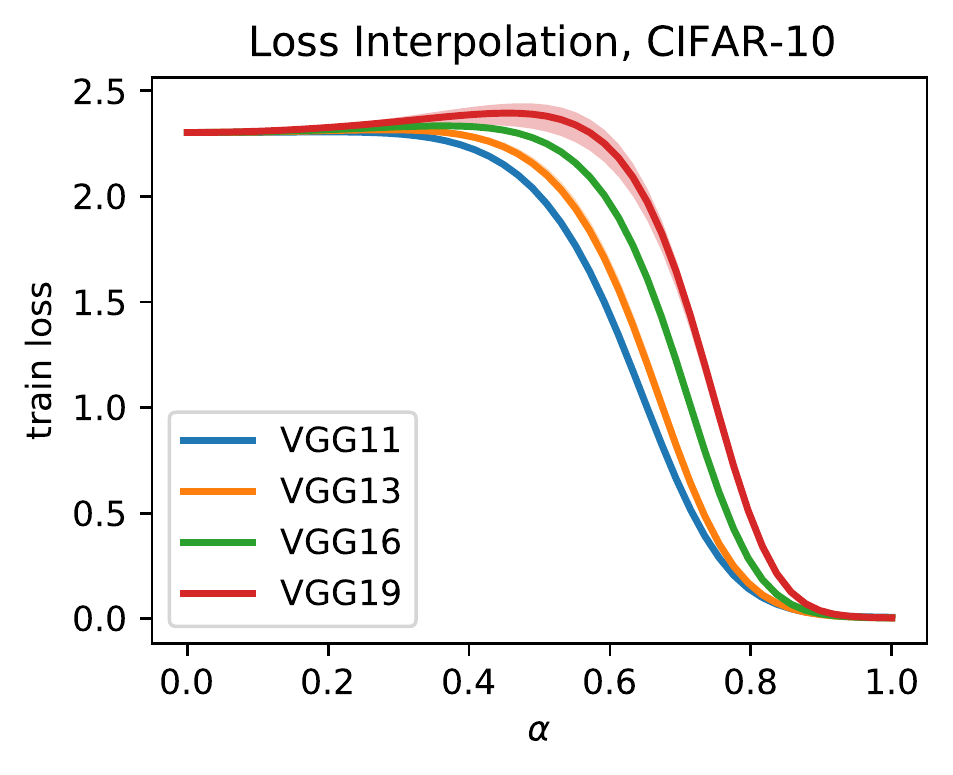}
        \end{subfigure}%
        \begin{subfigure}[b]{0.25\textwidth}
                \centering
                \includegraphics[width=\linewidth]{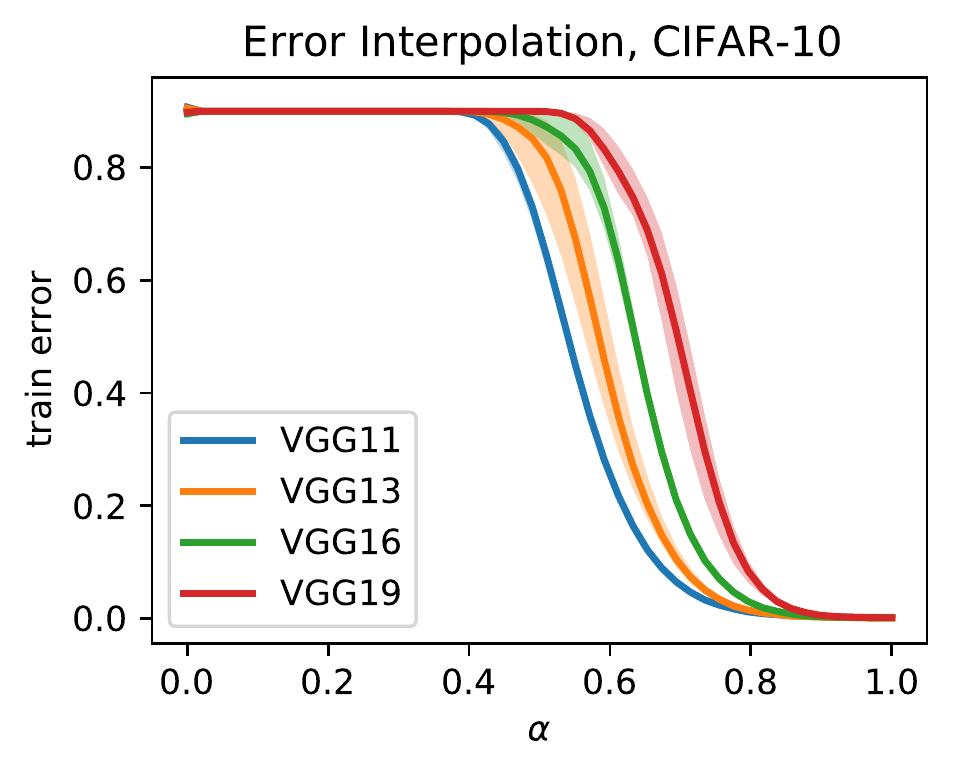}
        \end{subfigure}
        \caption{Comparison between networks with different depth on Fashion-MNIST and CIFAR-10 with last bias. We use 0.001 initialization scale for VGG-16 on CIFAR-10.}\label{fig:appendix_depth_fmnist_last}
\end{figure}
Deeper networks create longer plateau in both error and loss curves. See Figure~\ref{fig:appendix_depth_mnist_last} for MNIST, CIFAR-100 with last bias; see Figure~\ref{fig:appendix_depth_fmnist_all} for Fashion-MNIST, CIFAR-10 with all bias;  see Figure~\ref{fig:appendix_depth_fmnist_last} for Fashion-MNIST, CIFAR-10 with last bias.

\subsection{Bias dynamics}\label{sec:counter_example}

\begin{figure}[H]
        \begin{subfigure}[b]{0.25\textwidth}
                \centering
                \includegraphics[width=\linewidth]{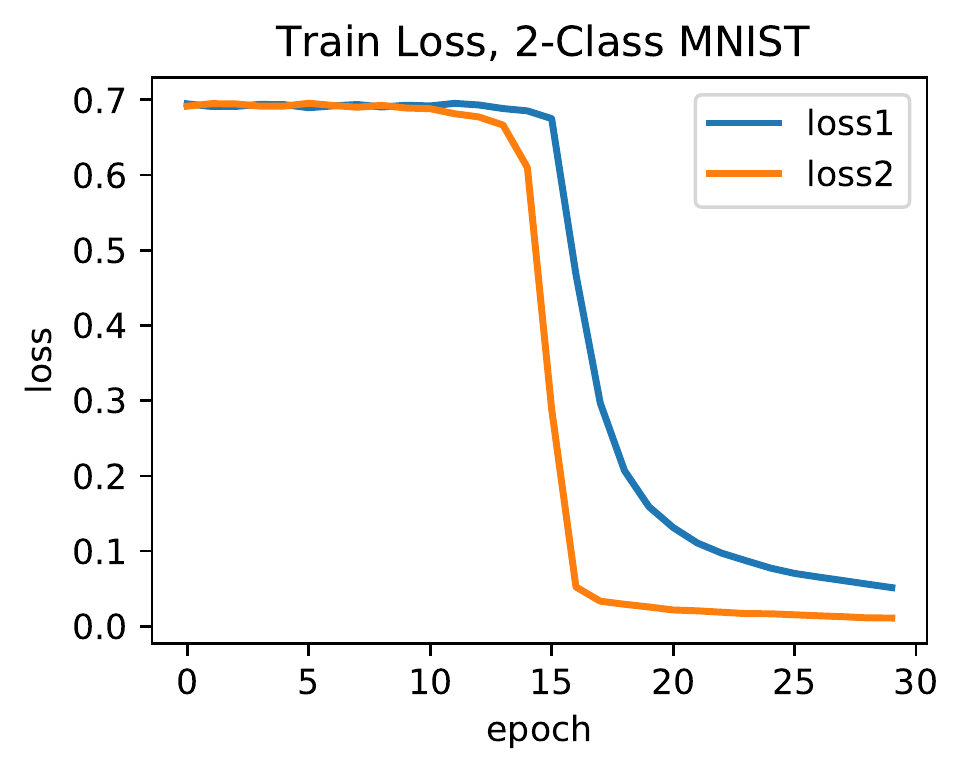}
        \end{subfigure}%
        \begin{subfigure}[b]{0.25\textwidth}
                \centering
                \includegraphics[width=\linewidth]{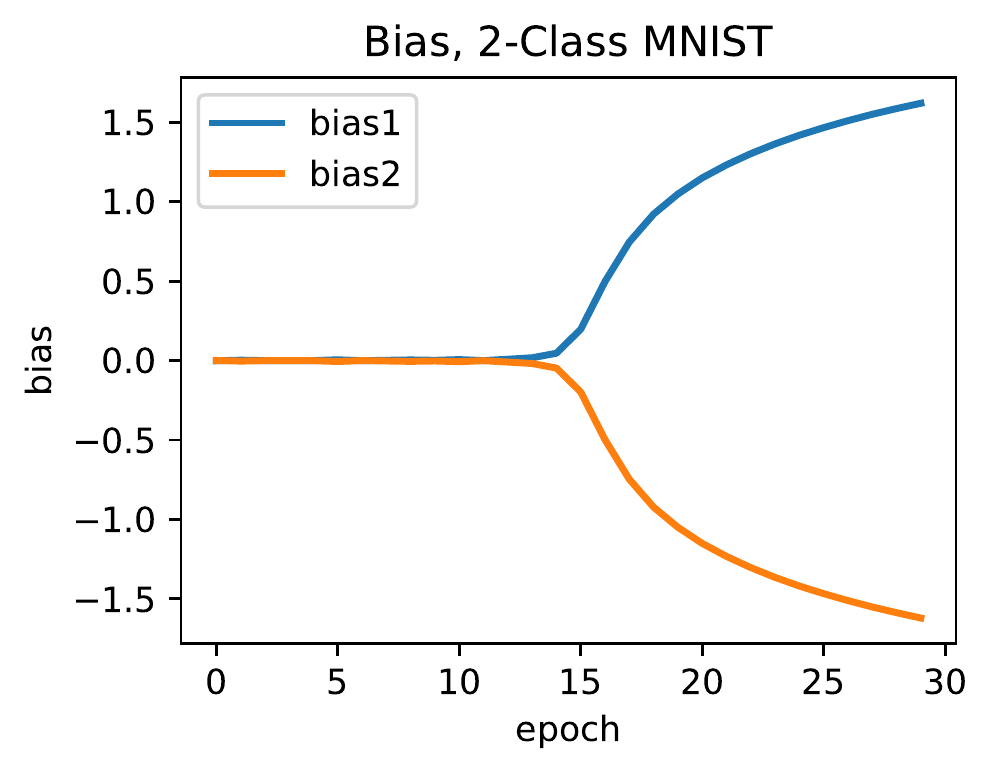}
        \end{subfigure}%
        \begin{subfigure}[b]{0.25\textwidth}
                \centering
                \includegraphics[width=\linewidth]{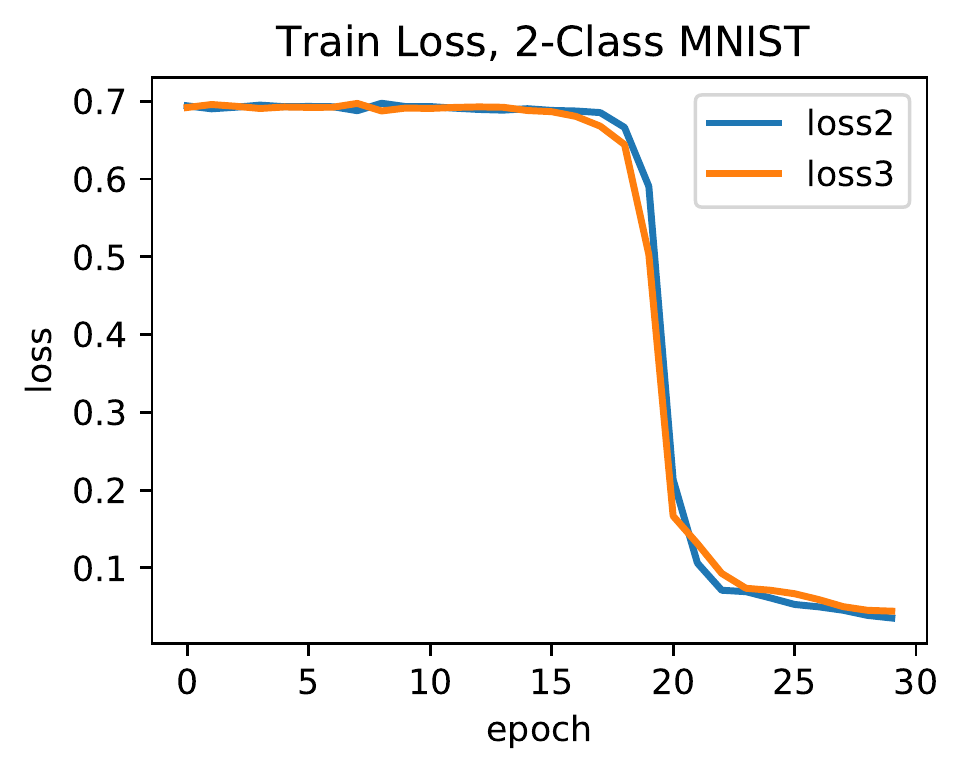}
        \end{subfigure}%
        \begin{subfigure}[b]{0.25\textwidth}
                \centering
                \includegraphics[width=\linewidth]{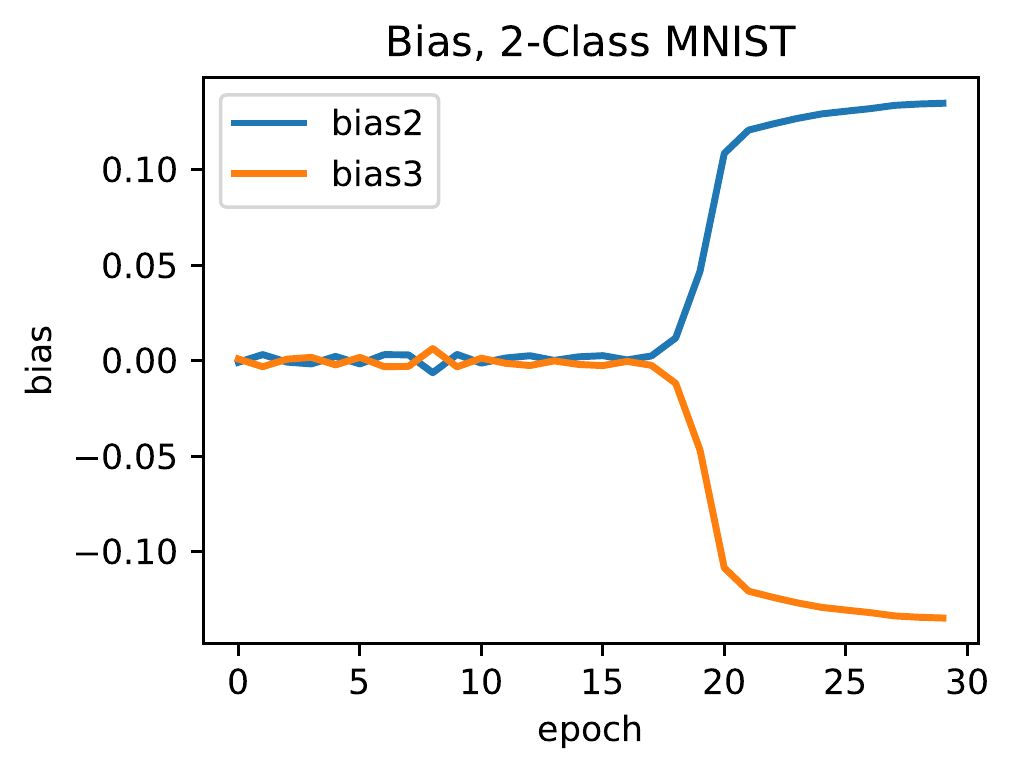}
        \end{subfigure}
        \caption{Train loss for each class and bias term dynamics on MNIST$\cur{1,2}$ and MNIST$\cur{2,3}$.}\label{fig:appendix_2_class}
\end{figure}
In Figure~\ref{fig:appendix_2_class}, we give two more examples on two-class MNIST in which the later learned class has larger bias. 

\begin{figure}[H]
        \begin{subfigure}[b]{0.25\textwidth}
                \centering
                \includegraphics[width=\linewidth]{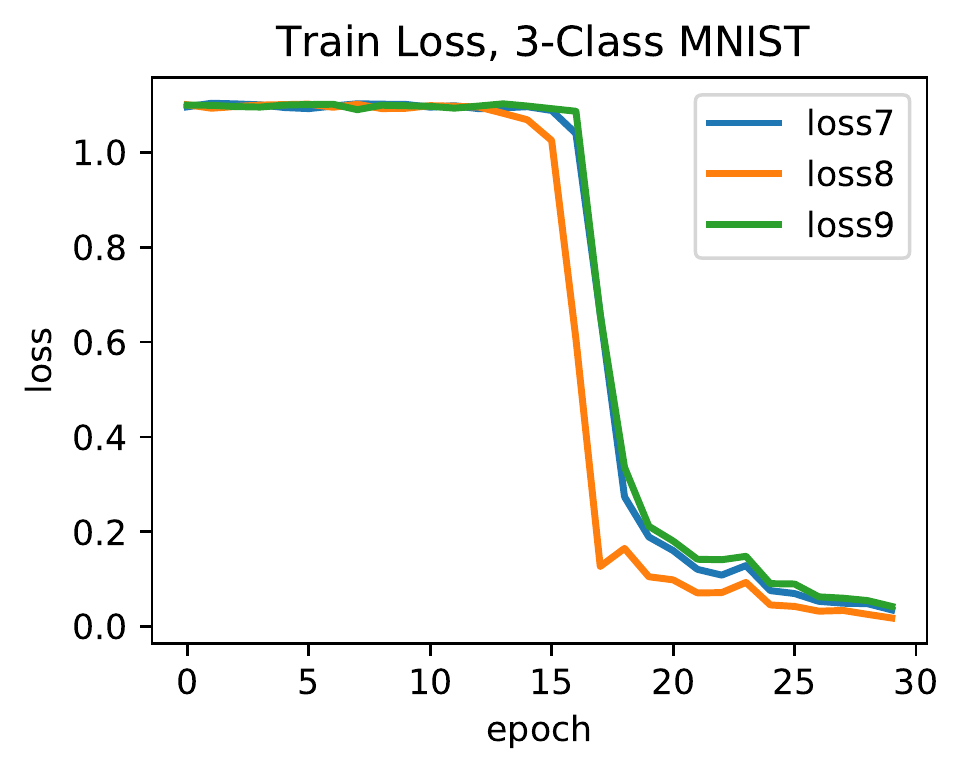}
        \end{subfigure}%
        \begin{subfigure}[b]{0.25\textwidth}
                \centering
                \includegraphics[width=\linewidth]{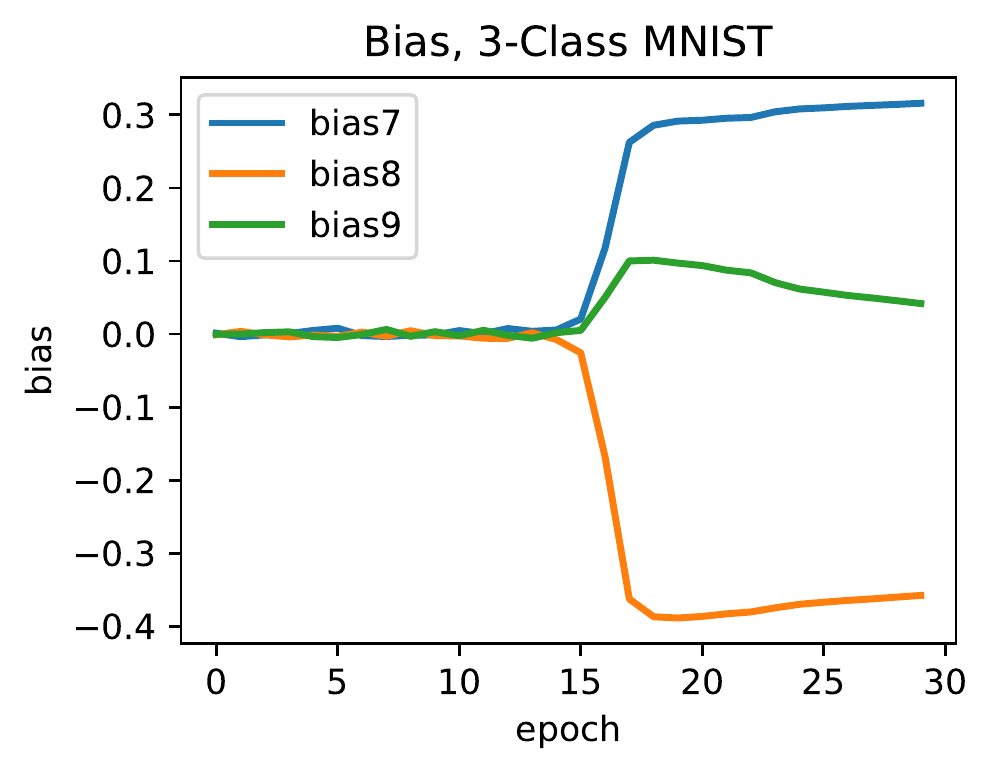}
        \end{subfigure}%
        \begin{subfigure}[b]{0.25\textwidth}
                \centering
                \includegraphics[width=\linewidth]{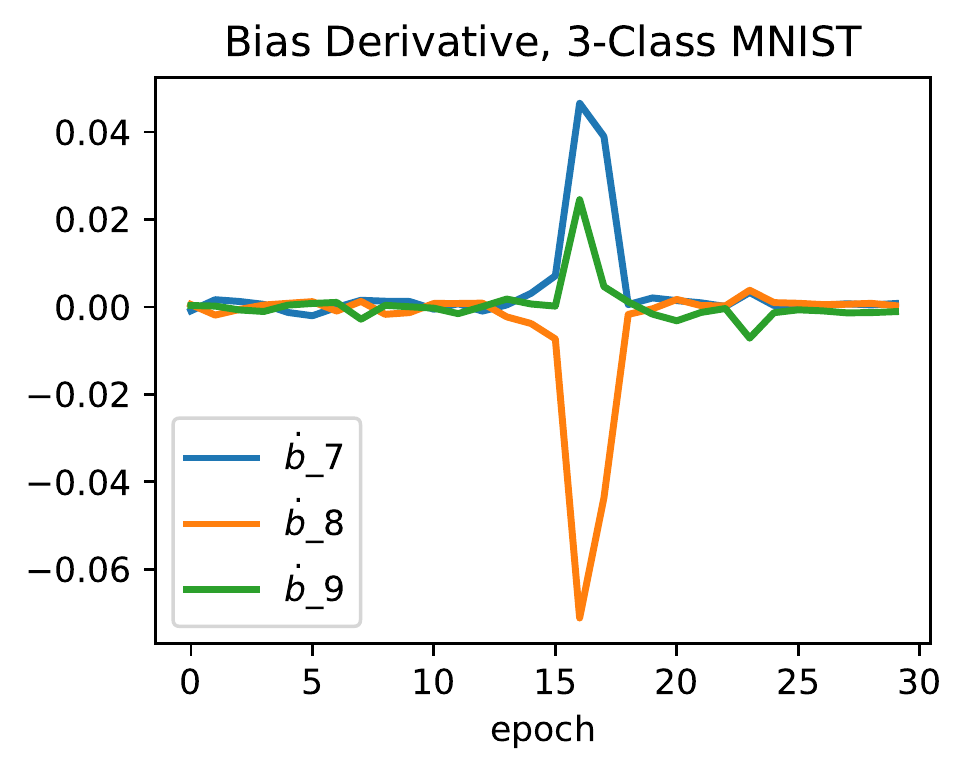}
        \end{subfigure}%
        \begin{subfigure}[b]{0.25\textwidth}
                \centering
                \includegraphics[width=\linewidth]{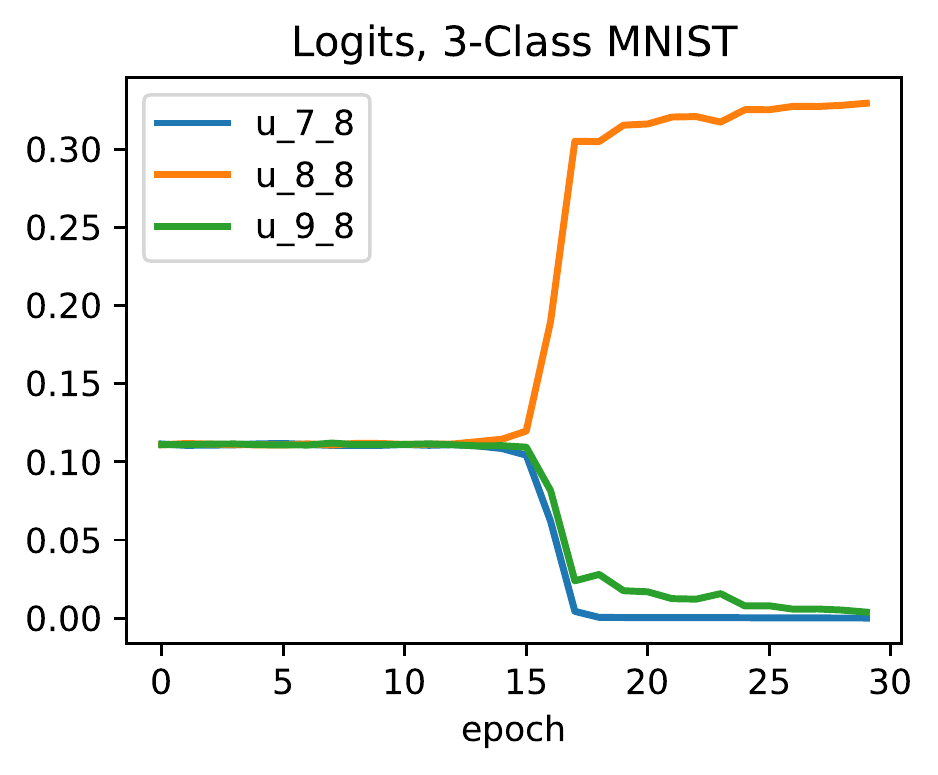}
        \end{subfigure}
        \caption{Train loss for each class and bias term dynamics on MNIST$\cur{7,8,9}$.}\label{fig:counter_example}
\end{figure}

In Figure~\ref{fig:counter_example}, although class 9 is learned last, class 7 gets the largest bias after training. Let $S$ be the set of all samples for number 7,8,9 and let $S_7,S_9,S_9$ be the set of samples for each class. For convenience, we use $u_{i,j}$ to denote $\frac{1}{\absr{S}}\sum_{x\in S_j} u_i(x),$ where $u_i(x)$ is the softmax output for class $i$ under input $x.$
Then, we can write down the derivative on three bias terms:

\begin{align*}
    \dot{b}_7 = \frac{1}{3} - u_{7,7} - u_{7,8} - u_{7,9}\\
    \dot{b}_8 = \frac{1}{3} - u_{8,7} - u_{8,8} - u_{8,9}\\
    \dot{b}_9 = \frac{1}{3} - u_{9,7} - u_{9,8} - u_{9,9}.
\end{align*}
According to the per-class loss, we know that $\sum_{x\in S_7} -\log\pr{u_7(x)} < \sum_{x\in S_9} -\log\pr{u_9(x)},$ which intuitively implies that $\sum_{x\in S_7} u_7(x) > \sum_{x\in S_9} u_9(x)$ that is $u_{7,7} > u_{9,9}$. This tends to drive $b_7$ smaller than $b_9.$ However, because $u_{9,8} > u_{7,8},$ we actually have $\dot{b}_9 < \dot{b}_7.$ So eventually $b_9$ becomes smaller than $b_7.$ Intuitively, class $9$ is more correlated with class $8,$ so $u_{9,8} > u_{7,8}.$

\end{document}